\documentclass[twoside]{article}
\RequirePackage{etex}

\usepackage[accepted]{aistats2024}

\makeatletter

\setlength{\pdfpageheight}{11in}
\setlength{\pdfpagewidth}{8.5in}

\usepackage[round]{natbib}

\bibliographystyle{apalike}

\usepackage{float}
\usepackage{ifthen}
\usepackage{amsbsy}
\usepackage{amssymb}
\usepackage{amsmath}
\usepackage{amsthm}
\usepackage{bbm}
\usepackage{graphicx}
\usepackage{algorithm}
\usepackage{algpseudocode}
\usepackage{setspace}
\usepackage{esint}
\usepackage{comment}
\usepackage{mathtools}
\usepackage{xcolor}
\let\Pr\undefined

\DeclareMathOperator{\Pr}{Pr}

\DeclareMathOperator*{\argmin}{arg\,min} %

\def\ddefloop#1{\ifx\ddefloop#1\else\ddef{#1}\expandafter\ddefloop\fi}
\def\ddef#1{\expandafter\def\csname bb#1\endcsname{\ensuremath{\mathbb{#1}}}}
\ddefloop ABCDEFGHIJKLMNOPQRSTUVWXYZ\ddefloop
\def\ddefloop#1{\ifx\ddefloop#1\else\ddef{#1}\expandafter\ddefloop\fi}
\def\ddef#1{\expandafter\def\csname b#1\endcsname{\ensuremath{\mathbf{#1}}}}
\ddefloop ABCDEFGHIJKLMNOPQRSTUVWXYZ\ddefloop
\def\ddef#1{\expandafter\def\csname c#1\endcsname{\ensuremath{\mathcal{#1}}}}
\ddefloop ABCDEFGHIJKLMNOPQRSTUVWXYZ\ddefloop
\def\ddef#1{\expandafter\def\csname h#1\endcsname{\ensuremath{\widehat{#1}}}}
\ddefloop ABCDEFGHIJKLMNOPQRSTUVWXYZ\ddefloop
\def\ddef#1{\expandafter\def\csname hc#1\endcsname{\ensuremath{\widehat{\mathcal{#1}}}}}
\ddefloop ABCDEFGHIJKLMNOPQRSTUVWXYZ\ddefloop
\def\ddef#1{\expandafter\def\csname t#1\endcsname{\ensuremath{\widetilde{#1}}}}
\ddefloop ABCDEFGHIJKLMNOPQRSTUVWXYZ\ddefloop
\def\ddef#1{\expandafter\def\csname tc#1\endcsname{\ensuremath{\widetilde{\mathcal{#1}}}}}
\ddefloop ABCDEFGHIJKLMNOPQRSTUVWXYZ\ddefloop

\usepackage{prettyref}

\usepackage{nicefrac}
\usepackage{subcaption}
\usepackage{booktabs}

\newcommand{\ci}{\perp \!\!\! \perp}

\usepackage{color-edits}
\addauthor{vs}{blue}
\addauthor{hl}{red}

\usepackage{makecell}

\newcommand{\shull}{\ensuremath{\text{star}}}

\makeatletter
\newtheorem*{rep@theorem}{\rep@title}
\newcommand{\newreptheorem}[2]{%
\newenvironment{rep#1}[1]{%
 \def\rep@title{#2 \ref{##1}}%
 \begin{rep@theorem}}%
 {\end{rep@theorem}}}
\makeatother

\usepackage{url}
\usepackage[breaklinks]{hyperref}
\usepackage{cleveref}
\newtheorem{theorem}{Theorem}

\newreptheorem{theorem}{Theorem}
\newtheorem{corollary}[theorem]{Corollary}
\newreptheorem{corollary}{Corollary}
\newtheorem{lemma}[theorem]{Lemma}

\newtheorem{remark}{Remark}

\crefname{equation}{}{}
\crefname{proposition}{Proposition}{Propositions}
\crefname{appendix}{Appendix}{Appendices}
\crefname{assumption}{Assumption}{Assumptions}

\usepackage{autonum}

\newcommand{\kibitz}[2]{\ifnum\Comments=1{\color{#1}{#2}}\fi}

\newcommand\norm[1]{\left\lVert#1\right\rVert}
\newcommand{\E}{\mathbb{E}}

\newcommand{\R}{\mathbb{R}}

\renewcommand{\Pr}{\ensuremath{\mathrm{Pr}}}

\newcommand{\bias}{\ensuremath{\textsc{Bias}}}
\newcommand{\calS}{\ensuremath{{\cal S}}}
\newcommand{\Hcal}{\ensuremath{{\cal H}}}

\newcommand{\mcF}{{\mathcal F}}

\newcommand{\ba}{\begin{array}}
\newcommand{\ea}{\end{array}}
\newcommand{\bs}{\begin{align}\begin{split}\nonumber}
\newcommand{\bsnumber}{\begin{align}\begin{split}}
\newcommand{\es}{\end{split}\end{align}}

\newtheorem{assumption}{ASSUMPTION}

\newcommand{\co}{\ensuremath{\mathrm{co}}}
\newcommand{\error}{\ensuremath{\textsc{Error}}}
\newcommand{\calF}{\ensuremath{{\cal F}}}

\def\balign#1\ealign{\begin{align}#1\end{align}}
\def\balignat#1\ealign{\begin{alignat}#1\end{alignat}}
\def\bitemize#1\eitemize{\begin{itemize}#1\end{itemize}}
\def\benumerate#1\eenumerate{\begin{enumerate}#1\end{enumerate}}
\newenvironment{talign}
 {\csname align\endcsname}
 {\endalign}
\def\balignt#1\ealignt{\begin{talign}#1\end{talign}}%

\begin{document}
\twocolumn[

\aistatstitle{Causal Q-Aggregation for CATE Model Selection}

\aistatsauthor{ Hui Lan \And Vasilis Syrgkanis* }

\aistatsaddress{
Stanford University\\
\texttt{huilan@stanford.edu} 
\And 
Stanford University\\
\texttt{vsyrgk@stanford.edu}} ]

\begin{abstract}
Accurate estimation of conditional average treatment effects (CATE) is at the core of personalized decision making. While there is a plethora of models for CATE estimation, model selection is a nontrivial task, due to the fundamental problem of causal inference. Recent empirical work provides evidence in favor of proxy loss metrics with double robust properties and in favor of model ensembling. However, theoretical understanding is lacking. Direct application of prior theoretical work leads to suboptimal oracle model selection rates due to the non-convexity of the model selection problem. We provide regret rates for the major existing CATE ensembling approaches and propose a new CATE model ensembling approach based on Q-aggregation using the doubly robust loss. Our main result shows that causal Q-aggregation achieves statistically optimal oracle model selection regret rates of $\frac{\log(M)}{n}$ (with $M$ models and $n$ samples), with the addition of higher-order estimation error terms related to products of errors in the nuisance functions. Crucially, our regret rate does not require that any of the candidate CATE models be close to the truth. We validate our new method on many semi-synthetic datasets and also provide extensions of our work to CATE model selection with instrumental variables and unobserved confounding.
\end{abstract}

\section{Introduction}
Identifying optimal decisions requires understanding the causal effect of an action on an outcome of interest. With the emergence of rich and large datasets in many application domains such as digital experimentation, precision medicine, and digital marketing, identifying optimal personalized decisions has emerged as a mainstream topic in the literature. Identifying optimal personalized decisions requires understanding how the causal effect changes with observable characteristics of the treated unit. For this reason, many recent works have studied the estimation of conditional average treatment effects (CATE):
\begin{align}
    \tau_0(X) := \E[Y(1) - Y(0)\mid X]
\end{align}
where $X$ are observable features and $Y(d)$ is the potential outcome of interest under treatment $d\in \{0,1\}$.

This has led to a surge of many different methods for CATE estimation using machine learning techniques, such as deep learning \citep{shalit2017estimating, shi2019adapting}, lasso \citep{lasso}, random forests \citep{wager2018estimation,oprescu2019orthogonal}, Bayesian regression trees \citep{hahn2020bayesian}, as well as model-agnostic frameworks such as meta-learners \citep{kunzel2019metalearners} and double machine learning \citep{kennedy2020towards, foster2023orthogonal,nie2021quasi}. Machine learning CATE estimation has been considered both under the assumption of unconfoundedness, i.e., that outcomes $Y(1),Y(0)$ are independent of the assigned treatment $D$, conditional on the observed features $X$, as well as when there is unobserved confounding and access to instrumental variables \citep{IV,Tripadvisor}.

However, the effectiveness of each approach depends on the learnability of the various causal mechanisms at play and the structure of the CATE function. 
This has led to the pursuit of automated and data-driven model selection approaches for CATE estimation \citep{schuler2018comparison,mahajan2022empirical,curth2023search,alaa2019validating}. Due to the fundamental problem of causal inference, i.e., that we do not observe both counterfactual outcomes $Y(1),Y(0)$ for each unit, there does not exist a perfect analogue of out-of-sample model selection based on mean squared error, as is typically invoked in regression problems. For this reason, many works have proposed proxy loss metrics \cite{nie2021quasi,foster2023orthogonal,kennedy2020towards,alaa2019validating}, analogous to the mean square error, which, under assumptions, can be used for data-driven model evaluation. In particular, \cite{mahajan2022empirical}, highlights that proxy metrics that incorporate double robustness properties (that we either have a good estimate of the outcome regression function $\E[Y\mid D,X]$ or the propensity function $\E[D\mid X]$), tend to perform well in many practical scenarios. 

Moreover, ensembling of CATE models tends to typically increase performance, as compared to simply choosing the single best model based on the loss. The empirical success of model aggregation in causal inference has also been demonstrated multiple times. In the 2016 Atlantic Causal Inference Conference Competition, 3 out of the 5 best performing models to estimate the average treatment effect were ensemble models \citep{dorie2019automated}. Moreover, \cite{athey2019ensemble} also showed that ensemble methods can outperform single models at predicting counterfactuals. 

\paragraph{Contributions and prior work.} Motivated by these empirical findings and by the success of ensemble methods in the regression and supervised machine learning literature, we consider the problem of CATE model selection: given a set of $M$ candidate CATE models $\cF=\{f_1,\ldots,f_M\}$ and $n$ samples, can we construct an ensemble $\hat{f}=\sum_j \theta_j f_j$ that competes with the best model:
\begin{align}
    \E\left[(f_\theta(X) - \tau_0(X))^2\right] \leq \min_{j=1}^M \E\left[(f_j(X)-\tau_0(X))^2\right] + \epsilon
\end{align}
No rigorous theoretical treatment of this CATE model selection problem from observational data has been provided in the literature. Prior theoretical work considered only randomized trial data \cite{han2022ensemble}, or considered only CATE estimation over rich function spaces \cite{nie2021quasi,foster2023orthogonal}, which is technically very different from the goal of competing with the best of $M$ given models, or consider only guarantees for model evaluation \cite{alaa2019validating}, but not selection. For the case of regression problems, the optimal oracle selection error $\epsilon$ that is achievable is of the order of $\log(M)/n$. This rate allows selecting from an exponentially sized set of candidate models $M$ and decays fast with the sample size, as $n^{-1}$ instead of $n^{-1/2}$. 

Our main result is such an optimal oracle selection error for CATE model selection. We propose a novel CATE model ensembling method based on Q-aggregation \cite{Rigollet2014} with a doubly robust loss and prove that it attains the oracle selection error of $\log(M)/n$ plus second-order terms that depend on the regression and the propensity estimation errors. Moreover, even the $\log(M)$ term can be removed if one has a relatively good prior over which models will be the better performers, which can be quite natural in the context of CATE model selection.

In the regression setting, optimal selection error rates cannot be achieved using ERM approaches, i.e., by minimizing squared loss over some ensemble hypothesis space \cite{Rigollet2014}. Thus, prior approaches to CATE ensembles \cite{nie2021quasi,han2022ensemble} that either select the best model based on a proxy loss, or optimize over the space of all possible convex combinations of models, will necessarily achieve suboptimal selection error rates (either $\sqrt{\log(M)/n}$ or $M/n$ in the worst case). As a set of side results, we provide oracle selection rates for both of these methods for the CATE problem (which are novel in the literature and require a generalization of the main statistical learning theorems with nuisance functions presented in \cite{foster2023orthogonal}). These results also highlight when one should expect ERM approaches to achieve close to optimal error. 

Moreover, a theoretical result on optimal oracle model selection necessarily needs to use techniques beyond what is outlined in the prior theoretical work of \cite{nie2021quasi,foster2023orthogonal,kennedy2020towards}, which is our main technical contribution. Notably, the general theorems of \cite{foster2023orthogonal} require a first-order optimality condition, which is not satisfied due to the non-convexity of the function space $\mcF$ with which we are trying to compete and cannot be applied to prove our main Q-aggregation theorem. Due to space constraints, we defer a detailed comparison to previous work in the Appendix~\ref{app:related-work}.

Finally, we provide an extensive empirical evaluation of our causal Q-aggregation ensemble method and of the ERM based ensemble approaches, on a broad spectrum of synthetic data as well as semi-synthetic data based on five real-world datasets from a multitude of application domains, such as political science \citep{welfare}, economics \citep{poverty,401k_1,401k_2}, education \cite{STAR} and digital advertising \citep{criteo}. We showcase how ensemble methods offer more robust solutions than any single meta-learning approach proposed in the literature and how Q-aggregation can many times lead to improvements in performance than ERM based approaches.

\section{Preliminaries}

We consider a setting where we are given a set of $M$ candidate CATE models $\{f_1, \ldots, f_M\}$ and a data set of $n$ samples. Each candidate model $f_i$ maps covariates $X$ to a CATE prediction and can be thought of as a CATE model that was fitted on a separate set of samples (e.g. fitted on training set and correspond to different hyperparameters, methods, and model spaces), and the $n$ samples that we are using are the validation set. Our goal is to select a model or an ensemble of models, i.e. $f_{\theta}=\sum_{j=1}^M \theta_j f_j$, that achieves mean-squared-error with respect to the true CATE function $\tau_0(x)=\E[Y(1)-Y(0)\mid X=x]$ that is comparable to the best model, i.e.:
\begin{align}
    \E\left[(f_\theta(X) - \tau_0(X))^2\right] \leq \min_{j=1}^M \E\left[(f_j(X)-\tau_0(X))^2\right] + \epsilon
\end{align}

As we will see in Section~\ref{sec:dr-loss}, the true CATE model $\tau_0$, can be expressed as the minimizer of a risk $\min_{f} \E[\ell(Z;f(X), g_0)]$, which, in addition to the CATE function $f$ also depends on other nuisance functions $g_0$, which are the solution to auxiliary statistical estimation problems and which also need to be estimated from data. Moreover, the loss function varies depending on the argument that is used to identify the CATE (e.g. conditional exogeneity, instrumental variables, etc.). 

For this reason, we will consider the more general problem of model selection in the presence of nuisance functions and present our main theorems at this level of generality. Subsequently in Section~\ref{sec:dr-loss} and Appendix~\ref{app:iv-cate}, we will instantiate the general theorems for particular CATE selection problems. 

\paragraph{Model Selection with Nuisance Functions} 
Given a set of candidate functions $\calF=\{f_1, \ldots, f_M\}$, we care about minimizing an expected loss function $\E[\ell(Z;f(X), g_0)]$, where $g_0$ is the ground truth nuisance function. In particular, our goal is to identify an ensemble of the candidate estimators, i.e. 
\begin{align}
f_{\theta}:=~& \sum_{j=1}^M \theta_j f_j, &
\theta\in \Theta :=~& \left\{\theta \in \R_{\geq 0}^M: \sum_{j=1}^M \theta_j=1\right\}
\end{align}
that controls the oracle selection error:
\begin{align}
    \calS(\theta) :=~& \E[\ell(Z;f_{\theta}(X), g_0)] - \min_{j=1}^{M} \E[\ell(Z;f_j(X), g_0)]
\end{align}
Our target is to prove an oracle inequality of the order:
$O\left(\frac{\log(M)}{n}\right)$, 
with access to $n$ samples of the variables $(X,Z)$, which is the optimal rate in the absence of nuisance functions.
Since, we also do not know $g_0$, we need to construct an estimate $\hat{g}$ from the data. Our goal is to derive an error that would have a second order dependence on the estimation error of $g_0$
\begin{align}
    \calS(\theta) \leq  O\left(\frac{\log(M)}{n}\right) + \error(\hat{g})
\end{align}
for some appropriately defined error function that would decay at least as fast or faster than $1/n$ under assumptions on the estimation error of $\hat{g}$. \emph{Unless otherwise stated, the estimate $\hat{g}$ will be assumed to be trained on a separate sample of size $n$ (e.g. the training sample used to fit the models $f_i$). }

\paragraph{Notation}
We use short-hand notation:
\begin{align}
P \ell(Z; f(X), g) =~& \E[\ell(Z;f(X), g)]\\
P_n\ell(Z;f(X), g) =~& \frac{1}{n} \sum_i \ell(Z_i;f(X_i), g),
\end{align}
where $Z$ denotes all the random variables and $X$ is some subset of them. 
Moreover, we denote with $\|f-f'\|^2 = P(f'(X) - f(X))^2$ and with $\|X\|_{L^p}=\E[X^p]^{1/p}$.
For any convex combination $\theta$ over a set of candidate functions $\{f_1, \ldots, f_M\}$, we define the notation:
\begin{align}
    \ell_{\theta,g}(Z) :=~& \ell(Z; f_{\theta}(X), g) &
    R(\theta,g) :=~& P\ell_{\theta,g}(Z)
\end{align}

We also denote with $e_j\in \Theta$ the vector that has $1$ at coordinate $j$ and zero otherwise (that is, $f_{e_j}=f_j$) and denote with $\co(\calF)$ the convex hull of space $\calF$. 

\section{ERM-Based Selection} \label{sec:ols_results}

We start our investigation by exploring what regret rates are achievable by ensembling or stacking approaches based on ERM, i.e. minimizing the empirical risk over the weight parameters $\theta\in \Theta$, for some constraint space $\Theta$. 
To simplify exposition, we restrict attention in this section to square losses:
\begin{align}\label{eqn:square-loss}
    \ell(Z; f(X), g) = (\hat{Y}(g) - f(X))^2
\end{align}
where $g$ is a vector of nuisance functions and $\hat{Y}(g)$ is some functional of these functions that satisfies:
\begin{align}\label{eqn:target-param}
    \tau_0(X) := \E[\hat{Y}(g_0)\mid X]
\end{align}
for some target function $\tau_0$.

Two major ensemble methods are choosing the single best model, ($\Theta_b=\{e_1, \ldots, e_M\}$), and choosing the best convex combination, 
\begin{align}
    \hat{\theta}_{b} :=~& \argmin_{\theta\in \Theta_b} \E_n[(\hat{Y}(\hat{g}) - f_\theta)^2] \tag{Best-ERM}\\
    \hat{\theta}_{c} :=~& \argmin_{\theta \in \Theta} \E_n[(\hat{Y}(\hat{g}) - f_{\theta})^2] \tag{Convex-ERM}
\end{align}

It is known that such ERM approaches lead to suboptimal regret rates without further assumptions, even in the absence of nuisance functions (i.e., when we know $g_0$) \cite{Rigollet2014}. As a side result of this paper, we provide oracle results for these estimators in the presence of nuisance functions and refine prior results in the regression literature, providing some insights as to when we should expect these methods to achieve near optimal performance in practice. 

\begin{theorem}[Gurantees for Best-ERM] \label{thm:best_erm}
Consider the case of the square loss and let:
\begin{align}
    \bias(X; \hat{g}) = \E\left[\hat{Y}(\hat{g}) - \hat{Y}(g_0)\mid X\right]
\end{align}
Assume that $\hat{Y}(g)$ is absolutely uniformly bounded by $U$ for all $g\in G$ and all functions $f_j$ are absolutely uniformly bounded by $U$. Let 
\begin{align}
\bar{f}_* :=& \argmin_{f\in \co(\calF)} \|f - \tau_0\|, ~~&~~ f_* :=& \argmin_{f\in \calF} \|f - \tau_0\|
\end{align}{ and let:
\begin{align}
    \epsilon = ~& \|f_*-\tau_0\|^2 - \|\bar{f}_*-\tau_0\|^2
\end{align}
}
Then with probability $1-\delta$:
\begin{align}\label{eqn:bound-best}
    \calS(\hat{\theta}_b) \leq~& 
     O\left(\epsilon + \frac{\log((M\vee n)/\delta)}{n} + \E[\bias(X;\hat{g})^2]\right)
\end{align}
\end{theorem}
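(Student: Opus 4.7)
Let $f_{j_0} = f_*$ be the best single candidate and let $\hat j$ denote the index returned by Best-ERM, so by construction $P_n\ell_{\hat j}\le P_n\ell_{j_0}$ for $\ell_j(Z):=(\hat Y(\hat g)-f_j(X))^2$. Since $\E[\hat Y(g_0)\mid X]=\tau_0(X)$, a direct expansion of the squared loss yields, for any $j,j'$,
\begin{align}
P[\ell_j-\ell_{j'}] = \|f_j-\tau_0\|^2-\|f_{j'}-\tau_0\|^2 - 2\,\E\bigl[\bias(X;\hat g)\,(f_j(X)-f_{j'}(X))\bigr].
\end{align}
Thus the plug-in population risk differs from the true MSE only through a bias cross-term, which Cauchy--Schwarz together with AM--GM will turn into an $O(\E[\bias(X;\hat g)^2])$ contribution.

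The main workhorse is Bernstein's inequality (conditional on $\hat g$, which is trained on a separate sample) applied with a union bound over $j\in[M]$ to the centered increments $\Delta_j:=\ell_j-\ell_{j_0}$. Since $|\hat Y(\hat g)|,|f_j|\le U$, one checks $\Var(\Delta_j)\le O(U^2\|f_j-f_{j_0}\|^2)$, so with probability at least $1-\delta$, uniformly in $j$,
\begin{align}
(P-P_n)\Delta_j \;=\; O\!\left(\sqrt{\tfrac{U^2\|f_j-f_{j_0}\|^2\,\log(M/\delta)}{n}} + \tfrac{U^2\log(M/\delta)}{n}\right).
\end{align}
The $\log n$ factor in the theorem statement accommodates a mild truncation/discretization step needed to ensure uniform boundedness of $\hat Y(\hat g)$ when the nuisance is random.

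The core obstacle is that the finite class $\{f_1,\ldots,f_M\}$ is \emph{not} convex, so the usual Bernstein condition relating $\|f_j-f_{j_0}\|^2$ to the excess risk fails, and this is precisely where the $\epsilon$ term enters. I recover an approximate version via the hull projection $\bar f_*$: first-order optimality on the convex set $\co(\calF)$ gives $\langle f-\bar f_*,\,\tau_0-\bar f_*\rangle\le 0$ for every $f\in\co(\calF)$, hence $\|f-\bar f_*\|^2 \le \|f-\tau_0\|^2 - \|\bar f_*-\tau_0\|^2$. Specializing to $f=f_{j_0}$ yields $\|\bar f_*-f_{j_0}\|^2\le \epsilon$, and specializing to $f=f_j$ followed by the triangle inequality yields the ``approximate Bernstein'' bound
\begin{align}
\|f_j-f_{j_0}\|^2 \;\le\; 2\bigl(\|f_j-\tau_0\|^2-\|f_{j_0}-\tau_0\|^2\bigr) + O(\epsilon).
\end{align}

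The final step substitutes this bound into the Bernstein inequality at $j=\hat j$, uses $P_n\Delta_{\hat j}\le 0$ together with the identity from the first paragraph to rewrite $P\Delta_{\hat j}$ as $\calS(\hat\theta_b)$ plus the bias cross-term, and applies AM--GM to both the square-root variance term and the bias term (e.g., $2\E[\bias\cdot(f_{\hat j}-f_{j_0})]\le \tfrac{1}{4}\|f_{\hat j}-f_{j_0}\|^2 + 4\E[\bias(X;\hat g)^2]$). The resulting self-bounding inequality takes the form
\begin{align}
\calS(\hat\theta_b) \;\le\; \tfrac{1}{2}\calS(\hat\theta_b) + O\!\left(\epsilon + \tfrac{\log((M\vee n)/\delta)}{n} + \E[\bias(X;\hat g)^2]\right),
\end{align}
and rearranging gives the claimed oracle bound~\eqref{eqn:bound-best}.
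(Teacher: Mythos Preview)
Your proof is correct and takes a genuinely more elementary route than the paper. The paper casts the problem inside a general orthogonal-statistical-learning framework: it takes $f'=\bar f_*$ as the reference point satisfying first-order optimality over $\co(\calF)$, verifies the strong-convexity and smoothness assumptions for the square loss, and then invokes a critical-radius concentration lemma (Lemma~14 of \citet{foster2023orthogonal}) on $\shull(\calF-f_*)$, whose critical radius for a finite class is $\Theta(\sqrt{\log(M\vee n)/n})$---this is where the $\log n$ in the stated bound actually originates. Your argument bypasses all of that machinery: a direct Bernstein inequality plus a union bound over the $M$ candidates replaces the critical-radius step, and your ``approximate Bernstein'' relation $\|f_j-f_{j_0}\|^2\le 2\,\calS_j+O(\epsilon)$, derived from the projection identity $\|f-\bar f_*\|^2\le\|f-\tau_0\|^2-\|\bar f_*-\tau_0\|^2$, plays exactly the role that the paper's Lemma~1/Corollary~3 plays inside the OSL framework. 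In fact your route yields $\log(M/\delta)/n$ rather than $\log((M\vee n)/\delta)/n$, so it is slightly sharper; your remark that the $\log n$ ``accommodates a truncation/discretization step'' is not needed and not the real reason the paper carries it. The trade-off is generality: the paper's route reuses a framework that applies to arbitrary loss classes and function spaces, while yours is tailored to the finite-class, square-loss setting---which is exactly what the theorem asks for.
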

The result is enabled by a generalization of the main theorems in \cite{foster2023orthogonal}. Incorporating the bound $ \|f_*-\tau_0\|^2 - \|\bar{f}_*-\tau_0\|^2$ is a novel addition to the literature, even in the absence of nuisance functions. We see that the best selector achieves the target rate only when $\epsilon$ vanishes faster than $\log(M)/n$, which would happen only if the best model in $\calF$ is very close to the true model $\tau_0$ (c.f. $\min_{f\in \calF}\|f-\tau_0\|^2\leq \epsilon$), which is quite unlikely when $\tau_0$ is complex, or when the best model $\bar{f}_*$ in the convex hull ends up putting most of the support on only one model, i.e., there is a dominant model (c.f. $\min_{f\in\calF} \|f-\bar{f}_*\|\leq \epsilon$). Typical prior bounds depend on the larger quantity $\epsilon=\|f_*-\tau_0\|^2$, which is not close to zero if the optimizer in the convex hull is primarily supported on one model. 

Theorem~\ref{thm:best_erm} is also closely related to the results in \cite{van2003unified}. As we show in Appendix~\ref{app:vanderlaan}, our Theorem~\ref{thm:best_erm} improves upon this prior work in: i) only incurring a dependence on the nuisance error through the term $\E[\bias(X;\hat{g})^2]$, which possesses doubly robust properties in our main application of CATE model selection, unlike the bound in \cite{van2003unified}, which would also depend on a term of the form $\E[(\hat{Y}(\hat{g})-\hat{Y}(g_0))^2]$, which does not possess doubly robust properties , ii) only depending on the smaller quantity $\epsilon$, unlike the bound in \cite{van2003unified} which would depend on $\|f_*-\tau_0\|^2$; as we discussed $\epsilon$ can be much smaller than $\|f_*-\tau_0\|^2$, when there is a dominant model, but all models are far from $\tau_0$.
\begin{theorem}[Gurantees for Convex-ERM] \label{thm:convex_erm}
Consider the case of the square loss and assume that $\hat{Y}(g)$ is absolutely uniformly bounded by $U$ for all $g\in G$ and all functions $f_j$ are absolutely uniformly bounded by $U$. Then with probability $1-\delta$:
\begin{align}\label{eqn:bound-convex}
    \calS(\hat{\theta}_c) \leq~& 
     O\left(\frac{M\log(n/\delta)}{n} + \E[\bias(X; \hat{g})^2]\right)\\
     \calS(\hat{\theta}_c) \leq~& 
     O\left(\sqrt{\frac{\log(M/\delta)}{n}} + \E[|\bias(X; \hat{g})|]\right)
\end{align}
\end{theorem}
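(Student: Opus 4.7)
The plan is to reduce the oracle selection regret to excess risk over the convex hull, decompose the nuisance contribution using the square-loss structure, and then control the empirical process in two different ways to obtain the two bounds.

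First, since each vertex $e_j$ lies in $\Theta$, we have $\min_{\theta \in \Theta}\|f_\theta - \tau_0\|^2 \leq \min_j \|f_j - \tau_0\|^2$. Letting $\theta^* \in \argmin_{\theta\in\Theta}\|f_\theta - \tau_0\|^2$ and using the square-loss identity $R(\theta, g_0) - R(\theta^*, g_0) = \|f_\theta - \tau_0\|^2 - \|f_{\theta^*} - \tau_0\|^2$ (which follows from $\E[\hat Y(g_0)\mid X]=\tau_0(X)$), it suffices to upper bound the excess risk $\calS'(\hat\theta_c) := \|f_{\hat\theta_c} - \tau_0\|^2 - \|f_{\theta^*} - \tau_0\|^2$. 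Because $f_{\theta^*}$ is the $L^2$-projection of $\tau_0$ onto the convex set $\co(\calF)$, the first-order optimality condition for convex projection gives the Pythagorean inequality $\|f_{\hat\theta_c} - f_{\theta^*}\|^2 \leq \calS'(\hat\theta_c)$, which provides the curvature we need for the fast rate.

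Next, I would expand $(\hat Y(\hat g) - f)^2 - (\hat Y(g_0) - f)^2$ and exploit sample splitting to condition on $X$ and $\hat g$, obtaining the identity
\[ R(\theta,\hat g) - R(\theta',\hat g) - \bigl(R(\theta,g_0) - R(\theta',g_0)\bigr) = -2\,\E\bigl[(f_\theta - f_{\theta'})\,\bias(X;\hat g)\bigr]. \]
Combined with ERM optimality $P_n\ell_{\hat\theta_c,\hat g} \leq P_n\ell_{\theta^*,\hat g}$, this yields
\[ \calS'(\hat\theta_c) \leq (P - P_n)\bigl(\ell_{\hat\theta_c,\hat g} - \ell_{\theta^*,\hat g}\bigr) + 2\,\E\bigl[(f_{\hat\theta_c} - f_{\theta^*})\,\bias(X;\hat g)\bigr]. \]

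For the slow rate, I would bound the empirical process uniformly by $2\sup_{\theta\in\Theta}|(P-P_n)\ell_{\theta,\hat g}|$, using Lipschitzness of the squared loss (valid by the uniform bound $U$) together with the classical identity $\Rad_n(\co(\calF)) = \Rad_n(\calF) = O(\sqrt{\log(M/\delta)/n})$ for a finite class. The cross term is bounded by $4U\cdot \E[|\bias(X;\hat g)|]$ using $|f_{\hat\theta_c} - f_{\theta^*}| \leq 2U$, delivering the second inequality. For the fast rate, I would apply a localized empirical-process bound over the $M$-dimensional parametric class $\co(\calF)$: since $\|f_{\hat\theta_c} - f_{\theta^*}\|^2 \leq \calS'(\hat\theta_c)$, a Bernstein- or localized-Rademacher-type argument (the simplex has covering entropy $O(M\log n)$ and the squared loss is strongly convex in $f$) gives
\[ (P-P_n)\bigl(\ell_{\hat\theta_c,\hat g} - \ell_{\theta^*,\hat g}\bigr) = O\!\left(\sqrt{\tfrac{\calS'(\hat\theta_c)\, M\log(n/\delta)}{n}} + \tfrac{M\log(n/\delta)}{n}\right). \]
Cauchy-Schwarz controls the nuisance term by $2\sqrt{\calS'(\hat\theta_c)}\,\|\bias(X;\hat g)\|_{L^2}$. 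Both $\sqrt{\calS'(\hat\theta_c)}$ contributions are absorbed into $\tfrac{1}{2}\calS'(\hat\theta_c)$ via Young's inequality ($2\sqrt{a}\,b \leq \tfrac{a}{2} + 2b^2$), and rearranging produces $\calS'(\hat\theta_c) = O(M\log(n/\delta)/n + \E[\bias(X;\hat g)^2])$.

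The main technical obstacle is the localized empirical-process bound with variance proxy proportional to the excess risk itself, which requires combining strong convexity of the squared loss (to pass from predictive distance to loss-gap variance) with a concentration inequality tailored to the $M$-dimensional simplex. This analysis also relies crucially on sample splitting, so that $\hat g$ may be treated as deterministic conditional on the training sample, making $\bias(X;\hat g)$ a well-defined function of $X$ that is independent of the validation data; everything else reduces to routine Rademacher and Bernstein calculations.
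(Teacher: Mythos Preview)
Your proposal is correct and follows essentially the same route as the paper: the same decomposition of the excess risk into an empirical-process term plus a conditional-bias term, the same use of the Pythagorean/strong-convexity inequality $\|f_{\hat\theta_c}-f_{\theta^*}\|^2\le \calS'(\hat\theta_c)$ to obtain self-bounding for the fast rate, a localized/critical-radius argument over the $M$-dimensional simplex for the $M\log(n/\delta)/n$ bound, and the identity $\Rad_n(\co(\calF))=\Rad_n(\calF)$ for the slow rate. The only cosmetic difference is that the paper packages the fast-rate step inside the orthogonal statistical learning machinery of \cite{foster2023orthogonal} (their Theorem~\ref{thm:osl} with $d(\hat g,g_0)=\E[\bias(X;\hat g)^2]^{1/4}$ and the critical radius of $\text{Star}(\co(\calF)-f_{\theta^*},0)$), whereas you unpack that black box directly; the underlying ingredients are identical.
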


As we show in the appendix, the benchmark in the oracle selection guarantee in both the above theorems can be strengthened to be the best model in the convex hull of $\calF$ and not just the best in $\calF$. Moreover, we note that the slow rate result of $\sqrt{\log(M/\delta)/n}$ is closely related to the work of \cite{han2022ensemble}, who provided the exact same rate for convex-ERM ensembling using the doubly robust loss that we will introduce in Section~\ref{sec:dr-loss}, albeit only for the case of randomized trials, where we have $\E[|\bias(X;\hat{g})|]=0$, due to knowing the propensity function $\Pr(D=1\mid X)$. Thus this result is a direct generalization of their result to observational data. 
None of these aforementioned theorems achieves the optimal rate when we want to compete with the best in $\calF$. The inability of ERM based approaches to achieve the optimal oracle selection rates motivates the main result of this paper in the next section: the use of ensemble approaches based on Q-aggregation.

\section{Q-Aggregation with Nuisances}

We assume that the function $\ell$ is $\sigma$-strongly convex in $f$ in expectation, i.e., with $\ell^{(2)}$ being the derivative of $\ell$ with respect to $f(X)$:
\begin{multline}
    P\left(\ell(Z;f'(X), g_0) - \ell(Z;f(X), g_0)\right) \geq \\
    P \ell^{(2)}(Z; f(X), g_0)\, (f'(X)-f(X)) 
    + \frac{\sigma}{2} \|f-f'\|^2
\end{multline}
Consider a modified loss, which penalizes the weight of each model, based on individual model performance:
\begin{align}
    \tilde{\ell}_{\theta,g}(Z) =~& (1-\nu) \ell_{\theta,g}(Z) + \nu \sum_{j=1}^M \theta_j \ell_{e_j,g}(Z)
\end{align}
For any prior weights $\{\pi_1, \ldots, \pi_M\}$, the Q-aggregation ensemble is the solution to the convex problem:
\begin{align}
    \hat{\theta} =~& \arg\min_{\theta\in \Theta} P_n\tilde{\ell}_{\theta, \hat{g}}(Z) + \frac{\beta}{n} \sum_{j=1}^M \theta_j \log\left(\frac{1}{\pi_j}\right) \tag{Q-agg} 
\end{align}
\begin{theorem}[Main Theorem]\label{thm:main}
Assume that for some function $\error(\hat{g})$:
\begin{multline}
    P(\tilde{\ell}_{\theta,g_0}(Z) - \tilde{\ell}_{\theta',g_0}(Z)) - P(\tilde{\ell}_{\theta,\hat{g}}(Z) - \tilde{\ell}_{\theta',\hat{g}}(Z)) \\
    \leq \error(\hat{g})\, \|f_{\theta} - f_{\theta'}\|^{\gamma}
\end{multline}
for some $\gamma\in [0, 1]$ and $\ell(f, g_0)$ is $\sigma$-strongly convex in expectation with respect to $f$. Moreover, assume that the loss $\ell(Z; f(X), \hat{g})$ is $C_b(\hat{g})$-Lipszhitz with respect to $f(X)$, and both the loss and the candidate functions are uniformly bounded in $[-b,b]$ almost surely. For any $\mu \leq \min\{\nu, 1-\nu\} \frac{\sigma}{14}$  and $\beta \geq \{ \frac{8C_b^2(1-\nu)^2}{\mu}, 8\sqrt{3}bC_b(1-\nu),\frac{2C_b\nu(\nu C_b + \mu b)}{\mu}\}$, w.p. $1-\delta$:
\begin{align}
    R(\hat{\theta}, g_0) \leq~& \min_{j=1}^M\left( R(e_j, g_0)  +\frac{\beta}{n}\log(1/\pi_j)\right) \\
    ~&~ + O\left(  \frac{\log(1/\delta) \max\{\frac{1}{\mu} C_b(\hat{g})^2, b\, C_b(\hat{g}) \}}{n}\right) \\
    ~&+ \left(\frac{1}{\mu}\right)^{\frac{\gamma}{2-\gamma}} \error(\hat{g})^{\frac{2}{2-\gamma}}
\end{align}
\end{theorem}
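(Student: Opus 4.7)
The plan is to follow the standard three-step template for Q-aggregation proofs (optimality, Q-aggregation identity, concentration), augmented with a fourth step that controls the nuisance error via the given robustness condition. Starting from the fact that $\hat\theta$ minimizes the penalized empirical Q-aggregation objective, for each $j\in[M]$ I obtain
\begin{align*}
P_n\tilde\ell_{\hat\theta, \hat g} + \tfrac{\beta}{n}\sum_k \hat\theta_k \log(1/\pi_k) \leq P_n\tilde\ell_{e_j,\hat g} + \tfrac{\beta}{n}\log(1/\pi_j).
\end{align*}
Rearranging, the desired bound on $R(\hat\theta,g_0) - R(e_j,g_0)$ reduces to controlling three pieces: (i) the empirical-process deviation $(P-P_n)(\tilde\ell_{\hat\theta,\hat g} - \tilde\ell_{e_j,\hat g})$, (ii) the nuisance gap $P(\tilde\ell_{\hat\theta,g_0} - \tilde\ell_{e_j,g_0}) - P(\tilde\ell_{\hat\theta,\hat g} - \tilde\ell_{e_j,\hat g})$, and (iii) the algebraic gap between $P\tilde\ell_{\theta,g_0}$ and $R(\theta,g_0)$.

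The algebraic step (iii) is the heart of Q-aggregation: applying $\sigma$-strong convexity of $\ell$ in $f$ to each pair $(f_{\hat\theta}, f_k)$ and summing with weights $\hat\theta_k$ yields
\begin{align*}
P\tilde\ell_{\hat\theta, g_0} \geq R(\hat\theta,g_0) + \tfrac{\nu\sigma}{2}\sum_k \hat\theta_k \|f_k - f_{\hat\theta}\|^2,
\end{align*}
while $P\tilde\ell_{e_j,g_0} = R(e_j,g_0)$. Thus $P\tilde\ell_{\hat\theta,g_0} - P\tilde\ell_{e_j,g_0}$ upper-bounds $R(\hat\theta,g_0) - R(e_j,g_0) + \tfrac{\nu\sigma}{2}\sum_k \hat\theta_k\|f_k - f_{\hat\theta}\|^2$, and the extra quadratic margin becomes the resource used to absorb the variance of the empirical process and the nuisance error.

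For step (i) I would avoid a uniform union bound, which would insert a spurious $\log M$. Writing $\tilde\ell_{\hat\theta,\hat g} - \tilde\ell_{e_j,\hat g} = (1-\nu)(\ell_{\hat\theta,\hat g} - \ell_{e_j,\hat g}) + \nu\sum_k \hat\theta_k(\ell_{e_k,\hat g} - \ell_{e_j,\hat g})$, I would apply a Bernstein-type moment generating function bound to each pairwise increment $\ell_{e_k,\hat g} - \ell_{e_j,\hat g}$, then aggregate via the Chernoff/PAC-Bayes trick so that the deviation is controlled by $\sum_k \hat\theta_k\log(1/\pi_k) + \log(1/\delta)$ instead of $\log M$; the entropy part is precisely what the penalty $\tfrac{\beta}{n}\sum_k\hat\theta_k\log(1/\pi_k)$ absorbs. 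The Lipschitz assumption controls the Bernstein variance by $C_b(\hat g)^2\|f_k - f_j\|^2$ and the sup-norm increment by $b\, C_b(\hat g)$, and the weighted Young inequality $2\sqrt{ab} \leq \mu a + b/\mu$ folds the $\|f_k - f_j\|^2$ terms into the strong-convexity margin (after $\|f_k - f_j\|^2 \leq 2\|f_k - f_{\hat\theta}\|^2 + 2\|f_{\hat\theta} - f_j\|^2$). The stated constraints $\mu \leq \min\{\nu,1-\nu\}\sigma/14$ and the three lower bounds on $\beta$ are calibrated exactly so that after this absorption, the residual concentration cost is the $\log(1/\delta)\max\{C_b(\hat g)^2/\mu,\, b C_b(\hat g)\}/n$ term in the theorem, while any leftover slack linear in $\hat\theta_k$ is dominated by the $\beta$-penalty budget.

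For step (ii), the hypothesis directly gives a bound of $\error(\hat g)\|f_{\hat\theta} - f_{e_j}\|^\gamma$, and a second $\gamma$-weighted Young inequality $a\,x^\gamma \leq \mu x^2 + (1/\mu)^{\gamma/(2-\gamma)}a^{2/(2-\gamma)}$ applied with $x = \|f_{\hat\theta} - f_{e_j}\|$ absorbs the quadratic into the remaining margin and leaves the stated error term. The hard part I expect is step (i): one must verify, with the precise constants, that the Bernstein variance is genuinely $O(C_b^2\|f_k - f_j\|^2)$ rather than $O(C_b|f_k - f_j|)$ (so it can be absorbed quadratically), and that the PAC-Bayes-style aggregation of $M$ Bernstein tails produces $\log(1/\delta)$ and $\sum_k\hat\theta_k\log(1/\pi_k)$ without any residual $\log M$, while simultaneously satisfying all three lower bounds on $\beta$ and the upper bound on $\mu$. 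The bounded-increments Bernstein regime is responsible for the $b\, C_b(\hat g)$ contribution in the max, and the variance-dominated regime for the $C_b(\hat g)^2/\mu$ contribution.
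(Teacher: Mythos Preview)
Your outline captures many of the right ingredients---the strong-convexity margin $\tfrac{\nu\sigma}{2}V(\hat\theta)$, offset empirical processes, Bernstein MGF bounds, and PAC-Bayes aggregation to avoid a stray $\log M$. But there is a genuine gap in your choice of comparison point. By centering everything at $e_j$, both your concentration step~(i) and your nuisance step~(ii) produce residual terms proportional to $\|f_{\hat\theta}-f_j\|^2$: after your triangle inequality $\|f_k-f_j\|^2\le 2\|f_k-f_{\hat\theta}\|^2+2\|f_{\hat\theta}-f_j\|^2$, the second summand does not collapse into $V(\hat\theta)=\sum_k\hat\theta_k\|f_k-f_{\hat\theta}\|^2$, and your Young inequality on $\error(\hat g)\|f_{\hat\theta}-f_j\|^{\gamma}$ contributes another $\mu\|f_{\hat\theta}-f_j\|^2$. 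Your only margin from step~(iii) is $\tfrac{\nu\sigma}{2}V(\hat\theta)$, which contains no $\|f_{\hat\theta}-f_j\|^2$ piece. You cannot manufacture one from $R(\hat\theta,g_0)-R(e_j,g_0)$ either, because $e_j$ is not a minimizer of anything and first-order optimality fails there. As written, the bound does not close.

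The paper's fix is exactly this missing idea: it introduces the population Q-minimizer
\[
\theta^*=\argmin_{\theta\in\Theta}\Big\{\tilde R(\theta,g_0)+\mu V(\theta)+\tfrac{\beta}{n}K(\theta)\Big\}=:\argmin_\theta H(\theta,g_0)
\]
and runs the whole argument comparing $\hat\theta$ to $\theta^*$ rather than to $e_j$. Because $\theta^*$ is a true minimizer over the simplex, strong convexity of $H$ at $\theta^*$ yields the extra margin $\big(\tfrac{(1-\nu)\sigma}{2}-\mu\big)\|f_{\hat\theta}-f_{\theta^*}\|^2$, and the offset processes are built with penalties $\mu\sum_j\hat\theta_j\|f_j-f_{\theta^*}\|^2$ and $\mu\,\hat\theta^{\top}H\theta^*$ that decompose (via $\sum_j\hat\theta_j\|f_j-f_{\theta^*}\|^2=V(\hat\theta)+\|f_{\hat\theta}-f_{\theta^*}\|^2$) into pieces absorbed by $V(\hat\theta)$ \emph{and} by that new margin. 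The nuisance error likewise becomes $\error(\hat g)\|f_{\hat\theta}-f_{\theta^*}\|^{\gamma}$ and is absorbed the same way. Only at the very end does one use $H(\theta^*,g_0)\le H(e_j,g_0)=R(e_j,g_0)+\tfrac{\beta}{n}\log(1/\pi_j)$ to pass to the oracle. If you re-center your steps (i)--(iii) at $\theta^*$ and add this strong-convexity-at-$\theta^*$ lemma, the rest of your plan goes through essentially as you describe.
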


The theorem gives a more refined version of oracle selection that is adaptive to prior beliefs on which model is best and can even save the $\log(M)$ term, if the prior beliefs end up being correct. In its simplest form, when prior weights are uniform, then $\beta$ is irrelevant in the optimization problem and can be taken to satisfy the condition of the theorem, as part of the analysis, i.e. the method does not have any hyperparameter $\beta$ to choose. Moreover, we can always take $\nu=1/2$, in which case the ensemble has the oracle selection error:
\begin{align}
    S(\hat{\theta}) \leq~& O\left(\frac{1}{\sigma}\frac{\log(M/\delta)}{n} + \left(\frac{1}{\sigma}\right)^{\frac{\gamma}{2-\gamma}}\error(\hat{g})^{\frac{2}{2-\gamma}}\right)
\end{align}

\begin{remark}[Computational Considerations]
The computation of the weights $\theta$ that optimize the Q-aggregation objective corresponds to a convex optimization problem, which can be solved efficiently by modern convex optimization solvers. Drawing from the results in \cite{greedy}, we also show in Appendix~\ref{app:greedy} that a greedy approximate solution to the Q-aggregation problem provides the same statistical guarantees as in Theorem~\ref{thm:main}. The greedy approximation requires only finding the best single model to the Q objective, as well as the best convex combination between that best single model and any other model (see Algorithm~\ref{alg:greedy}). This will always return a convex combination of at most two baseline models and can be implemented by a simple line search over the scalar that represents the convex combination of the two models, as well as the procedure that finds the minimum of the objective over the $M$ models (which is linear in $M$).
\end{remark}

\subsection{Square Losses with Nuisance Functions}\label{sec:square-loss}
    
We specialize the main theorem to the case of square losses, where the target label depends on unknown and estimated nuisance functions as defined in Equation~\eqref{eqn:square-loss}, with target parameter $\tau_0$ as defined in Equation~\eqref{eqn:target-param}. For such loss functions, we can instantiate the main theorem to obtain the following corollary.

\begin{corollary}[Square Losses with Nuisances]\label{cor:main}
Let:
\begin{align}
    \error(\hat{g}) = 2\sqrt{\E\left[\bias(X; \hat{g})^2\right]}
\end{align}
Assume that $\hat{Y}(\hat{g})$ and all functions $f_j$ are absolutely uniformly bounded by $U$. Then the Q-aggregation ensemble $\hat{\theta}$ with $\nu=1/2$, $\beta\geq\max\{112\,U^2, 56 U^3\}$ and loss $(\hat{Y}(\hat{g})-f(X))^2$, satisfies w.p. $1-\delta$:
\begin{multline}\label{eqn:bound-main}
    \|f_{\hat{\theta}} - \tau_0\|^2 \leq \min_{j=1}^M \|f_j - \tau_0\|^2 + \frac{\beta}{n} \log(1/\pi_j) \\
    + O\left(  \frac{\log(1/\delta)}{n} + \error(\hat{g})^2\right)
\end{multline}
\end{corollary}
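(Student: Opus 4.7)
The plan is to derive the corollary as a direct instantiation of Theorem~\ref{thm:main}, so I would first verify each of the hypotheses of the main theorem for the square loss $\ell(Z;f(X),g)=(\hat Y(g)-f(X))^2$, and then translate the population risk bound into a bound on $\|f_{\hat\theta}-\tau_0\|^2$.

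First I would check the structural constants. Differentiating twice in $f(X)$ gives $\partial_f^2 \ell = 2$, so the loss at $g_0$ is $\sigma$-strongly convex in expectation with $\sigma=2$. The first derivative is $-2(\hat Y(\hat g)-f(X))$, so on the uniformly bounded domain we get Lipschitz constant $C_b(\hat g)\le 4U$, and the loss itself is bounded by $4U^2$, so one can take $b=O(U^2)$. Choosing $\nu=1/2$ and $\mu=\sigma/28=1/14$ (the largest value allowed by $\mu\le \min\{\nu,1-\nu\}\sigma/14$) and plugging the above constants into the three lower bounds on $\beta$ yields exactly a condition of the form $\beta \geq \max\{c_1 U^2, c_2 U^3\}$, which is satisfied by the assumed $\beta\ge\max\{112\,U^2,56\,U^3\}$.

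The main calculation is to verify the nuisance error condition with $\gamma=1$ and $\error(\hat g)=2\sqrt{\E[\bias(X;\hat g)^2]}$. Expanding the square loss, one has $\ell(Z;f,g_0)-\ell(Z;f,\hat g)=\hat Y(g_0)^2-\hat Y(\hat g)^2-2(\hat Y(g_0)-\hat Y(\hat g))\,f$, so the $f$-independent piece cancels when I take a $\theta\!-\!\theta'$ difference and use $\sum_j\theta_j=\sum_j\theta'_j=1$ on the $\nu$-term. Both the $(1-\nu)$ and the $\nu$ pieces of $\tilde\ell$ collapse to $-2\,P\big[(\hat Y(g_0)-\hat Y(\hat g))(f_\theta-f_{\theta'})\big]$. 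Conditioning on $X$ turns the inner expectation into $-\bias(X;\hat g)$, and Cauchy--Schwarz gives the bound $2\sqrt{\E[\bias(X;\hat g)^2]}\,\|f_\theta-f_{\theta'}\|$, i.e.\ $\error(\hat g)\,\|f_\theta-f_{\theta'}\|^1$. This is the step I expect to be most delicate: one must track both components of $\tilde\ell$ and use $\sum_j\theta_j=1$ carefully to eliminate the $\hat Y(g_0)^2-\hat Y(\hat g)^2$ term, which is the key reason the final error only enters through $\bias$ rather than through $\|\hat Y(\hat g)-\hat Y(g_0)\|_{L^2}$.

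Finally, I would translate the bound. Theorem~\ref{thm:main} with the above parameters yields
\[
R(\hat\theta,g_0)\le \min_j\!\Big(R(e_j,g_0)+\tfrac{\beta}{n}\log(1/\pi_j)\Big)+O\!\Big(\tfrac{\log(1/\delta)}{n}\Big)+O\!\big(\error(\hat g)^2\big),
\]
since with $\gamma=1$ the exponent $\gamma/(2-\gamma)=1$ and $2/(2-\gamma)=2$, and $\max\{C_b^2/\mu,\,bC_b\}=O(U^2)$ is absorbed in the $O(\cdot)$. To convert $R$-differences into $\|\cdot\|^2$-differences I use the Pythagorean identity $R(\theta,g_0)=\E[(\hat Y(g_0)-\tau_0(X))^2]+\|f_\theta-\tau_0\|^2$, which follows from $\E[\hat Y(g_0)\mid X]=\tau_0(X)$ killing the cross term. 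The $\theta$-independent piece cancels inside $\min_j$, giving \eqref{eqn:bound-main} exactly.
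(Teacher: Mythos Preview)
Your proposal is correct and follows essentially the same route as the paper: verify $\sigma=2$, the Lipschitz and boundedness constants, set $\nu=1/2$ and $\mu=1/14$, then expand $\tilde\ell_{\theta,g_0}-\tilde\ell_{\theta,\hat g}$ to isolate the $2(\hat Y(g_0)-\hat Y(\hat g))(f_{\theta'}-f_\theta)$ term, condition on $X$, apply Cauchy--Schwarz to get $\error(\hat g)$ with $\gamma=1$, and finally use the Pythagorean decomposition $R(\theta,g_0)=\E[(\hat Y(g_0)-\tau_0)^2]+\|f_\theta-\tau_0\|^2$ to pass to squared distances. The only (harmless) slip is that $\max\{C_b^2/\mu,\,bC_b\}$ is $O(U^3)$ rather than $O(U^2)$, but this is absorbed in the $O(\cdot)$ of the final bound anyway.
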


If we impose more restrictions on the functions in $\calF$, then we can  show a corollary, with weaker requirements on the nuisance functions:
\begin{corollary}\label{cor:main2}
Assume that for all $\theta,\theta'\in \Theta$: 
\begin{align}
    \|f_{\theta}(X)-f_{\theta'}(X)\|_{L^\infty} \leq C \|f_{\theta}(X)-f_{\theta'}(X)\|_{L^2}^{\gamma}
\end{align}
for some constants $C$ and $0<\gamma\leq 1$. Moreover, let:
\begin{align}\label{eqn:error-2}
    \error(\hat{g}) = 2\, C\, \E\left[\left|\bias(X; \hat{g})\right|\right]
\end{align}
Then under the remainder conditions and definitions of Corollary~\ref{cor:main}, w.p. $1-\delta$
\begin{multline}\label{eqn:bound-main-2}
    \|f_{\hat{\theta}} - \tau_0\|^2 \leq \min_{j=1}^M \|f_j - \tau_0\|^2 + \frac{\beta}{n} \log(1/\pi_j) \\
    + O\left(  \frac{\log(1/\delta)}{n} + \error(\hat{g})^{\frac{2}{2-\gamma}}\right)
\end{multline}
\end{corollary}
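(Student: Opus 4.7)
The plan is to specialize Theorem~\ref{thm:main} to the square loss, with only the nuisance robustness condition verified differently than in Corollary~\ref{cor:main}. All the other ingredients required by Theorem~\ref{thm:main} (the $\sigma$-strong convexity with $\sigma = 2$, the Lipschitz constant $C_b(\hat g) = O(U)$, the boundedness $b = O(U)$, and the choice $\nu = 1/2$ with $\beta$ satisfying the stated thresholds) are identical to those used in Corollary~\ref{cor:main} and carry over verbatim. What changes is only how we bound the change in risk under $\hat g$ versus $g_0$.

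First I would compute the nuisance discrepancy in closed form. Expanding the squares, the only $g$-dependent terms in $\tilde\ell_{\theta,g}(Z)$ are $\hat Y(g)^2$ and $-2\hat Y(g)\bigl[(1-\nu) f_\theta(X) + \nu \sum_j \theta_j f_j(X)\bigr] = -2\hat Y(g) f_\theta(X)$, where the coefficient collapses to $-2 f_\theta(X)$ \emph{independently of $\nu$} since $\sum_j \theta_j f_j = f_\theta$. The $\hat Y(g)^2$ piece cancels in any difference between $\theta$ and $\theta'$, so applying the tower property $\E[\hat Y(\hat g) - \hat Y(g_0) \mid X] = \bias(X;\hat g)$ to the remaining cross term yields the identity
\begin{align}
P(\tilde\ell_{\theta,g_0} - \tilde\ell_{\theta',g_0}) - P(\tilde\ell_{\theta,\hat g} - \tilde\ell_{\theta',\hat g}) = 2\, P\bigl[(f_\theta(X) - f_{\theta'}(X))\, \bias(X;\hat g)\bigr].
\end{align}
This is the same identity that underlies the proof of Corollary~\ref{cor:main}; what differs here is the bound applied to its right-hand side.

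Next I would bound the right-hand side using H\"older's inequality with the $L^\infty$--$L^1$ pairing rather than Cauchy--Schwarz, and then invoke the hypothesized norm equivalence to get
\begin{align}
\bigl|2\, P\bigl[(f_\theta - f_{\theta'}) \bias(X;\hat g)\bigr]\bigr|
&\leq 2\, \|f_\theta - f_{\theta'}\|_{L^\infty}\, \E[|\bias(X;\hat g)|] \\
&\leq 2C\, \E[|\bias(X;\hat g)|]\, \|f_\theta - f_{\theta'}\|^\gamma = \error(\hat g)\, \|f_\theta - f_{\theta'}\|^\gamma,
\end{align}
with $\error(\hat g) = 2C\, \E[|\bias(X;\hat g)|]$ as in~\eqref{eqn:error-2}. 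This matches the nuisance robustness hypothesis of Theorem~\ref{thm:main} with exponent $\gamma$. Plugging this in, along with $\nu = 1/2$ and the already-verified strong convexity, Lipschitz, and boundedness constants (so that the parameter $\mu$ can be taken as a universal constant and $(1/\mu)^{\gamma/(2-\gamma)}$ collapses to $O(1)$), Theorem~\ref{thm:main} immediately yields~\eqref{eqn:bound-main-2}. The only mildly delicate point is the $\nu$-independent cancellation in the first step; everything else is routine once the $L^\infty$--$L^1$ H\"older bound is substituted for Cauchy--Schwarz.
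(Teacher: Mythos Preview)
Your proposal is correct and follows essentially the same route as the paper: the paper's proof simply says it is identical to Corollary~\ref{cor:main} except that the Cauchy--Schwarz bound on $2\,\E[\bias(X;\hat g)\,(f_{\theta^*}-f_{\hat\theta})]$ is replaced by the $L^1$--$L^\infty$ H\"older bound followed by the assumed norm inequality, exactly as you wrote. Your explicit verification of the $\nu$-independent cancellation is a bit more detailed than the paper's exposition but is the same underlying computation.
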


If the function class $\calF$ contains smooth enough functions, then the above condition tends to hold. For instance, \cite[Lemma 5.1]{mendelson2010regularization} shows that if the functions $f_j\in F$ lie in an RKHS with a polynomial eigendecay, then the above condition holds for some constant $\gamma$ that depends on the rate of eigendecay. 
Moreover, as we show in Appendix~\ref{app:eigenvalue}, even when $\calF$ does not contain smooth functions, we can still show the above property for $\gamma=1$ and $C \sim M$, as long as functions in $\calF$ are not co-linear. Here, the number of models $M$ and not the logarithm of $M$  appears. However, assuming that the nuisance error term decays faster than $n^{-1/2}$, this linear in $M$ term appears only in lower-order terms.

\paragraph{Guarantees without Sample Splitting}

When learning with nuisance functions, it is often desirable to train the nuisance functions on a separate sample, as it reduces overfitting bias. However, sample splitting decreases efficiency and statistical power, and, thus, it might not be beneficial for small datasets. The following theorem provides theoretical guarantees when the nuisance estimate and the ensemble weights are constructed using the same sample.

\begin{theorem}[Guarantees without Sample Splitting]\label{thm:no_split}
    Under the same conditions as in Corollary~\ref{cor:main}, with the exception that the nuisance function $\hat{g}$ is trained on the same sample as the parameter $\hat{\theta}$. The Q-aggregation ensemble $\hat{\theta}$ satisfies w.p. $1-\delta$:
\begin{multline}\label{eqn:bound-main-no-split}
    \|f_{\hat{\theta}} - \tau_0\|^2 \leq \min_{j} \|f_j - \tau_0\|^2 + \frac{\beta}{n} \log(1/\pi_j) \\
    + O\left( \frac{\log(1/\delta)}{n} + r_n^2 + \error(\hat{g})^2\right)
\end{multline}
where $\error(\hat{g})^2$ is of the order:
\begin{align}
    \E\left[\bias(X;\hat{g})^2\right] + \E[(\hat{Y}(\hat{g}) - \hat{Y}(g_0))^4] 
\end{align}
and $r_n$ denotes the critical radius of the function space $\operatorname{star}(Y_G)$ where $Y_G := \left\{ (\hat{Y}(g) - \hat{Y}(g_0))^2 \mid  g \in G\right\}$.
\end{theorem}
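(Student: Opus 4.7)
The plan is to replay the proof of Corollary~\ref{cor:main}, identifying the single place where independence of $\hat g$ from the sample was used --- namely the Bernstein step that controls $(P_n-P)(\tilde\ell_{\theta,\hat g}-\tilde\ell_{\theta',\hat g})$ conditional on $\hat g$ --- and to replace it with a uniform empirical-process bound over $g\in G$. The latter is provided by the critical radius $r_n$ of $\operatorname{star}(Y_G)$, which governs how large the fluctuations of $P_n(\hat Y(g)-\hat Y(g_0))^2$ around $P(\hat Y(g)-\hat Y(g_0))^2$ can be, uniformly in $g$.

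\textbf{Key decomposition.} For the square loss one has the identity
$$(\ell_{\theta,g}-\ell_{\theta',g})-(\ell_{\theta,g_0}-\ell_{\theta',g_0})=-2(\hat Y(g)-\hat Y(g_0))(f_\theta(X)-f_{\theta'}(X)),$$
which splits $(P_n-P)(\tilde\ell_{\hat\theta,\hat g}-\tilde\ell_{e_j,\hat g})$ into a $g_0$-only term, handled as in the sample-split proof via a union-over-$M$ Bernstein bound yielding $O(\log(M/\delta)/n)$ (absorbed into the prior penalty $\tfrac{\beta}{n}\log(1/\pi_j)$ after aggregation), and a bilinear nuisance term $(P_n-P)[-2(\hat Y(\hat g)-\hat Y(g_0))(f_{\hat\theta}-f_{e_j})]$. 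I decouple the bilinear term by pointwise AM-GM with a parameter $\mu$:
$$|(P_n-P)[2 X Y]|\leq \mu(P_n+P)[X^2]+\tfrac{1}{\mu}(P_n+P)[Y^2],$$
where $X=\hat Y(\hat g)-\hat Y(g_0)$ and $Y=f_{\hat\theta}-f_{e_j}$. The $(P_n+P)[Y^2]/\mu$ part is a well-concentrated statistic on the finite set of ensemble differences and, once transferred to its population counterpart $\|f_{\hat\theta}-f_{e_j}\|^2/\mu$, is absorbed into the strong-convexity slack of the Q-aggregation analysis of Theorem~\ref{thm:main}. This leaves only the nuisance-only statistic $\mu(P_n+P)[X^2]$ to be handled uniformly over $\hat g$.

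\textbf{Critical-radius step and main obstacle.} The critical-radius inequality on $\operatorname{star}(Y_G)$ gives, w.p. $1-\delta$ and uniformly over $g\in G$,
$$|(P_n-P)(\hat Y(g)-\hat Y(g_0))^2|\leq O\!\Bigl(r_n^2+r_n\sqrt{\E[(\hat Y(g)-\hat Y(g_0))^4]}+\tfrac{\log(1/\delta)}{n}\Bigr),$$
and applying Young's inequality to the middle term produces $P_n X^2\leq P X^2+O\bigl(r_n^2+\E[(\hat Y(\hat g)-\hat Y(g_0))^4]\bigr)$, which is the source of both the $r_n^2$ and the quartic $\E[(\hat Y(\hat g)-\hat Y(g_0))^4]$ contributions in the claimed bound. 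The remaining population term $P[-2(\hat Y(\hat g)-\hat Y(g_0))(f_{\hat\theta}-f_{e_j})]=-2\,\E[\bias(X;\hat g)(f_{\hat\theta}-f_{e_j})]$ is handled identically to Corollary~\ref{cor:main} via Cauchy-Schwarz and the strong-convexity slack, supplying the $\E[\bias(X;\hat g)^2]$ contribution. The principal obstacle is the interplay between the $M$-fold union bound over comparators and the critical-radius localization on $Y_G$: this is resolved by applying the uniform-in-$g$ inequality only to the scalar $(P_n-P)(\hat Y(\hat g)-\hat Y(g_0))^2$ (which does not depend on the comparator $j$) and restricting the union over $j\in\{1,\ldots,M\}$ to the $g_0$-only piece, whose class is discrete. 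With these pieces in hand, choosing $\mu$ small relative to the strong-convexity constant $\sigma=2$ of the square loss and summing reproduces the Q-aggregation oracle inequality augmented by the stated $r_n^2+\error(\hat g)^2$ terms.
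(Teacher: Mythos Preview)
Your high-level strategy matches the paper's: both split the centered process into a $g_0$-only piece (handled by running the offset arguments $Z_{n,1},Z_{n,2}$ of Lemma~\ref{lem:main} at $g_0$ rather than $\hat g$) and the bilinear correction $\Delta=(P-P_n)\bigl[2(\hat Y(\hat g)-\hat Y(g_0))(f_{\hat\theta}-f_{\cdot})\bigr]$, then decouple $\Delta$ into a nuisance-only square (controlled uniformly in $g$ via the critical radius of $\operatorname{star}(Y_G)$) and a function-only square (controlled by a Bernstein-type offset).

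The gap is in your specific AM-GM decoupling. The bound $|(P_n-P)[2XY]|\leq \mu(P_n+P)[X^2]+\tfrac{1}{\mu}(P_n+P)[Y^2]$ unavoidably contributes the \emph{population} term $P[X^2]=\E\bigl[(\hat Y(\hat g)-\hat Y(g_0))^2\bigr]$, a second-order nuisance quantity not present in the theorem's error $\E[\bias(X;\hat g)^2]+\E[(\hat Y(\hat g)-\hat Y(g_0))^4]$. No choice of $\mu$ removes it: shrinking $\mu$ blows up the $\tfrac{1}{\mu}P[Y^2]$ side, while a constant $\mu$ leaves $\mu P[X^2]$ at constant order. (Your parameter placement also appears inverted --- absorbing $(P_n+P)[Y^2]/\mu$ into a strong-convexity slack would require $\mu$ large, not small.) The paper avoids this by comparing to the population reference $\theta^*$ rather than a vertex $e_j$, writing $\Delta$ as a convex combination over $j$ of $(P-P_n)\bigl[2X(f_j-f_{\theta^*})\bigr]$, and bounding each summand by the sum of the \emph{centered} processes $(P-P_n)[X^2]$ (their $Z_{n,4}$) and $(P-P_n)[(f_j-f_{\theta^*})^2]$, the latter offset by $\mu\|f_j-f_{\theta^*}\|^2$ (their $Z_{n,3}$). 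Only the centered $Z_{n,4}$ carries nuisance dependence, and its critical-radius localization yields $r_n\sqrt{\E[X^4]}$, which after Young becomes $r_n^2+\E[X^4]$ --- never $\E[X^2]$. The extra $\mu\sum_j\hat\theta_j\|f_j-f_{\theta^*}\|^2$ introduced by the offset merges with $V(\hat\theta)$ and is absorbed exactly as in the sample-split analysis.
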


\subsection{Connections to Neyman Orthogonality}

In this section, we draw a connection between the bias in labels and the literature of Neyman orthogonality. When we are interested in estimating an average treatment effect, then we can typically phrase this as estimating a parameter of the form:
$\tau_0 = \E[\hat{Y}(g_0)]$, 
for some appropriately defined function $\hat{Y}(g)$. In such settings, we say that the moment function $M(g):=\E[\hat{Y}(g)]$ is Neyman orthogonal \citep{chernozhukov2018double} with respect to the nuisance functions $g$ if for all $g\in G$, the following directional derivative is zero:
\begin{align}
    \partial_{\epsilon} M(g_0 + \epsilon\, (g-g_0)) \mid_{\epsilon=0} = 0
\end{align}
Similarly, we can define a conditional analogue, when the quantity of interest is a conditional expectation: $
    \tau_0(X) = \E[\hat{Y}(g_0)\mid X]$.
We say that the conditional moment function $M_X(g)=\E[\hat{Y}(g_0)\mid X]$ is conditionally Neyman orthogonal if for all $g\in G$, a.s. over $X$:
\begin{align}
    \partial_{\epsilon} M_X(g_0 + \epsilon\, (g-g_0)) \mid_{\epsilon=0} = 0
\end{align}
When the label function $\hat{Y}(g)$ satisfies the latter property then by a second order Taylor expansion we have:
\begin{align}
   \bias(X; \hat{g})
    ~&=\int_{0}^1 \partial_{\epsilon}^2 M_X(g_0 + \epsilon (\hat{g}-g_0)) d\epsilon
\end{align}
Thus we can instantiate all the theorems in the aforementioned sections, by using the above bias term. 
Typically this second order remainder term will depend quadratically in the error of the nuisances, i.e. 
\begin{align}
\int_{0}^1 \partial_{\epsilon}^2 M_X(g_0 + \epsilon (\hat{g}-g_0)) d\epsilon\lesssim \E\left[(\hat{g}(Z) - g_0(Z))^2\mid X\right]
\end{align}
or will contain only products of errors of the different nuisance functions that compose $g$ (see next section). In both cases,
the $\error(\hat{g})^2$ term, will most times be of lower order than $\log(M)/n$ or $\min_j \|f_j - \tau_0\|_{L^2}^2$. The next section and Appendix~\ref{app:iv-cate}, draw on this connection and provide two Neyman orthogonal label functions for the case of model selection for CATE under unobserved confounding and with access to an instrument.

\section{Doubly Robust Q-Aggregation}\label{sec:dr-loss}

We now focus on our main question: model selection for CATE, i.e., $\tau_0(X)=\E[Y(1)-Y(0)\mid X]$. We focus on the setting where we observe all confounding variables and we defer to Appendix~\ref{app:iv-cate} the case of unobserved confounding with access to an instrumental variable (e.g. A/B testing with non-compliance). 

In the no unobserved confounding case, we observe a set of variables $W$ (that is, a superset of $X$), such that the conditional exogeneity property holds: $Y(1),Y(0) \ci D\mid W$. Under conditional exogeneity, the CATE is identified as:
\begin{align}
    \tau_0(X) 
    = \E[h_0(1, W) - h_0(0,W)\mid X]
\end{align}
where $h_0(D,W) := \E[Y\mid D, W]$. As is well known \cite{oprescu2019orthogonal,semenova2021debiased,foster2023orthogonal,kennedy2020towards}, this identifying formula can be robustified by incorporating propensity estimation: $\tau_0(X) = \E[\hat{Y}(g_0)\mid X]$
\begin{align}
\hat{Y}(g) = h(1, W) - h(0, W) + a(D, W)\, (Y - h(D, W))
\end{align}
with $g=(h,a)$, $h$ an estimate of $h_0$ and $a$ an estimate of the signed inverse propensity function $a_0$ (a.k.a. the Riesz representer in \cite{chernozhukov2022riesznet}):
\begin{align}
    a_0(D, W) :=& \frac{D-p_0(W)}{p_0(W)\, (1-p_0(W))}, & p_0(W) :=& \E[D\mid W]
\end{align}
This gives rise to the doubly robust square loss:
\begin{align}
    \ell(Z; f(X), g) = (\hat{Y}(g) - f(X))^2 \tag{DR-Loss}
\end{align}
For any estimate $\hat{g}=(\hat{h}, \hat{a})$, define $\hat{q}(D, W)$ as:
\begin{align}
     (a_0(D, W) - \hat{a}(D, W))\, (h_0(D, W) - \hat{h}(D,W))
\end{align}
An important fact of the doubly robust target, exploited also in prior works, is that it satisfies the mixed bias property (see Appendix~\ref{app:mixed-bias}):
\begin{align}
    \bias(X; \hat{g}) := \E\left[\hat{Y}(g_0) - \hat{Y}(\hat{g})\mid X\right] = \E\left[\hat{q}(D, W)\mid X\right]
\end{align}
Thus we can apply Corollary~\ref{cor:main} to obtain:
\begin{corollary}[Main Corollary]\label{cor:main-cate}
Let $\tau_0(X)=\E[Y(1)-Y(0)\mid X]$ and suppose that conditional exogeneity holds conditional on some observed $W$ that is a superset of $X$. Let:
\begin{align}
    \error(\hat{g}) = 2\sqrt{\E\left[\E\left[ \hat{q}(D, W)\mid X\right]^2\right]}
\end{align}
and assume that $\hat{Y}(\hat{g})$ all functions $f\in F$ are uniformly and absolutely bounded by $U$. Then the doubly robust Q-aggregation ensemble $\hat{\theta}$, based on the DR-Loss, with $\nu=1/2$ and a uniform prior $\pi$, satisfies that w.p. $1-\delta$:
\begin{multline}
    \|f_{\hat{\theta}} - \tau_0\|^2 \leq \min_{j} \|f_j - \tau_0\|^2 \\
    + O\left(  \frac{\log(M/\delta)}{n} + \error(\hat{g})^2\right)
\end{multline}
For any prior $\pi$, if $\beta\geq\max\{112\,U^2, 56 U^3\}$, w.p. $1-\delta$:
\begin{multline}
    \|f_{\hat{\theta}} - \tau_0\|^2 \leq \min_{j} \|f_j - \tau_0\|^2 + \frac{\beta}{n} \log(1/\pi_j) \\
    + O\left(  \frac{\log(1/\delta)}{n} + \error(\hat{g})^2\right)
\end{multline}
\end{corollary}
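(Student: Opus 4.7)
The plan is to derive Corollary~\ref{cor:main-cate} as a direct instantiation of Corollary~\ref{cor:main} for the doubly robust loss, so the work reduces to verifying that the DR-Loss fits into the square-loss-with-nuisances template of Section~\ref{sec:square-loss} and computing its bias functional. Boundedness of $\hat{Y}(\hat{g})$ and of the $f_j$ is assumed in the statement. The target-identification condition $\tau_0(X)=\E[\hat{Y}(g_0)\mid X]$ requires a short derivation: under conditional exogeneity, $h_0(d,W)=\E[Y(d)\mid W]$; the propensity-correction term satisfies $\E[a_0(D,W)(Y-h_0(D,W))\mid W]=0$ because $\E[Y\mid D,W]=h_0(D,W)$; therefore $\E[\hat{Y}(g_0)\mid W]=h_0(1,W)-h_0(0,W)=\E[Y(1)-Y(0)\mid W]$, and iterated expectations with $X\subseteq W$ yield $\E[\hat{Y}(g_0)\mid X]=\tau_0(X)$.

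The main and only nontrivial step is establishing the mixed-bias identity $\bias(X;\hat{g})=\E[\hat{q}(D,W)\mid X]$, whose full derivation is deferred to Appendix~\ref{app:mixed-bias}. My plan there is algebraic: expand $\hat{Y}(g_0)-\hat{Y}(\hat{g})$ and rewrite the difference of cross-products $a_0(Y-h_0)-\hat{a}(Y-\hat{h})$ as $(a_0-\hat{a})(Y-h_0)-\hat{a}(\hat{h}-h_0)$. Taking expectations given $W$, the first piece vanishes by $\E[Y-h_0(D,W)\mid D,W]=0$. For the second piece, I would split $\hat{a}=a_0+(\hat{a}-a_0)$ and invoke the Riesz-representer identity $\E[a_0(D,W)\,\phi(D,W)\mid W]=\phi(1,W)-\phi(0,W)$, applied with $\phi=\hat{h}-h_0$, which exactly cancels the direct terms $[\hat{h}(1,W)-h_0(1,W)]-[\hat{h}(0,W)-h_0(0,W)]$ coming from the $h$-part of $\hat{Y}$. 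What remains is precisely $\E[(a_0-\hat{a})(h_0-\hat{h})\mid W]=\E[\hat{q}\mid W]$, and conditioning further on $X$ gives the claim.

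With target identification, boundedness, and the bias formula in hand, Corollary~\ref{cor:main} applies directly. Substituting $\bias(X;\hat{g})=\E[\hat{q}(D,W)\mid X]$ into $\error(\hat{g})=2\sqrt{\E[\bias(X;\hat{g})^2]}$ produces the claimed error expression. The second displayed inequality of the corollary is then an immediate application of Corollary~\ref{cor:main} for general prior $\pi$ with $\beta\geq\max\{112U^2,56U^3\}$. The first inequality (uniform prior) follows by taking $\pi_j=1/M$: the Q-aggregation penalty $(\beta/n)\sum_j \theta_j\log(1/\pi_j)=\beta\log(M)/n$ becomes a constant independent of $\theta$, so the optimizer is unaffected by $\beta$. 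I would therefore choose $\beta=\max\{112U^2,56U^3\}$ to satisfy the hypothesis of Corollary~\ref{cor:main} and absorb the resulting constant $\beta\log(M)/n$ into the $O(\log(M/\delta)/n)$ remainder. The only conceptual obstacle is the mixed-bias derivation, which exploits the bilinear structure of the DR-target; everything else is bookkeeping on top of the already-proved Corollary~\ref{cor:main}.
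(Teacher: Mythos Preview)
Your proposal is correct and matches the paper's approach exactly: the paper also obtains Corollary~\ref{cor:main-cate} by invoking Corollary~\ref{cor:main} once the mixed-bias identity $\bias(X;\hat{g})=\E[\hat{q}(D,W)\mid X]$ has been established (Appendix~\ref{app:mixed-bias}), and your extra verification of the target condition $\E[\hat{Y}(g_0)\mid X]=\tau_0(X)$ and handling of the uniform-prior case are both sound. One minor algebraic slip to fix when you write it out: the decomposition should be $a_0(Y-h_0)-\hat{a}(Y-\hat{h})=(a_0-\hat{a})(Y-h_0)+\hat{a}(\hat{h}-h_0)$ (plus, not minus, on the second term); with that sign the Riesz-representer piece $\E[a_0(\hat{h}-h_0)\mid W]$ indeed cancels the direct $h$-terms and leaves exactly $(\hat{a}-a_0)(\hat{h}-h_0)=\hat{q}$, as you claim.
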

The error term can also be further upper bounded by an application of a Cauchy-Schwarz inequality by:
\begin{align}
    \error~&(\hat{g}) \leq \min\{\|\hat{a}-a_0\|_{L^4}\,\|\hat{g}-g_0\|_{L^4}, \\
    ~&\|\hat{a}-a_0\|_{L^2}\, \|\hat{g}-g_0\|_{L^\infty}, \|\hat{a}-a_0\|_{L^\infty}\, \|\hat{g}-g_0\|_{L^2}\}
\end{align}
Thus we can either control the $L^4$ moment of the prediction error of the two nuisances, or guarantee an $L^\infty$ rate for one and an $L^2$ rate for the other. Note that if the above product of error terms is of lower order than $n^{-1/2}$, then the effect of these errors on the quality of the causal ensemble is of second-order importance.

If we further make an assumption that:
\begin{align}
    \|f_{\theta*}(X)-f_{\hat{\theta}}(X)\|_{L^\infty} \leq C \|f_{\theta*}(X)-f_{\hat{\theta}}(X)\|_{L^2}^\gamma
\end{align}
Then by Corollary~\ref{cor:main}, we can take in Corollary~\ref{cor:main-cate}:
\begin{align}
    \error(\hat{g}) ~&\lesssim 
    C\,\|\hat{a}-a_0\|_{L^2}^{\frac{1}{2-\gamma}}\, \|\hat{h}-h_0\|_{L^2}^{\frac{1}{2-\gamma}}
\end{align}
As we saw in Section~\ref{sec:square-loss}, the latter assumption holds if either the functions in $\calF$ are smooth or they contain independent components in their predictions.

\begin{remark}[Priors on CATE models]
In the context of CATE estimation, incorporating priors is quite natural. For instance, we have a strong prior that models that come out of meta-learning approaches that only use outcome modeling should perform worse. Thus, we can put lower prior probability on S- and T- learner models and larger prior probability on X-, DR- and R- learner models. Moreover, we can incorporate priors that more complicated models are less probable than simpler models. For instance, we can put higher prior on linear cate models or shallow random forest models.
\end{remark}

\section{Doubly Robust Q-Aggregation for CATE with Instruments}\label{app:iv-cate}

We consider a widely encountered setting of estimating a CATE from a stratified randomized trial with non-compliance. In this setting we know the cohort assignment policy $\pi_0(X) = \Pr(Z=1\mid X)$, and we are interested in estimating the conditional local average treatment effect 
$$\tau_0(X) =  \E[Y(1)-Y(0)\mid D(1)>D(0), X]$$
where $D(z)$ is the potential outcome of the chosen treatment under different cohort assignments (aka recommended treatments).
This effect is identified as:
\begin{align}
    \tau_0(X) ~&= \frac{\E[Y \tilde{Z}\mid X]}{\E[D \tilde{Z}\mid X]} = \frac{\text{Cov}(Y,Z\mid X)}{\text{Cov}(D, Z\mid X)} \\
    ~&= \frac{\E[Y\mid Z=1, X] - \E[Y\mid Z=0, X]}{\E[D\mid Z=1, X] - \E[D\mid Z=0, X]}
\end{align}
where $\tilde{Z} = Z - \pi_0(X)$.

Let $\hat{\alpha}$ be an estimate of:
\begin{align}
    a_0(X) ~&:= \E[Y \tilde{Z}\mid X] \\
    ~&= \E[Y\mid Z=1, X] - \E[Y\mid Z=0, X]
\end{align}
and $\hat{\beta}$ an estimate of:
\begin{align}
    \beta_0(X) ~&:= \E[D \tilde{Z}\mid X] \\
    ~&= \E[D\mid Z=1, X] - \E[D\mid Z=0, X]
\end{align}
and let $\hat{\tau}=\hat{\alpha}/\hat{\beta}$. Then we can construct the random variable
\begin{align}
    \hat{Y}(\hat{g}) = \hat{\tau}(X) + \frac{(Y - \hat{\tau}(X) D) \tilde{Z}}{\hat{\beta}(X)}
\end{align}
and we can consider Q-aggregation with the loss:
\begin{align}
    \ell(Z;f(X), g) = (\hat{Y}(g) - f(X))^2
\end{align}

In this case, we can show the following mixed bias property (see Appendix~\ref{app:mixed-bias}):
\begin{align}
    \E[\hat{Y}(\hat{g}) - \hat{Y}(g_0)\mid X] 
    =~& (\tau_0(X) - \hat{\tau}(X)) \frac{\beta_0(X) - \hat{\beta}(X)}{\hat{\beta}(X)}
\end{align}

Thus applying Corollary~\ref{cor:main} we have:
\begin{corollary}[Main Corollary for CATE with Instruments]
Let $\tau_0(X)=\E[Y(1)-Y(0)\mid D(1) > D(0), X]$ and suppose that the instrument is a stratified randomized trial with non-compliance and with a known propensity model $\pi_0(X)=\Pr(Z=1\mid X)$. Let:
\begin{align}
    \error(\hat{g})^2 = 4 \E\left[(\tau_0(X) - \hat{\tau}(X))^2 \frac{(\beta_0(X) - \hat{\beta}(X))^2}{\hat{\beta}(X)^2}\right]
\end{align}
and assume that $\hat{Y}(\hat{g})$ is uniformly and absolutely bounded by $U$ in a sufficiently small neighborhood of $g_0$ and that all functions $f\in F$ are uniformly and absolutely bounded by $U$. Then the doubly robust IV Q-aggregation ensemble $\hat{\theta}$ with $\nu=1/2$ and a uniform prior $\pi$, satisfies that w.p. $1-\delta$:
\begin{multline}
    \|f_{\hat{\theta}} - \tau_0\|^2 \leq \min_{j} \|f_j - \tau_0\|^2\\
     + O\left(  \frac{\log(M) +\log(1/\delta)}{n} + \error(\hat{g})^2\right)
\end{multline}
Moreover, for any prior $\pi$, for $\beta\geq\max\{112\,U^2, 56 U^3\}$, w.p. $1-\delta$:
\begin{multline}
    \|f_{\hat{\theta}} - \tau_0\|^2 \leq \min_{j} \|f_j - \tau_0\|^2 + \frac{\beta}{n} \log(1/\pi_j)\\
     +  O\left(  \frac{\log(1/\delta)}{n} + \error(\hat{g})^2\right)
\end{multline}
\end{corollary}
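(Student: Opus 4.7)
The plan is to invoke Corollary~\ref{cor:main} directly, with the role of $\hat{Y}(g)$ played by the doubly robust IV pseudo-outcome $\hat{\tau}(X) + (Y - \hat{\tau}(X) D)\tilde{Z}/\hat{\beta}(X)$, where $\hat{\tau} = \hat{\alpha}/\hat{\beta}$ is derived from the nuisances $\hat{g}=(\hat{\alpha}, \hat{\beta})$. The corollary needs three ingredients: the target identity $\tau_0(X)=\E[\hat{Y}(g_0)\mid X]$ required by Equation~\eqref{eqn:target-param}; the mixed-bias decomposition that produces exactly the stated $\error(\hat{g})$; and uniform absolute boundedness of $\hat{Y}(\hat{g})$ and of all candidate CATE models, which is granted by the hypothesis (uniform boundedness of $\hat{Y}(\hat{g})$ is where the "sufficiently small neighborhood of $g_0$" clause enters, since it ensures $\hat{\beta}$ stays bounded away from zero).

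First I would verify the target identity. At $g_0 = (a_0, \beta_0)$, we have $\hat{Y}(g_0) = \tau_0(X) + (Y - \tau_0(X) D)\tilde{Z}/\beta_0(X)$. Conditioning on $X$ and using $\E[Y \tilde{Z}\mid X] = a_0(X)$, $\E[D \tilde{Z}\mid X] = \beta_0(X)$, together with the LATE identification $a_0(X) = \tau_0(X)\,\beta_0(X)$, the correction term vanishes and we recover $\E[\hat{Y}(g_0)\mid X] = \tau_0(X)$.

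Next I would invoke the mixed bias identity already recorded in the paper (proved in Appendix~\ref{app:mixed-bias}):
\begin{align*}
\bias(X;\hat{g}) = (\tau_0(X) - \hat{\tau}(X))\,\frac{\beta_0(X) - \hat{\beta}(X)}{\hat{\beta}(X)}.
\end{align*}
The underlying computation expands $\hat{Y}(\hat{g}) - \hat{Y}(g_0)$, takes expectation conditional on $X$ to eliminate all $\tilde{Z}$ terms, and uses $a_0(X) = \tau_0(X)\beta_0(X)$ to collapse the residual expression $\hat{\tau}(X) - \tau_0(X) + (a_0(X) - \hat{\tau}(X)\beta_0(X))/\hat{\beta}(X)$ into the displayed product of errors.

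With both pieces in hand, Corollary~\ref{cor:main} applies verbatim: its error quantity $2\sqrt{\E[\bias(X;\hat{g})^2]}$ squares exactly to the $\error(\hat{g})^2$ stated in the claim. Taking $\nu=1/2$ with a uniform prior yields the first inequality with the $\log(M/\delta)/n$ oracle term, while retaining a general prior $\pi$ and $\beta \geq \max\{112U^2, 56U^3\}$ yields the adaptive bound with the $(\beta/n)\log(1/\pi_j)$ penalty. The only nontrivial step is the mixed bias derivation, which is confined to the appendix; the main body of this proof is a clean reduction to Corollary~\ref{cor:main}, so I expect the writeup to be quite short.
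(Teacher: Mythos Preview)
Your proposal is correct and follows exactly the same approach as the paper: the paper states the mixed bias identity (derived in Appendix~\ref{app:mixed-bias}) and then simply writes ``Thus applying Corollary~\ref{cor:main} we have'' before stating the corollary. Your additional explicit verification of the target identity $\E[\hat{Y}(g_0)\mid X]=\tau_0(X)$ is a helpful detail the paper leaves implicit, but otherwise the reduction is identical.
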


Moreover, note that since $\hat{\tau} = \hat{\alpha}/\hat{\beta}$, we also have that 
$$\tau_0 -\hat{\tau} = \frac{\alpha_0 \hat{\beta} - \hat{\alpha} \beta_0}{\hat{\beta}\beta_0 }= \frac{(\alpha_0 - \hat{\alpha})\hat{\beta} + \hat{\alpha} (\hat{\beta}-\beta_0)}{ \hat{\beta}\beta_0}$$
Thus overall we have:
\begin{align}
    \error(\hat{g})^2 =~& 4 \E\left[\frac{(\alpha_0(X) - \hat{\alpha}(X))^2 (\beta_0(X) - \hat{\beta}(X))^2}{\beta_0(X)^2\hat{\beta}(X)^2}\right]\\
    ~&+ 4 \E\left[\frac{\hat{\alpha}(X)^2 (\beta_0(X) - \hat{\beta}(X))^4}{\beta_0(X)^2\hat{\beta}(X)^4}\right]
\end{align}
Assuming that $\beta_0(X)\geq \underline{\beta}$ and that we impose the same restriction on our estimate $\hat{\beta}(X)$ (which is a "minimal conditional instrument strength") and assuming that $\hat{\alpha}(X)^2\leq \bar{\alpha}^2$, then we have:
\begin{align}
    \error(\hat{g})^2 \lesssim~&  \E\left[(\alpha_0(X) - \hat{\alpha}(X))^2 (\beta_0(X) - \hat{\beta}(X))^2\right]\\
    ~& +  \E\left[(\beta_0(X) - \hat{\beta}(X))^4\right]
\end{align}
By Cauchy-Schwarz inequality the latter is upper bounded by:
\begin{align}
    \min\{&\|\hat{\alpha}-\alpha_0\|_{L^4}^2 \|\hat{\beta}-\beta_0\|_{L^4}^2 + \|\hat{\beta}-\beta_0\|_{L^4}^2,\\
     &\|\hat{\alpha}-\alpha_0\|_{L^2}^2 \|\hat{\beta}-\beta_0\|_{L^\infty}^2 + \|\hat{\beta}-\beta_0\|_{L^\infty}^2\}
\end{align}
Thus if we learn the effect of the instrument on the treatment (typically referred to as the compliance model) with an $L^{\infty}$ guarantee (e.g. a high-dimensional sparse linear logistic regression), then it suffices to learn the effect of the instrument on the outcome with an $L^2$ error. Moreover, as long as the error on the compliance model $\hat{\beta}$ is $o(n^{-1/4})$ and the product of the errors of $\hat{\alpha}$ and $\hat{\beta}$ is $o(n^{-1/2})$, then the nuisance impact is of lower order in the final guarantee for the Q-aggregation ensemble.

Again, if we further make an assumption that:
\begin{align}
    \|f_{\theta*}(X)-f_{\hat{\theta}}(X)\|_{L^\infty} \leq C \|f_{\theta*}(X)-f_{\hat{\theta}}(X)\|_{L^2}
\end{align}
Then we can take:
\begin{align}
    \error(\hat{g}) ~&= 2\, C\,\E\left[\left|\E[\hat{Y}(\hat{g}) - \hat{Y}(g_0)\mid X]\right|\right] \\
    ~&\lesssim \|\hat{a}-a_0\|_{L^2}\, \|\hat{\beta}-\beta_0\|_{L^2} + \|\hat{\beta}-\beta_0\|_{L^2}^2
\end{align}
and it suffices to control only the RMSE errors of $\hat{\alpha}$ and $\hat{\beta}$ to be $o(n^{-1/4})$.

\begin{remark}[Beyond Stratified Trials with Non-Compliance] When the instrument policy $\pi_0(X)$ is un-known, i.e. when we have a conditional instrument that corresponds to some natural experiment, then the quantity $\hat{Y}(\hat{g})$ is sensitive to errors in the estimate $\hat{\pi}$ of $\pi_0$. To reduce this sensitivity, we can incorporate residualization of $Y$ and $D$ (as also proposed in \cite{Tripadvisor}):
\begin{align}
    \hat{Y}(g) = \hat{\tau}(X) + \frac{(\tilde{Y} - \hat{\tau}(X) \tilde{D}) \tilde{Z}}{\hat{\beta}(X)}
\end{align}
where $\tilde{Y}=Y-\hat{h}(X)$, $\tilde{D}=D-\hat{r}(X)$ and $\tilde{Z}=Z-\hat{\pi}(X)$ and $\hat{h}$ is an estimate of $h_0(X)=\E[Y\mid X]$, $\hat{r}$ is an estimate of $r_0(X)=\E[D\mid X]$ and $\hat{\pi}$ is an estimate of $\pi_0(X)=\E[Z\mid X]$. With such a proxy label the error term in the Corollaries presented in this section, will contain extra terms that can be upper bounded by terms of the order $\|\hat{\pi}-\pi_0\|_{L^4}^2\, \|\hat{r}-r_0\|_{L^4}^2$, $\|\hat{\pi}-\pi_0\|_{L^4}^2\, \|\hat{h}-h_0\|_{L^4}^2$,  $\|\hat{\pi}-\pi_0\|_{L^4}^2\, \|\hat{\beta}-\beta_0\|_{L^4}^2$, $\|\hat{r}-r_0\|_{L^4}^2\, \|\hat{\beta}-\beta_0\|_{L^4}^2$ and $\|\hat{h}-h_0\|_{L^4}^2\, \|\hat{\beta}-\beta_0\|_{L^4}^2$.
\end{remark}

\section{Experiments}
We assessed the performance of our doubly robust Q-aggregation method on fully simulated and semi-synthetic data sets. We constructed several candidate CATE models based on meta-learning approaches using XGboost for regression and
classification sub-problems, such as for estimating the nuisance functions: $\hat{\mu}(X,D)$ (an estimate of $\E [Y \mid X, D]$),  $\hat{\mu}_d(X)$ (an estimate of $\E [Y \mid X, D=d]$), and $\hat{\pi}(X)$ (an estimate for the propensity). For constructing the CATE models we considered 8 meta-learning strategies, namely S-, T-, IPW-, X-, DR-, R-, DRX-, and DAX-learners. Details of each learner are given in Appendix~\ref{sec:model_details}.

We present results for Q-aggregation, convex stacking, and best-ERM (selecting the model with the best doubly robust loss). Note that convex stacking is a special case of Q-aggregation with $\nu$ set to 0. For Q-aggregation, we chose $\nu=0.1$ and did not incorporate any priors. Each data set is divided into 3 portions: $60\%$, $20\%$, and $20\%$. Candidate CATE models and nuisance functions are trained on the $60\%$ of the data. Subsequently, the CATE ensemble weights are trained on $20\%$ of the data. Finally, CATE RMSE is evaluated on the remaining $20\%$ of the data. For each model, we report the RMSE regret when compared to the oracle model selection - the model that achieves the lowest RMSE on the test set (i.e. $\|f_{\hat{\theta}} - \tau_0\|^2 - \min_{j} \|f_j-\tau_0\|^2$). 

\begin{table*}
\centering
\tiny
\caption{Mean RMSE Regret Over DGPs And Semi-synthetic Datasets.} \label{table:mean_all}
\begin{tabular}{lrrrrrrrrrrr}
\toprule
{} &     S &     T &   IPS &     X &    DR &     R &   DRX &   DAX &  Qtrain &  Convextrain &  Besttrain \\
\midrule
\makecell{DGP} & 1.495 & 0.946 & 2.350 & 1.149 & 0.663 & 1.125 & 0.894 & 0.837 &   \textbf{0.485} &        0.488 &      0.567 \\
\midrule
\makecell{Simple\\Semi-synthetic}& 0.611 & 0.534 & 7.081 & 0.284 & 0.445 & 0.619 & 0.411 & 0.384 &   \textbf{0.194} &        0.204 &      0.234 \\
\midrule
\makecell{Fitted\\Semi-Synthetic} & \textbf{0.133} & 1.912 & 5.106 & 0.431 & 0.417 & 0.899 & 0.407 & 0.816 &   0.300 &        0.322 &      0.258 \\
\midrule
\makecell{All\\Experiments} & 0.746 & 1.131 & 4.846 & 0.621 & 0.508 & 0.881 & 0.570 & 0.679 &   \textbf{0.327} &        0.338 &      0.353 \\
\bottomrule
\end{tabular}
\end{table*}

\begin{table*}
\centering
\tiny
\caption{RMSE Regret For Semi-synthetic Datasets with Simple CATE.}
\label{table:semi_simple}

\begin{tabular}{llllll}
\toprule
{} &                                             401k &                                              welfare &                                              poverty &                                                 star &                                                  criteo \\
\midrule
Besttrain                                &    \makecell{[355 $\pm$ 147.9] \\ 362.4 (585.7)} &  \makecell{[0.0390 $\pm$ 0.0181] \\ 0.0406 (0.0722)} &  \makecell{[0.0075 $\pm$ 0.0068] \\ 0.0060 (0.0194)} &  \makecell{[0.0107 $\pm$ 0.0096] \\ 0.0070 (0.0287)} &  \makecell{[0.00081 $\pm$ 0.00040] \\ 0.0008 (0.00161)} \\
Convextrain                              &  \makecell{[362.3 $\pm$ 157.1] \\ 371.5 (634.4)} &  \makecell{[0.0289 $\pm$ 0.0156] \\ 0.0275 (0.0557)} &  \makecell{[0.0066 $\pm$ 0.0049] \\ 0.0057 (0.0166)} &  \makecell{[0.0095 $\pm$ 0.0093] \\ 0.0062 (0.0284)} &  \makecell{[0.00066 $\pm$ 0.00032] \\ 0.0006 (0.00122)} \\
Qtrain                                   &  \makecell{[341.1 $\pm$ 157.1] \\ 331.3 (634.2)} &  \makecell{[0.0281 $\pm$ 0.0156] \\ 0.0272 (0.0537)} &  \makecell{[0.0061 $\pm$ 0.0049] \\ 0.0051 (0.0165)} &  \makecell{[0.0095 $\pm$ 0.0093] \\ 0.0061 (0.0284)} &  \makecell{[0.00060 $\pm$ 0.00032] \\ 0.0006 (0.00112)} \\
\midrule
\makecell{$\%$ Decrease \\ w.r.t Best}   &            \makecell{4.0692 $\%$ \\ 9.4008 $\%$} &              \makecell{38.7569 $\%$ \\ 49.2038 $\%$} &              \makecell{22.8005 $\%$ \\ 17.7918 $\%$} &              \makecell{13.0432 $\%$ \\ 15.7684 $\%$} &                 \makecell{34.8879 $\%$ \\ 30.0913 $\%$} \\
\midrule
\makecell{$\%$ Decrease \\ w.r.t Convex} &           \makecell{6.2148 $\%$ \\ 12.1341 $\%$} &                \makecell{2.6238 $\%$ \\ 1.0101 $\%$} &               \makecell{8.1161 $\%$ \\ 13.0241 $\%$} &                \makecell{0.2770 $\%$ \\ 2.3870 $\%$} &                   \makecell{9.9629 $\%$ \\ 8.5184 $\%$} \\
\bottomrule
\end{tabular}
\end{table*}

\paragraph{Simulated Data}
We employed six distinct data generation processes and performed experiments (DGP) on 100 instances of each DGP. In all cases, a scalar covariate $X$ is drawn from a uniform distribution. Each DGP is characterized by 3 functions: propensity $\pi(X)$, conditional baseline response surface $m(X)$, and treatment effect $\tau(X)$. Treatment $D$ is sampled from a binomial distribution with probability $\pi(X)$, and outcome Y is simulated by:
$$Y = m(X) + D \tau(X) + \epsilon$$
where $\epsilon \sim \mathrm{N}(0,\sigma^2)$. The details of each DGP can be found in the supplementary materials.

\paragraph{Semi-synthetic Data}
Semi-synthetic data was generated from five datasets used in previous studies: welfare dataset \citep{welfare}, poverty dataset\citep{poverty}, Project STAR dataset \citep{STAR}, 401k eligibility dataset \citep{401k_1,401k_2}, and the Criteo Uplift Modeling \citep{criteo} dataset. Since our analysis assumes a binary treatment, only one treatment is considered for datasets with multiple treatments. 

We used two simulation modes to generate the outcome from covariates. In the simple mode, the outcome is generated based on a simple linear CATE and baseline response function plus Gaussian noise (with $\sigma = 0.2$). In the fitted mode, the output is generated from a random forest regressor that is fitted on the real dataset, with randomness introduced by the addition of the model residual uniformly at random. 

\vspace{-1em}\paragraph{Results}
Overall, there is no single meta-learning model that performs consistently well across all DGPs or semi-synthetic datasets. We present in Table \ref{table:mean_all} the mean RMSE regret for different DGPs and different semisynthetic datasets, where the RMSE regret is normalized by the mean RMSE regret of all models for each DGP/dataset. The results in Table \ref{table:mean_all} show that best-model selection, convex stacking, and Q aggregation achieve a significantly lower average RMSE regret compared to choosing any single model. 

On semi-synthetic datasets, Q-aggregation achieves comparable RMSE regret with convex stacking (see Figure \ref{fig:bar-semi-synthetic}) and consistently outperforms both best-ERM and convex stacking in the case of a simple CATE and baseline response function. Table \ref{table:semi_simple} summarizes the RMSE regret for model selection, convex stacking, and Q-aggregation for semi-synthetic datasets with simple CATE functions, where each cell in the table reports [mean $\pm$ st.dev.] median (95\%) of RMSE Regret over 100 experiments for each semi-synthetic dataset simulated using the simple mode. The last two rows present the percentage decrease in the mean and median RMSE regret for Q-aggregation with respect to convex stacking and model selection.

\begin{figure*}
\vspace{.3in}
\centerline{\includegraphics[width=\textwidth]{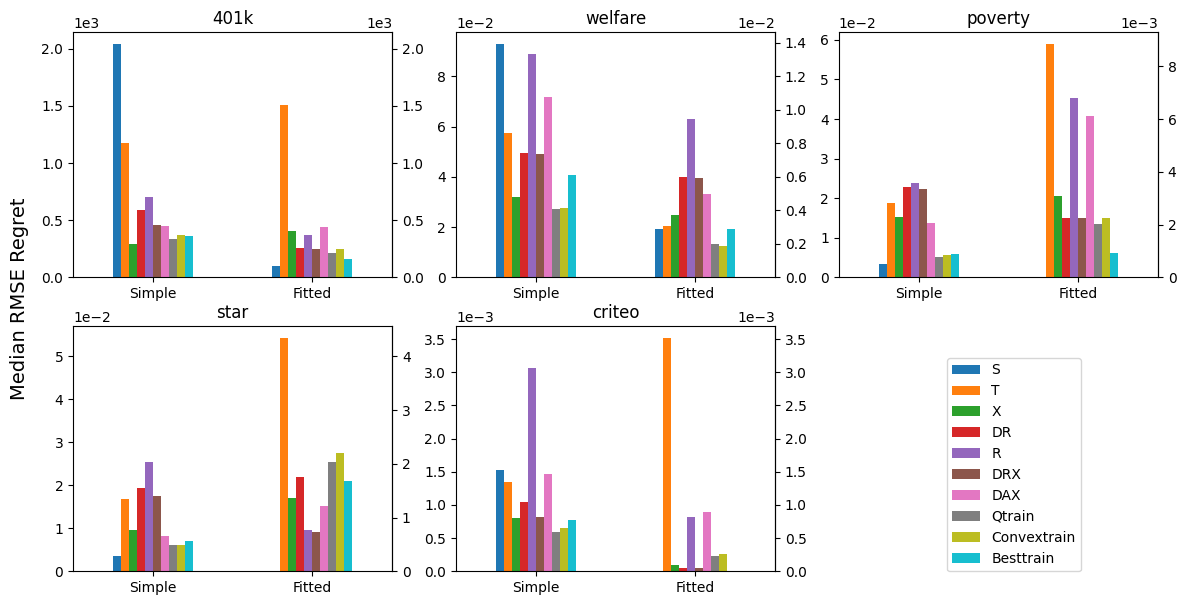}}
\vspace{.3in}
\caption{Median RMSE Regret for Each DGP.}
\label{fig:bar-semi-synthetic}
\end{figure*}

For simple CATE functions, Table \ref{table:semi_simple} shows that there is an decrease in the mean and median RMSE regret for Q-aggregation, when compared with convex stacking and model selection for every dataset. For datesets generated with the fitted CATE functions, the underlying model follows a S-learner setup. Thus, we expect the S-learners to be dominant models, which is evident Table \ref{table:mean_all}. As a result, we observe that the there is often an increase in the RMSE regret for ensemble models over the best single model, as is hinted by Theorem~\ref{thm:best_erm}, since in this case we have a dominant model.

\bibliography{refs}

\appendix

\onecolumn
\aistatstitle{Causal Q-Aggregation for CATE Model Selection: Appendix}

\tableofcontents
\newpage

\section{Detailed Comparison with Prior Works}\label{app:related-work}
In traditional supervised learning, the standard approach for model selection is through M-fold cross validation, also called the discrete super learner. \cite{vaart2006oracle} proved an oracle inequality (under conditions): 
\begin{align}
    \E [R(\hat{f})] - \E[R(f^*)] \leq O\left(\E[R(f^*)] + \frac{\log(M)}{n}\right)
\end{align}
where $\hat{f}$ is the best model selected by K-fold cross validation, $f^*$ is the oracle model selection, and $R(f)$ is the excess risk. Similar oracle inequalities are also provided in \cite{van2007super}.
Moreover, \cite{van2003unified} proved similar oracle inequalities for cross-validation even in the presence of nuisance functions in the loss. Our results on ERM-based selection and Q-aggregation based selection improve upon the results of \cite{van2003unified} on many critical directions. In summary, the results in \cite{van2003unified} cannot yield the doubly robust form of the dependence in the errors of the nuisances, for our main application to CATE model selection. Moreover, the selection rate results in \cite{van2003unified} depend on the performance of the best model in the set and only provide slow rates if the loss of the best model is not close to zero. Given the subtleties of the comparison of the results in \cite{van2003unified} and our work, we give very detailed comparison in the next subsection.

A similar oracle inequality was shown when ERM is used for model selection \citep{mitchell2009general}. In addition, under strong assumptions on the tails of the centered excess losses, the authors proved that model selection via ERM can achieve a tail bound that with probability $1-\delta$:
\begin{align}
R(\hat{f}) - R(f^*) \leq O\left(R(f^*) + \frac{\log(M)}{n} + \frac{\log(1/\delta)}{n}\right)    
\end{align}

Model Selection is particularly difficult for CATE estimation due to the lack of ground truth counterfactual data, making it difficult to evaluate model performance. Existing model selection criterion include the influence corrected loss \citep{alaa2019validating}, R-loss for double machine learning \citep{nie2021quasi}, doubly robust loss \citep{saito2020counterfactual}, the X-score \citep{kunzel2019metalearners}, matching score, \citep{rolling2014model}, calibration score \citep{dwivedi2020stable}, etc. In two recently conducted benchmarking studies on model selection for CATE estimators \citep{mahajan2022empirical, curth2023search}, the results showed that no single evaluation metric stood out as the dominant choice. Moreover, \cite{mahajan2022empirical} demonstrated that softmax stacking of the estimators often lead to better performance for any proxy evaluation metric.

Another line of work employs stacking which optimizes model weights by minimizing some performance metric on a held-out sample. \cite{grimmer2017estimating} constructed a weighted average across different potential outcome models via a cross validation based super learning approach\citep{van2007super} that estimates the weights from out-of-sample performance through regression. In contrast to our work, \cite{grimmer2017estimating} applies stacking to the nuisance function that is used to construct the estimator, instead of the estimator itself. Closer to our line of work, \cite{nie2021quasi} proposed to use a positive linear combination of different estimators that minimizes their proposed R-loss on a validation sample. Although both work demonstrated empirical success, they did not provide any theoretical guarantees for stacking. 

Bridging this gap, \cite{han2022ensemble} studied the the theoretical guarantees of their proposed stacking algorithm when compared to oracle model selection. The proposed causal stacking algorithm learns a convex combination of the candidate models by optimizing the doubly robust loss on a validation set. Notably, what we are presenting in this work, that is Q aggregation, can be reduced to causal stacking when choose the same loss and some hyperparameter is set to 0. Han and Wu showed that, with probability $1-\delta$, the RMSE regret is on the order of $\sqrt{\frac{\log(M^2)/\delta}{n}}$, which is a slower rate that what we presented in this paper. Interestingly, allowing the weights to be any non-negative value instead of summing to 1 improved performance when the treatment assignment is balanced. 

More recently, \cite{sugasawa2023bayesian} sought a different approach to construct ensembles which uses the Bayesian predictive synthesis\citep{mcalinn2019dynamic} framework, which allows the model/posterior weights to depend on the covariates, enabling the final ensemble to capture the heterogeneity of the performance of different models.

In our work, we provide additional theoretical results utilizing the orthogonal statistical learning framework \citep{foster2023orthogonal} when a Neyman orthogonal loss (e.g. the doubly robust loss) is used. In Section \ref{sec:ols_results}, we present results for both model selection and convex stacking using plug-in empirical risk minimization. In addition, we propose Q-aggregation \citep{Rigollet2014} for CATE estimation, which constructs a convex combination of the candidate functions via a modified version of empirical risk minimization, and achieves optimal rate \citep{aggregation} of $O(\frac{\log(M)}{n})$ plus higher order error terms for the nuisance functions.

\subsection{Comparison to \cite{van2003unified}}\label{app:vanderlaan}
\cite{van2003unified} proved oracle inequalities for cross-validation that considers the need to estimate nuisance functions for the loss function. 
Here, we reproduce the main theorem in \cite{van2003unified} using our notation.

\begin{theorem}[Theorem 1 of \cite{van2003unified}]Let $f_{\hat{k}}$ and $f_{\tilde{k}}$ denote the best model selected by cross-validation and oracle model selection respectively. Define the excess risk with respect to the true candidate model as $d(f, f_0) = \E_{S_n}[P(\ell_{f, g_0} -\ell_{f_0, g_0})]$, where $S_n$ is the data split. Let $p$ denote the proportion of data used for validation in each split. Further define:
\begin{align}
    r_1(n) =~& \max_{\bar{k}\in \{\hat{k}, \tilde{k}\}}\frac{\E\left[P(\ell_{f_{\bar{k}},\hat{g}} - \ell_{f_0,\hat{g}})-P(\ell_{f_{\bar{k}},g_0} - \ell_{f_0,g_0})\right]}{\sqrt{\E[P(\ell_{f_{\bar{k}},g_0} - \ell_{f_0,g_0})]}}\\
    r_2(n) =~& \E\left[\max_k \sqrt{P\left((\ell_{f_{k},\hat{g}} - \ell_{f_0,\hat{g}})-(\ell_{f_{k},g_0} - \ell_{f_0,g_0})\right)^2}\right] \\
    \tilde{r}(n) = ~& \sqrt{\E[d(f_{\tilde{k}},f_0)]}
\end{align}
Assuming that the nuisance estimator $\hat{g}$ converges to the true nuisance function $g_0$ as $n \rightarrow \infty$, the loss function is uniformly bounded by some constant $M_1$ for any $f\in F$, and $Var(\ell_{f,g_0} - \ell_{f_0,g_0}) \leq M_2 \E[\ell_{f,g_0} - \ell_{f_0,g_0}]$ for some constant $M_2$, then for any $\delta>0$:
\begin{align}
     \sqrt{\E[d(f_{\hat{k}},f_0)]} \leq \frac{(1+\delta)r_1(n) + \sqrt{(1+\delta)^2r_1(n)^2 + 4\epsilon_n(\delta)}}{2} =: \sqrt{R(n)} \label{eqn:sqrt_oracle}
\end{align} where
\begin{align}
    \epsilon_n(\delta) =~& (1+2 \delta) \tilde{r}^2(n)+4(1+\delta)^2\left(\frac{2M_1}{3} + \frac{M_2}{\delta}\right) \frac{1+\log (M)}{n p}+ \\
    & (1+\delta) r_1(n) \tilde{r}(n)+\frac{2 M_3(1+\delta) \log(M)}{(n p)^{0.5}} \max \left(r_2(n),(n p)^{-0.5} I\left(r_2(n)>0\right)\right)
\end{align}
And $M_3$ is given by:
\begin{align}
    M_3(n) = \frac{M_1}{3}+\sqrt{2} \frac{\sqrt{\log (2)}}{\log (M)}+\frac{\sqrt{2}}{\sqrt{\log (M)}}+b_0+\int_{b_0}^{\infty} 2 M^{1-m(x)} d x
\end{align} where
\begin{align}
    m(x) = \frac{1}{2} \frac{x^2}{1 / \log (M)+ \frac{4}{3M_1} x}
\end{align}
and $b_0$ is the solution to $1-m(x)=0$.\footnote{The theorem of \cite{van2003unified} also showed a form of asymptotic optimality only under the extra condition that:
\begin{align}
    O(r_1(n)^2 + \log(M)r_2(n)^2) = O(\tilde{r}(n)^2)
\end{align}}
\end{theorem}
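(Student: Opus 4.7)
The plan is to prove this oracle inequality by starting from the defining optimality of cross-validated selection and splitting the excess risk of $f_{\hat{k}}$ into an oracle term plus three families of deviations: an empirical-process error under the true nuisance, a population-level nuisance bias, and an empirical-process error of the nuisance-perturbed loss. These three pieces correspond exactly to the quantities $\tilde{r}(n)$, $r_1(n)$, and $r_2(n)$ in the statement, and, together with a union-bound $\log(M)$ factor over the $M$ candidates, produce the four summands of $\epsilon_n(\delta)$.

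First I would write, for each data split, the definitional inequality $P_v \ell_{f_{\hat{k}}, \hat{g}} \leq P_v \ell_{f_{\tilde{k}}, \hat{g}}$, where $P_v$ denotes empirical average over the validation fold (of size $np$), subtract $P_v \ell_{f_0, \hat{g}}$ from both sides, take expectations $\E_{S_n}$ over splits, and add and subtract $P(\ell_{\cdot, g_0} - \ell_{f_0, g_0})$ to convert to the population excess risk $d(\cdot, f_0)$. This produces, for $\bar{k}\in\{\hat{k},\tilde{k}\}$, three deviation terms:
\begin{align*}
A_1(\bar{k}) &= \E[(P_v - P)(\ell_{f_{\bar{k}}, g_0} - \ell_{f_0, g_0})], \\
A_2(\bar{k}) &= \E[P(\ell_{f_{\bar{k}}, \hat{g}} - \ell_{f_0, \hat{g}}) - P(\ell_{f_{\bar{k}}, g_0} - \ell_{f_0, g_0})], \\
A_3(\bar{k}) &= \E[(P_v - P)((\ell_{f_{\bar{k}}, \hat{g}} - \ell_{f_0, \hat{g}}) - (\ell_{f_{\bar{k}}, g_0} - \ell_{f_0, g_0}))].
\end{align*}

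Second, I would bound each family separately. For $A_1$, I apply Bernstein's inequality combined with the variance condition $\mathrm{Var}(\ell_{f, g_0} - \ell_{f_0, g_0}) \leq M_2\, d(f, f_0)$ and a union bound across the $M$ candidates, then use Young's inequality with parameter $\delta$ to split $\sqrt{d \log(M)/(np)}$ into a $\delta\, d$ piece and an $(M_2/\delta)\log(M)/(np)$ piece, together with the deterministic $M_1 \log(M)/(np)$ boundedness contribution; this is precisely where the $4(1+\delta)^2(2M_1/3+M_2/\delta)\log(M)/(np)$ summand of $\epsilon_n(\delta)$ arises. For $A_2$, the definition of $r_1(n)$ gives $A_2 \leq r_1(n)\sqrt{\E[d(f_{\bar{k}}, f_0)]}$, which for $\bar{k}=\tilde{k}$ is at most $r_1(n)\tilde{r}(n)$ and for $\bar{k}=\hat{k}$ feeds into the quadratic closed in Step Three. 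For $A_3$, I use a Bennett/Bernstein sub-exponential maximal inequality over the $M$ candidates, where each summand is uniformly bounded by $M_1$ with $L^2$ fluctuation at most $r_2(n)$; the Orlicz-$\psi_1$ tail integrated over the entropy scale produces the constant $M_3(n)$ with its piece $\int_{b_0}^\infty 2M^{1-m(x)}dx$, and the factor $\log(M)/\sqrt{np}$ appears at the scale where the Bennett tail switches from sub-Gaussian to sub-exponential.

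Third, I would collect the pieces into the consolidated inequality
\begin{align*}
\E[d(f_{\hat{k}}, f_0)] \leq (1+\delta)\,\tilde{r}(n)^2 + (1+\delta)\, r_1(n)\,\sqrt{\E[d(f_{\hat{k}}, f_0)]} + \epsilon_n(\delta),
\end{align*}
set $u = \sqrt{\E[d(f_{\hat{k}}, f_0)]}$, and solve the quadratic $u^2 - (1+\delta)r_1(n)\, u - \epsilon_n(\delta) \leq 0$; its positive root is exactly the stated bound $\sqrt{R(n)}$.

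The main obstacle I anticipate is the third family $A_3$: obtaining the specific form of $M_3(n)$ with its entropy integral requires a careful Bennett tail computation for the nuisance-perturbed empirical process, controlling the Orlicz norm of each summand in terms of both the uniform bound $M_1$ and the variance proxy $r_2(n)$, then passing to the maximum over $M$ such processes via an Orlicz-norm maximal inequality. A secondary subtlety is the bookkeeping needed so that the $(1+\delta)$ Young-inequality slack on $A_2$ applied at $\bar{k}=\tilde{k}$ does not swallow the oracle benchmark: the cross-term $r_1(n)\tilde{r}(n)$ must stay as a separate additive piece inside $\epsilon_n(\delta)$ rather than being absorbed into the multiplicative $(1+\delta)$ factor on $\tilde{r}(n)^2$.
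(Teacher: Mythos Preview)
The paper does not contain a proof of this statement. The theorem appears in Appendix~\ref{app:vanderlaan} with the explicit preface ``Here, we reproduce the main theorem in \cite{van2003unified} using our notation''; it is quoted solely to set up a comparison between the oracle inequality of \cite{van2003unified} and the paper's own Theorems~\ref{thm:best_erm} and~\ref{thm:main}. Everything that follows the theorem statement in the appendix is simplification and interpretation of the bound (rewriting $R(n)$ up to constants, instantiating $r_1,r_2,\tilde{r}$ for the square loss, and contrasting the resulting rates with the paper's), not a proof.

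Consequently there is nothing in this paper to compare your proposal against. Your three-way decomposition into $A_1$ (centered empirical process at $g_0$), $A_2$ (population nuisance bias), and $A_3$ (centered empirical process of the nuisance perturbation), followed by Bernstein-plus-variance-condition for $A_1$, the definition of $r_1$ for $A_2$, a Bennett/Orlicz maximal inequality for $A_3$, and a final quadratic in $\sqrt{\E[d(f_{\hat{k}},f_0)]}$, is the standard architecture for this type of result and is consistent with how the original \cite{van2003unified} argument is structured. But verifying the exact constants, the precise form of $M_3(n)$, and the $(1+\delta)$ bookkeeping requires consulting that reference rather than the present paper.
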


Note that $1-m(x)$ is decreasing in $x$. Thus $M_3(n)$ is upper and lower bounded by some strictly positive constant factor that only depends on $M_1$. Up to a constant factor $C(M_1, M_2)$ that depends only on $M_1, M_2$ (and assuming for simplification that $\delta\leq 1$), the above can be simplified as:
\begin{align}
    R(n) \sim~& (1+\delta)^2 r_1(n)^2 + 4\epsilon_n(\delta)\\  
    \sim~& (1 + 2\delta)\E[d(f_{\tilde{k}},f_0)] + C(M_1, M_2) \left(\frac{1}{\delta} \frac{\log(M)}{n} + r_1(n)^2 + r_1(n)\,\tilde{r}(n) + \frac{\log(M)}{\sqrt{n}} r_2(n)
    \right)
\end{align}
Thus this upper bound, provides selection rates of the form:
\begin{align}
    \E[d(f_{\hat{k}},f_0)] - \E[d(f_{\tilde{k}},f_0)]  \lesssim 2\delta \E[d(f_{\tilde{k}},f_0)] + C(M_1, M_2) \left(\frac{1}{\delta} \frac{\log(M)}{n} + r_1(n)^2  + r_1(n)\,\tilde{r}(n) + \frac{\log(M)}{\sqrt{n}} r_2(n)\right)
\end{align}
As a side note, if we assume that the true nuisance parameter is known, i.e. $g_0$ is known, then $r_1(n) = r_2(n) = 0$, and we get the same type of results as in \cite{van2007super}.

Invoking the theorem with the square loss that we used in our paper, we get:
\begin{align}
    d(f, f_0) :=~& \|f-f_0\|^2\\
    r_1(n)^2 =~& \Theta\left(\frac{\E\left[\bias(X;\hat{g})\, (f_{\bar{k}}(X)-f_0(X))\right]^2}{\E\left[\|f_{\bar{k}}-f_0\|^2\right]} \right) \stackrel{(i)}{\leq}\Theta\left(\frac{\E\left[\bias(X;\hat{g})^2\right]\, \E\left[(f_{\bar{k}}(X)-f_0(X))^2\right]}{\E\left[\|f_{\bar{k}}-f_0\|^2\right]} \right)\\
    =~& \Theta\left(\frac{\E \left[\bias(X;\hat{g})^2\right] \E[\|f_{\bar{k}}-f_0\|^2]}{\E\left[\|f_{\bar{k}}-f_0\|^2\right]} \right)
     = \Theta(\E\left[\bias(X;\hat{g})^2\right])\\
    r_2(n)^2 =~& 
    \E\left[\max_j\left\{\E\left[\E[(\hat{Y}(\hat{g}) - \hat{Y}(g_0))^2|X] (f_j(X) - f_0(X))^2\right]\right\}\right]
    = \Theta\left(\E\left[(\hat{Y}(\hat{g}) - \hat{Y}(g_0))^2\right]\right)\\
    f_* :=~& f_{\tilde{k}}, ~~~ \tau_0 := f_0, ~~~~ \hat{f} := f_{\hat{k}} \\
    \tilde{r}(n) =~& \|f_* - \tau_0\|^2
\end{align}
Moreover, inequality $(i)$ is tight whenever $\bias(X;\hat{g})$ is independent of $f_k(X) - f_0(X)$, which cannot be excluded without further hard-to-justify assumptions. Therefore, the result of \cite{van2003unified} yields bounds of the form:
\begin{align}
     {\cal S}(n) :=~& \|\hat{f} - \tau_0\|^2 - \|f_* -\tau_0\|^2\\
     \leq~& \Theta\left(\delta \|f_* -\tau_0\|^2 + \frac{1}{\delta} \frac{\log(M)}{n} + \E\left[\bias(X;\hat{g})^2\right] + \sqrt{\E\left[\bias(X;\hat{g})^2\right]}\|f_* - \tau_0\| + {\frac{\log(M)}{\sqrt{n}}}\E\left[(\hat{Y}(\hat{g}) - \hat{Y}(g_0))^2\right]^{ {\frac{1}{2}}} \right)
\end{align}
for any constant $\delta>0$. Note that the optimal choice of $\delta$, yields a rate of:
\begin{align}
     {\cal S}(n) 
     \leq~& \Theta\left(\|f_* -\tau_0\|\, \sqrt{\frac{\log(M)}{n}}  + \E\left[\bias(X;\hat{g})^2\right] + \sqrt{\E\left[\bias(X;\hat{g})^2\right]}\|f_* - \tau_0\| +  {\frac{\log(M)}{\sqrt{n}}}\E\left[(\hat{Y}(\hat{g}) - \hat{Y}(g_0))^2\right]^{ {\frac{1}{2}}} \right)
\end{align}

On the contrary, our Theorem~\ref{thm:best_erm} yields an oracle inequality of the form:
\begin{align}
    {\cal S}(n) \leq~& 
    O\left(\epsilon +\frac{\log(M)}{n} + \E\left[\bias(X;\hat{g})^2\right] \right)&
    \epsilon =~& \|f_*-\tau_0\|^2 - \|\bar{f}_* - \tau_0\|^2
\end{align}
We identify the following critical differences in the two results:
\begin{enumerate}
    \item The term $ {\frac{\log(M)}{\sqrt{n}}}\E\left[(\hat{Y}(\hat{g}) - \hat{Y}(g_0))^2\right]^{ {\frac{1}{2}}}$ that appears in the result of \cite{van2003unified} does not appear in our bound. This is crucial for the doubly robust application. Note that $\E[\bias(X;\hat{g})^2]$ takes the form of the product of the errors of the propensity and the regression estimates, i.e. $\E[(\hat{\alpha}-\alpha_0)^2\, (\hat{g}-g_0)^2]$. On the contrary the quantity $\E\left[(\hat{Y}(\hat{g}) - \hat{Y}(g_0))^2\right]^{ {\frac{1}{2}}}$ does not take this product form. Thus, we would incur a dependence on the error of propensity and the regression, i.e. $\E[(\hat{\alpha}-\alpha_0)^2]^{ {\frac{1}{2}}} + \E[(\hat{g}-g_0)^2]^{ {\frac{1}{2}}}$. The latter wont converge faster than $1/ {\sqrt{n}}$, even for parametric functions and thereby the estimation error of the regression and the propensity cannot be ignored in the selection error.
    \item For any reasonable value of $\delta$, the result of \cite{van2003unified} would have a linear dependence on $\|f_*-\tau_0\|^2$. On the contrary our error only depends on the strictly smaller quanity $\epsilon = \|f_*-\tau_0\|^2 - \|\bar{f}_*-\tau_0\|^2$, where $\bar{f}_*$ is the minizer of the square loss within the convex hull of the original set of functions. The quantity $\epsilon$ has the property that when there is a ``dominant'' model, such that the best in the convex hull puts almost all weight on one model, then $\epsilon$ is almost zero, while $\|f_*-\tau_0\|^2$ can be very far away from zero. Note that when $\|f_*-\tau_0\|$ is far from zero, then the optimal choice of $\delta$, yields only a slow rate of $\sqrt{\log(M)/n}$ and not a fast rate of $\log(M)/n$.
\end{enumerate}

Moreover, our main Q-aggregation Theorem~\ref{thm:main} yields an oracle inequality of the form:
\begin{align}
    {\cal S}(n) \leq~& 
    O\left(\frac{\log(M)}{n} + \E\left[\bias(X;\hat{g})^2\right] \right)
\end{align}
We identify the following critical differences in the two results:
\begin{enumerate}
    \item The term $ {\frac{\log(M)}{\sqrt{n}}}\E\left[(\hat{Y}(\hat{g}) - \hat{Y}(g_0))^2\right]^{ {\frac{1}{2}}}$ does not appear in our bound, which is crucial for our doubly robust Q-aggregation results for the same reason as outlined in the bullet points above.
    \item For any reasonable value of $\delta$, the result of \cite{van2003unified} would have a linear dependence on $\|f_*-\tau_0\|^2$. On the contrary our error bound incurs no dependence on the performance of the best model! Even if the best model is very far from the true function $f_0$, we still compete with the best model at a $\log(M)/n +\E[\bias(X;\hat{g})^2]$ rate.
\end{enumerate}

\section{Proofs for ERM-Based Selection}\label{app:erm}
\subsection{Preliminaries}
We will be invoking a generalization of orthogonal statistical learning theorem \citep{foster2023orthogonal}.

First, we describe the assumptions for the theorem. Let $F$ and $F'$ be two function spaces such that $F\subseteq F'$. Consider $f_* \in F$ and $f' \in F'$. Let $d(g,g_0)$ denote a distance metric in the function space $G$. The the remaining of the section, we will assume that the loss $\ell_{f,g}$ is $C_b$-lipschitz in $f(X)$.

\begin{assumption}[Orthogonal Loss] \label{assum:1}
The population risk is Neyman Orthogonal, i.e. for all $\bar{f} \in \text{Star}(F,f')$ and $g \in G$: 
\begin{align}
    D_{f}D_{g}P \ell_{f',g_0}(Z)[g- g_0,f-f']=0 
\end{align}
\end{assumption}

\begin{assumption}[First Order Optimality]\label{assum:2}
The estimator $f'$ for the population risk satisfies the first-order optimality condition:
\begin{align}
    D_{f}P \ell_{f',g_0}(Z)[f-f']\geq 0 \quad \forall f \in \text{Star}(F, f')
\end{align}
\end{assumption}

\begin{assumption}[Higher Order Smoothness] \label{assum:3}
There exist constants $
\beta_1$ and $\beta_2$ such that the following holds:
\begin{itemize}
    \item[a)] Second order smoothness with respect to the target $f$. For all $f \in F$ and all $\bar{f} \in  \text{Star}(F, f')$:
        \begin{align}
            D^{2}_{f}P \ell_{\bar{f},g_0}(Z)[f - f',f-f'] \leq \beta_1 \|f-f'\|^2
        \end{align}
    \item[b)] Higher order smoothness. There exist $r\in [0,1)$ such that for all $f \in \text{Star}(F, f')$, $g \in G$, and  $\bar{g} \in \text{Star}(G, g_0)$:
        \begin{align}
            \left|D^{2}_{g}D_{f}P \ell_{f',\bar{g}}(Z)[f - f',g-g_0,g-g_0]\right| \leq \beta_2 \| f - f'\|^{1-r} d(g,g_0)^2
        \end{align}
\end{itemize}
\end{assumption}

\begin{assumption}[Strong Convexity]\label{assum:4}
The population risk is strongly convex with respect to the target function: there exist constants $\lambda$, $
\kappa >0$ such that for all $f \in F$ and an estimate $\hat{g} \in G$.
\begin{align}
    D^{2}_{f} P\ell_{\bar{f},\hat{g}}(Z)[f-f',f-f']\geq \lambda \|f-f'\|^2 - \kappa d(\hat{g},g_0)^{\frac{4}{1+r}} \quad \forall \bar{f}\in \text{Star}(F,f')
\end{align}
\end{assumption}

Alternatively, Assumption \ref{assum:1} and Assumption \ref{assum:3}(b) may be replaced by the following assumption.

\begin{assumption}\label{assum:for_square_loss}
    There exist $r\in [0,1)$ and constant $\beta_3$ such that for all $f, \bar{f}\in \text{Star}(F,f')$, $g\in G$:
    \begin{align}
       |D_f P\ell(\bar{f}, g)[f-f'] - D_f P\ell(\bar{f}, g_0)[f-f']| \leq \beta_3\|f-f'\|^{1-r}d(g,g_0)^{2}
    \end{align}
\end{assumption}

\begin{theorem}[Fast Rates Under Strong Convexity \citep{foster2023orthogonal}] \label{thm:osl}
Suppose \crefrange{assum:1}{assum:4}, or alternatively, \cref{assum:2,assum:4,assum:for_square_loss} and Assumption \ref{assum:3}(a), are satisfied for some $f'\in F'$ and function space $F\subseteq F'$. Then any estimator $\hat{f} \in F$ satisfies:
\begin{align}
    P\ell_{\hat{f},g_0}(Z) - P\ell_{f',g_0}(Z) \leq O\left(  P\ell_{\hat{f},\hat{g}}(Z) - P\ell_{f',\hat{g}}(Z) + d(\hat{g},g_0)^{\frac{4}{1+r}}\right)
\end{align}
Moreover, if the estimator $\hat{f}$ satisfies with probability at least $1-\delta$,
\begin{align}
    P\ell_{\hat{f},\hat{g}}(Z) - P\ell_{f',\hat{g}}(Z) \leq \epsilon_n(\delta) \| \hat{f} - f'\| + \alpha_n(\delta)
\end{align}
for functions $\epsilon_n(\delta)$ and $\alpha_n(\delta)$, then we have, with probability at least $1-\delta$
\begin{align}
    P\ell_{\hat{f},g_0}(Z) - P\ell_{f',g_0}(Z) \leq O\left(\epsilon_n(\delta)^2 + \alpha_n(\delta) +d(\hat{g},g_0)^{\frac{4}{1+r}} \right)
\end{align}   
\end{theorem}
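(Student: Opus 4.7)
The plan is a Taylor-expansion argument in the joint $(f,g)$-direction centered at $(f', g_0)$, exploiting Neyman orthogonality to eliminate first-order cross-terms in the nuisance and strong convexity to absorb the resulting mixed residual. Concretely, I would start from the identity
\begin{align*}
P\ell_{\hat{f}, g_0} - P\ell_{f', g_0} = (P\ell_{\hat{f}, \hat{g}} - P\ell_{f', \hat{g}}) - \Delta_g,
\end{align*}
where $\Delta_g := (P\ell_{\hat{f}, \hat{g}} - P\ell_{f', \hat{g}}) - (P\ell_{\hat{f}, g_0} - P\ell_{f', g_0})$, and show that $\Delta_g$ is of second order in the nuisance error via a Taylor expansion of the map $t \mapsto P\ell_{\hat{f}, g_0 + t(\hat{g}-g_0)} - P\ell_{f', g_0 + t(\hat{g}-g_0)}$.

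To bound $\Delta_g$, I would write its first-order term at $t=0$ as $\int_0^1 D_f D_g P\ell_{f' + s h, g_0}[h, \hat{g} - g_0]\, ds$ with $h := \hat{f} - f'$, which vanishes by Assumption \ref{assum:1} applied pointwise along the star between $f'$ and $\hat{f}$. The integral remainder is of the form $D_g^2 (P\ell_{\hat{f}, \bar{g}} - P\ell_{f', \bar{g}})[\hat{g}-g_0, \hat{g}-g_0]$ at some intermediate $\bar{g}$; re-expressing this as an integral of $D_f D_g^2 P\ell_{f' + s h, \bar{g}}$ along the $f$-path and invoking Assumption \ref{assum:3}(b) gives $|\Delta_g| \leq \beta_2 \|h\|^{1-r} d(\hat{g}, g_0)^2$. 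Under the alternative Assumption \ref{assum:for_square_loss}, I would instead write $\Delta_g = \int_0^1 (D_f P\ell_{f'+sh, \hat{g}} - D_f P\ell_{f'+sh, g_0})[h]\, ds$ and apply the assumption directly, yielding the same structural bound.

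Next, I would obtain the population lower bound $P\ell_{\hat{f}, g_0} - P\ell_{f', g_0} \geq \tfrac{\lambda}{2}\|h\|^2$ by Taylor expansion of $P\ell_{\cdot, g_0}$ in $f$ around $f'$, using first-order optimality (Assumption \ref{assum:2}) to discard the linear term and Assumption \ref{assum:4} specialized to $\hat{g} = g_0$ for the quadratic term. A calibrated Young's inequality then splits the mixed residual: for any $\eta > 0$, $\|h\|^{1-r} d(\hat{g}, g_0)^2 \leq \eta \|h\|^2 + C_\eta\, d(\hat{g}, g_0)^{4/(1+r)}$. Choosing $\eta$ so that $\tfrac{2\beta_2 \eta}{\lambda} \leq \tfrac{1}{2}$ and substituting $\|h\|^2 \leq \tfrac{2}{\lambda}(P\ell_{\hat{f}, g_0} - P\ell_{f', g_0})$ absorbs the quadratic term into the left-hand side, producing $P\ell_{\hat{f}, g_0} - P\ell_{f', g_0} \leq 2(P\ell_{\hat{f}, \hat{g}} - P\ell_{f', \hat{g}}) + O\bigl(d(\hat{g}, g_0)^{4/(1+r)}\bigr)$, which is the first claim.

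For the second claim, I would substitute the empirical bound $P\ell_{\hat{f}, \hat{g}} - P\ell_{f', \hat{g}} \leq \epsilon_n(\delta)\|h\| + \alpha_n(\delta)$ into the first claim and apply Young once more: $\epsilon_n(\delta)\|h\| \leq \tfrac{\lambda}{8}\|h\|^2 + \tfrac{2\epsilon_n(\delta)^2}{\lambda}$. The quadratic term is again subsumed through the same strong-convexity absorption, yielding the target rate $O\bigl(\epsilon_n(\delta)^2 + \alpha_n(\delta) + d(\hat{g}, g_0)^{4/(1+r)}\bigr)$. The main obstacle will be a careful bookkeeping of constants so that each successive invocation of Young's inequality leaves a $\|h\|^2$ coefficient strictly smaller than what the strong-convexity lower bound can absorb, avoiding a circular inequality; a secondary subtlety is that Assumption \ref{assum:4} provides strong convexity only at $\hat{g}$ (with a $-\kappa d(\hat{g}, g_0)^{4/(1+r)}$ slack), so one must verify that specializing it to $g_0$ produces the clean $\tfrac{\lambda}{2}\|h\|^2$ lower bound used above and that the slack term cleanly folds into the final nuisance remainder.
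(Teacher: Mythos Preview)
Your proposal has a genuine gap under the first set of hypotheses (Assumptions~\ref{assum:1}--\ref{assum:4}). You invoke Assumption~\ref{assum:1} and Assumption~\ref{assum:3}(b) ``pointwise along the star between $f'$ and $\hat f$,'' but as written both are formulated only at the base point $f'$: the orthogonality condition is $D_fD_gP\ell_{f',g_0}[\,\cdot\,,\,\cdot\,]=0$, and the higher-order smoothness bound controls $D_g^2D_fP\ell_{f',\bar g}$, not the analogues at intermediate points $f'+sh$. Hence your integral representation of the first-order $g$-term of $\Delta_g$ does \emph{not} vanish, and the second-order remainder is not controlled either. If you try to repair it by Taylor-expanding $D_gP\ell_{\hat f,g_0}[\hat g-g_0]$ in $f$ around $f'$, the leading term does vanish by Assumption~\ref{assum:1}, but you pick up a $D_f^2D_g$ remainder that the stated assumptions do not bound. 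Bounding $\Delta_g$ directly thus requires orthogonality along the whole star, which is strictly stronger than Assumption~\ref{assum:1}.

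The paper avoids this by never isolating $\Delta_g$. It Taylor-expands $P\ell_{\hat f,\hat g}-P\ell_{f',\hat g}$ in $f$ around $f'$, uses Assumption~\ref{assum:4} at $\hat g$ on the quadratic term to obtain
$\tfrac{\lambda}{2}\|h\|^2\le (P\ell_{\hat f,\hat g}-P\ell_{f',\hat g})-D_fP\ell_{f',\hat g}[h]+O\bigl(d(\hat g,g_0)^{4/(1+r)}\bigr)$,
and then expands only the scalar $D_fP\ell_{f',\hat g}[h]$ in $g$ around $g_0$; now orthogonality and Assumption~\ref{assum:3}(b) are applied exactly at $f'$, as stated. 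This gives a bound on $\|h\|^2$ involving $-D_fP\ell_{f',g_0}[h]$. Separately, Assumption~\ref{assum:3}(a) yields $P\ell_{\hat f,g_0}-P\ell_{f',g_0}\le D_fP\ell_{f',g_0}[h]+\tfrac{\beta_1}{2}\|h\|^2$; after substitution the first-derivative term carries a net negative coefficient, and first-order optimality makes it nonpositive. Note that Assumption~\ref{assum:3}(a) is essential to this route and is not used in yours. Under the alternative hypothesis set your path-integral is valid (Assumption~\ref{assum:for_square_loss} is stated for all $\bar f$ in the star) and your self-referential absorption via strong convexity at $g_0$ then goes through---indeed without needing Assumption~\ref{assum:3}(a)---but this does not rescue the argument under Assumptions~\ref{assum:1}--\ref{assum:4}.
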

Despite the small generalization made to consider any $f'\in F'$ that satisfies the first order optimality condition, the proof follows the original proof closely, and is included in Section \ref{sec:osl_proof} for completeness. We also showed that the same results can be obtained if Assumption \ref{assum:for_square_loss} is satisfied instead of \cref{assum:1,assum:3}.

\begin{lemma}\label{lemma:generalization}
    Assume that the Assumptions \ref{assum:1} and \ref{assum:3}(b) hold, or alternatively \cref{assum:for_square_loss} holds, for some $f'\in F'$. Then for any $f_* \in F$ we have:
    \begin{align}
        P\ell_{\hat{f},\hat{g}}(Z) - P\ell_{f',\hat{g}}(Z) \leq 
        P\ell_{\hat{f},\hat{g}}(Z) - P\ell_{f_*,\hat{g}}(Z) + {2(P\ell_{f_*,g_0}(Z) - P\ell_{f',g_0}(Z)) + O\left(((2/\lambda)^{\frac{2}{1+r}} + \kappa)\,d(\hat{g},g_0)^{\frac{4}{1+r}}\right)}
    \end{align}
    Moreover, if $\ell$ is a square loss, we have:
    \begin{align}
        P\ell_{\hat{f},\hat{g}}(Z) - P\ell_{f',\hat{g}}(Z) \leq 
        P\ell_{\hat{f},\hat{g}}(Z) - P\ell_{f_*,\hat{g}}(Z) + {2\left(\|f_*-\tau_0\|^2 - \|f'-\tau_0\|^2\right) + O\left(d(\hat{g},g_0)^{\frac{4}{1+r}}\right)}
    \end{align}
\end{lemma}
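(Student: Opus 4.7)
The plan is to split the LHS as $P\ell_{\hat f, \hat g}(Z) - P\ell_{f', \hat g}(Z) = (P\ell_{\hat f, \hat g}(Z) - P\ell_{f_*, \hat g}(Z)) + T$, with $T := P\ell_{f_*, \hat g}(Z) - P\ell_{f', \hat g}(Z)$; the first summand is exactly the term on the RHS, so everything reduces to showing $T \leq 2(P\ell_{f_*, g_0}(Z) - P\ell_{f', g_0}(Z)) + O(d(\hat g, g_0)^{4/(1+r)})$. Further decomposing $T = (P\ell_{f_*, g_0} - P\ell_{f', g_0}) + B$ isolates a nuisance bias $B := (P\ell_{f_*, \hat g} - P\ell_{f_*, g_0}) - (P\ell_{f', \hat g} - P\ell_{f', g_0})$, and the target becomes $|B| \leq (P\ell_{f_*, g_0} - P\ell_{f', g_0}) + O(d(\hat g, g_0)^{4/(1+r)})$.

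For the general case, the first step is to bound $|B|$ by $\beta_3 \|f_*-f'\|^{1-r}\, d(\hat g, g_0)^2$. Using Assumption~\ref{assum:for_square_loss}, this follows immediately from the fundamental theorem of calculus along the segment $f_s := f' + s(f_*-f') \in \text{Star}(F, f')$:
\begin{align*}
B = \int_0^1 \bigl(D_f P\ell(f_s, \hat g)[f_*-f'] - D_f P\ell(f_s, g_0)[f_*-f']\bigr)\, ds,
\end{align*}
so that the assumption applied uniformly in $s$ yields the claim. Under Assumptions~\ref{assum:1} and~\ref{assum:3}(b) instead, the same bound is obtained by writing $B = \psi(f_*) - \psi(f')$ with $\psi(f) := P\ell_{f, \hat g} - P\ell_{f, g_0}$ and expanding in $g$: orthogonality kills the first-order-in-$g$ piece and Assumption~\ref{assum:3}(b) controls the quadratic remainder. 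A weighted Young's inequality tuned so that the coefficient of $\|f_*-f'\|^2$ equals exactly $\lambda/2$ then splits $\beta_3 \|f_*-f'\|^{1-r}\, d(\hat g, g_0)^2$ into $(\lambda/2)\|f_*-f'\|^2$ and a remainder of order $(2/\lambda)^{2/(1+r)}\, d(\hat g, g_0)^{4/(1+r)}$ (after absorbing smoothness constants). Combining strong convexity at $g_0$ (Assumption~\ref{assum:4} specialized to $\hat g = g_0$, which makes the $\kappa$ penalty vanish there) with first-order optimality (Assumption~\ref{assum:2}) gives $(\lambda/2)\|f_*-f'\|^2 \leq P\ell_{f_*, g_0} - P\ell_{f', g_0}$, which closes the argument; the additive $\kappa$ contribution in the stated coefficient enters whenever Assumption~\ref{assum:4} is invoked at $\hat g$ elsewhere in the bookkeeping.

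For the square-loss refinement, the computation simplifies because $D_f^2 P\ell$ does not depend on $g$: a direct expansion of the four squared terms collapses to the identity $B = 2\,\E[(f'(X) - f_*(X))\, \bias(X; \hat g)]$. Cauchy-Schwarz then yields $|B| \leq \|f_*-f'\|_{L^2}\, \error(\hat g)$, and AM-GM with weight $\eta = 2$, combined with the Pythagorean-type inequality $\|f_*-f'\|^2 \leq P\ell_{f_*, g_0} - P\ell_{f', g_0} = \|f_*-\tau_0\|^2 - \|f'-\tau_0\|^2$ (itself a consequence of first-order optimality plus strong convexity with $\lambda = 2$), gives $|B| \leq (\|f_*-\tau_0\|^2 - \|f'-\tau_0\|^2) + O(\error(\hat g)^2)$, matching the claim after identifying $\error(\hat g)^2$ with $O(d(\hat g, g_0)^{4/(1+r)})$.

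The main obstacle is controlling the Hessian contribution in the general-loss case: the alternative expansion $B = D_f \psi(f')[f_*-f'] + \tfrac{1}{2}D_f^2 \psi(\bar f)[f_*-f', f_*-f']$ leaves a Hessian-difference term that must be absorbed into strong convexity without degrading the rate, and this requires either a uniform-in-$g$ version of Assumption~\ref{assum:3}(a) or a Lipschitz-in-$g$ bound on $D_f^2 P\ell$. The square-loss case avoids this issue entirely, since $D_f^2 \psi \equiv 0$ makes the bias a pure first-derivative difference; this is precisely what lets the corollary's constants be explicit and lets the $\kappa$ dependence drop from the stated bound.
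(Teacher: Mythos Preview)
Your proposal is correct and follows essentially the same route as the paper's proof: the same additive decomposition isolating $P\ell_{f_*,\hat g}-P\ell_{f',\hat g}$, the same Taylor-in-$g$ argument (orthogonality kills the linear term, Assumption~\ref{assum:3}(b) or Assumption~\ref{assum:for_square_loss} bounds the remainder), then Young's inequality to produce $(\lambda/2)\|f_*-f'\|^2$, which is absorbed via first-order optimality plus strong convexity. The only cosmetic difference is that for the square-loss case you recompute the bias directly and apply Cauchy--Schwarz, whereas the paper simply specializes the general bound with $\lambda=2$, $\kappa=0$ and the realizability identity $P\ell_{f_*,g_0}-P\ell_{f',g_0}=\|f_*-\tau_0\|^2-\|f'-\tau_0\|^2$; your closing paragraph on the Hessian obstacle is extra commentary not present in the paper.
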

\begin{proof}
\begin{align}
    P\ell_{\hat{f},\hat{g}}(Z) - P\ell_{f',\hat{g}}(Z) =~& (P\ell_{\hat{f},\hat{g}}(Z) - P\ell_{f_*,\hat{g}}(Z)) +(P\ell_{f_*,\hat{g}}(Z) - P\ell_{f',\hat{g}}(Z))
\end{align}
Let us first consider the case where Assumptions \ref{assum:1} and \ref{assum:3}(b) holds.
For any $g$, taking a first order Taylor expansion yields:
\begin{align}
    P\ell_{f_*,g}(Z) - P\ell_{f',g}(Z) = D_{f}P\ell_{\bar{f},g}(Z)[f_*-f']
\end{align} for some $\bar{f}\in\text{Star}(F,f')$.
Let us first consider the excess risk for evaluated at $g_0$. 

Taking a second order Taylor expansion for the $(P\ell_{f_*,\hat{g}}(Z) - P\ell_{f',\hat{g}}(Z))$ term at $g_0$, we get:
\begin{align}
    (P\ell_{f_*,\hat{g}}(Z) - P\ell_{f',\hat{g}}(Z)) =~& (P\ell_{f_*,g_0}(Z) - P\ell_{f',g_0}(Z))+ D_{g}D_{f}P\ell_{\bar{f},g_0}(Z)[f_*-f',\hat{g}-g_0]\\
    +~& \frac{1}{2}D_{g}^2D_{f}P\ell_{\bar{f},\bar{g}}(Z)[f_*-f',\hat{g}-g_0,\hat{g}-g_0] \\
    =~&(P\ell_{f_*,g_0}(Z) - P\ell_{f',g_0}(Z))+\frac{1}{2}D_{g}^2D_{f}P\ell_{\bar{f},\bar{g}}(Z)[f_*-f',\hat{g}-g_0,\hat{g}-g_0] \tag{By Assumption \ref{assum:1}}\\
    \leq~&(P\ell_{f_*,g_0}(Z) - P\ell_{f',g_0}(Z)) +\frac{1}{2}D_{g}^2D_{f}P\ell_{\bar{f},\bar{g}}(Z)[f_*-f',\hat{g}-g_0,\hat{g}-g_0] \\
    \leq~&(P\ell_{f_*,g_0}(Z) - P\ell_{f',g_0}(Z)) + \frac{\beta_2}{2} \| f_* - f'\|^{1-r} d(\hat{g},g_0)^2 \tag{By Assumption \ref{assum:3}(b)}\\
    \leq~& (P\ell_{f_*,g_0}(Z) - P\ell_{f',g_0}(Z)) {+ \frac{\lambda}{2}\| f_* - f'\|^{2} + O\left((2/\lambda)^{\frac{2}{1+r}} d(\hat{g},g_0)^{\frac{4}{1+r}}\right) }\tag{Young's inequality}
\end{align}

Alternatively, if Assumption \ref{assum:for_square_loss} holds, then it follows immediately from a first order Taylor expansion that
\begin{align}
    (P\ell_{f_*,\hat{g}}(Z) - P\ell_{f',\hat{g}}(Z)) -(P\ell_{f_*,g_0}(Z) - P\ell_{f',g_0}(Z)) =~& D_f P\ell(\bar{f}, g)[\hat{f}-f'] - D_f P\ell(\bar{f}, g_0)[\hat{f}-f'] \\
    \leq~& \beta_3\|f-f'\|^{1-r}d(\hat{g},g_0)^{2}\\
    \leq~& {\frac{\lambda}{2}\| f_* - f'\|^{2} + O\left((2/\lambda)^{\frac{2}{1+r}} d(\hat{g},g_0)^{\frac{4}{1+r}}\right)}
\end{align}
and the same result follows.

{By the first order optimality of $f'$ and strong convexity, we have that:
\begin{align}
    P\ell_{f_*,g_0}(Z) - P\ell_{f',g_0}(Z) \geq D_{f} P\ell_{f',g_0}[f_*-f'] + \frac{\lambda}{2} \|f_*-f'\|^2 - \frac{\kappa}{2} d(\hat{g},g_0)^{\frac{4}{1+r}}\geq \frac{\lambda}{2} \|f_*-f'\|^2 - \frac{\kappa}{2} d(\hat{g},g_0)^{\frac{4}{1+r}}
\end{align}
Thus we deduce that:
\begin{align}
    P\ell_{f_*,\hat{g}}(Z) - P\ell_{f',\hat{g}}(Z) \leq 2(P\ell_{f_*,g_0}(Z) - P\ell_{f',g_0}(Z)) + O\left(((2/\lambda)^{\frac{2}{1+r}}+\kappa) d(\hat{g},g_0)^{\frac{4}{1+r}}\right) 
\end{align}}

For square losses, $\lambda=2$, $\kappa=0$ and as we will show in Section \ref{sec:sq_loss} (by the realizability of $\tau_0$, i.e. $\E[Y(g_0)\mid X]=\tau_0(X)$):
\begin{align}
    (P\ell_{f_*,g_0}(Z) - P\ell_{f',g_0}(Z)) = \|f_*-\tau_0\|^2 - \|f'-\tau_0\|^2
\end{align}
So we get:
\begin{align}
    P\ell_{\hat{f},\hat{g}}(Z) - P\ell_{f',\hat{g}}(Z) \leq~& (P\ell_{\hat{f},\hat{g}}(Z) - P\ell_{f_*,\hat{g}}(Z)) + 2(\|f_*-\tau_0\|^2 - \|f'-\tau_0\|^2) + O\left( d(\hat{g},g_0)^{\frac{4}{1+r}}\right)
\end{align}
\end{proof}

{\begin{corollary}[Generalization of Theorem \ref{thm:osl}] \label{cor:generalization}
Suppose \crefrange{assum:1}{assum:4}, or alternatively, \cref{assum:2,assum:4,assum:for_square_loss} and Assumption \ref{assum:3}(a), are satisfied for some $f'\in F'$ and function space $F\subseteq F'$ and for some constants $\lambda, \beta_1, \beta_2, \beta_3, \kappa, r$. Let $f_*$ be any function in $F$. Then any estimator $\hat{f} \in F$ satisfies
\begin{align}
    P\ell_{\hat{f},g_0}(Z) - P\ell_{f',g_0}(Z) \leq O\left(P\ell_{\hat{f},\hat{g}}(Z) - P\ell_{f_*,\hat{g}}(Z) +(P\ell_{f_*,g_0}(Z) - P\ell_{f',g_0}(Z)) + d(\hat{g},g_0)^{\frac{4}{1+r}}\right)
\end{align}
and for square losses:
\begin{align}
    P\ell_{\hat{f},g_0}(Z) - P\ell_{f',g_0}(Z) \leq O\left(  P\ell_{\hat{f},\hat{g}}(Z) - P\ell_{f_*,\hat{g}}(Z) + \|f_*-\tau_0\|^2 - \|f'-\tau_0\|^2 + d(\hat{g},g_0)^{\frac{4}{1+r}}\right)
\end{align}
Moreover, if the estimator $\hat{f}$ satisfies w.p. $1-\delta$
\begin{align}
    P\ell_{\hat{f},\hat{g}}(Z) - P\ell_{f_*,\hat{g}}(Z) \leq \epsilon_n(\delta) \| \hat{f} - f_*\| + \alpha_n(\delta)
\end{align}
for functions $\epsilon_n(\delta)$ and $\alpha_n(\delta)$, then we have, with probability at least $1-\delta$
\begin{align}
    P\ell_{\hat{f},g_0}(Z) - P\ell_{f',g_0}(Z) \leq O\left((P\ell_{f_*,g_0}(Z) - P\ell_{f',g_0}(Z)) + \epsilon_n(\delta)^2 + \alpha_n(\delta) +d(\hat{g},g_0)^{\frac{4}{1+r}} \right)
\end{align} 
and for square losses:
\begin{align}
    P\ell_{\hat{f},g_0}(Z) - P\ell_{f',g_0}(Z) \leq O\left(\|f_*-\tau_0\|^2 - \|f'-\tau_0\|^2  + \epsilon_n(\delta)^2 + \alpha_n(\delta) +d(\hat{g},g_0)^{\frac{4}{1+r}} \right)
\end{align} 
\end{corollary}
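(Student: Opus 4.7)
The plan is to chain the two main ingredients already in hand: Theorem~\ref{thm:osl} (which controls the $g_0$ excess risk of $\hat{f}$ against the anchor $f'$ by its empirical $\hat g$ counterpart up to nuisance error) and Lemma~\ref{lemma:generalization} (which swaps the benchmark inside the $\hat g$ excess risk from $f'$ to an arbitrary $f_\ast\in F$, at the price of $2(P\ell_{f_\ast,g_0}-P\ell_{f',g_0})$ and a $d(\hat g,g_0)^{4/(1+r)}$ term). Applying Theorem~\ref{thm:osl} first gives
\begin{align*}
 P\ell_{\hat f,g_0}-P\ell_{f',g_0} \;\leq\; O\!\left(P\ell_{\hat f,\hat g}-P\ell_{f',\hat g} + d(\hat g,g_0)^{\frac{4}{1+r}}\right),
\end{align*}
then inserting Lemma~\ref{lemma:generalization} inside yields
\begin{align*}
 P\ell_{\hat f,g_0}-P\ell_{f',g_0} \;\leq\; O\!\left(P\ell_{\hat f,\hat g}-P\ell_{f_\ast,\hat g} + (P\ell_{f_\ast,g_0}-P\ell_{f',g_0}) + d(\hat g,g_0)^{\frac{4}{1+r}}\right),
\end{align*}
which is exactly the first displayed inequality. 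For the square-loss case, I would simply specialize using $P\ell_{f,g_0}(Z)-P\ell_{\tau_0,g_0}(Z)=\|f-\tau_0\|^2$ (which follows from the realizability identity $\E[\hat Y(g_0)\mid X]=\tau_0(X)$ used in Section~\ref{sec:sq_loss}), so that $P\ell_{f_\ast,g_0}-P\ell_{f',g_0}=\|f_\ast-\tau_0\|^2-\|f'-\tau_0\|^2$.

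For the fast-rate version, I would substitute the hypothesis $P\ell_{\hat f,\hat g}-P\ell_{f_\ast,\hat g}\leq \epsilon_n(\delta)\|\hat f-f_\ast\|+\alpha_n(\delta)$ into the slow-rate bound above. The snag is that Theorem~\ref{thm:osl}'s fast-rate machinery absorbs a term of the form $\epsilon_n(\delta)\|\hat f-f'\|$, while here the norm is with respect to $f_\ast$. I would handle this by the triangle inequality $\|\hat f-f_\ast\|\leq \|\hat f-f'\|+\|f'-f_\ast\|$, followed by Young's inequality on each piece with the strong-convexity constant $\lambda$:
\begin{align*}
 \epsilon_n(\delta)\|\hat f-f_\ast\| \;\leq\; \tfrac{\lambda}{4}\|\hat f-f'\|^2 + \tfrac{\lambda}{4}\|f'-f_\ast\|^2 + \tfrac{2}{\lambda}\epsilon_n(\delta)^2.
\end{align*}
The two quadratic pieces are each absorbed using Assumption~\ref{assum:4}: $\tfrac{\lambda}{2}\|\hat f-f'\|^2\lesssim P\ell_{\hat f,g_0}-P\ell_{f',g_0}+O(d(\hat g,g_0)^{4/(1+r)})$ goes back onto the left-hand side, and $\tfrac{\lambda}{2}\|f_\ast-f'\|^2\lesssim P\ell_{f_\ast,g_0}-P\ell_{f',g_0}+O(d(\hat g,g_0)^{4/(1+r)})$ is already present on the right-hand side. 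After rearranging, the $\epsilon_n(\delta)\|\hat f-f_\ast\|$ term has been converted into $O(\epsilon_n(\delta)^2)$ plus contributions that are either absorbed or already being bounded, giving the stated fast rate.

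The main obstacle, as above, is verifying that every piece of $\epsilon_n(\delta)\|\hat f-f_\ast\|$ that is not a $\epsilon_n(\delta)^2$ term can be legally absorbed: this requires the strong-convexity lower bound of Assumption~\ref{assum:4} to be invokable at both $(\hat f, f')$ and $(f_\ast, f')$, which is fine since both endpoints lie in $F$ and the lemma's $\text{Star}(F,f')$ hypothesis covers their segments with $f'$. The square-loss fast-rate statement then follows by the same realizability substitution as in the slow-rate case. All other manipulations are absorbed into the $O(\cdot)$ by adjusting constants that depend only on $\lambda, \beta_1,\beta_2,\beta_3,\kappa,r$, matching the hypothesis list of the corollary.
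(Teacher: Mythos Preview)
Your proposal is correct and follows essentially the same route as the paper: apply Theorem~\ref{thm:osl} and then Lemma~\ref{lemma:generalization} for the first display, and for the fast rate use the triangle inequality $\|\hat f-f_\ast\|\leq\|\hat f-f'\|+\|f_\ast-f'\|$, control $\|f_\ast-f'\|^2$ by $P\ell_{f_\ast,g_0}-P\ell_{f',g_0}$ via strong convexity plus first-order optimality, and convert the remaining $\epsilon_n$-term into $\epsilon_n^2$. The only packaging difference is that the paper leaves $\epsilon_n\|\hat f-f'\|$ intact and then invokes the second conclusion of Theorem~\ref{thm:osl} (which internally does the Young's/absorption step), whereas you apply Young's inequality to that piece yourself and absorb $\|\hat f-f'\|^2$ directly into the left-hand side; both work once the Young's constant is tuned against the hidden constant in the $O(\cdot)$, as you note at the end.
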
}
\begin{proof}
The first result follows from a direct application of Lemma \ref{lemma:generalization}.

Next, 
\begin{align}
    P\ell_{\hat{f},\hat{g}}(Z) - P\ell_{f_*,\hat{g}}(Z) \leq~& \epsilon_n(\delta) \| \hat{f} - f_*\| + \alpha_n(\delta)\\
    \leq~&\epsilon_n(\delta)( \| \hat{f} - f'\|+\| f_* - f'\|) + \alpha_n(\delta)\\
    \leq~&\epsilon_n(\delta)\| \hat{f} - f'\|+ \frac{1}{2}\epsilon_n(\delta)^2+\frac{1}{2}\| f_* - f'\|^2 + \alpha_n(\delta)
\end{align}
{By the first order optimality of $f'$ and strong convexity, we have that:
\begin{align}
    P\ell_{f_*,g_0}(Z) - P\ell_{f',g_0}(Z) \geq D_{f} P\ell_{f',g_0}[f_*-f'] + \lambda \|f_*-f'\|^2 - \kappa d(g,g_0)^{\frac{4}{1+r}}\geq \lambda \|f_*-f'\|^2 - \kappa d(\hat{g},g_0)^{\frac{4}{1+r}}
\end{align}
Thus:
\begin{align}
    P\ell_{\hat{f},\hat{g}}(Z) - P\ell_{f_*,\hat{g}}(Z) 
    \leq~&\epsilon_n(\delta)\| \hat{f} - f'\|+ \frac{1}{2}\epsilon_n(\delta)^2+\alpha_n(\delta) + O\left(P\ell_{f_*,g_0}(Z) - P\ell_{f',g_0}(Z) +d(\hat{g},g_0)^{\frac{4}{1+r}}\right)
\end{align}}

Applying Lemma \ref{lemma:generalization} yields:
\begin{align}
    P\ell_{\hat{f},\hat{g}}(Z) - P\ell_{f',\hat{g}}(Z) \leq~& (P\ell_{\hat{f},\hat{g}}(Z) - P\ell_{f_*,\hat{g}}(Z)) + O\left((P\ell_{f_*,g_0}(Z) - P\ell_{f',g_0}(Z))+  d(\hat{g},g_0)^{\frac{4}{1+r}}\right)\\
    \leq~& \epsilon_n(\delta)\| \hat{f} - f'\| + O\left(\epsilon_n(\delta)^2+ \alpha_n(\delta) + (P\ell_{f_*,g_0}(Z) - P\ell_{f',g_0}(Z)) +  d(\hat{g},g_0)^{\frac{4}{1+r}}\right)
\end{align}
{
Finally, invoking Theorem \ref{thm:osl}, we get:
\begin{align}
    P\ell_{\hat{f},g_0}(Z) - P\ell_{f',g_0}(Z) \leq O\left(\epsilon_n(\delta)^2 + \alpha_n(\delta)+(P\ell_{f_*,g_0}(Z) - P\ell_{f',g_0}(Z))+d(\hat{g},g_0)^{\frac{4}{1+r}} \right)
\end{align}}
Similarly for square losses.
\end{proof}

When strong convexity is not satisfied, slower oracle rates can be shown.

\begin{assumption}[Universal Orthogonality]\label{assum:5}
For all $\bar{f} \in \text{Star}(F, f_*) + \text{Star}(F - f_*, 0)$,
\begin{align}
    D_{g}D_{f}P \ell_{\bar{f},g_0}(Z)[f-f_*,g- g_0]=0 \quad \forall g \in G, f \in F 
\end{align}  
\end{assumption}

\begin{assumption}\label{assum:6}
    The derivatives $D^2_g P \ell_{f,g}(Z)$ and $D^2_f D_g P \ell_{f,g}(Z)$ are continuous. Furthermore, there exists a constant $\beta$ such that for all $f \in \text{Star}(F,f_*)$ and $\bar{g} \in \text{Star}(G,g_0)$,
    \begin{align}
        |D^2_g P \ell_{f,\bar{g}}[g-g_0,g-g_0]| \leq \beta d(g, g_0)^2 \quad \forall g \in G
    \end{align}
\end{assumption}

Similarly, \crefrange{assum:5}{assum:6} can be replaced by the following assumption:

\begin{assumption}\label{assum:for_square_loss_slow}
    There exist a constant $\beta_4$ such that for all $f, \bar{f}\in \text{Star}(F,f')$, $g\in G$:
    \begin{align}
        |D_f P\ell(\bar{f}, g)[f-f'] - D_f P\ell(\bar{f}, g_0)[f-f']| \leq \beta_4 d(g,g_0)^{2}
    \end{align}
\end{assumption}

\begin{theorem}[Slow Rates without Strong Convexity]\label{thm:osl_slow}
Suppose that $f_*$ satisfies \crefrange{assum:5}{assum:6}, or alternatively, Assumption \ref{assum:for_square_loss_slow} holds, then with probability at least $1-\delta$, the target estimator $\hat{f}$ has the following exexss risk bound:
\begin{align}
    P\ell_{\hat{f},g_0}(Z) - P\ell_{f',g_0}(Z) \leq O\left(  P\ell_{\hat{f},\hat{g}}(Z) - P\ell_{f',\hat{g}}(Z) + d(g, g_0)^2\right)
\end{align}

\end{theorem}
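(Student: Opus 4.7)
The plan is to reduce the theorem to the deterministic inequality
\[
\Delta \;:=\; \bigl(P\ell_{\hat f,g_0}(Z) - P\ell_{f',g_0}(Z)\bigr) - \bigl(P\ell_{\hat f,\hat g}(Z) - P\ell_{f',\hat g}(Z)\bigr) \;\leq\; O\!\bigl(d(\hat g,g_0)^{2}\bigr),
\]
from which the claim follows immediately by rearrangement. The ``w.p.\ $1-\delta$'' is purely a carrier for the event on which $\hat g$ lies in the neighbourhood of $g_0$ on which the smoothness/orthogonality assumptions are posited; conditional on that event the argument is deterministic.

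First, assume \cref{assum:5,assum:6}. Let $h = \hat g - g_0$ and consider $\phi(g) := P\ell_{\hat f,g}(Z) - P\ell_{f',g}(Z)$. A second-order Taylor expansion in $g$ gives
\[
\phi(\hat g) - \phi(g_0) \;=\; D_g\phi(g_0)[h] + \tfrac12 D_g^{2}\phi(\bar g)[h,h]
\]
for some $\bar g \in \text{Star}(G,g_0)$. I would bound the two pieces separately. For the linear piece, I would write it as $D_g P\ell_{\hat f,g_0}[h] - D_g P\ell_{f',g_0}[h]$, and then use a first-order Taylor expansion in $f$ to express this as $D_g D_f P\ell_{\bar f,g_0}[\hat f - f',\,h]$. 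Splitting the direction as $\hat f - f' = (\hat f - f_*) + (f_* - f')$, each summand is an admissible direction for the universal orthogonality \cref{assum:5} (the first directly; the second because $f_* - f' = -(f' - f_*)$ is in the Minkowski sum $\text{Star}(F,f_*)+\text{Star}(F-f_*,0)$ by bilinearity of the mixed derivative), so the linear piece vanishes. For the quadratic remainder, applying \cref{assum:6} to $P\ell_{\hat f,\bar g}$ and $P\ell_{f',\bar g}$ separately yields a bound of order $\beta\,d(\hat g,g_0)^{2}$, as required.

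Alternatively, under \cref{assum:for_square_loss_slow}, the argument is one line: a first-order Taylor expansion in $f$ at an intermediate $\bar f \in \text{Star}(F,f')$ yields
\[
\Delta \;=\; D_f P\ell_{\bar f,g_0}[\hat f - f'] - D_f P\ell_{\bar f,\hat g}[\hat f - f'],
\]
and \cref{assum:for_square_loss_slow} immediately bounds the right side by $\beta_4\, d(\hat g,g_0)^{2}$, completing the argument without needing to open up the $g$-derivatives at all.

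The main obstacle is the star-set bookkeeping when $f' \in F'$ rather than $F$: \cref{assum:5} is stated for perturbation directions of the form $f - f_*$ with $f\in F$, so one must carefully split $\hat f - f'$ through the anchor $f_*$ and verify that each intermediate $\bar f$ appearing in the $f$-Taylor step lies in the enlarged star set $\text{Star}(F,f_*)+\text{Star}(F-f_*,0)$ for which the universal orthogonality is posited. Once this bookkeeping is fixed, everything else is routine Taylor-remainder arithmetic plus the regularity (continuity of the relevant second derivatives) guaranteed by \cref{assum:6}.
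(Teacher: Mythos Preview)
Your proposal is correct and follows essentially the same approach as the paper: Taylor-expand in $g$, bound the quadratic remainder via \cref{assum:6}, and kill the linear cross-term via \cref{assum:5} (with the \cref{assum:for_square_loss_slow} case handled identically). The only cosmetic difference is that you dispatch the linear piece with a single mean-value-theorem application of $D_fD_g P\ell_{\bar f,g_0}=0$ at an intermediate $\bar f$, whereas the paper does a second-order Taylor in $f$ and separately argues that both $D_fD_gP\ell_{f',g_0}$ and $D_f^2D_gP\ell_{\bar f,g_0}$ vanish (the latter via a limit of $D_fD_g$ terms); your route is slightly more direct but relies on the same universal orthogonality.
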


\subsection{Best-ERM: Proof of Theorem~\ref{thm:best_erm} of the Main Paper}
First, we verify that \cref{assum:2,assum:4,assum:for_square_loss} are satisfied for the square losses, so that Corollary \ref{cor:generalization} can be invoked.

Consider $f'$ to be the minimizer of the population risk in the function space $F'=\co(F)$. Since we can rewrite $\ell_{f,g_0}(Z) = \E[(f-\tau_0)^2] + Var(\hat{Y}(g_0))$, it is straight forward to see that Assumption \ref{assum:2} is satisfied.  

Moreover, for any function $g$, we have:
\begin{align}
    D^{2}_{f}P \ell_{\bar{f},g_0}(Z)[f - f',f-f'] = \E[2(f-f')^2] = 2\|f-f'\|^2
\end{align}
Therefore, Assumption \ref{assum:4} are satisfied with $r=0$ and $\kappa=0$.

In Section \ref{sec:sq_loss}, we showed that:
\begin{align}
    | (P\ell_{f,g}(Z) - P\ell_{f',g}(Z)) - (P\ell_{f,g_0}(Z) - P\ell_{f',g_0}(Z)) | = 2 \sqrt{\E\left[\bias(\hat{g})]^2\right]}\, \|f_{\theta^*}-f_{\hat{\theta}}\|
\end{align}
Thus, Assumption \ref{assum:for_square_loss} is satisfied with $r=0$, $\beta_3=2$, and $d(\hat{g},g_0) = \E\left[\bias(\hat{g})]^2\right]^{\frac{1}{4}}$.

Next, we need to bound $ P\ell_{\hat{f},\hat{g}}(Z) - P\ell_{f_*,\hat{g}}(Z)$.

Invoking Lemma 14 from \cite{foster2023orthogonal}, with probability at least $1-\delta$, we can bound the centered excess loss for ERM over any function class $F$ by:
\begin{align}
    (P-P_n)({\ell}_{\hat{f},\hat{g}}(Z) - {\ell}_{f_*,\hat{g}}(Z)) \leq O\left(\delta_n \|\hat{f}-f_*\| + \delta_n^2\right) 
\end{align}
where for any $f_* \in F$,  $\delta_n = r_n + \sqrt{\frac{\log(1/\delta)}{n}}$, and $r_n$ is the critical radius of $\text{Star}(F-f_*,0)$.
In particular, we consider $f_*$ to be the minimizer of $\|f_*-f'\|$ in $F$.
By definition of $\hat{f}$, $P_n({\ell}_{\hat{f},\hat{g}}(Z) - {\ell}_{f_*,\hat{g}}(Z))<0$, so we have:
\begin{align}
    P({\ell}_{\hat{f},\hat{g}}(Z) - {\ell}_{f_*,\hat{g}}(Z)) \leq
    (P-P_n)({\ell}_{\hat{f},\hat{g}}(Z) - {\ell}_{f_*,\hat{g}}(Z)) \leq O\left(\delta_n \|\hat{f}-f_*\| + \delta_n^2\right) \label{eqn:excess_risk}
\end{align}

As a result, we can instantiate Corollary \ref{cor:generalization} with $r=0$, and $d(\hat{g},g_0)=\E\left[\bias(\hat{g})^2\right]^\frac{1}{4}$. Moreover, we can let $\epsilon_n = O(\delta_n)$ and $\alpha_n = O(\delta_n^2)$, to obtain:
\begin{align}
    \|\hat{f} - \tau_0\|^2 - \|f' - \tau_0\|^2 = ~&
    P\ell_{\hat{f},g_0}(Z) - P\ell_{f',g_0}(Z) \\
    \leq~& O\left(\delta_n^2 + \|f_*-\tau_0\|^2 - \|f'-\tau_0\|^2 +\E\left[\bias(\hat{g})^2\right]\right)
\end{align}

Applying the fact that, for $F = \{ f_1, f_2, \dots, f_M\}$, the critical radius of the $\text{Star}(F-f_*,0)$ is $r_n^2 = \Theta\left(\frac{\log(\max\{M,n\})}{n}\right)$, we get:
\begin{align}
     \|\hat{f} - \tau_0\|^2 - \|f' - \tau_0\|^2 \leq O\left( \frac{\log(\max\{M,n\}/\delta)}{n} + \|f_*-\tau_0\|^2 - \|f'-\tau_0\|^2 +\E\left[\bias(\hat{g})^2\right]\right)
\end{align}

Since by definition, $f'$ is equivalent to $\bar{f_*}$, we obtain a stronger result than in Theorem \ref{thm:best_erm} in the main text, which bounds the regret with respect to the optimal convex combination of models instead of the best model:
\begin{align}
    \mathcal{S}(\hat{\theta})  =~& \|f_{\hat{\theta}} - \tau_0\|^2 - \|f_{\theta_*} - \tau_0\|^2 \\
    \leq~& \|f_{\hat{\theta}} - \tau_0\|^2 - \|\bar{f_{\theta_*}} - \tau_0\|^2 \\
    \leq~& O\left( \frac{\log(\max\{M,n\}/\delta)}{n} + \|f_*-\tau_0\|^2 - \|f'-\tau_0\|^2 +\E\left[\bias(\hat{g})^2\right]\right)
\end{align}

\subsection{Convex-ERM: Proof of Theorem~\ref{thm:convex_erm} of the Main Paper}

For ERM over the convex hull, we can follow the same arguments to invoke Theorem \ref{thm:osl} with $r=0$, $\beta_3=2$, and $d(\hat{g},g_0) = \E\left[\bias(\hat{g})]^2\right]^{\frac{1}{4}}$ as in the previous section, with $f_*=f' = \bar{f_*}$ and  $F'=F = \co(\{f_1,f_2,\dots,f_M\})$, we get that:
\begin{align}
    \|\hat{f} - \tau_0\|^2 - \|f_* - \tau_0\|^2 
    \leq~& O\left(\delta_n^2 + \E\left[\bias(\hat{g})^2\right]\right)
\end{align}

By metric entropy arguments, the critical radius for this $\text{Star}(F-f_*,0)$ is $r_n^2 = \Theta\left(\sqrt{\frac{M\log(n)}{n}}\right)$. Combining, we get:
\begin{align}
    \|\hat{f} - \tau_0\|^2 - \|f_* - \tau_0\|^2 
    \leq~& O\left(\frac{M\log(n/\delta)}{n}+\E\left[\bias(\hat{g})^2\right] \right)
\end{align}

On the other hand, a different nuisance error term can be obtained by invoking Theorem \ref{thm:osl_slow}. First, we show that square losses satisfy Assumption \ref{assum:for_square_loss_slow}. Assuming $f$ is uniformly bounded by $U$ for all $f\in F$, we have that:
\begin{align}
 |D_f P\ell(\bar{f}, g)[f-f'] - D_f P\ell(\bar{f}, g_0)[f-f']| =~& |\E[(\bar{f}-\hat{Y}(g))(f-f') - (\bar{f}-\hat{Y}(g_0))(f-f')]|\\
 =~& |\E[(\hat{Y}(g)-\hat{Y}(g_0))(f-f')]|\\
 =~& 2|\E[\bias(\hat{g})(\hat{f}-f')]|\\
 \leq~& 4U\E[|\bias(\hat{g})|]
\end{align} 
Thus, Assumption \ref{assum:for_square_loss_slow} is satisfied with $\beta_4 = 4U$ and $d(\hat{g},g_0) =\E[|\bias(\hat{g})|]^{\frac{1}{2}}$.

Next, we need a generalization bound that does not involve $\|\hat{f}-f'\|$ terms. With probability at least $1-\delta$:
\begin{align}
    (P-P_n)(\ell_{\hat{f},\hat{g}} - \ell_{f', \hat{g}})\leq~& C_b (P-P_n)(\hat{f} - f') \tag{Since we assume $\ell$ is $C_b$ Lipschitz in $f(X)$}\\
    \leq~& O\left(R(F) + \sqrt{\frac{\log(1/\delta)}{n}}\right) \tag{By standard generalization arguments}
\end{align} where $R(F)$ is the Rademacher complexity of the function space $F$.

Thus invoking Theorem \ref{thm:osl_slow}, we get:
\begin{align}
    \|\hat{f} - \tau_0\|^2 - \|f_* - \tau_0\|^2 
    \leq~& O\left(R(F) + \sqrt{\frac{\log(1/\delta)}{n}}+\E\left[|\bias(\hat{g})|\right]\right)\\
    \leq~& O\left(\sqrt{\frac{\log(M)}{n}}+ \sqrt{\frac{\log(1/\delta)}{n}}+\E\left[|\bias(\hat{g})|\right]\right) \tag{Lemma 26.11 in \cite{UML}}\\
    \leq~& O\left(\sqrt{\frac{\log(M/\delta)}{n}}+\E\left[|\bias(\hat{g})|\right]\right) 
\end{align}

As with the Best-ERM case, this bounds the regret with respect to the optimal convex combination of models instead of the best model, which is a stronger result than in Theorem \ref{thm:convex_erm} in the main text.

\section{Mixed Bias Properties}\label{app:mixed-bias}

\subsection{Doubly Robust Targets for CATE under Conditional Exogeneity}
\begin{align}
    \E\left[\hat{Y}(g_0) - \hat{Y}(\hat{g})\mid X\right]
    =~&\E\left[a_0(D, W)\, h_0(D, W)\mid X\right]- \E\left[a_0(D, W)\, \hat{h}(D, W) \mid X\right]\\
    ~&~ +\E\left[ \hat{a}(D, W)\, (h_0(D, W) - \hat{h}(D, W))\mid X\right]\\
    =~& \E\left[(a_0(D, W) - \hat{a}(D, W))\, (h_0(D, W) - \hat{h}(D,W))\mid X\right]\\
    =~& \E\left[\hat{q}(D, W)\mid X\right]
\end{align}

\subsection{Doubly Robust Targets for CATE with Instruments}
\begin{align}
    \E[\hat{Y}(\hat{g}) - \hat{Y}(g_0)\mid X] 
    =~& \hat{\tau}(X) + \frac{\E[Y\tilde{Z}\mid X] - \hat{\tau}(X) \E[D\tilde{Z}\mid X]}{\hat{\beta}(X)} - \tau_0(X)\\
    =~& \hat{\tau}(X) + \frac{\tau_0(X) \beta_0(X) - \hat{\tau}(X) \beta_0(X)}{\hat{\beta}(X)} - \tau_0(X)\\
    =~& \hat{\tau}(X) - \tau_0(X) + (\tau_0(X) - \hat{\tau}(X)) \frac{\beta_0(X)}{\hat{\beta}(X)}\\
    =~& (\tau_0(X) - \hat{\tau}(X)) \left(\frac{\beta_0(X)}{\hat{\beta}(X)} - 1\right)\\
    =~& (\tau_0(X) - \hat{\tau}(X)) \frac{\beta_0(X) - \hat{\beta}(X)}{\hat{\beta}(X)}
\end{align}

\section{Experiment Details}

\subsection{Candidate Models} \label{sec:model_details}
We consider 8 meta-learners to be the candidate models for this study. The estimators $\hat{\mu}(X,D)$, $\hat{\mu}_0(X)$, $\hat{\mu}_1(X)$, $\hat{\pi}$ are fitted using XGboost regressors and classifiers. 

\textbf{T-learner}
$$\hat{\tau}_T(X) = \hat{\mu}_1(X) - \hat{\mu}_0(X) $$

\textbf{S-learner}
$$\hat{\tau}_S(X) = \hat{\mu}(X, D=1) - \hat{\mu}(X,D=0) $$

\textbf{Inverse Propensity Weighted Learner}
$$\hat{\tau}_{IPW}=\arg \min _{f \in F} \sum_i\left(\frac{D_iY_i}{\hat{\pi}(X_i)}-\frac{(1-D_i)Y_i}{1-\hat{\pi}(X_i)}-f\left(X_i\right)\right)^2$$

\textbf{ X-learner} \citep{kunzel2019metalearners} Using the potential outcome estimators, we first compute the imputed treatment effects:
\begin{align}
     \hat{\tau}^0 ~&=\arg \min _{f \in F} \sum_{i, D_i=0}\left(\hat{\mu}_1\left(X_i\right)-Y_i-f\left(X_i\right)\right)^2 \\
    \hat{\tau}^1 ~&=\arg \min _{f \in F} \sum_{i, D_i=1}\left(Y_i-\hat{\mu}_0\left(X_i\right)-f\left(X_i\right)\right)^2
\end{align}
The final estimator is an weighted average of the two imputed treatment effects:
$$\hat{\tau}_{IPW}(X)= \hat{\pi}(X)\hat{\tau}^0 + (1-\hat{\pi}(X))\hat{\tau}^1$$

\textbf{DR-learner} \citep{foster2023orthogonal,kennedy2020towards} The doubly robust potential outcome is formulated by the addition of the inverse propensity weighted residual term:
$$Y_d^{DR}=\hat{\mu}(X, d)+\frac{Y-\hat{\mu}(X, D)}{\hat{\pi}_d(X)} \cdot \mathbbm{1}(D=d)$$
The final estimator is:
$$\hat{\tau}_{D R}:=\arg \min _{f \in F} \sum_i\left(Y_{i, 1}^{DR}-Y_{i, 0}^{DR}-f\left(X_i\right)\right)^2$$

\textbf{R-learner} \citep{nie2021quasi} The R-learner employs an additional nuisance estimator, $\hat{m}(X)$, for the conditional response surface $\E[Y\mid X]$. Let $\tilde{Y} = Y - \hat{m}(X)$ and $\tilde{D} = D - \hat{\pi}(X)$ denote the residuals. Then, the final estimator is:
$$\hat{\tau}_R:=\arg \min _{f \in F} \sum_i\left(\tilde{Y}_i-f\left(X_i\right) \cdot \tilde{D}_i\right)^2$$

\textbf{Doubly Robust X-learner} The DRX-learner follows the doubly robust framework, but replaces $\hat{\mu}(X,d)$ by their X-learner analogues:
\begin{align}
    \hat{g}_0(X) ~&= \hat{\mu}(X,0) (1 - \hat{\pi}(X)) + 
            (\hat{\mu}(X,1) - \hat{\tau}^0) \hat{\pi}(X)\\
    \hat{g}_1(X) ~&= \hat{\mu}(X,1) (1 - \hat{\pi}(X)) + 
            (\hat{\mu}(X,0) - \hat{\tau}^1) \hat{\pi}(X)
\end{align}

\textbf{DAX-learner} This is a combination of the X-learner strategy and the Domain-Adaptation learner proposed in the EconML library \cite{econml} that we introduce in this paper. In the context of the X-Learner, we train a model $\hat{\tau}^1$ on the treated data and then we use it to calculate $\hat{\tau}(X) = \hat{\tau}^1(X)\, (1-\hat{\pi}(X)) + \hat{\tau}^{0}(X)\, \hat{\pi}(X)$ on all the data points. Thus in this case, when measuring the quality of the downstream CATE estimate $\hat{\tau}$ in the final step of the X-Learner, we care about the quality of $\hat{\tau}^1$ as measured by the metric:
\begin{align}
\E_X\left[(1-\hat{\pi}(X))^2 (\hat{\tau}^1(X) - \tau_0(X))^2\right]
\end{align}
On the contrary, the regression estimator for $\hat{\tau}^1$ would be guaranteeing:
\begin{align}
\E_{X\mid D=1}\left[(\hat{\tau}^1(X) - \tau_0(X))^2\right]
\end{align}
We can correct this covariate shift if instead we minimized the weighted square loss:
\begin{align}
\E_{X\mid D=1}\left[\frac{1}{\hat{\pi}(X)} (1 - \hat{\pi}(X))^2 (\hat{\tau}^1(Z) - \tau_0(Z))^2\right]
\end{align}
which corresponds to a density ratio weighted square loss. We can achieve this by using a sample-weighted regression oracle with weights $(1 - \hat{\pi}(X))^2\frac{1}{\hat{\pi}(X)}$, when training $\hat{\tau}^1$. Similarly, we should use weights $\hat{\pi}(X)^2 \frac{1}{1-\hat{\pi}(X)}$, when training $\hat{\tau}^0$. Then we proceed exactly as in the X-learner. We will call this the Domain-Adapted-X-Learner (DAX-Learner).

\subsection{Data Generating Processes for Synthetic Datasets}
The details of each DGP is summarized in Table \ref{tab:DGP}.
\begin{table}[ht!]
\centering
\caption{Summary of Data Generating Processes.}\label{tab:DGP}
\begin{tabular}[t]{llll}
\toprule
DGP &   $\pi(X)$ &  $m(X)$ &  $\tau(X)$  \\
\midrule
1 & 0.05 & $0.1 \mathbbm{1}(0.6\leq X\leq0.8)$&0.5 \\
2 & 0.05 & $0.1 \mathbbm{1}(0.6\leq X\leq0.8)$&$0.5+ 0.1 \mathbbm{1}(0.2\leq X\leq0.4)$ \\
3 & $0.5 - 0.49 \mathbbm{1}(0.3\leq X\leq0.6)$ & $0.1 \mathbbm{1}(0.6\leq X\leq0.8)$&$0.5X^2$ \\
4 & $0.5 - 0.49 \mathbbm{1}(0.3\leq X\leq0.6)$ & $0.1 \mathbbm{1}(0.6\leq X\leq0.8)$&$0.5X^2\mathbbm{1}(X\leq0.6) + 0.18$ \\
5 & 0.05 & 0.1 &$0.5 \mathbbm{1}(0.5\leq X\leq0.8)$ \\
6 & 0.95 & 0.1 &$0.5 \mathbbm{1}(0.5\leq X\leq0.8)$ \\
\end{tabular}
\end{table}

\subsection{Semi-synthetic Datasets}
The details dataset (after pre-processing) used for Semi-synthetic experiments are summarized in the Table \ref{tab:datasets}. 
\begin{table}[H]
\centering
\caption{Summary of Semi-synthetic Datasets.}\label{tab:datasets}
\begin{tabular}[t]{llllll}
\toprule
 &  401k & welfare &  STAR & Poverty & Criteo \\
\midrule
Number of Entries &  $9716$ & $12907$ &  $5220$ & $2480$ & $13979592$ \\
Number of Covariates &  $14$ & $42$ &  $15$ & $24$ & $12$ \\
Percentage Treated &  $37.18\%$ & $51.33\%$ &  $30.23\%$ & $50.89\%$ & $85.00\%$ \\
\end{tabular}
\end{table}

\subsection{Additional Results}
The RMSE regret for the candidate models as well as model selection, convex stacking, and Q-aggregation is summarized in \cref{table:toy_full,table:Semi_fitted_full,table:Semi_simple_full}. Each cell in the table reports [mean $\pm$ st.dev.] median (95\%) of RMSE Regret over the 100 experiments for each semi-synthetic dataset or DGP. The last two rows presents the percentage decrease in the mean and median RMSE regret for Q-aggregation with respect to convex stacking and model selection using the DR-loss. 

\begin{figure*}
\vspace{.3in}
\centerline{\includegraphics[width=\textwidth]{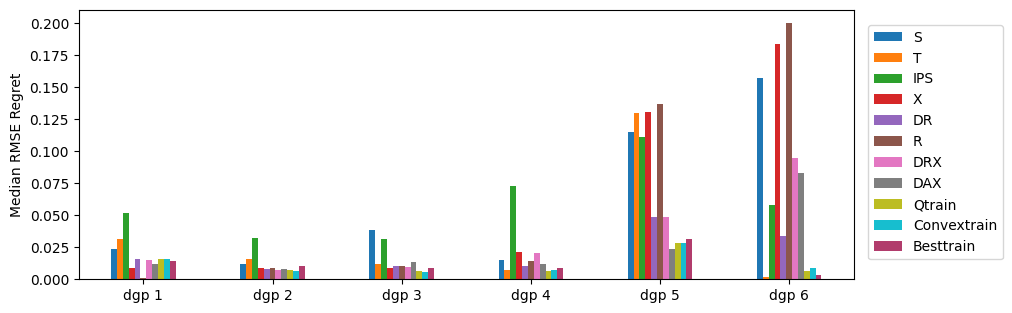}}
\vspace{.3in}
\caption{Median RMSE Regret for Each DGP.}
\label{fig:bar-toy}
\end{figure*}

\begin{table}[H]
\centering
\tiny
\caption{RMSE Regret For Fully-synthetic Datasets.}
\label{table:toy_full}

\begin{tabular}{lllllll}
\toprule
{} &                                            DGP 1 &                                            DGP 2 &                                            DGP 3 &                                            DGP 4 &                                            DGP 5 &                                            DGP 6 \\
\midrule
S                                        &  \makecell{[0.026 $\pm$ 0.012] \\ 0.023 (0.047)} &  \makecell{[0.014 $\pm$ 0.009] \\ 0.012 (0.032)} &  \makecell{[0.037 $\pm$ 0.010] \\ 0.039 (0.051)} &  \makecell{[0.017 $\pm$ 0.006] \\ 0.015 (0.027)} &  \makecell{[0.122 $\pm$ 0.034] \\ 0.115 (0.186)} &  \makecell{[0.159 $\pm$ 0.026] \\ 0.157 (0.194)} \\
T                                        &  \makecell{[0.032 $\pm$ 0.007] \\ 0.032 (0.042)} &  \makecell{[0.016 $\pm$ 0.007] \\ 0.016 (0.029)} &  \makecell{[0.014 $\pm$ 0.011] \\ 0.012 (0.036)} &  \makecell{[0.008 $\pm$ 0.006] \\ 0.007 (0.020)} &  \makecell{[0.130 $\pm$ 0.027] \\ 0.130 (0.168)} &  \makecell{[0.005 $\pm$ 0.006] \\ 0.002 (0.018)} \\
IPS                                      &  \makecell{[0.061 $\pm$ 0.047] \\ 0.052 (0.143)} &  \makecell{[0.045 $\pm$ 0.038] \\ 0.032 (0.118)} &  \makecell{[0.032 $\pm$ 0.008] \\ 0.032 (0.047)} &  \makecell{[0.075 $\pm$ 0.020] \\ 0.073 (0.112)} &  \makecell{[0.104 $\pm$ 0.044] \\ 0.111 (0.164)} &  \makecell{[0.058 $\pm$ 0.021] \\ 0.058 (0.094)} \\
X                                        &  \makecell{[0.010 $\pm$ 0.009] \\ 0.009 (0.025)} &  \makecell{[0.010 $\pm$ 0.006] \\ 0.009 (0.021)} &  \makecell{[0.011 $\pm$ 0.009] \\ 0.009 (0.023)} &  \makecell{[0.021 $\pm$ 0.008] \\ 0.021 (0.033)} &  \makecell{[0.130 $\pm$ 0.026] \\ 0.130 (0.168)} &  \makecell{[0.175 $\pm$ 0.024] \\ 0.183 (0.204)} \\
DR                                       &  \makecell{[0.017 $\pm$ 0.011] \\ 0.016 (0.037)} &  \makecell{[0.010 $\pm$ 0.007] \\ 0.008 (0.022)} &  \makecell{[0.013 $\pm$ 0.011] \\ 0.011 (0.035)} &  \makecell{[0.010 $\pm$ 0.007] \\ 0.010 (0.022)} &  \makecell{[0.052 $\pm$ 0.034] \\ 0.048 (0.111)} &  \makecell{[0.036 $\pm$ 0.015] \\ 0.034 (0.063)} \\
R                                        &  \makecell{[0.004 $\pm$ 0.005] \\ 0.001 (0.016)} &  \makecell{[0.010 $\pm$ 0.005] \\ 0.009 (0.019)} &  \makecell{[0.015 $\pm$ 0.012] \\ 0.010 (0.037)} &  \makecell{[0.015 $\pm$ 0.008] \\ 0.015 (0.029)} &  \makecell{[0.135 $\pm$ 0.026] \\ 0.136 (0.174)} &  \makecell{[0.189 $\pm$ 0.024] \\ 0.200 (0.215)} \\
DRX                                      &  \makecell{[0.017 $\pm$ 0.012] \\ 0.015 (0.039)} &  \makecell{[0.010 $\pm$ 0.008] \\ 0.007 (0.026)} &  \makecell{[0.012 $\pm$ 0.009] \\ 0.010 (0.036)} &  \makecell{[0.020 $\pm$ 0.008] \\ 0.020 (0.032)} &  \makecell{[0.053 $\pm$ 0.034] \\ 0.048 (0.111)} &  \makecell{[0.104 $\pm$ 0.035] \\ 0.095 (0.176)} \\
DAX                                      &  \makecell{[0.018 $\pm$ 0.014] \\ 0.012 (0.045)} &  \makecell{[0.010 $\pm$ 0.008] \\ 0.008 (0.025)} &  \makecell{[0.018 $\pm$ 0.016] \\ 0.013 (0.047)} &  \makecell{[0.016 $\pm$ 0.016] \\ 0.012 (0.042)} &  \makecell{[0.031 $\pm$ 0.027] \\ 0.024 (0.076)} &  \makecell{[0.090 $\pm$ 0.028] \\ 0.083 (0.140)} \\
Besttrain                                &  \makecell{[0.017 $\pm$ 0.013] \\ 0.014 (0.040)} &  \makecell{[0.011 $\pm$ 0.008] \\ 0.010 (0.027)} &  \makecell{[0.012 $\pm$ 0.011] \\ 0.009 (0.036)} &  \makecell{[0.010 $\pm$ 0.009] \\ 0.008 (0.021)} &  \makecell{[0.036 $\pm$ 0.031] \\ 0.032 (0.089)} &  \makecell{[0.008 $\pm$ 0.013] \\ 0.003 (0.041)} \\
Convextrain                              &  \makecell{[0.017 $\pm$ 0.011] \\ 0.016 (0.039)} &  \makecell{[0.009 $\pm$ 0.008] \\ 0.007 (0.027)} &  \makecell{[0.008 $\pm$ 0.009] \\ 0.006 (0.025)} &  \makecell{[0.008 $\pm$ 0.007] \\ 0.007 (0.020)} &  \makecell{[0.033 $\pm$ 0.025] \\ 0.028 (0.083)} &  \makecell{[0.011 $\pm$ 0.009] \\ 0.009 (0.027)} \\
Qtrain                                   &  \makecell{[0.016 $\pm$ 0.011] \\ 0.015 (0.038)} &  \makecell{[0.009 $\pm$ 0.007] \\ 0.007 (0.023)} &  \makecell{[0.009 $\pm$ 0.009] \\ 0.006 (0.025)} &  \makecell{[0.008 $\pm$ 0.007] \\ 0.007 (0.021)} &  \makecell{[0.033 $\pm$ 0.025] \\ 0.028 (0.083)} &  \makecell{[0.010 $\pm$ 0.010] \\ 0.006 (0.026)} \\
\midrule
\makecell{$\%$ Decrease \\ w.r.t Best}   &             \makecell{1.264 $\%$ \\ -7.700 $\%$} &            \makecell{22.081 $\%$ \\ 40.148 $\%$} &            \makecell{41.138 $\%$ \\ 44.073 $\%$} &            \makecell{20.952 $\%$ \\ 25.093 $\%$} &            \makecell{11.236 $\%$ \\ 11.446 $\%$} &          \makecell{-13.003 $\%$ \\ -49.579 $\%$} \\
\midrule
\makecell{$\%$ Decrease \\ w.r.t Convex} &              \makecell{2.324 $\%$ \\ 2.740 $\%$} &             \makecell{0.119 $\%$ \\ -6.044 $\%$} &            \makecell{-4.316 $\%$ \\ -8.569 $\%$} &              \makecell{0.526 $\%$ \\ 5.734 $\%$} &             \makecell{0.080 $\%$ \\ -0.636 $\%$} &            \makecell{12.474 $\%$ \\ 35.252 $\%$} \\
\bottomrule
\end{tabular}
\end{table}

\begin{table}[H]
\centering
\tiny
\caption{RMSE Regret For Semi-synthetic Datasets with Simple CATE.}
\label{table:Semi_simple_full}
\begin{tabular}{llllll}
\toprule
{} &                                             401k &                                              welfare &                                              poverty &                                                 star &                                                  criteo \\
\midrule
S                                        &     \makecell{[2042 $\pm$ 411.7] \\ 2042 (2677)} &  \makecell{[0.0947 $\pm$ 0.0248] \\ 0.0929 (0.1346)} &  \makecell{[0.0050 $\pm$ 0.0050] \\ 0.0034 (0.0139)} &  \makecell{[0.0055 $\pm$ 0.0059] \\ 0.0035 (0.0174)} &  \makecell{[0.00151 $\pm$ 0.00044] \\ 0.0015 (0.00223)} \\
T                                        &     \makecell{[1172 $\pm$ 229.5] \\ 1172 (1572)} &  \makecell{[0.0546 $\pm$ 0.0267] \\ 0.0573 (0.1035)} &  \makecell{[0.0193 $\pm$ 0.0072] \\ 0.0188 (0.0320)} &  \makecell{[0.0224 $\pm$ 0.0211] \\ 0.0169 (0.0670)} &  \makecell{[0.00136 $\pm$ 0.00054] \\ 0.0013 (0.00231)} \\
IPS                                      &      \makecell{[5845 $\pm$ 1489] \\ 5513 (8983)} &  \makecell{[0.7937 $\pm$ 0.0893] \\ 0.7757 (0.9694)} &  \makecell{[0.1739 $\pm$ 0.0373] \\ 0.1731 (0.2329)} &  \makecell{[0.6294 $\pm$ 0.2361] \\ 0.5938 (1.0945)} &  \makecell{[0.05433 $\pm$ 0.02190] \\ 0.0459 (0.09577)} \\
X                                        &  \makecell{[302.6 $\pm$ 155.6] \\ 289.2 (582.4)} &  \makecell{[0.0337 $\pm$ 0.0170] \\ 0.0319 (0.0649)} &  \makecell{[0.0159 $\pm$ 0.0064] \\ 0.0152 (0.0267)} &  \makecell{[0.0145 $\pm$ 0.0158] \\ 0.0096 (0.0490)} &  \makecell{[0.00079 $\pm$ 0.00032] \\ 0.0008 (0.00135)} \\
DR                                       &    \makecell{[581 $\pm$ 225.8] \\ 587.6 (994.3)} &  \makecell{[0.0522 $\pm$ 0.0195] \\ 0.0494 (0.0876)} &  \makecell{[0.0241 $\pm$ 0.0068] \\ 0.0229 (0.0375)} &  \makecell{[0.0206 $\pm$ 0.0165] \\ 0.0194 (0.0431)} &  \makecell{[0.00105 $\pm$ 0.00055] \\ 0.0010 (0.00193)} \\
R                                        &   \makecell{[727.5 $\pm$ 217.9] \\ 704.6 (1096)} &  \makecell{[0.0872 $\pm$ 0.0174] \\ 0.0889 (0.1157)} &  \makecell{[0.0248 $\pm$ 0.0072] \\ 0.0239 (0.0378)} &  \makecell{[0.0280 $\pm$ 0.0176] \\ 0.0254 (0.0540)} &  \makecell{[0.00302 $\pm$ 0.00057] \\ 0.0031 (0.00390)} \\
DRX                                      &    \makecell{[460.5 $\pm$ 194] \\ 460.3 (801.7)} &  \makecell{[0.0510 $\pm$ 0.0188] \\ 0.0489 (0.0832)} &  \makecell{[0.0240 $\pm$ 0.0069] \\ 0.0223 (0.0390)} &  \makecell{[0.0198 $\pm$ 0.0157] \\ 0.0176 (0.0411)} &  \makecell{[0.00082 $\pm$ 0.00040] \\ 0.0008 (0.00161)} \\
DAX                                      &  \makecell{[446.6 $\pm$ 179.7] \\ 444.6 (741.2)} &  \makecell{[0.0727 $\pm$ 0.0173] \\ 0.0718 (0.1047)} &  \makecell{[0.0147 $\pm$ 0.0068] \\ 0.0138 (0.0260)} &  \makecell{[0.0133 $\pm$ 0.0170] \\ 0.0083 (0.0389)} &  \makecell{[0.00145 $\pm$ 0.00030] \\ 0.0015 (0.00190)} \\
Besttrain                                &    \makecell{[355 $\pm$ 147.9] \\ 362.4 (585.7)} &  \makecell{[0.0390 $\pm$ 0.0181] \\ 0.0406 (0.0722)} &  \makecell{[0.0075 $\pm$ 0.0068] \\ 0.0060 (0.0194)} &  \makecell{[0.0107 $\pm$ 0.0096] \\ 0.0070 (0.0287)} &  \makecell{[0.00081 $\pm$ 0.00040] \\ 0.0008 (0.00161)} \\
Convextrain                              &  \makecell{[362.3 $\pm$ 157.1] \\ 371.5 (634.4)} &  \makecell{[0.0289 $\pm$ 0.0156] \\ 0.0275 (0.0557)} &  \makecell{[0.0066 $\pm$ 0.0049] \\ 0.0057 (0.0166)} &  \makecell{[0.0095 $\pm$ 0.0093] \\ 0.0062 (0.0284)} &  \makecell{[0.00066 $\pm$ 0.00032] \\ 0.0006 (0.00122)} \\
Qtrain                                   &  \makecell{[341.1 $\pm$ 157.1] \\ 331.3 (634.2)} &  \makecell{[0.0281 $\pm$ 0.0156] \\ 0.0272 (0.0537)} &  \makecell{[0.0061 $\pm$ 0.0049] \\ 0.0051 (0.0165)} &  \makecell{[0.0095 $\pm$ 0.0093] \\ 0.0061 (0.0284)} &  \makecell{[0.00060 $\pm$ 0.00032] \\ 0.0006 (0.00112)} \\
\midrule
\makecell{$\%$ Decrease \\ w.r.t Best}   &            \makecell{4.0692 $\%$ \\ 9.4008 $\%$} &              \makecell{38.7569 $\%$ \\ 49.2038 $\%$} &              \makecell{22.8005 $\%$ \\ 17.7918 $\%$} &              \makecell{13.0432 $\%$ \\ 15.7684 $\%$} &                 \makecell{34.8879 $\%$ \\ 30.0913 $\%$} \\
\midrule
\makecell{$\%$ Decrease \\ w.r.t Convex} &           \makecell{6.2148 $\%$ \\ 12.1341 $\%$} &                \makecell{2.6238 $\%$ \\ 1.0101 $\%$} &               \makecell{8.1161 $\%$ \\ 13.0241 $\%$} &                \makecell{0.2770 $\%$ \\ 2.3870 $\%$} &                   \makecell{9.9629 $\%$ \\ 8.5184 $\%$} \\
\bottomrule
\end{tabular}
\end{table}

\begin{table}[H]
\centering
\tiny
\caption{RMSE Regret For Semi-synthetic Datasets with Fitted CATE.}
\label{table:Semi_fitted_full}
\begin{tabular}{llllll}
\toprule
{} &                                             401k &                                              welfare &                                              poverty &                                                         star &                                                  criteo \\
\midrule
S                                        &  \makecell{[115.1 $\pm$ 104.1] \\ 98.59 (306.3)} &  \makecell{[0.0029 $\pm$ 0.0018] \\ 0.0029 (0.0061)} &  \makecell{[0.0002 $\pm$ 0.0004] \\ 0.0000 (0.0013)} &          \makecell{[0.0843 $\pm$ 0.1903] \\ 0.0000 (0.4518)} &  \makecell{[0.00000 $\pm$ 0.00001] \\ 0.0000 (0.00001)} \\
T                                        &     \makecell{[1526 $\pm$ 340.2] \\ 1511 (2052)} &  \makecell{[0.0033 $\pm$ 0.0014] \\ 0.0031 (0.0056)} &  \makecell{[0.0088 $\pm$ 0.0013] \\ 0.0088 (0.0110)} &          \makecell{[4.3250 $\pm$ 0.6061] \\ 4.3450 (5.1982)} &  \makecell{[0.00334 $\pm$ 0.00089] \\ 0.0035 (0.00478)} \\
IPS                                      &      \makecell{[1643 $\pm$ 1291] \\ 1212 (4731)} &  \makecell{[0.0343 $\pm$ 0.0059] \\ 0.0343 (0.0437)} &  \makecell{[0.0219 $\pm$ 0.0151] \\ 0.0181 (0.0575)} &  \makecell{[345.6718 $\pm$ 130.7644] \\ 321.2090 (563.5089)} &  \makecell{[0.00227 $\pm$ 0.00107] \\ 0.0024 (0.00376)} \\
X                                        &  \makecell{[431.7 $\pm$ 271.9] \\ 400.8 (919.6)} &  \makecell{[0.0039 $\pm$ 0.0017] \\ 0.0037 (0.0069)} &  \makecell{[0.0032 $\pm$ 0.0010] \\ 0.0031 (0.0048)} &          \makecell{[1.3916 $\pm$ 0.4319] \\ 1.3608 (2.1141)} &  \makecell{[0.00013 $\pm$ 0.00011] \\ 0.0001 (0.00035)} \\
DR                                       &  \makecell{[321.8 $\pm$ 284.4] \\ 259.5 (882.3)} &  \makecell{[0.0063 $\pm$ 0.0021] \\ 0.0060 (0.0103)} &  \makecell{[0.0025 $\pm$ 0.0014] \\ 0.0022 (0.0048)} &          \makecell{[1.6791 $\pm$ 0.8769] \\ 1.7604 (3.0720)} &  \makecell{[0.00006 $\pm$ 0.00004] \\ 0.0000 (0.00015)} \\
R                                        &    \makecell{[410.2 $\pm$ 216.7] \\ 374 (837.5)} &  \makecell{[0.0095 $\pm$ 0.0020] \\ 0.0094 (0.0129)} &  \makecell{[0.0068 $\pm$ 0.0011] \\ 0.0068 (0.0086)} &          \makecell{[0.8925 $\pm$ 0.4380] \\ 0.7595 (1.7322)} &  \makecell{[0.00081 $\pm$ 0.00008] \\ 0.0008 (0.00094)} \\
DRX                                      &  \makecell{[314.5 $\pm$ 279.5] \\ 251.4 (842.7)} &  \makecell{[0.0063 $\pm$ 0.0021] \\ 0.0059 (0.0103)} &  \makecell{[0.0025 $\pm$ 0.0014] \\ 0.0022 (0.0048)} &          \makecell{[0.8536 $\pm$ 0.6130] \\ 0.7277 (2.0083)} &  \makecell{[0.00006 $\pm$ 0.00004] \\ 0.0001 (0.00013)} \\
DAX                                      &   \makecell{[501.6 $\pm$ 254.7] \\ 437.1 (1063)} &  \makecell{[0.0052 $\pm$ 0.0015] \\ 0.0050 (0.0080)} &  \makecell{[0.0062 $\pm$ 0.0011] \\ 0.0061 (0.0082)} &          \makecell{[1.2216 $\pm$ 0.4937] \\ 1.2192 (2.0556)} &  \makecell{[0.00090 $\pm$ 0.00021] \\ 0.0009 (0.00123)} \\
Besttrain                                &  \makecell{[257.5 $\pm$ 262.1] \\ 159.9 (812.1)} &  \makecell{[0.0029 $\pm$ 0.0015] \\ 0.0029 (0.0053)} &  \makecell{[0.0017 $\pm$ 0.0020] \\ 0.0009 (0.0057)} &          \makecell{[2.0018 $\pm$ 1.4700] \\ 1.6779 (4.7187)} &  \makecell{[0.00002 $\pm$ 0.00003] \\ 0.0000 (0.00010)} \\
Convextrain                              &  \makecell{[266.2 $\pm$ 180.9] \\ 245.3 (577.4)} &  \makecell{[0.0019 $\pm$ 0.0012] \\ 0.0019 (0.0040)} &  \makecell{[0.0023 $\pm$ 0.0014] \\ 0.0022 (0.0048)} &          \makecell{[2.1629 $\pm$ 0.8906] \\ 2.2012 (3.5904)} &  \makecell{[0.00027 $\pm$ 0.00007] \\ 0.0003 (0.00038)} \\
Qtrain                                   &    \makecell{[245.6 $\pm$ 183] \\ 215.8 (579.5)} &  \makecell{[0.0019 $\pm$ 0.0012] \\ 0.0020 (0.0041)} &  \makecell{[0.0021 $\pm$ 0.0014] \\ 0.0020 (0.0047)} &          \makecell{[2.0705 $\pm$ 0.9286] \\ 2.0370 (3.5692)} &  \makecell{[0.00024 $\pm$ 0.00006] \\ 0.0002 (0.00035)} \\
\midrule
\makecell{$\%$ Decrease \\ w.r.t Best}   &          \makecell{4.8579 $\%$ \\ -25.8868 $\%$} &              \makecell{50.4544 $\%$ \\ 46.1057 $\%$} &            \makecell{-17.3363 $\%$ \\ -55.4313 $\%$} &                     \makecell{-3.3152 $\%$ \\ -17.6290 $\%$} &              \makecell{-91.0409 $\%$ \\ -100.0000 $\%$} \\
\midrule
\makecell{$\%$ Decrease \\ w.r.t Convex} &           \makecell{8.4122 $\%$ \\ 13.6626 $\%$} &              \makecell{-0.7623 $\%$ \\ -4.4831 $\%$} &                \makecell{8.7545 $\%$ \\ 9.9212 $\%$} &                        \makecell{4.4656 $\%$ \\ 8.0570 $\%$} &                 \makecell{10.7980 $\%$ \\ 11.0042 $\%$} \\
\bottomrule
\end{tabular}

\end{table}

\section{Proof of Main Theorem}

In this section, we will be proving Theorem \ref{thm:main} in the main paper, which we also reproduce here:
\begin{theorem}[Main Theorem]
Assume that for some function $\error(\hat{g})$:
\begin{align}
    P(\tilde{\ell}_{\theta,g_0}(Z) - \tilde{\ell}_{\theta',g_0}(Z)) - P(\tilde{\ell}_{\theta,\hat{g}}(Z) - \tilde{\ell}_{\theta',\hat{g}}(Z)) 
    \leq \error(\hat{g})\, \|f_{\theta} - f_{\theta'}\|^{\gamma}
\end{align}
for some $\gamma\in [0, 1]$ and $\ell(f, g_0)$ is $\sigma$-strongly convex in expectation with respect to $f$. Moreover, assume that the loss $\ell(Z; f(X), \hat{g})$ is $C_b(\hat{g})$-Lipszhitz with respect to $f(X)$, and both the loss and the candidate functions are uniformly bounded in $[-b,b]$ almost surely. For any $\mu \leq \min\{\nu, 1-\nu\} \frac{\sigma}{14}$  and $\beta \geq \{ \frac{8C_b^2(1-\nu)^2}{\mu}, 8\sqrt{3}bC_b(1-\nu),\frac{2C_b\nu(\nu C_b + \mu b)}{\mu}\}$, w.p. $1-\delta$:
\begin{align}
    R(\hat{\theta}, g_0) \leq~& \min_{j=1}^M\left( R(e_j, g_0)  +\frac{\beta}{n}\log(1/\pi_j)\right) 
     + O\left(  \frac{\log(1/\delta) \max\{\frac{1}{\mu} C_b(\hat{g})^2, b\, C_b(\hat{g}) \}}{n}\right) 
    + \left(\frac{1}{\mu}\right)^{\frac{\gamma}{2-\gamma}} \error(\hat{g})^{\frac{2}{2-\gamma}}
\end{align}
\end{theorem}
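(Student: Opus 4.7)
The plan is to adapt the Q-aggregation fast-rate strategy of \cite{Rigollet2014} to accommodate the nuisance-error hypothesis of the theorem. The starting point is the convex optimality of $\hat{\theta}$ for the empirical Q-objective: testing against $\theta=e_k$ gives, for every $k\in[M]$,
\begin{align*}
P_n \tilde{\ell}_{\hat{\theta},\hat{g}} + \tfrac{\beta}{n}\sum_j \hat{\theta}_j \log(1/\pi_j) \;\leq\; P_n \ell_{e_k,\hat{g}} + \tfrac{\beta}{n}\log(1/\pi_k).
\end{align*}
Adding and subtracting $P\tilde{\ell}_{\hat{\theta},\hat{g}}$, $P\ell_{e_k,\hat{g}}$, $P\tilde{\ell}_{\hat{\theta},g_0}$, and $P\ell_{e_k,g_0}$ decomposes the task into three pieces: (a) a curvature lower bound on $P\tilde{\ell}_{\hat{\theta},g_0}-P\ell_{e_k,g_0}$ in terms of the regret $R(\hat{\theta},g_0)-R(e_k,g_0)$ plus a quadratic slack; (b) a concentration bound on $(P-P_n)(\tilde{\ell}_{\hat{\theta},\hat{g}}-\ell_{e_k,\hat{g}})$; and (c) a nuisance-transfer bound, supplied directly by the hypothesis on $\tilde{\ell}_{\cdot,g_0}-\tilde{\ell}_{\cdot,\hat{g}}$.

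For (a), $\sigma$-strong convexity of $\ell(\cdot,g_0)$ in $f$, applied to $f_\theta=\sum_j\theta_j f_{e_j}$, yields the Jensen improvement $\sum_j\theta_j P\ell_{e_j,g_0}-P\ell_{\theta,g_0}\geq \tfrac{\sigma}{2}\sum_j\theta_j\|f_{e_j}-f_\theta\|^2$; substituting into $\tilde{\ell}_\theta=(1-\nu)\ell_\theta+\nu\sum_j\theta_j\ell_{e_j}$ gives
\begin{align*}
P\tilde{\ell}_{\hat{\theta},g_0} - R(e_k,g_0) \;\geq\; \bigl(R(\hat{\theta},g_0)-R(e_k,g_0)\bigr) + \nu\tfrac{\sigma}{2}\sum_j \hat{\theta}_j \|f_{e_j}-f_{\hat{\theta}}\|^2.
\end{align*}
The parallel-axis identity $\sum_j\hat{\theta}_j\|f_{e_j}-f_{e_k}\|^2 = \sum_j\hat{\theta}_j\|f_{e_j}-f_{\hat{\theta}}\|^2 + \|f_{\hat{\theta}}-f_{e_k}\|^2$ allows trading between $\sum_j\hat{\theta}_j\|f_{e_j}-f_{\hat{\theta}}\|^2$ and $\|f_{\hat{\theta}}-f_{e_k}\|^2$, producing a curvature reservoir of order $\min\{\nu,1-\nu\}\sigma$ in both quantities. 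The constant $1/14$ in the condition $\mu\leq\min\{\nu,1-\nu\}\sigma/14$ reflects the bookkeeping needed to fit two Young residuals plus slack inside this reservoir.

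For (b), $C_b(\hat{g})$-Lipschitzness in $f(X)$ bounds the variance of $\tilde{\ell}_{\hat{\theta},\hat g}(Z)-\ell_{e_k,\hat g}(Z)$ by $O(C_b(\hat g)^2)\bigl[\|f_{\hat{\theta}}-f_{e_k}\|^2 + \sum_j\hat{\theta}_j\|f_{e_j}-f_{\hat{\theta}}\|^2\bigr]$ and its almost-sure range by $b\,C_b(\hat g)$. Bernstein then yields $\sqrt{V\log(1/\delta)/n} + bC_b(\hat g)\log(1/\delta)/n$, and Young's inequality absorbs the square-root term into $\tfrac{\mu}{4}$ of the curvature quadratics plus a residual $O(C_b(\hat g)^2\log(1/\delta)/(\mu n))$. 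For (c), the hypothesis produces $\error(\hat g)\|f_{\hat{\theta}}-f_{e_k}\|^\gamma$; Young's inequality with exponents $2/\gamma$ and $2/(2-\gamma)$ absorbs this into $\tfrac{\mu}{4}\|f_{\hat{\theta}}-f_{e_k}\|^2 + O\bigl((1/\mu)^{\gamma/(2-\gamma)}\error(\hat g)^{2/(2-\gamma)}\bigr)$. With $\mu$ and $\beta$ at the stated thresholds the two absorbed quadratics are dominated by the curvature reservoir, leaving $R(\hat{\theta},g_0)-R(e_k,g_0)$ bounded by $\tfrac{\beta}{n}\log(1/\pi_k) + O\bigl(\log(1/\delta)\max\{C_b(\hat g)^2/\mu,\, bC_b(\hat g)\}/n\bigr) + O\bigl((1/\mu)^{\gamma/(2-\gamma)}\error(\hat g)^{2/(2-\gamma)}\bigr)$; minimizing over $k$ concludes.

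The main obstacle is step (b): because $\hat{\theta}$ is a data-dependent point in the continuous simplex $\Theta$, a pointwise Bernstein is not directly legitimate. The honest argument replaces it with a PAC-Bayesian / exp-concavity detour, as in \cite{Rigollet2014}, in which the entropy regularizer $\tfrac{\beta}{n}\sum_j\theta_j\log(1/\pi_j)$ acts as a KL penalty to the prior $\pi$. The $\nu$-mixture structure of $\tilde{\ell}$ is essential here: it creates a Bernstein-type variance-to-mean link at every $\theta$ that enables fast rates at the empirical minimizer \emph{without} a uniform bound over $\Theta$, and it is what delivers a $\log(1/\pi_j)/n$ term rather than a generic $\log(M)/n$. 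A uniform prior recovers the classical $\log M/n$ rate, while a sharp prior can save the $\log M$ factor entirely.
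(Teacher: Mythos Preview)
Your outline has the right ingredients, and you correctly flag the data-dependence of $\hat\theta$ in step~(b). But there is a more fundamental gap in step~(a). The displayed inequality there gives curvature only in $V(\hat\theta):=\sum_j\hat\theta_j\|f_{e_j}-f_{\hat\theta}\|^2$; the parallel-axis identity you cite expresses $\sum_j\hat\theta_j\|f_{e_j}-f_{e_k}\|^2=V(\hat\theta)+\|f_{\hat\theta}-f_{e_k}\|^2$, but that left-hand quantity never appears on the curvature side of your ledger, so there is nothing to ``trade.'' Because $e_k$ is not a first-order stationary point of any population objective, no lower bound of the form $\tilde R(\hat\theta,g_0)-R(e_k,g_0)\gtrsim c\,\|f_{\hat\theta}-f_{e_k}\|^2$ is available. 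After Young's inequality in both~(b) and~(c) you are left with residual terms proportional to $\mu\|f_{\hat\theta}-f_{e_k}\|^2$ that have nothing to cancel against, and the chain does not close---even if the exponential-moment machinery you allude to is carried out correctly.

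The paper avoids this by inserting the population anchor $\theta^*:=\arg\min_{\theta\in\Theta}\bigl\{\tilde R(\theta,g_0)+\mu V(\theta)+\tfrac{\beta}{n}K(\theta)\bigr\}$ between $\hat\theta$ and $e_k$. First-order optimality of $\theta^*$ over the simplex yields the lower bound $\tilde R(\hat\theta,g_0)-\tilde R(\theta^*,g_0)\geq \mu(V(\theta^*)-V(\hat\theta))+\tfrac{\beta}{n}(K(\theta^*)-K(\hat\theta))+\bigl(\tfrac{(1-\nu)\sigma}{2}-\mu\bigr)\|f_{\hat\theta}-f_{\theta^*}\|^2$, which supplies exactly the missing quadratic; combining it with the upper bound coming from empirical optimality isolates $\|f_{\hat\theta}-f_{\theta^*}\|^2$ and lets it be substituted back. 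The offset empirical processes $Z_{n,1},Z_{n,2}$---the $(1-\nu)$- and $\nu$-components of $(P-P_n)(\tilde\ell_{\hat\theta,\hat g}-\tilde\ell_{\theta^*,\hat g})$, each minus a population quadratic and $\tfrac{1}{s}K(\hat\theta)$---are then controlled by showing $\E[\exp(sZ_{n,i})]\leq 1$: one takes a sup over all $\theta\in\Theta$, linearity in $\theta$ forces the sup onto a vertex, and symmetrization/contraction plus Hoeffding/Bernstein per vertex (with the $\pi_j$-weights absorbing the union) finish. The comparison to $e_k$ enters only at the very end, via the deterministic inequality $\tilde R(\theta^*,g_0)+\mu V(\theta^*)+\tfrac{\beta}{n}K(\theta^*)\leq R(e_k,g_0)+\tfrac{\beta}{n}\log(1/\pi_k)$, so no curvature at $e_k$ is ever required.
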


We remind some key definitions and assumptions from the main text.
We assume that the function $\ell$ is $\sigma$-strongly convex in $f$ in expectation, i.e.:
\begin{align}
    P\left(\ell(Z;f'(X), g_0) - \ell(Z;f(X), g_0)\right) \geq 
    P\left(\langle \nabla_f \ell(Z; f(X), g_0), f'(X)-f(X)\rangle\right) 
    + \frac{\sigma}{2} \|f-f'\|^2
\end{align}
We consider the modified loss, which penalizes the weight put on each model, based on the individual model performance:
\begin{align}
    \tilde{\ell}_{\theta,g}(Z) =~& (1-\nu) \ell_{\theta,g}(Z) + \nu \sum_{j} \theta_j \ell_{e_j,g}(Z) &
    \tilde{R}(\theta,g) =~& P\tilde{\ell}_{\theta,g}(Z)
\end{align}
We also define with:
\begin{align}
    K(\theta) :=~&\sum_{j} \theta_j \log(1/\pi_j)
\end{align}
Then the Q-aggregation ensemble is defined as:
\begin{align}
    \hat{\theta} =~& \arg\min_{\theta} P_n\tilde{\ell}_{\theta, \hat{g}}(Z) + \frac{\beta}{n} K(\theta)
\end{align}

We introduce some population quantities that will play a key role in our analysis. Define:
\begin{align}
    V(\theta) :=~& \sum_{j} \theta_j \|f_j - f_{\theta}\|^2 
\end{align}
$$    H(\theta, g) := \tilde{R}(\theta, g) + \mu V(\theta) 
    + \frac{\beta}{n} K(\theta)$$
and define:
\begin{align}
    \theta^* =~& \arg\min_{\theta} \tilde{R}(\theta, g_0) + \mu V(\theta) + \frac{\beta}{n} K(\theta) 
    ~=~ \arg\min_{\theta} H(\theta, g_0) \\
\end{align}
Then we introduce the shorthand notation and invoke the error decomposition assumption:
\begin{align}
    {\cal E}(\hat{g}) :=~& P(\tilde{\ell}_{\hat{\theta},g_0}(Z) - \tilde{\ell}_{\theta^*,g_0}(Z)) - P(\tilde{\ell}_{\hat{\theta},\hat{g}}(Z) - \tilde{\ell}_{\theta^*,\hat{g}}(Z)) ~\leq~ \error(\hat{g})\, \|f_{\hat{\theta}} - f_{\theta^*}\|^{\gamma}
\end{align}

\begin{lemma}\label{lem:prel-strong-conv} If $\ell(Z; f(X), g)$ is $\sigma(g)$-strongly convex in expectation with respect to $f$, then for any $\theta,\theta'\in \Theta$:
\begin{align}\label{eqn:strong-conv-R-lemma}
    R(\theta, g)-R(\theta', g) \geq \langle \nabla R(\theta', g), \theta - \theta' \rangle + \frac{\sigma(g)}{2}\|f_{\theta}-f_{\theta'}\|^2
\end{align}
\end{lemma}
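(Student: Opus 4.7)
\textbf{Proof plan for Lemma~\ref{lem:prel-strong-conv}.} The strategy is simply to compose the pointwise $\sigma(g)$-strong convexity of $\ell(Z;\cdot,g)$ in its scalar argument $f(X)$ with the fact that $\theta \mapsto f_\theta$ is linear, and then take the expectation $P$. Since the strong convexity hypothesis is stated only in expectation, I will work with expected quantities throughout rather than pointwise.

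First, I would write out the assumption of $\sigma(g)$-strong convexity in expectation applied at the base point $f_{\theta'}$ with perturbation $f_\theta$, giving
\begin{align*}
P\bigl(\ell(Z;f_\theta(X),g) - \ell(Z;f_{\theta'}(X),g)\bigr)
&\geq P\bigl\langle \nabla_f \ell(Z;f_{\theta'}(X),g),\, f_\theta(X)-f_{\theta'}(X)\bigr\rangle \\
&\quad+ \tfrac{\sigma(g)}{2}\, P\bigl(f_\theta(X)-f_{\theta'}(X)\bigr)^2 .
\end{align*}
The left-hand side is exactly $R(\theta,g)-R(\theta',g)$ and the quadratic term on the right is $\tfrac{\sigma(g)}{2}\|f_\theta-f_{\theta'}\|^2$ by the definition $\|f-f'\|^2 = P(f(X)-f'(X))^2$, so the only remaining step is to identify the linear term with $\langle \nabla R(\theta',g),\theta-\theta'\rangle$.

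Next, I use linearity of $f_\theta = \sum_j \theta_j f_j$ in $\theta$, which gives $f_\theta(X)-f_{\theta'}(X) = \sum_j (\theta_j-\theta'_j) f_j(X)$. Substituting and pulling the deterministic coefficients $\theta_j-\theta'_j$ outside the expectation yields
\begin{align*}
P\bigl\langle \nabla_f \ell(Z;f_{\theta'}(X),g),\, f_\theta(X)-f_{\theta'}(X)\bigr\rangle
= \sum_{j} (\theta_j-\theta'_j)\, P\bigl[\nabla_f \ell(Z;f_{\theta'}(X),g)\, f_j(X)\bigr].
\end{align*}
By the chain rule applied to $R(\theta,g) = P\,\ell(Z;\sum_j \theta_j f_j(X),g)$, the partial derivative $\partial_{\theta_j} R(\theta',g)$ equals $P[\nabla_f \ell(Z;f_{\theta'}(X),g)\, f_j(X)]$, so the above sum is $\langle \nabla R(\theta',g),\theta-\theta'\rangle$. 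Combining with the quadratic lower bound gives the claim.

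The only subtlety worth flagging is interchange of differentiation and expectation when invoking the chain rule; this is justified under the mild integrability assumptions implicit in the paper (the loss and its gradient are bounded by $C_b(g)$ and the $f_j$ by $b$), so no extra regularity argument is needed. There is no real obstacle here — the lemma is essentially the observation that strong convexity is preserved by composition with a linear map, measured in the pushforward $L^2$ seminorm $\|f_\theta-f_{\theta'}\|$.
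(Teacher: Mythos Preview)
Your proposal is correct and follows essentially the same approach as the paper: both arguments invoke the $\sigma(g)$-strong convexity hypothesis at $f_{\theta'}$ and then use the chain rule (exploiting the linearity of $\theta\mapsto f_\theta$) to identify $P\langle \nabla_f\ell(Z;f_{\theta'}(X),g),\,f_\theta(X)-f_{\theta'}(X)\rangle$ with $\langle \nabla R(\theta',g),\theta-\theta'\rangle$. The paper packages the chain-rule step using the vector $F=(f_1(X),\ldots,f_M(X))$ whereas you write it coordinatewise, but the content is identical.
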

\begin{proof}
Let $F$ be the random column vector defined as $F := [f_{1}(X)\cdots f_{M}(X)]$. Note that by the chain rule:
\begin{align}
    \langle \nabla_{\theta} \ell(Z;f_\theta'(X), g), \theta - \theta' \rangle & = \langle (\nabla_{\theta} (f_{\theta'}(X)))^\top \nabla_{f} \ell(Z;f_{\theta'}(X), g), \theta - \theta' \rangle \\
    & =  \langle F \nabla_{f} \ell(Z;f_{\theta'}(X), g), \theta - \theta' \rangle \\
    & = \nabla_{f} \ell(Z;f_{\theta'}(X), g))^\top F^\top (\theta-\theta') \\
    & = \langle \nabla_{f} \ell(Z;f_{\theta'}(X), g), F^\top (\theta-\theta') \rangle \\ 
    & = \langle \nabla_{f} \ell(Z;f_{\theta'}(X), g), f_\theta(X)-f_\theta'(X) \rangle
\end{align}
Taking expectation over $Z$, we have:
\begin{align}
    \langle \nabla_{\theta} R(\theta'), \theta - \theta' \rangle =~& \langle \nabla_{\theta} \E\left[\ell(Z;f_\theta'(X), g)\right], \theta - \theta' \rangle\\
    =~& \E\left[\langle \nabla_{\theta} \ell(Z;f_\theta'(X), g), \theta - \theta' \rangle\right]\\
    =~& \E\left[\langle\nabla_{f} \ell(Z;f_{\theta'}(X), g), f_\theta(X)-f_\theta'(X)\rangle\right]
\end{align}
The result then follows by the definition of $\sigma(g)$-strong convexity in expectation of the function $\ell(f,g)$ and the definition of $R(\theta,g)$.
\end{proof}

\begin{lemma}\label{lem:v-versus-tilde} 
If $R(\theta, g)$ is $\sigma(g)$-strongly convex in expectation, then for any $\theta\in \Theta$:
\begin{equation}
    \tilde{R}(\theta, g) \geq R(\theta, g) + \frac{\nu \sigma(g)}{2} V(\theta)
\end{equation}
\end{lemma}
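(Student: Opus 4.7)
The plan is to unpack the definition of $\tilde{R}$ and reduce the inequality to a weighted sum of strong-convexity bounds coming from Lemma~\ref{lem:prel-strong-conv}. From the definition $\tilde{\ell}_{\theta,g}(Z) = (1-\nu)\ell_{\theta,g}(Z) + \nu \sum_j \theta_j \ell_{e_j,g}(Z)$, taking expectations gives
\[
\tilde{R}(\theta,g) - R(\theta,g) = \nu \sum_{j=1}^M \theta_j \bigl(R(e_j,g) - R(\theta,g)\bigr),
\]
so it suffices to show $\sum_j \theta_j\bigl(R(e_j,g) - R(\theta,g)\bigr) \geq \tfrac{\sigma(g)}{2} V(\theta)$.

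Next I would apply Lemma~\ref{lem:prel-strong-conv} at the pair $(e_j,\theta)$ for each coordinate $j$. This yields
\[
R(e_j,g) - R(\theta,g) \geq \langle \nabla R(\theta,g), e_j - \theta\rangle + \frac{\sigma(g)}{2}\|f_{e_j} - f_\theta\|^2.
\]
Since $f_{e_j} = f_j$, the quadratic term is exactly $\|f_j - f_\theta\|^2$, which after weighting by $\theta_j$ and summing over $j$ matches the definition $V(\theta) = \sum_j \theta_j\|f_j - f_\theta\|^2$ up to the factor $\sigma(g)/2$.

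The only thing left to check is that the linear (gradient) term vanishes after taking the convex combination. This follows because $\sum_j \theta_j (e_j - \theta) = \bigl(\sum_j \theta_j e_j\bigr) - \theta\bigl(\sum_j \theta_j\bigr) = \theta - \theta = 0$, using $\theta\in\Theta$ so $\sum_j \theta_j = 1$. Multiplying the resulting bound $\sum_j \theta_j(R(e_j,g)-R(\theta,g)) \geq \frac{\sigma(g)}{2}V(\theta)$ by $\nu$ and combining with the first display gives the claim. I do not anticipate any obstacle: the argument is a clean application of Jensen-type aggregation of strong convexity, and the cancellation of the first-order term is immediate from the fact that $\theta$ lies in the probability simplex.
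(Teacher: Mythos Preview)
Your proposal is correct and follows essentially the same argument as the paper's proof: both apply Lemma~\ref{lem:prel-strong-conv} at each pair $(e_j,\theta)$, take the $\theta_j$-weighted sum, use $\sum_j \theta_j(e_j-\theta)=0$ to kill the gradient term, and then combine with the definition of $\tilde{R}$. The only cosmetic difference is ordering---you first rewrite $\tilde{R}-R$ and then bound the sum, while the paper first bounds $\sum_j \theta_j R(e_j,g)$ from below and then substitutes into $\tilde{R}$.
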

\begin{proof}
By $\sigma$-strong convexity of the loss and Lemma~\ref{lem:prel-strong-conv}, we have that:
\begin{align}
    R(e_j, g) - R(\theta, g) \geq \langle \nabla R(\theta, g), e_j - \theta\rangle + \frac{\sigma(g)}{2}\|f_j - f_\theta\|^2
\end{align}
Multiplying each equation by $\theta_j$ and summing across $j$, yields:
\begin{align}
    R(\theta, g) \leq \sum_j \theta_j R(e_j, g) - \frac{\sigma(g)}{2} V(\theta)
\end{align}
where we used the fact that $\sum_{j}\theta_j=1$ and $\sum_{j} \theta_j \langle \nabla R(\theta, g), e_j - \theta\rangle=0$.

Thus we have that:
\begin{align}
    \tilde{R}(\theta, g) =~& (1-\nu) R(\theta, g) + \nu \sum_j \theta_j R(e_j, g)\\
    \geq~& (1-\nu) R(\theta, g) + \nu R(\theta, g) + \frac{\nu\sigma(g)}{2} V(\theta)\\
    =~& R(\theta, g) + \frac{\nu\sigma(g)}{2} V(\theta)
\end{align}
\end{proof}

\begin{lemma}\label{lem:strong-conv-tilde}
If $\ell(Z; f(X), g_0)$ is $\sigma$-strongly convex in $f$, in expectation, and $\theta_*=\argmin_{\theta\in \Theta} H(\theta, g_0)$, then for any $\theta\in \Theta$:
\begin{align}
    H(\theta, g_0) - H(\theta^*, g_0) \geq \left(\frac{(1-\nu)\sigma}{2} - \mu\right) \|f_{\theta} - f_{\theta^*}\|^2
\end{align}
\end{lemma}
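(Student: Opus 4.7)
The plan is to exploit the fact that, up to a concave quadratic term in $\theta$ arising from the penalty $\mu V(\theta)$, the objective $H(\theta, g_0)$ is the sum of the strongly convex function $(1-\nu) R(\theta, g_0)$ and an affine function of $\theta$. All affine parts can then be absorbed into the first-order optimality condition at $\theta^*$.

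First I would establish the variance identity $V(\theta) = \sum_j \theta_j \|f_j\|^2 - \|f_\theta\|^2$, which follows by expanding $\|f_j - f_\theta\|^2$ in $L^2$ and using $\sum_j \theta_j = 1$. Combined with the linearity of $\sum_j \theta_j R(e_j, g_0)$ and $K(\theta)$, this lets me write
\begin{align}
H(\theta, g_0) = (1-\nu)\, R(\theta, g_0) - \mu \|f_\theta\|^2 + \langle c, \theta\rangle,
\end{align}
where $c_j := \nu R(e_j, g_0) + \mu \|f_j\|^2 + (\beta/n)\log(1/\pi_j)$ does not depend on $\theta$.

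Next I would expand $H(\theta, g_0) - H(\theta^*, g_0)$ termwise. Lemma~\ref{lem:prel-strong-conv} applied to $(1-\nu)R$ gives the lower bound $(1-\nu)\langle \nabla R(\theta^*, g_0), \theta - \theta^*\rangle + \tfrac{(1-\nu)\sigma}{2}\|f_\theta - f_{\theta^*}\|^2$. Since $\theta \mapsto f_\theta$ is linear, the $L^2$ expansion $\|f_\theta\|^2 = \|f_{\theta^*}\|^2 + 2\langle f_{\theta^*}, f_\theta - f_{\theta^*}\rangle + \|f_\theta - f_{\theta^*}\|^2$ is an \emph{equality}, so $-\mu(\|f_\theta\|^2 - \|f_{\theta^*}\|^2) = -2\mu\langle f_{\theta^*}, f_\theta - f_{\theta^*}\rangle - \mu\|f_\theta - f_{\theta^*}\|^2$, while the linear piece contributes exactly $\langle c, \theta - \theta^*\rangle$.

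Finally, I would observe that every linear-in-$(\theta - \theta^*)$ contribution collected above recombines into precisely $\langle \nabla H(\theta^*, g_0), \theta - \theta^*\rangle$, which is nonnegative by the first-order optimality of $\theta^*$ on the convex simplex $\Theta$. What survives is the quadratic $\left(\tfrac{(1-\nu)\sigma}{2} - \mu\right)\|f_\theta - f_{\theta^*}\|^2$, which is the claim. The main obstacle is the bookkeeping verification that the gradient of $-\mu\|f_\theta\|^2$ at $\theta^*$ equals $-2\mu[\langle f_{\theta^*}, f_j\rangle]_{j=1}^M$, so that the linear pieces align exactly with $\nabla H(\theta^*, g_0)$; once that is confirmed the remainder is purely algebraic.
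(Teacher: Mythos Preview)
Your proposal is correct and takes essentially the same approach as the paper: decompose $H$ into a $(1-\nu)\sigma$-strongly convex piece, a $2\mu$-strongly concave quadratic, and affine terms, then absorb all linear contributions into $\langle \nabla H(\theta^*, g_0), \theta-\theta^*\rangle \geq 0$ via first-order optimality. The only cosmetic difference is that the paper handles $\tilde{R}$, $V$, and $K$ separately (showing $V(\theta)-V(\theta')=\langle\nabla V(\theta'),\theta-\theta'\rangle-\|f_\theta-f_{\theta'}\|^2$ directly), whereas you first collapse all affine pieces into a single vector $c$ via the variance identity; the underlying computation and the use of Lemma~\ref{lem:prel-strong-conv} are identical.
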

\begin{proof}
By Lemma~\ref{lem:prel-strong-conv} we have:
\begin{align}\label{eqn:strong-conv-R}
    R(\theta, g_0)-R(\theta', g_0) \geq \langle \nabla R(\theta', g_0), \theta - \theta' \rangle + \frac{\sigma}{2}\|f_{\theta}-f_{\theta'}\|^2
\end{align}
Now let $q(Z;\theta, g) = \sum_{j=1}^{M}\theta_{j} \ell(Z;f_{e_{j}}(X), g)$ and $L_e(g)=[\ell(Z;f_{e_{1}}(X),g)\cdots \ell(Z;f_{e_{M}}(X), g)]^{T}$ and let $Q(\theta, g) = \E\left[q(Z;\theta, g)\right]$. We have that:
\begin{align}
    q(Z;\theta, g)-q(Z; \theta', g) & =\sum_{j=1}^{M}\ell(Z;f_{e_{j}}(X), g)(\theta_{j}-\theta'_{j})
    = \langle{L_{e}(g), \theta-\theta'}\rangle
    = \langle \nabla q(Z; \theta', g), \theta-\theta' \rangle
\end{align}
By taking the expectation over $Z$:
\begin{align}\label{eqn:strong-conv-G}
    Q(\theta, g) - Q(\theta', g) = \sum_{j=1}^{M}\theta_{j}R(e_j, g)-\sum_{j=1}^{M}\theta_{j}'R(e_j, g)=\langle \nabla Q(\theta', g), \theta-\theta' \rangle
\end{align}
By adding Equations~\eqref{eqn:strong-conv-R} and \eqref{eqn:strong-conv-G} we have that: 
\begin{align}
    \tilde{R}(\theta, g_0)-\tilde{R}(\theta', g_0) =~& (1 - \nu) (R(\theta, g_0)-R(\theta', g_0)) + \nu (Q(\theta, g_0) - Q(\theta', g_0))\\
    \geq~& (1-\nu) \langle \nabla R(\theta',g_0), \theta - \theta' \rangle + \nu \langle \nabla Q(\theta',g_0), \theta-\theta' \rangle + \frac{(1-\nu) \sigma}{2}\|f_{\theta}-f_{\theta'}\|^2 \\
    \geq~& \langle \nabla \tilde{R}(\theta',g_0), \theta - \theta' \rangle+ \frac{(1-\nu) \sigma}{2}\|f_{\theta}-f_{\theta'}\|^2 
\end{align}
 meaning that $\tilde{R}$ will also be $(1-\nu)\sigma$-strongly convex in $\theta$. We also have that for any $\theta, \theta'$:\footnote{Since the function $V(\theta)$ can be interpreted as the expected variance of $f_j(X)$ over the distribution of $j$. Thus with simple manipulations it can be shown to be equal to $V(\theta)=\sum_j \theta_j \|f_j\|^2- \|f_\theta\|^2$. Since this is a quadratic in $\theta$ function, with Hessian $\E[FF']$, where $F$ is the vector $(f_1(X), \ldots, f_j(X))$, the above equality follows by an exact second order Taylor expansion and the observation that $(\theta-\theta')^\top\E[FF'](\theta-\theta') = \|f_\theta - f_{\theta'}\|^2$}
\begin{align}
    V(\theta)-V(\theta') = \langle \nabla V(\theta'), \theta - \theta' \rangle - \|f_{\theta}-f_{\theta'}\|_{2}^2
\end{align}
which implies that $V(\theta)$ is at most 2-strongly concave. 
We also have:
\begin{align}
    K(\theta) - K(\theta') = \sum_j (\theta_j - \theta'_j) \log(1/\pi_j) = \langle \nabla K(\theta'), \theta - \theta' \rangle
\end{align}

Adding the three inequalities and substituting $\theta'$ with $\theta^*$ we have:
\begin{align}
        H(\theta, g_0) - H(\theta^*, g_0) \geq \langle \nabla H(\theta^*, g_0), \theta-\theta^* \rangle + \left(\frac{(1-\nu)\sigma}{2} - \mu\right) \|f_{\theta} - f_{\theta^*}\|^2
\end{align}
Note that by the optimality of $\theta^*$ over the convex set $\Theta$ we have that $\langle \nabla H(\theta^*, g_0), \theta-\theta^* \rangle \geq 0$, which immediately yields the statement we want to prove.
\end{proof}

\begin{lemma}\label{lem:main}
Define the centered and offset empirical processes:
\begin{align}
    Z_{n,1} =~& (1-\nu)(P-P_n) (\ell_{\hat{\theta}, \hat{g}}(Z) - \ell_{\theta^*, \hat{g}}(Z)) - \mu \sum_{j} \hat{\theta}_j \|f_j - f_{\theta^*}\|^2 {-\frac{1}{s}K(\hat{\theta})}\\
    Z_{n,2} =~& \nu(P-P_n) \sum_{j} (\hat{\theta}_j-\theta_j^*)\ell_{e_j, \hat{g}}(Z) - \mu \hat{\theta}^\top H \theta^* {-\frac{1}{s}K(\hat{\theta})} & H_{ij} = \|f_i - f_j\|^2 
\end{align}
where $s$ is a positive number. Assume that:
\begin{align}
    {\cal E}(\hat{g}) := P(\tilde{\ell}_{\hat{\theta},g_0}(Z) - \tilde{\ell}_{\theta^*,g_0}(Z)) - P(\tilde{\ell}_{\hat{\theta},\hat{g}}(Z) - \tilde{\ell}_{\theta^*,\hat{g}}(Z)) \leq \error(\hat{g})\, \|f_\theta - f_{\theta^*}\|
\end{align}
and $\ell(Z; f(X), g_0)$ is $\sigma$-strongly convex in expectation with respect to $f$. For {$\beta \geq \frac{4n}{s}$} and $\mu \leq \min\{\nu, 1-\nu\} \frac{\sigma}{14}$:
\begin{align}
    R(\hat{\theta}) \leq \min_{j} [R(e_j, g_0){+ \frac{\beta}{n} \log(1/\pi_j)}] + 2\, (Z_{n,1} + Z_{n,2}) + \frac{1}{\mu}\error(\hat{g})^2 \label{eqn:lemma9}
\end{align}
\end{lemma}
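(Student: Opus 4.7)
The plan is to combine the empirical optimality of $\hat{\theta}$ against $\theta^*$ with two different consequences of strong convexity of $\ell(Z;f,g_0)$: a direct lower bound on $H(\hat{\theta},g_0)-H(\theta^*,g_0)$ in terms of $\|f_{\hat{\theta}}-f_{\theta^*}\|^2$ via Lemma~\ref{lem:strong-conv-tilde}, and a variance-corrected comparison of $R(\hat{\theta},g_0)$ with any $R(e_j,g_0)$ via Lemma~\ref{lem:v-versus-tilde}. Because the definitions of $Z_{n,1}$ and $Z_{n,2}$ already carry one copy of the quadratic penalties $\mu\sum_j\hat{\theta}_j\|f_j-f_{\theta^*}\|^2$, $\mu\hat{\theta}^\top H\theta^*$, and of $K(\hat{\theta})/s$, the game is to check that, after rewriting everything in terms of $Z_{n,1}$ and $Z_{n,2}$, the residual quadratic and KL terms cancel against the strong-convexity gain on one side and against $\tfrac{\beta}{n}K(\hat{\theta})$ on the other.

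I would first start from $P_n\tilde{\ell}_{\hat{\theta},\hat{g}}+\tfrac{\beta}{n}K(\hat{\theta})\le P_n\tilde{\ell}_{\theta^*,\hat{g}}+\tfrac{\beta}{n}K(\theta^*)$, pass from $P_n$ to $P$ by adding and subtracting means, and replace $\hat{g}$ by $g_0$ via the hypothesized bound ${\cal E}(\hat{g})\le \error(\hat{g})\|f_{\hat{\theta}}-f_{\theta^*}\|$. Splitting $(P-P_n)(\tilde{\ell}_{\hat{\theta},\hat{g}}-\tilde{\ell}_{\theta^*,\hat{g}})$ along $\tilde{\ell}=(1-\nu)\ell+\nu\sum_j\theta_j\ell_{e_j}$ and substituting the definitions of $Z_{n,1}$ and $Z_{n,2}$ reintroduces the quadratic and KL penalties; the algebraic identities $\sum_j\hat{\theta}_j\|f_j-f_{\theta^*}\|^2=V(\hat{\theta})+\|f_{\hat{\theta}}-f_{\theta^*}\|^2$ and $\hat{\theta}^\top H\theta^*=V(\hat{\theta})+V(\theta^*)+\|f_{\hat{\theta}}-f_{\theta^*}\|^2$ collapse these to a master bound in which $H(\hat{\theta},g_0)-H(\theta^*,g_0)$ is at most $Z_{n,1}+Z_{n,2}+3\mu V(\hat{\theta})+2\mu\|f_{\hat{\theta}}-f_{\theta^*}\|^2+\tfrac{2}{s}K(\hat{\theta})+\error(\hat{g})\|f_{\hat{\theta}}-f_{\theta^*}\|$.

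From here I would run two parallel chains. Chain (i) combines the master bound with Lemma~\ref{lem:strong-conv-tilde}'s lower bound $H(\hat{\theta},g_0)-H(\theta^*,g_0)\ge(\tfrac{(1-\nu)\sigma}{2}-\mu)\|f_{\hat{\theta}}-f_{\theta^*}\|^2$ and Young's inequality $\error\cdot a\le \mu a^2+\tfrac{1}{4\mu}\error^2$ to extract $(\tfrac{(1-\nu)\sigma}{2}-4\mu)\|f_{\hat{\theta}}-f_{\theta^*}\|^2\le Z_{n,1}+Z_{n,2}+3\mu V(\hat{\theta})+\tfrac{2}{s}K(\hat{\theta})+\tfrac{1}{4\mu}\error^2$. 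Chain (ii) uses Lemma~\ref{lem:v-versus-tilde} to replace $\tilde{R}(\hat{\theta},g_0)$ on the left-hand side of the master bound by $R(\hat{\theta},g_0)+\tfrac{\nu\sigma}{2}V(\hat{\theta})$, and uses the defining optimality $\tilde{R}(\theta^*,g_0)+\mu V(\theta^*)+\tfrac{\beta}{n}K(\theta^*)\le R(e_j,g_0)+\tfrac{\beta}{n}\log(1/\pi_j)$ (i.e., $\theta^*$ minimizes $H(\cdot,g_0)$, evaluated at $\theta=e_j$) to bound $\tilde{R}(\theta^*,g_0)$ on the right-hand side; after Young this yields $R(\hat{\theta},g_0)-R(e_j,g_0)-\tfrac{\beta}{n}\log(1/\pi_j)\le Z_{n,1}+Z_{n,2}+(2\mu-\tfrac{\nu\sigma}{2})V(\hat{\theta})+3\mu\|f_{\hat{\theta}}-f_{\theta^*}\|^2+(\tfrac{2}{s}-\tfrac{\beta}{n})K(\hat{\theta})+\tfrac{1}{4\mu}\error^2$. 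Substituting chain (i)'s bound into the $3\mu\|f_{\hat{\theta}}-f_{\theta^*}\|^2$ term of chain (ii) and minimizing over $j$ gives~\eqref{eqn:lemma9}.

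The hard part is the bookkeeping of the residual $V(\hat{\theta})$ and $K(\hat{\theta})$ coefficients. For the substitution from chain (i) into chain (ii) to keep the coefficient of $Z_{n,1}+Z_{n,2}$ at $2$ rather than blow up, one needs $3\mu/(\tfrac{(1-\nu)\sigma}{2}-4\mu)\le 1$, which is the origin of the $(1-\nu)\sigma/14$ threshold; this same substitution is the exact mechanism that upgrades the coefficient of $Z_{n,1}+Z_{n,2}$ from $1$ to $2$. The corresponding $\nu\sigma/14$ threshold then makes the combined $V(\hat{\theta})$ coefficient $5\mu-\tfrac{\nu\sigma}{2}$ nonpositive, and $\beta\ge 4n/s$ makes the combined $K(\hat{\theta})$ coefficient $\tfrac{4}{s}-\tfrac{\beta}{n}$ nonpositive. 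Once these cancellations are in place, only $2(Z_{n,1}+Z_{n,2})$ and a $\tfrac{1}{\mu}\error(\hat{g})^2$-size nuisance-error contribution remain, which is the desired inequality.
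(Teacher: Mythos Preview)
Your proposal is correct and follows essentially the same approach as the paper's proof: start from the empirical optimality of $\hat{\theta}$, unpack the definitions of $Z_{n,1},Z_{n,2}$ to reintroduce the quadratic and $K$ penalties, use the identities $\sum_j\hat{\theta}_j\|f_j-f_{\theta^*}\|^2=V(\hat{\theta})+\|f_{\hat{\theta}}-f_{\theta^*}\|^2$ and $\hat{\theta}^\top H\theta^*=V(\hat{\theta})+V(\theta^*)+\|f_{\hat{\theta}}-f_{\theta^*}\|^2$, then combine Lemma~\ref{lem:strong-conv-tilde} (to control $\|f_{\hat{\theta}}-f_{\theta^*}\|^2$) with Lemma~\ref{lem:v-versus-tilde} and the optimality of $\theta^*$ (to pass from $\tilde{R}$ to $R$ and to $\min_j R(e_j)$). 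The paper runs these steps sequentially (upper bound on $\tilde{R}(\hat{\theta})-\tilde{R}(\theta^*)$, lower bound via Lemma~\ref{lem:strong-conv-tilde}, substitute back, then apply Lemma~\ref{lem:v-versus-tilde}), whereas you package the same computations as two parallel ``chains'' in the $H$ functional and then substitute; the bookkeeping of the $V(\hat{\theta})$ and $K(\hat{\theta})$ coefficients and the role of the $\sigma/14$ and $\beta\ge 4n/s$ thresholds is identical.
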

\begin{proof}
\begin{align}
    \tilde{R}(\hat{\theta}, g_0) - \tilde{R}(\theta^*, g_0) =~& 
    P(\tilde{\ell}_{\hat{\theta},g_0}(Z) - \tilde{\ell}_{\theta^*,g_0}(Z))\\
    =~& 
    P(\tilde{\ell}_{\hat{\theta},\hat{g}}(Z) - \tilde{\ell}_{\theta^*,\hat{g}}(Z)) + {\cal E}(\hat{g})\\
    \leq~& 
    (P - P_n)(\tilde{\ell}_{\hat{\theta}, \hat{g}}(Z) - \tilde{\ell}_{\theta^*, \hat{g}}(Z)) + {\cal E}(\hat{g}) 
    {-\frac{\beta}{n}(K(\hat{\theta})-K(\theta^*))}
    \tag{By optimality of $\hat{\theta}$, $P_n(\tilde{\ell}_{\hat{\theta}, \hat{g}}(Z) - \tilde{\ell}_{\theta^*, \hat{g}}(Z)) {+ \frac{\beta}{n}(K(\hat{\theta})-K(\theta^*))}\leq 0$}\\
    =~& \mu\sum_j \hat{\theta}_j \|f_j - f_{\theta^*}\|^2 + \mu \hat{\theta}^\top H\theta^*+ Z_{n,1} + Z_{n,2} + {\cal E}(\hat{g})\\
    ~& {-\frac{\beta}{n}(K(\hat{\theta})-K(\theta^*)) + \frac{2}{s}K(\hat{\theta})}
    \tag{definition of $Z_{n,1}, Z_{n,2}$}
\end{align}
By simple algebraic manipulations note that for any $\theta\in \Theta$:
\begin{align}
    \sum_j \hat{\theta}_j \|f_j - f_{\theta^*}\|^2 = V(\hat{\theta}) + \|f_{\hat{\theta}} - f_{\theta^*}\|^2 \\
    \hat{\theta}^\top H \theta^* = V(\hat{\theta}) + V(\theta^*) + \|f_{\hat{\theta}} - f_{\theta^*}\|^2
\end{align}
Letting $Z_n=Z_{n,1} + Z_{n,2}$, we have:
\begin{align}
    \tilde{R}(\hat{\theta}, g_0) - \tilde{R}(\theta^*, g_0) \leq~& 2\mu V(\hat{\theta}) + \mu V(\theta^*) + 2\mu \|f_{\hat{\theta}} - f_{\theta^*}\|^2 + Z_n + {\cal E}(\hat{g}) {-\frac{\beta}{n}(K(\hat{\theta})-K(\theta^*)) + \frac{2}{s}K(\hat{\theta})} \label{eqn:exp_epsilon}\\
    \leq~& 2\mu V(\hat{\theta}) + \mu V(\theta^*) + 2\mu \|f_{\hat{\theta}} - f_{\theta^*}\|^2 + Z_n + \error(\hat{g})\, \|f_{\hat{\theta}} - f_{\theta^*}\|^{\gamma}\\
    ~& {-\frac{\beta}{n}(K(\hat{\theta})-K(\theta^*)) + \frac{2}{s}K(\hat{\theta})}
\end{align}
By Young's inequality, for any $a\geq 0$ and $b\geq 0$, we have $a\,b\leq \frac{a^p}{p} + \frac{b^q}{q}$, such that $\frac{1}{p}+\frac{1}{q}=1$. Taking $a= \left(\frac{\gamma}{2\mu}\right)^{\gamma/2} \error(\hat{g})$ and $b=\left(\frac{2\mu}{\gamma}\right)^{\gamma/2}\|f_{\hat{\theta}}-f_{\theta^*}\|^\gamma$ and $q=2/\gamma$ and $p=\frac{1}{1-1/q}=\frac{1}{1-\gamma/2}=\frac{2}{2-\gamma}$
\begin{align}
    \error(\hat{g})\, \|f_{\hat{\theta}} - f_{\theta^*}\|^{\gamma} \leq \underbrace{\frac{2}{2-\gamma}\left(\frac{\gamma}{2\mu}\right)^{\frac{\gamma}{2-\gamma}} \error(\hat{g})^{\frac{2}{2-\gamma}}}_{{\cal A}(\hat{g})}+ \mu \|f_{\hat{\theta}} - f_{\theta^*}\|^2
\end{align}
Thus we can derive:
\begin{align}
    \tilde{R}(\hat{\theta}, g_0) - \tilde{R}(\theta^*, g_0) \leq~& 2\mu V(\hat{\theta}) + \mu V(\theta^*) + 3\mu \|f_{\hat{\theta}} - f_{\theta^*}\|^2 + Z_n + {\cal A}(\hat{g}) \\\label{eqn:first-upper}
    ~& {-\frac{\beta}{n}(K(\hat{\theta})-K(\theta^*)) + \frac{2}{s}K(\hat{\theta})}
\end{align}
By Lemma~\ref{lem:strong-conv-tilde}, we have:
\begin{align}
    \tilde{R}(\hat{\theta}, g_0) - \tilde{R}(\theta^*, g_0) \geq~& \mu (V(\theta^*) - V(\hat{\theta})) {+\frac{\beta}{n}(K(\theta^*)-K(\hat{\theta}))} + \left(\frac{(1-\nu)\sigma}{2} - \mu\right)\|f_{\hat{\theta}} - f_{\theta^*}\|^2\\
    \geq~& \mu (V(\theta^*) - V(\hat{\theta})) {+\frac{\beta}{n}(K(\theta^*)-K(\hat{\theta}))} + 6\mu  \|f_{\hat{\theta}} - f_{\theta^*}\|^2  \tag{Since: $\mu \leq \frac{(1-\nu)\sigma}{14}\implies \frac{(1-\nu)\sigma}{2} \geq 7\mu$}
\end{align}
Combining the latter upper and lower bound on $\tilde{R}(\hat{\theta}, g_0) - \tilde{R}(\theta^*, g_0)$, we have:
\begin{align}
    3\mu \|f_{\hat{\theta}} - f_{\theta^*}\|^2 \leq 3\mu V(\hat{\theta}) + Z_n {+ \frac{2}{s} K(\hat{\theta})} + {\cal A}(\hat{g})
\end{align}
Plugging the bound on $3\mu\|f_{\hat{\theta}} - f_{\theta^*}\|^2$ back into Equation~\eqref{eqn:first-upper} we have:
\begin{align}
    \tilde{R}(\hat{\theta}, g_0) - \tilde{R}(\theta^*, g_0) \leq ~& 5\mu V(\hat{\theta}) + \mu V(\theta^*) + 2\, Z_n + 2\,{\cal A}(\hat{g}) \\
    ~& {+\frac{\beta}{n}(K(\theta^*)-K(\hat{\theta})) + \frac{4}{s}K(\hat{\theta})}
\end{align}
By Lemma~\ref{lem:v-versus-tilde}, we have:
\begin{align}
    R(\hat{\theta}, g_0) \leq~& \tilde{R}(\theta^*, g_0) + \mu V(\theta^*)+ \left(5\mu - \frac{\nu\sigma(g_0)}{2}\right) V(\hat{\theta}) + 2 Z_n + 2{\cal A}(\hat{g}) \\
    ~& {+\frac{\beta}{n}K(\theta^*) + \left(\frac{4}{s} - \frac{\beta}{n}\right)K(\hat{\theta})}\\
    \leq~& \tilde{R}(\theta^*, g_0) + \mu V(\theta^*) {+ \frac{\beta}{n}K(\theta^*)} + 2 Z_n + 2{\cal A}(\hat{g})   \tag{since $\mu\leq \frac{\nu\sigma}{10}$,  $V(\hat{\theta})>0$, {$\beta \geq \frac{4n}{s}$ and $K(\hat{\theta}\geq 0 )$}}\\
    =~& \min_{\theta\in \Theta} [\tilde{R}(\theta, g_0) + \mu V(\theta){+ \frac{\beta}{n}K(\theta^*)}] + 2 Z_n + 2{\cal A}(\hat{g}) \tag{Since $\theta_*$ minimizes $H(\theta, g_0)$}\\
    \leq~& \min_{j} [\tilde{R}(e_j, g_0) + \mu V(e_{j}) {+ \frac{\beta}{n}\log(1/\pi_j)}]+ 2 Z_n + 2{\cal A}(\hat{g}) \\
    =~& \min_{j} [R(e_j, g_0){+ \frac{\beta}{n} \log(1/\pi_j)}] + 2 Z_n +2{\cal A}(\hat{g}) \tag{Since $\tilde{R}(e_j, g_0)=R(e_j, g_0)$ and $V(e_j)=0$ for all $e_j$}
\end{align}
\end{proof}

\subsection{Concentration of offset empirical processes}

\begin{lemma}\label{lem:z1}
Assuming that the loss $\ell(f(X), \hat{g})$ is $C_b$ lipschitz in $f(X)$ and all $f_i(X)$ are bounded in $[-b,b]$, almost surely, then for any positive $s\leq \min\left\{ \frac{\mu n}{(2C_b(1-\nu))^2}, \frac{n}{2\sqrt{3}b C_b (1-\nu)}\right\}$:
\begin{align}
    {\Pr\left(Z_{n,1} \geq \frac{\log(1/\delta)}{s}\right) \leq \delta}
\end{align}
\end{lemma}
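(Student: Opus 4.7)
The plan is a PAC--Bayes-type argument that reduces the offset empirical process in $Z_{n,1}$ to a per-coordinate Bernstein inequality. For each $j \in \{1,\ldots,M\}$, I would introduce $W_j(Z) := (1-\nu)(\ell_{e_j,\hat{g}}(Z) - \ell_{\theta^*,\hat{g}}(Z))$. The Lipschitz assumption on $\ell$ with respect to $f(X)$ and the uniform bound $|f_j|\leq b$ give $|W_j|\leq 2b(1-\nu)C_b$ almost surely and $\mathrm{Var}(W_j) \leq (1-\nu)^2C_b^2\|f_j-f_{\theta^*}\|^2$. A standard Bernstein-type MGF bound on the empirical average $(P-P_n)W_j$, together with the stated bounds on $s$ (the clause $s \leq \mu n/(2C_b(1-\nu))^2$ absorbs the variance factor into the linear offset $\mu\|f_j-f_{\theta^*}\|^2$, and the clause $s \leq n/(2\sqrt{3}\,bC_b(1-\nu))$ controls the sub-exponential remainder from the boundedness term), yields
\begin{equation*}
\mathbb{E}\exp\bigl(s[(P-P_n)W_j - \mu\|f_j-f_{\theta^*}\|^2]\bigr) \leq 1, \qquad j = 1,\ldots,M.
\end{equation*}

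Next I would take a $\pi$-weighted sum over $j$, exchange expectation and summation, and apply Markov's inequality to obtain, with probability at least $1-\delta$, that $\log\sum_j \pi_j \exp(s[(P-P_n)W_j - \mu\|f_j-f_{\theta^*}\|^2]) \leq \log(1/\delta)$. The Donsker--Varadhan variational representation of the log-moment-generating function states that the left-hand side equals $\sup_\rho\{s\,\mathbb{E}_{j\sim\rho}[(P-P_n)W_j - \mu\|f_j-f_{\theta^*}\|^2] - \mathrm{KL}(\rho\|\pi)\}$, so specializing to $\rho=\hat\theta$ and using $\mathrm{KL}(\hat\theta\|\pi) \leq K(\hat\theta)$ (because the Shannon entropy of $\hat\theta$ is nonnegative) yields
\begin{equation*}
\sum_j \hat\theta_j (P-P_n)W_j - \mu\sum_j \hat\theta_j\|f_j-f_{\theta^*}\|^2 - \frac{K(\hat\theta)}{s} \leq \frac{\log(1/\delta)}{s}.
\end{equation*}

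The final step is to identify the linearized process $\sum_j\hat\theta_j (P-P_n)W_j = (1-\nu)(P-P_n)\sum_j\hat\theta_j(\ell_{e_j,\hat g}-\ell_{\theta^*,\hat g})$ with the non-linear quantity $(1-\nu)(P-P_n)(\ell_{\hat\theta,\hat g}-\ell_{\theta^*,\hat g})$ that appears in $Z_{n,1}$, and this is the principal obstacle. Pointwise convexity of $\ell$ in $f(X)$ (satisfied by the square loss, the main application) gives $\ell_{\hat\theta,\hat g}(Z) \leq \sum_j\hat\theta_j\ell_{e_j,\hat g}(Z)$, but the resulting slack transfers to both $P$ and $P_n$, so it does not by itself produce the desired comparison under $P-P_n$. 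For the square loss, however, the slack equals the deterministic-in-$Z$ quantity $\sum_j\hat\theta_j(f_j(X)-f_{\hat\theta}(X))^2$, so the residual is an empirical process over a $\Delta^{M-1}$-indexed class of bounded quadratics in $X$; it can be absorbed into the stated offset using the identity $\sum_j\hat\theta_j\|f_j-f_{\theta^*}\|^2 = V(\hat\theta)+\|f_{\hat\theta}-f_{\theta^*}\|^2 \geq V(\hat\theta)$ at the cost of shrinking the admissible range of $s$ by a universal constant. For Lipschitz but non-convex $\ell$ the same Lipschitz slack yields a residual bounded by a factor of $C_b$ times an analogous quantity, and the absorption proceeds identically.
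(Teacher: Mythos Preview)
Your per-coordinate Bernstein bound and the Donsker--Varadhan step are sound, but the final identification step is a genuine gap that your proposed workarounds do not close. The quantity your PAC--Bayes argument controls is
\[
(1-\nu)\,(P-P_n)\!\left[\textstyle\sum_j\hat\theta_j\,\ell_{e_j,\hat g}-\ell_{\theta^*,\hat g}\right],
\]
whereas $Z_{n,1}$ involves $(1-\nu)(P-P_n)[\ell_{\hat\theta,\hat g}-\ell_{\theta^*,\hat g}]$. Their difference is $-(1-\nu)(P-P_n)\bigl[\sum_j\hat\theta_j\ell_{e_j,\hat g}-\ell_{\hat\theta,\hat g}\bigr]$. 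For the square loss this slack equals $\sum_j\hat\theta_j(f_j(X)-f_{\hat\theta}(X))^2$, which is \emph{not} deterministic in $Z$: it depends on $X$, so the residual is a nontrivial centered empirical process evaluated at the data-dependent $\hat\theta$. Absorbing it via the population offset $\mu V(\hat\theta)$ would require a \emph{uniform-in-$\theta$} concentration statement of the form $(1-\nu)(P-P_n)\sum_j\theta_j(f_j-f_\theta)^2\le c\mu V(\theta)+\tfrac{1}{s}K(\theta)$, but this objective is quadratic (not linear) in $\theta$, so the simplex supremum does not reduce to vertices and the PAC--Bayes machinery does not apply a second time without further ideas. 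For general Lipschitz $\ell$ the situation is worse: the slack has no usable structure at all.

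The paper avoids this obstruction by a different route: it first passes to a supremum over all $\theta\in\Theta$, applies \emph{symmetrization} to replace $(P-P_n)$ by a Rademacher average $P_{n,\epsilon}$, and then applies the \emph{contraction inequality} to replace $(1-\nu)P_{n,\epsilon}(\ell_{\theta,\hat g}-\ell_{\theta^*,\hat g})$ by $(1-\nu)C_b\,P_{n,\epsilon}(f_\theta-f_{\theta^*})$. Since $f_\theta=\sum_j\theta_jf_j$ is linear in $\theta$, the entire exponent becomes linear in $\theta$, the sup collapses to a max over $j$, and the remainder is a per-$j$ Hoeffding-plus-Bernstein calculation. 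In short, symmetrization followed by contraction is precisely the step that linearizes the nonlinear loss $\ell_\theta$ in $\theta$; your argument linearizes prematurely (at the level of $W_j$) and then cannot undo it.
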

\begin{proof}
By Lemma~\ref{lem:exp-tail} it suffices to show that:
\begin{align}
    {I:=\E\left[\exp\left\{s Z_{n,1}\right\}\right] \leq 1}
\end{align}
By symmetrization (sym.) and contraction (contr.):
\begin{align}
    I =~& \E\left[\exp\left\{s \left((1-\nu)(P-P_n)(\ell_{\hat{\theta}}(Z) - \ell_{\theta^*}(Z)) - \mu \sum_j \hat{\theta}_j \|f_j - f_{\theta^*}\|^2 {-\frac{1}{s}\sum_j\hat{\theta}_j \log(1/\pi_j)}\right)\right\}\right]\\
    \leq~& \E\left[\exp\left\{s \max_{\theta} \left((1-\nu)(P-P_n)(\ell_{\theta}(Z) - \ell_{\theta^*}(Z)) - \mu \sum_j \theta_j \|f_j - f_{\theta^*}\|^2 {+\frac{1}{s}\sum_j\theta_j \log(\pi_j)} \right)\right\}\right]\\
    \leq~& \E\left[\exp\left\{s \max_{\theta} \left(2\,(1-\nu)P_{n,\epsilon}(\ell_{\theta}(Z) - \ell_{\theta^*}(Z)) - \mu \sum_j \theta_j \|f_j - f_{\theta^*}\|^2 {+\frac{1}{s}\sum_j\theta_j\log(\pi_j)}\right)\right\}\right]\tag{sym.}\\
    \leq~& \E\left[\exp\left\{s \max_{\theta} \left(2\,(1-\nu) C_b P_{n,\epsilon}(f_{\theta}(Z) - f_{\theta^*}(Z)) - \mu \sum_j \theta_j \|f_j - f_{\theta^*}\|^2 +\frac{1}{s}\sum_j\theta_j \log(\pi_j)\right)\right\}\right] \tag{contr.}
\end{align}
Since the quantity:
\begin{align}
   2\,(1-\nu) C_b P_{n,\epsilon}(f_{\theta}(Z) - f_{\theta^*}(Z)) - \mu \sum_j \theta_j \|f_j - f_{\theta^*}\|^2 +\frac{1}{s}\sum_j\theta_j \log(\pi_j)
\end{align}
is a linear function of $\theta$ (recall that $f_{\theta}=\sum_j \theta_j f_j$), the maximum over $\theta$ is attained at one of $e_1, \ldots, e_M$. Thus:
\begin{align}
    I \leq ~&\E\left[\exp\left\{s \max_{j} \left(2\,(1-\nu) C_b P_{n,\epsilon}(f_{j}(Z) - f_{\theta^*}(Z)) - \mu \|f_j - f_{\theta^*}\|^2{+\frac{1}{s}\log(\pi_j)}\right)\right\}\right]\\
    \leq~& \E\left[\max_{j}{\pi_j} \exp\left\{s \left(2\,(1-\nu) C_b P_{n,\epsilon}(f_{j}(Z) - f_{\theta^*}(Z)) - \mu \|f_j - f_{\theta^*}\|^2\right)\right\}\right]\\
    \leq~& \E\left[\sum_{j} {\pi_j} \exp\left\{s \left(2\,(1-\nu) C_b P_{n,\epsilon}(f_{j}(Z) - f_{\theta^*}(Z)) - \mu \|f_j - f_{\theta^*}\|^2\right)\right\}\right]\\
    =~&\sum_{j} {\pi_j} \E\left[ \exp\left\{s \left(2\,(1-\nu) C_b P_{n,\epsilon}(f_{j}(Z) - f_{\theta^*}(Z)) - \mu \|f_j - f_{\theta^*}\|^2\right)\right\}\right]\\
    =~& \sum_{j} {\pi_j}\E_{Z_{1:n}}\E_\epsilon\left[ \exp\left\{s \left(2\,(1-\nu) C_b P_{n,\epsilon}(f_{j}(Z) - f_{\theta^*}(Z)) - \mu \|f_j - f_{\theta^*}\|^2\right)\right\}\right]\\
    =~& \sum_{j} {\pi_j}\E_{Z_{1:n}}\left[ \E_\epsilon\left[ \exp\left\{s 2\,(1-\nu) C_b P_{n,\epsilon}(f_{j}(Z) - f_{\theta^*}(Z))\right\}\right] \exp\left\{- s \mu \|f_j - f_{\theta^*}\|^2\right\}\right]
\end{align}
By Hoeffding's inequality in Lemma~\ref{lem:hoeffding}, we can bound the first term in the product:
\begin{align}
    \E_\epsilon\left[ \exp\left\{s 2\,(1-\nu) C_b P_{n,\epsilon}(f_{j}(Z) - f_{\theta^*}(Z))\right\}\right]\leq \exp\left\{\frac{2 s^2}{n}\,(1-\nu)^2 C_b^2 P_n(f_{j}(Z) - f_{\theta^*}(Z))^2\right\}
\end{align}
Thus we have:
\begin{align}
    I\leq~&\sum_{j} {\pi_j}\E_{Z_{1:n}}\left[ \exp\left\{\frac{2s^2}{n}\,(1-\nu)^2 C_b^2 P_n(f_{j}(Z) - f_{\theta^*}(Z))^2\right\}\, \exp\left\{- s \mu \|f_j - f_{\theta^*}\|^2\right\}\right]\\
    =~& \sum_{j} {\pi_j}\E_{Z_{1:n}}\left[ \exp\left\{\frac{2s^2}{n}\,(1-\nu)^2 C_b^2 P_n(f_{j}(Z) - f_{\theta^*}(Z))^2\right\}\, \exp\left\{- s \mu P (f_j(Z) - f_{\theta^*}(Z))^2\right\}\right]\\
    =~& \sum_{j} {\pi_j} \E_{Z_{1:n}}\left[ \exp\left\{\frac{2s^2\,(1-\nu)^2 C_b^2}{n} \left(P_n - \frac{\mu\, n}{2 s(1-\nu)^2 C_b^2}P\right)(f_{j}(Z) - f_{\theta^*}(Z))^2\right\}\right]
\end{align}
Since $s$ is small enough such that: $\frac{\mu\, n}{2 s (1-\nu)^2 C_b^2}\geq 2$ (which is satisfied by our assumptions on $s$), we have that:
\begin{align}
    I \leq~& \sum_{j} \pi_j \E_{Z_{1:n}}\left[ \exp\left\{\frac{2s^2\,(1-\nu)^2 C_b^2}{n} \left(P_n - 2P\right)(f_{j}(Z) - f_{\theta^*}(Z))^2\right\}\right]\\
    =~& \sum_{j} \pi_j\E_{Z_{1:n}}\left[ \exp\left\{\frac{2s^2\,(1-\nu)^2 C_b^2}{n} \left(\left(P_n - P\right)(f_{j}(Z) - f_{\theta^*}(Z))^2 - P (f_{j}(Z) - f_{\theta^*}(Z))^2 \right)\right\}\right]
\end{align}
Since $f_j(Z)\in [-b, b]$, we have that $f_j(Z) - f_{\theta^*}(Z)\in [-2b, 2b]$. Thus:
\begin{align}
(f_j(Z)-f_{\theta^*}(Z))^2\geq \frac{(f_j(Z)-f_{\theta^*}(Z))^2}{4b^2} (f_j(Z)-f_{\theta^*}(Z))^2 = \frac{1}{4b^2} (f_j(Z)-f_{\theta^*}(Z))^4.
\end{align}
Thus:
\begin{align}
    I \leq~& \sum_{j} \pi_j \E_{Z_{1:n}}\left[ \exp\left\{\frac{2s^2\,(1-\nu)^2 C_b^2}{n} \left(\left(P_n - P\right)(f_{j}(Z) - f_{\theta^*}(Z))^2 - \frac{1}{4b^2} P (f_{j}(Z) - f_{\theta^*}(Z))^4 \right)\right\}\right]
\end{align}
It suffices to show that each of the summands is upper bounded by $1$. Then their weighted average is also upper bounded by $1$.

Applying Lemma~\ref{lem:bernstein} with:
\begin{align}
    V_i =~& (f_j(Z_i) - f_{\theta^*}(Z_i))^2\\
    c =~& 4b^2\\
    \lambda =~& \frac{2s^2(1-\nu)^2C_b^2}{n^2}\\
    c_0 =~& \frac{1}{4b^2}
\end{align}
we have that as long as:
\begin{align}
    \lambda := \frac{2s^2(1-\nu)^2C_b^2}{n^2} \leq \frac{2c_0}{1+2c_0 c} =: \frac{2\cdot 1/4b^2}{1 + 2} = \frac{1}{6b^2}
\end{align}
Then:
\begin{align}
    \E_{Z_{1:n}}\left[ \exp\left\{\frac{2s^2\,(1-\nu)^2 C_b^2}{n} \left(\left(P_n - P\right)(f_{j}(Z) - f_{\theta^*}(Z))^2 - \frac{1}{4b^2} P (f_{j}(Z) - f_{\theta^*}(Z))^4 \right)\right\}\right]\leq 1
\end{align}
Thus we need that:
\begin{align}
    s \leq \frac{n}{2\sqrt{3} b (1-\nu) C_b}
\end{align}
which is satisfied by our assumptions on $s$.
\end{proof}

\begin{lemma}\label{lem:z2}
Assuming that the loss $\ell(f(X), \hat{g})$ is $C_b$ lipschitz in $f(X)$ and bounded in $[-b,b]$, for any positive $s\leq \frac{2\mu n}{C_b \nu (\nu C_b + 4\mu b)}$:
\begin{align}
    \Pr\left(Z_{n,2} \geq \frac{\log(1/\delta)}{s}\right) \leq \delta
\end{align}
\end{lemma}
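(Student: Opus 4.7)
The plan is to mirror the strategy of Lemma~\ref{lem:z1}: apply Lemma~\ref{lem:exp-tail}, so that it suffices to show $\E[\exp(s Z_{n,2})] \leq 1$ whenever $s$ is in the stated range. The key structural simplification compared to $Z_{n,1}$ is that $\theta \mapsto \sum_j \theta_j \ell_{e_j,\hat{g}}(Z)$ is linear in $\theta$, so neither symmetrization nor contraction is needed: one can peel $\hat{\theta}$ off directly.

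First, upper bound $Z_{n,2}$ by the maximum of the same objective over all $\theta\in\Theta$. Since $\theta^*$ is fixed, each of the empirical-process term $\nu(P-P_n)\sum_j (\theta_j - \theta_j^*)\ell_{e_j,\hat{g}}(Z)$, the cross term $\mu\,\theta^\top H \theta^* = \mu \sum_j \theta_j \sum_l \theta_l^* \|f_j - f_l\|^2$, and the penalty $\tfrac{1}{s} K(\theta)$ is linear in $\theta$, so the maximum on the simplex is attained at some vertex $e_k$. Pulling $\pi_k = \exp(\log \pi_k)$ out of the exponential and replacing the maximum with a sum, one reduces to showing that for every $k$,
\begin{equation*}
I_k := \E\!\left[\exp\!\Bigl\{s\nu(P-P_n)\bigl(\ell_{e_k,\hat{g}}(Z) - \textstyle\sum_j \theta_j^* \ell_{e_j,\hat{g}}(Z)\bigr) - s\mu \textstyle\sum_j \theta_j^* \|f_k - f_j\|^2\Bigr\}\right] \leq 1 .
\end{equation*}

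For fixed $k$, set $W_i := \ell_{e_k,\hat{g}}(Z_i) - \sum_j \theta_j^* \ell_{e_j,\hat{g}}(Z_i)$, so that the empirical-process term in the exponent becomes $-\tfrac{s\nu}{n}\sum_i(W_i - \E W)$. Convexity of $x\mapsto x^2$ and $C_b$-Lipschitzness of $\ell$ in $f(X)$ yield the variance bound
\begin{equation*}
\E[W_i^2] \leq \sum_j \theta_j^* \E\bigl[(\ell_{e_k,\hat{g}}(Z) - \ell_{e_j,\hat{g}}(Z))^2\bigr] \leq C_b^2 \sum_j \theta_j^* \|f_k - f_j\|^2,
\end{equation*}
and the loss bound $|\ell|\leq b$ yields $|W_i|\leq 2b$ almost surely. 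A standard Bernstein-type MGF inequality for i.i.d.\ centered bounded variables then gives, for $s\nu/n$ small enough,
\begin{equation*}
I_k \leq \exp\!\left(\frac{s^2\nu^2 \E[W^2]/(2n)}{1 - c\, s\nu/n} \;-\; s\mu \sum_j \theta_j^* \|f_k - f_j\|^2\right)
\end{equation*}
with $c$ of order $b$. Substituting the variance bound, the first exponent is at most $s\mu \sum_j \theta_j^*\|f_k-f_j\|^2$ whenever $s\nu^2 C_b^2 \leq 2n\mu(1 - c\,s\nu/n)$; solving for $s$ gives precisely a condition of the form $s \leq \frac{2\mu n}{C_b \nu(\nu C_b + 4\mu b)}$, matching the hypothesis. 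Hence $I_k \leq 1$ for every $k$, and summing $\sum_k \pi_k I_k \leq 1$ completes the MGF bound.

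The main obstacle is purely the constant bookkeeping in the Bernstein step: one must track how the range of $W_i$ (governed by $b$) and the Lipschitz constant $C_b$ combine in the denominator of the MGF inequality so that the admissible range of $s$ matches the stated $\frac{2\mu n}{C_b\nu(\nu C_b + 4\mu b)}$. Once this is handled, the argument is structurally simpler than Lemma~\ref{lem:z1}, because the linearity of $\sum_j \theta_j \ell_{e_j,\hat{g}}$ in $\theta$ reduces the problem to a one-shot Bernstein concentration for each base model $e_k$.
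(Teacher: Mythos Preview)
Your approach is correct and takes a genuinely different, somewhat simpler route than the paper. The paper first applies Jensen's inequality to peel $\theta^*$ out of the exponent, writing $\exp\{\sum_j \theta_j^* s\Phi_j\}\leq \sum_j \theta_j^*\exp\{s\Phi_j\}$; only afterwards does it take the maximum over $\hat{\theta}$ and reduce to vertices $e_k$. This produces a double sum $\sum_j\theta_j^*\sum_k\pi_k$ and a Bernstein application (Lemma~\ref{lem:bernstein}) to each \emph{pair} $V_i=\ell_{e_j}(Z_i)-\ell_{e_k}(Z_i)$ with offset $\|f_k-f_j\|^2$. You instead maximize over $\hat{\theta}$ first, keep the $\theta^*$-mixture inside a single variable $W_i=\ell_{e_k}(Z_i)-\sum_j\theta_j^*\ell_{e_j}(Z_i)$, and invoke Bernstein once per $k$, using Jensen only on the variance $\E[W_i^2]$. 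Both routes land on Lemma~\ref{lem:bernstein} with $c_0=\mu/(\nu C_b^2)$ and $\lambda=s\nu/n$; your version saves the Jensen-on-the-exponential step.

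One bookkeeping correction: bounding $|W_i|\leq 2b$ from $|\ell|\leq b$ gives $c=2b$ in Lemma~\ref{lem:bernstein}, which yields $s\leq \frac{2\mu n}{\nu(\nu C_b^2+4\mu b)}$, not the stated $\frac{2\mu n}{C_b\nu(\nu C_b+4\mu b)}=\frac{2\mu n}{\nu(\nu C_b^2+4\mu b\,C_b)}$. These differ by a factor of $C_b$ in the $4\mu b$ term, so your derived range does not in general contain the lemma's range (it fails when $C_b<1$). To recover the paper's constant exactly, bound $|W_i|$ via Lipschitzness instead: $|W_i|\leq\sum_j\theta_j^* C_b|f_k(X)-f_j(X)|\leq 2bC_b$, and take $c=2bC_b$ in Lemma~\ref{lem:bernstein}; this is precisely what the paper does for its pairwise $V_i$.
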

\begin{proof}
It suffices to show that:
\begin{align}
    I := \E\left[\exp\left\{s Z_{n,2})\right\}\right] \leq 1
\end{align}
By definition of $Z_{n,2}$:
\begin{align}
    I =~& \E\left[\exp\left\{s \left( \nu(P-P_n) \sum_{j} (\hat{\theta}_j-\theta_j^*)\ell_{e_j}(Z) - \mu \hat{\theta}^\top H \theta^*  -\frac{1}{s}\sum_j\hat{\theta}_j \log(1/\pi_j) \right)\right\}\right]\\
    =~& \E\left[\exp\left\{s \left( \nu(P-P_n) \sum_{j} \hat{\theta}_j \ell_{e_j}(Z) + \frac{1}{s}\sum_j\hat{\theta}_j \log(\pi_j)\right)\right\} \exp\left\{s\left( - \nu (P-P_n)\sum_{j} \theta_j^*\ell_{e_j}(Z) - \mu \hat{\theta}^\top H \theta^* \right)\right\}\right]
\end{align}
Note that:
\begin{align}
    - \nu (P-P_n)\sum_{j} \theta_j^*\ell_{e_j}(Z) - \mu \hat{\theta}^\top H \theta^* =~& \sum_{j} \theta_j^* \left( -\nu (P-P_n) \ell_{e_j}(Z) - \mu \hat{\theta}^\top H_{\cdot, j}\right) \\
    =~& \sum_{j} \theta_j^* \underbrace{\left( -\nu (P-P_n) \ell_{e_j}(Z) - \mu \sum_{k} \hat{\theta}_k \|f_k - f_j\|^2\right)}_{\Phi_j}
\end{align}
Thus:
\begin{align}
    I=~&  \E\left[\exp\left\{s \left( \nu(P-P_n) \sum_{j} \hat{\theta}_j \ell_{e_j}(Z)  +\frac{1}{s}\sum_j\hat{\theta}_j \log(\pi_j)\right)\right\} \exp\left\{\sum_{j}\theta_j^* s\Phi_j\right\}\right]
\end{align}
By Jensen's inequality:
\begin{align}
    \exp\left\{\sum_{j}\theta_j^* s\Phi_j\right\} \leq \sum_{j}\theta_j^* \exp\left\{s\Phi_j\right\}
\end{align}
Thus:
\begin{align}
    I\leq~& \sum_{j}\theta_j^*  \E\left[\exp\left\{s \left( \nu(P-P_n) \sum_{k} \hat{\theta}_k \ell_{e_k}(Z){+\frac{1}{s}\sum_k \hat{\theta}_k \log(\pi_k)}\right)\right\} \exp\left\{ s\Phi_j\right\}\right]\\
    =~& \sum_{j}\theta_j^* \E\left[\exp\left\{s \left( \nu(P-P_n) \sum_{k} \hat{\theta}_k \ell_{e_k}(Z) + \Phi_j {+\frac{1}{s}\sum_k\hat{\theta}_k \log(\pi_k)}\right)\right\}\right]\\
    =~& \sum_{j}\theta_j^*  \E\left[\exp\left\{s \left( \nu(P-P_n) \left(\sum_{k} \hat{\theta}_k \ell_{e_k}(Z) - \ell_{e_j}(Z)\right) - \mu \sum_{k} \hat{\theta}_k \|f_k - f_j\|^2{+\frac{1}{s}\sum_k\hat{\theta}_k \log(\pi_k)}\right)\right\}\right]\\
    =~& \sum_{j}\theta_j^*  \E\left[\exp\left\{s \left( \nu(P-P_n) \sum_{k} \hat{\theta}_k \left(\ell_{e_k}(Z) - \ell_{e_j}(Z)\right) - \mu \sum_{k} \hat{\theta}_k \|f_k - f_j\|^2 + \frac{1}{s}\sum_k\hat{\theta}_k \log(\pi_k)\right)\right\}\right]\\
    \leq~& \sum_{j}\theta_j^* \E\left[\exp\left\{s \max_{\theta} \left( \nu(P-P_n) \sum_{k} \theta_k \left(\ell_{e_k}(Z) - \ell_{e_j}(Z)\right) - \mu \sum_{k} \theta_k \|f_k - f_j\|^2 + \frac{1}{s}\sum_k\theta_k 
    \log(\pi_k)\right)\right\}\right]
\end{align}
Since the objective in the exponent is linear in $\theta$ it takes its value at one of the extreme points $e_1, \ldots, e_M$:
\begin{align}
    I \leq~& \sum_{j}\theta_j^*  \E\left[\exp\left\{s \max_{k} \left( \nu(P-P_n) \left(\ell_{e_k}(Z) - \ell_{e_j}(Z)\right) - \mu \|f_k - f_j\|^2+\frac{1}{s} \log(\pi_k)\right)\right\}\right]\\
    =~& \sum_{j}\theta_j^*  \E\left[\max_{k}{\pi_k}\exp\left\{s  \left( \nu(P-P_n) \left(\ell_{e_k}(Z) - \ell_{e_j}(Z)\right) - \mu \|f_k - f_j\|^2\right)\right\}\right]\\
    \leq~& \sum_{j}\theta_j^* \E\left[\sum_{k}{\pi_k} \exp\left\{s  \left( \nu(P-P_n) \left(\ell_{e_k}(Z) - \ell_{e_j}(Z)\right) - \mu \|f_k - f_j\|^2\right)\right\}\right]\\
    \leq~& \sum_{j}\theta_j^* \sum_{k} {\pi_k}\E\left[ \exp\left\{s  \left( \nu(P-P_n) \left(\ell_{e_k}(Z) - \ell_{e_j}(Z)\right) - \mu \|f_k - f_j\|^2\right)\right\}\right]
\end{align}
By Lipschitzness of $\ell$, we have that:
\begin{align}
    \|\ell_{e_k} - \ell_{e_j}\|^2 \leq C_b^2 \|f_k - f_j\|^2
\end{align}
Thus:
\begin{align}
    I \leq~& \sum_{j}\theta_j^* \sum_{k} {\pi_k}\E\left[ \exp\left\{s  \left( \nu(P-P_n) \left(\ell_{e_k}(Z) - \ell_{e_j}(Z)\right) - \frac{\mu}{C_b^2} \|\ell_{e_k} - \ell_{e_j}\|^2\right)\right\}\right] \\
    =~& \sum_{j}\theta_j^* \sum_{k} {\pi_k}\E\left[ \exp\left\{s  \left( \nu(P-P_n) \left(\ell_{e_k}(Z) - \ell_{e_j}(Z)\right) - \frac{\mu}{C_b^2} P(\ell_{e_k}(Z) - \ell_{e_j}(Z))^2\right)\right\}\right]\\
    =~& \sum_{j}\theta_j^* \sum_{k}{\pi_k} \E\left[ \exp\left\{s \nu \left((P-P_n) \left(\ell_{e_k}(Z) - \ell_{e_j}(Z)\right) - \frac{\mu}{\nu\, C_b^2} P(\ell_{e_k}(Z) - \ell_{e_j}(Z))^2\right)\right\}\right]
\end{align}
Invoking Lemma~\ref{lem:bernstein} with:
\begin{align}
    V_i =~& \ell_{e_j}(Z_i) - \ell_{e_k}(Z_i)\\
    c_0 =~& \frac{\mu}{\nu C_b^2}\\
    \lambda = \frac{s \nu}{n}\\
    c=~& 2b C_b
\end{align}
Hence we need:
\begin{align}
    \lambda := \frac{s\nu}{n} \leq \frac{2c_0}{1 + 2c_0c} =: \frac{2 \mu}{\nu C_b^2 (1+ 2 \frac{\mu}{\nu C_b^2} 2 b C_b)} = \frac{2 \mu}{(\nu C_b^2+ 4 \mu b C_b )} \implies s \leq \frac{2 \mu n}{\nu C_b (\nu C_b+ 4 \mu b)}
\end{align}
\end{proof}

\subsection{Putting it all together}

With the choice of $\beta$ and $\mu$ as described in the theorem statement, we can invoke the above Lemmas. Letting $C_b(\hat{g})$ the Lipschitz constant associated with an estimate $\hat{g}$, we have that w.p. $1-\delta$:
\begin{align}
    R(\hat{\theta}) \leq \min_{j}\left( R(e_j)  +\frac{\beta}{n}\log(1/\pi_j)\right) + O\left(  \frac{\log(1/\delta) \max\{\frac{1}{\mu} C_b(\hat{g})^2, b\, C_b(\hat{g}) \}}{n}\right) + \frac{2}{\mu} \error(\hat{g})^2
\end{align}

\subsection{Basic concentration proofs}

\begin{lemma}\label{lem:exp-tail}
For any random variable $Z$, if $\E\left[\exp\left\{ Z\right\} \right] \leq 1$, then $\Pr(Z \geq \log(1/\delta)) \leq \delta$.
\end{lemma}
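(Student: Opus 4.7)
The plan is to apply Markov's inequality directly to the non-negative random variable $\exp(Z)$. The hypothesis $\E[\exp(Z)] \leq 1$ is precisely the bound on the moment generating function that Markov's inequality needs, and the monotonicity of the exponential function lets us convert a tail event on $\exp(Z)$ into the tail event on $Z$ that we want to control.

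Concretely, I would first rewrite the event using monotonicity of $\exp$:
\[
\{Z \geq \log(1/\delta)\} = \{\exp(Z) \geq 1/\delta\}.
\]
Next, since $\exp(Z) \geq 0$, Markov's inequality applied at threshold $1/\delta$ gives
\[
\Pr\bigl(\exp(Z) \geq 1/\delta\bigr) \leq \delta \cdot \E[\exp(Z)].
\]
Combining this with the assumed bound $\E[\exp(Z)] \leq 1$ yields $\Pr(Z \geq \log(1/\delta)) \leq \delta$, which is the claim.

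There is essentially no technical obstacle; the lemma is the standard Chernoff-style packaging of Markov's inequality and the proof is a three-line calculation. Its role in the paper is purely as a reusable bridge, invoked by the concentration lemmas for $Z_{n,1}$ and $Z_{n,2}$ (Lemma~\ref{lem:z1} and Lemma~\ref{lem:z2}), where the hard work lies in establishing $\E[\exp(s Z_{n,i})] \leq 1$ via symmetrization, contraction, and Bernstein-type bounds; once that moment-generating-function estimate is in hand, this lemma immediately converts it into the desired high-probability tail bound $Z_{n,i} \leq \log(1/\delta)/s$.
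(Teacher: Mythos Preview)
Your proposal is correct and matches the paper's own proof essentially line for line: rewrite the tail event via monotonicity of $\exp$, apply Markov's inequality to $\exp(Z)$, and use the hypothesis $\E[\exp(Z)]\leq 1$. There is nothing to add.
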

\begin{proof}
Suppose that one shows that:
\begin{align}
    \E\left[\exp\left\{ Z\right\} \right] \leq 1
\end{align}
Then by Markov's inequality:
\begin{align}
    \Pr(Z \geq x) = \Pr(\exp(Z) \geq \exp(x)) \leq \frac{\E[\exp(Z)]}{\exp(x)}\leq \frac{1}{\exp(x)} = \exp(-x)
\end{align}
Thus:
\begin{align}
    \Pr(Z\geq \log(1/\delta)) \leq \delta
\end{align}
\end{proof}

\begin{lemma}\label{lem:hoeffding}
Suppose $\epsilon_1, \ldots, \epsilon_n$ are independent Rademacher random variables drawn equiprobably in $\{-1, 1\}$ and that $a_1,\ldots, a_n$ are real numbers. Then:
\begin{align}
    \E_{\epsilon_{1:n}}\left[\exp\left\{\frac{1}{n} \sum_i \epsilon_i a_i \right\}\right] \leq \exp\left\{\frac{1}{2n^2} \sum_i a_i^2 \right\}
\end{align}
\end{lemma}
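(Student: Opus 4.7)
The plan is to prove this standard Hoeffding-type moment generating function bound by exploiting independence and a single-coordinate sub-Gaussian estimate for a Rademacher variable. First, using the fact that $\epsilon_1, \ldots, \epsilon_n$ are independent, I would factor the joint expectation as a product of one-dimensional expectations:
\begin{align}
    \E_{\epsilon_{1:n}}\left[\exp\left\{\tfrac{1}{n}\sum_{i=1}^n \epsilon_i a_i\right\}\right] = \prod_{i=1}^n \E_{\epsilon_i}\left[\exp\left\{\tfrac{a_i}{n}\epsilon_i\right\}\right].
\end{align}

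Next, for each factor I would compute explicitly using that $\epsilon_i$ is equiprobable on $\{-1,+1\}$, obtaining $\E_{\epsilon_i}[\exp\{(a_i/n)\epsilon_i\}] = \cosh(a_i/n)$. The key elementary inequality is the classical $\cosh(t) \leq \exp(t^2/2)$ for all $t \in \mathbb{R}$, which can be verified by comparing the Taylor series: $\cosh(t) = \sum_{k\geq 0} t^{2k}/(2k)!$, while $\exp(t^2/2) = \sum_{k\geq 0} t^{2k}/(2^k k!)$, and term-by-term one has $(2k)! \geq 2^k k!$.

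Applying this bound with $t = a_i/n$ to every factor and then combining via the product-of-exponentials rule yields
\begin{align}
    \prod_{i=1}^n \cosh(a_i/n) \leq \prod_{i=1}^n \exp\left\{\frac{a_i^2}{2n^2}\right\} = \exp\left\{\frac{1}{2n^2}\sum_{i=1}^n a_i^2\right\},
\end{align}
which is exactly the desired inequality. There is no real obstacle here; the only step that merits care is the sub-Gaussian estimate $\cosh(t) \leq e^{t^2/2}$, which is textbook but worth stating explicitly since it is the only nontrivial ingredient. The rest is just independence plus arithmetic.
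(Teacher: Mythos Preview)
Your proof is correct and follows essentially the same approach as the paper: factor by independence and apply the single-variable sub-Gaussian bound $\E[\exp(t\epsilon)] \leq e^{t^2/2}$. Your version is in fact more self-contained, since you explicitly compute the expectation as $\cosh(t)$ and justify the inequality via Taylor series, whereas the paper simply cites an external reference for this step.
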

\begin{proof}
First note that for by the derivation in Example 2.2 of \url{https://www.stat.berkeley.edu/~mjwain/stat210b/Chap2_TailBounds_Jan22_2015.pdf}, we have that:
\begin{align}
    \E[\exp(\epsilon_i a_i/n)] \leq \exp(a_i^2/2n^2)
\end{align}
Invoking now independence of the $\epsilon_i$ we have:
\begin{align}
    \E_{\epsilon_{1:n}}\left[\exp\left\{\frac{1}{n}\sum_i \epsilon_i a_i \right\}\right] =~& \E_{\epsilon_{1:n}}\left[\prod_{i=1}^n \exp\left\{\epsilon_i a_i/n \right\}\right]\\
    =~& \prod_{i=1}^n \E_{\epsilon_{1:n}}\left[ \exp\left\{\epsilon_i a_i/n \right\}\right] \tag{independence}\\
    \leq~& \prod_{i=1}^n \exp\left(\frac{a_i^2}{2n^2}\right) = \exp\left\{\frac{1}{2n^2} \sum_i a_i^2 \right\}
\end{align}
\end{proof}

The following proposition can also be proven as an application of Bernstein's inequality.
\begin{lemma}[Proposition 1, \cite{Rigollet2014}]\label{lem:bernstein}
Let $V_1,\ldots, V_n$ be i.i.d. random variables, such that $|V_i|\leq c$ a.s.. Let $c_0>0$, then for any $0< \lambda < \frac{2c_0}{1+ 2c_0 c}$:
\begin{align}
    \E\left[\exp\left\{n\lambda \left( \frac{1}{n}\sum_i (V_i - \E[V_i] - c_0 \E[V_i^2]\right)\right\}\right] \leq 1  
\end{align}
\end{lemma}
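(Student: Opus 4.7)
The plan is to use independence to factorize the moment generating function and reduce to a single-variable bound. Setting $X_i = V_i - \E V_i - c_0\,\E V_i^2$, the iid assumption gives $\E[\exp(\lambda\sum_i X_i)] = (\E[\exp(\lambda X_1)])^n$, so it suffices to show $\E[\exp(\lambda X_1)] \leq 1$; pulling out the deterministic shift $\exp(-\lambda c_0 \E V^2)$, this is equivalent to $\E[\exp(\lambda(V - \E V))] \leq \exp(\lambda c_0 \E V^2)$.

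Next, setting $Y = V - \E V$ (so $\E Y = 0$, $|Y|\leq 2c$, and $\E Y^2 \leq \E V^2$), I would bound the MGF of $Y$ via a Bernstein-type Taylor estimate. The key pointwise inequality is
\begin{align}
e^u \leq 1 + u + \frac{u^2/2}{1-|u|/3}, \qquad |u| < 3,
\end{align}
which for $u \geq 0$ follows from $e^u - 1 - u = \sum_{k\geq 2} u^k/k!$ together with the comparison $(j+2)! \geq 2\cdot 3^j$, and for $u \leq 0$ from the cruder $e^u - 1 - u \leq u^2/2$ (via the integral Taylor remainder and $e^t\leq 1$ on $[u,0]$). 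Applying it pointwise with $u=\lambda Y$ (legal once $2\lambda c < 3$), taking expectations, using $\E Y = 0$ to kill the linear term, upper bounding $\E Y^2 \leq \E V^2$, and invoking $1+x\leq e^x$ yields
\begin{align}
\E[e^{\lambda Y}] \leq \exp\!\left(\frac{\lambda^2\,\E V^2/2}{1 - 2\lambda c/3}\right).
\end{align}

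The remaining step is purely algebraic: the exponent should be at most $\lambda c_0 \E V^2$, which after dividing by $\E V^2$ (the case $\E V^2 = 0$ is trivial since then $Y\equiv 0$) reduces to $\frac{\lambda/2}{1-2\lambda c/3}\leq c_0$, equivalently $\lambda(1 + \tfrac{4}{3} c c_0)\leq 2c_0$; this is strictly implied by the hypothesis $\lambda < \frac{2c_0}{1+2cc_0}$, which also forces $2\lambda c < 2 < 3$ so that the Taylor bound is in its valid range. The main obstacle, and the only nontrivial choice in the proof, is selecting a Taylor-type MGF bound with the right shape to match the stated threshold: a crude $e^u \leq 1 + u + u^2$ estimate would only produce $\lambda \lesssim c_0$ and lose the $cc_0$ correction in the denominator, whereas the $u^2/(1-O(|u|))$ cubic remainder of Bernstein's inequality supplies exactly the geometric-series tail needed to recover the claimed regime for $\lambda$.
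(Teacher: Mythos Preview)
Your proof is correct. The paper does not actually supply a proof of this lemma: it is stated as Proposition~1 of \cite{Rigollet2014} and then invoked as a black box in the concentration bounds for $Z_{n,1}$, $Z_{n,2}$, $Z_{n,3}$, and $Z_{n,5}$. So there is nothing in the paper to compare against.

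Your argument is a clean, self-contained Bernstein-MGF derivation: independence reduces to a single term, the pointwise bound $e^u\leq 1+u+\tfrac{u^2/2}{1-|u|/3}$ together with $|Y|\leq 2c$ lets you replace the random denominator $1-|\lambda Y|/3$ by the deterministic $1-2\lambda c/3$, and the remaining algebra is routine. One minor remark: your route actually establishes the result for the larger range $\lambda\leq \tfrac{2c_0}{1+\tfrac{4}{3}cc_0}$, which strictly contains the stated threshold $\tfrac{2c_0}{1+2cc_0}$; the exact constant in the statement presumably comes from the cruder moment bound $|\E V^k|\leq c^{k-2}\E V^2$ with $k!\geq 2$ (which yields denominator $1-\lambda c$ rather than $1-2\lambda c/3$), but your sharper factorial estimate $k!\geq 2\cdot 3^{k-2}$ compensates for the factor of $2$ lost in passing from $|V|\leq c$ to $|Y|\leq 2c$, and the stated hypothesis is more than sufficient either way.
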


\subsection{Auxiliary Lemmas}
\begin{lemma}[Symmetrization Inequality \citep{Rigollet2014}]\label{sym}
    let $\mathcal{F}$ be a function class, $A_f, f \in \mathcal{F}$ be a given function on $\mathcal{F}$ and $\Phi$ be a convex non-decreasing function then
    $$
    \mathbb{E} \Phi\left(\sup _{f \in \mathcal{F}}\left[\mathbb{E}[f(Z)]-\frac{1}{n}\sum_{i=1}^{n} f(Z_i)-A_f\right]\right) \leq \mathbb{E} \Phi\left(2 \sup _{f \in \mathcal{F}}\left[\frac{1}{n}\sum_{i=1}^{n} \varepsilon_i f(Z_i)-A_f\right]\right)
    $$
\end{lemma}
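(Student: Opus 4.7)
The plan is to carry out the classical Giné--Zinn ghost-sample argument, extended to accommodate the offset $A_f$ and the envelope $\Phi$. First I would introduce an independent copy $Z_1',\dots,Z_n'$ of the sample, drawn i.i.d.\ from the same law as $Z$, so that $\mathbb{E}[f(Z)] = \mathbb{E}_{Z'}[\tfrac{1}{n}\sum_i f(Z_i')]$. Pulling the supremum outside the inner expectation via $\sup_f \mathbb{E}[\,\cdot\,] \le \mathbb{E}[\sup_f \,\cdot\,]$, then applying monotonicity of $\Phi$ and Jensen's inequality (using convexity of $\Phi$) to bring $\Phi$ inside $\mathbb{E}_{Z'}$, yields
\begin{align*}
\mathbb{E}\,\Phi\!\left(\sup_{f}\Big[\mathbb{E} f(Z) - \tfrac{1}{n}\textstyle\sum_i f(Z_i) - A_f\Big]\right) \le \mathbb{E}_{Z,Z'}\,\Phi\!\left(\sup_{f}\Big[\tfrac{1}{n}\textstyle\sum_i(f(Z_i')-f(Z_i)) - A_f\Big]\right).
\end{align*}

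Next I would introduce i.i.d.\ Rademacher signs $\varepsilon_i$ independent of $(Z_i,Z_i')$. Since the pair $(Z_i,Z_i')$ is exchangeable, swapping them indexwise leaves the joint law invariant, so the distribution of $\sum_i (f(Z_i')-f(Z_i))$ is the same as that of $\sum_i \varepsilon_i(f(Z_i')-f(Z_i))$. Hence the previous display equals
\begin{align*}
\mathbb{E}_{\varepsilon,Z,Z'}\,\Phi\!\left(\sup_{f}\Big[\tfrac{1}{n}\textstyle\sum_i \varepsilon_i(f(Z_i')-f(Z_i)) - A_f\Big]\right).
\end{align*}

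The main algebraic step is a convex-combination decomposition of the argument of $\sup_f$. Writing
\begin{align*}
\tfrac{1}{n}\textstyle\sum_i \varepsilon_i(f(Z_i')-f(Z_i)) - A_f = \tfrac{1}{2}\big[\tfrac{2}{n}\sum_i \varepsilon_i f(Z_i') - A_f\big] + \tfrac{1}{2}\big[-\tfrac{2}{n}\sum_i \varepsilon_i f(Z_i) - A_f\big],
\end{align*}
then using $\sup_f(a+b) \le \sup_f a + \sup_f b$ inside the monotone $\Phi$ and convexity $\Phi(\tfrac{x+y}{2}) \le \tfrac12(\Phi(x)+\Phi(y))$, the expression is bounded by an average of two $\Phi$-terms, one depending on $Z'$ and $\varepsilon$ and the other on $Z$ and $\varepsilon$. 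By the identical distribution of $Z$ and $Z'$ and the sign symmetry of the Rademacher variables, both terms have the same expectation, equal to the quantity $\mathbb{E}\,\Phi(2\sup_f[\tfrac{1}{n}\sum_i \varepsilon_i f(Z_i) - A_f])$ appearing on the right-hand side of the claim.

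The main delicate point is the bookkeeping of the offset $A_f$ through the splitting step: the decomposition must be arranged so that each of the two halves carries a copy of $-A_f$ that, after pulling out the factor of $2$ produced by the convex-combination inequality, reproduces the offset exactly as it appears in the statement. Beyond this, all steps are standard (Jensen, exchangeability, Rademacher symmetrization), and no concentration or metric-entropy machinery is needed.
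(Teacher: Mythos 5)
The paper states this lemma without proof, importing it from \cite{Rigollet2014}, so there is no in-paper argument to compare against; your Gin\'e--Zinn route (ghost sample, Jensen applied to $\Phi$, exchangeability to insert the Rademacher signs, convex splitting) is the standard and correct way to prove results of this type and is the argument underlying the cited source.

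There is, however, a genuine slip in the final bookkeeping, precisely at the point you flag as delicate. Your decomposition produces the two terms $\sup_f\bigl[\tfrac{2}{n}\sum_i\varepsilon_i f(Z_i') - A_f\bigr]$ and $\sup_f\bigl[-\tfrac{2}{n}\sum_i\varepsilon_i f(Z_i) - A_f\bigr]$, whose common expectation under $\Phi$ is $\mathbb{E}\,\Phi\bigl(\sup_f\bigl[\tfrac{2}{n}\sum_i\varepsilon_i f(Z_i) - A_f\bigr]\bigr) = \mathbb{E}\,\Phi\bigl(2\sup_f\bigl[\tfrac{1}{n}\sum_i\varepsilon_i f(Z_i) - A_f/2\bigr]\bigr)$. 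This is not the displayed right-hand side $\mathbb{E}\,\Phi\bigl(2\sup_f\bigl[\tfrac{1}{n}\sum_i\varepsilon_i f(Z_i) - A_f\bigr]\bigr)$: pulling the factor $2$ outside the supremum doubles the offset as well, and for $A_f\ge 0$ the displayed right-hand side is the strictly smaller quantity, so your chain of inequalities does not reach it. In fact the statement as literally written is false: take $\mathcal{F}=\{0\}$ with $A_0=c>0$ and $\Phi=\exp$; the left side is $e^{-c}$ while the right side is $e^{-2c}<e^{-c}$. What your argument does establish, namely the bound with $\sup_f\bigl[\tfrac{2}{n}\sum_i\varepsilon_i f(Z_i)-A_f\bigr]$ inside $\Phi$, is exactly the form in which the lemma is invoked in the proofs of Lemmas~\ref{lem:z1} and~\ref{lem:z2} (the factor $2$ multiplies only the Rademacher average, not the penalty), so you have proved the version the paper actually needs; the fix is to the statement, which should carry $A_f/2$ inside the supremum or place the $2$ on the empirical Rademacher sum only, rather than to your argument.
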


\begin{lemma}[Contraction Inequality, Theorem 4.12 in \citep{ledoux1991probability}]\label{contr}
    Let $F: \mathbb{R}_{+} \rightarrow \mathbb{R}_{+}$be convex and increasing. Let further $\varphi_i: \mathbb{R} \rightarrow \mathbb{R}, i \leq N$, be such that $|\varphi_i(s) - \varphi_i(t)|\leq |s-t|$ and $\varphi_i(0)=0$. Then, for any bounded subset $T$ in $\mathbb{R}^N$
    $$
    \mathbb{E} F\left(\frac{1}{2}\left\|\sum_{i=1}^N \varepsilon_i \varphi_i\left(t_i\right)\right\|_T\right) \leq \mathbb{E} F\left(\left\|\sum_{i=1}^N \varepsilon_i t_i\right\|_T\right)
    $$
\end{lemma}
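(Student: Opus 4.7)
The plan is to establish the contraction inequality in three stages: first, reduce the stated inequality (with absolute values) to a one-sided inequality without absolute values, where the factor $\tfrac{1}{2}$ will emerge; second, induct on $N$ by peeling off one contraction $\varphi_i$ at a time via conditioning on the other Rademacher variables; third, prove a single-coordinate ``de-contraction'' lemma that drives each inductive step.

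For the reduction, set $A = \sup_{t \in T} \sum_i \varepsilon_i \varphi_i(t_i)$ and $B = \sup_{t\in T}(-\sum_i \varepsilon_i \varphi_i(t_i))$. Pointwise, $A+B$ equals the range of $\sum_i \varepsilon_i \varphi_i(t_i)$ over $t\in T$, hence $A+B \geq \|\sum_i \varepsilon_i\varphi_i(t_i)\|_T$. Convexity of $F$ at the midpoint together with monotonicity yields
$$F\Big(\tfrac{1}{2} \|\textstyle\sum_i \varepsilon_i \varphi_i(t_i)\|_T\Big) \leq F\Big(\tfrac{1}{2}(A+B)\Big) \leq \tfrac{1}{2} F(A) + \tfrac{1}{2} F(B).$$
Because $-\varphi_i$ is itself a contraction vanishing at $0$ and the Rademacher sequence is symmetric, $\mathbb{E} F(A) = \mathbb{E} F(B)$ after relabeling $\varphi_i \mapsto -\varphi_i$, so it suffices to prove the one-sided inequality
$$\mathbb{E} F\Big(\sup_{t\in T} \textstyle\sum_i \varepsilon_i \varphi_i(t_i)\Big) \leq \mathbb{E} F\Big(\sup_{t\in T} \textstyle\sum_i \varepsilon_i t_i\Big) \leq \mathbb{E} F\big(\|\textstyle\sum_i \varepsilon_i t_i\|_T\big).$$

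For the induction, condition on $\varepsilon_1, \ldots, \varepsilon_{N-1}$ and invoke repeatedly the following single-coordinate lemma: for any convex increasing $F\colon \mathbb{R}_+\to\mathbb{R}_+$, any bounded $S\subset\mathbb{R}^2$, and any contraction $\varphi$ with $\varphi(0)=0$,
$$\mathbb{E}_\varepsilon F\Big(\sup_{(a,b)\in S}\big(a+\varepsilon\varphi(b)\big)\Big) \leq \mathbb{E}_\varepsilon F\Big(\sup_{(a,b)\in S}\big(a+\varepsilon b\big)\Big).$$
Applied with $S = \{(\sum_{i<N}\varepsilon_i \varphi_i(u_i),\, u_N)\colon u\in T\}$ and the other $\varepsilon_i$'s fixed, this replaces $\varphi_N(t_N)$ by $t_N$; iterating from $i=N$ down to $i=1$ completes the one-sided inequality.

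The main obstacle is proving the single-coordinate lemma. Averaging over $\varepsilon\in\{\pm 1\}$, the inequality reduces to showing that for near-maximizers $(a^+, b^+)$ and $(a^-, b^-) \in S$ of the left-hand suprema, one can choose near-maximizers on the right so that
$$F(a^+ + \varphi(b^+)) + F(a^- - \varphi(b^-)) \leq F(a^+ + b^+) + F(a^- - b^-),$$
possibly after a sign-flip swap of the right-hand pair. A case analysis on whether $b^+, b^-$ share a sign resolves this: when they have opposite signs, one uses $|\varphi(b)|\leq |b|$ (a consequence of $\varphi(0)=0$ and $1$-Lipschitzness) together with monotonicity of $F$; when they agree in sign, one writes $F(a^\pm \pm\varphi(b^\pm)) = F\big(\lambda(a^\pm + b^\pm) + (1-\lambda)(a^\pm - b^\pm)\big)$ for a suitable $\lambda\in[0,1]$ dictated by the contraction property and applies convexity of $F$. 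Combining the two pointwise bounds and taking expectations yields the lemma; stitching this together via stages one and two produces the claimed contraction inequality.
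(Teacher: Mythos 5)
The paper does not prove this lemma --- it is imported verbatim from Ledoux--Talagrand (Theorem 4.12), so there is no internal proof to compare against. Your three-stage plan (reduce to a one-sided comparison, induct coordinate-by-coordinate by conditioning, and prove a single-coordinate de-contraction lemma by a sign case analysis) is exactly the architecture of the standard proof, and stages two and three are sound in outline.

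However, your reduction step contains a genuine error. You claim $A+B \ge \bigl\|\sum_i \varepsilon_i\varphi_i(t_i)\bigr\|_T$ because $A+B$ is the range of the process over $T$. The norm here is $\sup_{t\in T}\bigl|\sum_i\varepsilon_i\varphi_i(t_i)\bigr| = \max(A,B)$, and $\max(A,B)\le A+B$ only when $\min(A,B)\ge 0$, i.e.\ when the process changes sign on $T$. If $T$ is a singleton $\{t^0\}$ with $\sum_i\varepsilon_i\varphi_i(t^0_i)=5$, then $A+B=0$ while the norm is $5$, and your first display would assert $F(5/2)\le F(0)$. The standard repair is to use $\max(A,B)\le A^+ + B^+$ (which always holds) and to extend $F$ to all of $\mathbb{R}$ via $G(x):=F(x^+)$, which is still convex and nondecreasing; then $F\bigl(\tfrac12\|h\|_T\bigr)\le \tfrac12 G(A)+\tfrac12 G(B)$, and the one-sided comparison must be proved for $G$ on $\mathbb{R}$ rather than $F$ on $\mathbb{R}_+$ --- this extension is also needed because $\sup_{t\in T}\sum_i\varepsilon_i t_i$ can be negative, so your one-sided chain is not even well-typed for $F:\mathbb{R}_+\to\mathbb{R}_+$ as written. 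Two smaller points: the claim $\mathbb{E}F(A)=\mathbb{E}F(B)$ after replacing $\varphi_i$ by $-\varphi_i$ is neither true in general nor needed (each expectation is separately bounded by the right-hand side, since $-\varphi_i$ is again a contraction vanishing at $0$); and in the ``same sign'' case of the single-coordinate lemma, writing each term as a convex combination with its own weight $\lambda$ and summing two Jensen bounds does not land on the two target values $F(a^++b^+)+F(a^--b^-)$ --- the actual argument (as in Ledoux--Talagrand) orders $b^+,b^-$ and combines the Lipschitz bound $\varphi(b^+)-\varphi(b^-)\le|b^+-b^-|$ with a single convexity step, so that part of your sketch would need to be fleshed out more carefully.
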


\section{Proof of Corollaries for Square Losses} \label{sec:sq_loss}
\begin{proof}[Proof of Corollary \ref{cor:main}]
Note that if we assume that $\hat{Y}(\hat{g})$ is uniformly bounded for any $\hat{g}\in G$ with $d(\hat{g},g_0)\leq \epsilon$ for some constant $\epsilon$ and since for $n$ larger than the designated constant, we have that w.p. $1-\delta$, $\hat{Y}(\hat{g})$ is absolutely and uniformly bounded by $U$. Moreover, $f_j(X)$ is absolutely and uniformly bounded by $U$. Note that by an exact first order Taylor expansion:
\begin{align}
    |\ell(Z; f(X), g) - \ell(Z; f_*(X), g)|=~&\left|\int_{0}^1 2 (\hat{Y}(g) - (1-\tau)f(X) - \tau f_*(X)) (f(X) - f_*(X))d\tau\right|\\
    ~&\leq 2\left(\sup_{f\in F} |f(X)| + |\hat{Y}(g)|\right) |f(X) - f_*(X)|\\
    ~&\leq 2\,U\,|f(X) - f_*(X)| 
\end{align}
Thus the loss function is $C_b=2U$ lipschitz and absolutely bounded by $4U^2$. Moreover, we have that the loss function $\ell(Z; f(X), g_0)$ is $2$-strongly convex in $f$ in expectation. Thus, we can take $\sigma=2$ in Theorem~\ref{thm:main}. Thus we can apply the main theorem with $\nu=1/2$, $\mu=1/14$ and $\beta=\max\{112\,U^2, 56 U^3\}$.

Finally, we analyze the quantity ${\cal E}(\hat{g})$. We first note that for any $\theta, g$, we have:
\begin{align}
    \ell_{\theta,g}(Z) - \ell_{\theta^*, g}(Z) ~&= (\hat{Y}(g) - f_{\theta}(X))^2 - (\hat{Y}(g) - f_{\theta^*}(X))^2\\
    =~& f_{\theta}(X)^2 - f_{\theta^*}(X)^2 + 2 \hat{Y}(g)\, (f_{\theta^*}(X) - f_{\theta}(X))
\end{align}
Moreover, for any $\theta \in \Theta$:
\begin{align}
    \sum_{j} \theta_j \ell_{e_j,g}(Z) =~& \sum_{j} \theta_j (\hat{Y}(g) - f_{j}(X))^2 \\
    =~& \hat{Y}(g)^2 \sum_{j} \theta_j + \sum_{j}\theta_j f_{j}(X)^2 - 2\sum_j \theta_j \hat{Y}(g) f_j(X)\\
    =~& \hat{Y}(g)^2 + \sum_{j}\theta_j f_{j}(X)^2 - 2\sum_j \theta_j \hat{Y}(g) f_j(X) \tag{$\sum_{j} \theta_j=1$}\\
    =~& \hat{Y}(g)^2 + \sum_{j}\theta_j f_{j}(X)^2 - 2 \hat{Y}(g) f_\theta \tag{$f_\theta = \sum_{j} \theta_j f_j$}
\end{align}
Thus we have that:
\begin{align}
    \sum_{j} \theta_j \ell_{e_j,g}(Z) - \sum_{j} \theta_j^* \ell_{e_j,g}(Z) = \sum_{j}(\theta_j-\theta_j^*) f_{j}(X)^2 + 2 \hat{Y}(g) (f_{\theta^*}(X) - f_\theta(X))
\end{align}
Then by the definition of $\tilde{\ell}_{\theta, g}$, we have:
\begin{align}
    \ell_{\theta,g}(Z) - \ell_{\theta^*, g}(Z) = (1- \nu) (f_{\theta}(X)^2 - f_{\theta^*}(X)^2) + \nu \sum_{j}(\theta_j-\theta_j^*) f_{j}(X)^2 + 2\hat{Y}(g) (f_{\theta^*}(X) - f_\theta(X))
\end{align}
Only the last part depends on the parameter $g$. Thus we get:
\begin{align}
    {\cal E}(\hat{g}) =~& 2 \E[(\hat{Y}(g_0) - \hat{Y}(\hat{g}))(f_{\theta^*}(X) - f_{\hat{\theta}}(X))]
    =~ 2 \E\left[\E\left[\hat{Y}(g_0) - \hat{Y}(\hat{g})\mid X\right]\, (f_{\theta^*}(X) - f_{\hat{\theta}}(X))\right]
\end{align}
Applying a Cauchy-Schwarz inequality, we have:
\begin{align}
    {\cal E}(\hat{g}) =~&  2 \sqrt{\E\left[\E\left[\hat{Y}(g_0) - \hat{Y}(\hat{g})\mid X\right]^2\right]}\, \|f_{\theta^*}-f_{\hat{\theta}}\|_{L^2}
\end{align}
and we can take:
\begin{align}
    \error(\hat{g}) = 2\sqrt{\E\left[\E\left[\hat{Y}(g_0) - \hat{Y}(\hat{g})\mid X\right]^2\right]}
\end{align}
in Theorem~\ref{thm:main}. Finally, note that since $\E[\hat{Y}(g_0)\mid X] = \tau_0(X)$, the excess risk evaluated at the true parameters satisfies that:
\begin{align}
    R(\theta, g_0) - R(\theta', g_0) =~& \E\left[f_{\theta}(X)^2 - f_{\theta'}(X)^2 + 2\hat{Y}(g_0)(f_{\theta}(X) -f_{\theta'}(X))\right]\\
    =~& \E\left[f_{\theta}(X)^2 - f_{\theta'}(X)^2 + 2\tau_0(X)\, (f_{\theta}(X) -f_{\theta'}(X))\right]\\
    =~& \E\left[(f_{\theta}(X) - \tau_0(X))^2 - (f_{\theta'}(X) - \tau_0(X))^2\right]\\
    =~& \|f_\theta -\tau_0\|^2 - \|f_{\theta'} - \tau_0\|^2
\end{align}
Thus we have that:
\begin{align}
R(\hat{\theta}, g_0) - \min_{j} R(e_j, g_0) = \|f_{\hat{\theta}} - \tau_0\|^2 - \min_{j} \|f_j - \tau_0\|^2
\end{align}
\end{proof}

\begin{proof}[Proof of Corollary \ref{cor:main2}]
    The proof is identical to Corollary~\ref{cor:main}, but we simply replace the application of the Cauchy-Schwarz inequality with the inequality:
    \begin{align}
        2 \E\left[\E\left[\hat{Y}(g_0) - \hat{Y}(\hat{g})\mid X\right]\, (f_{\theta^*}(X) - f_{\hat{\theta}}(X))\right] \leq~& 2 \E\left[\left|\E\left[\hat{Y}(g_0) - \hat{Y}(\hat{g})\mid X\right]\right|\right]\, \|f_{\theta_*} - f_{\hat{\theta}}\|_{L^{\infty}}\\
        \leq~& 2\, C\, \E\left[\left|\E\left[\hat{Y}(g_0) - \hat{Y}(\hat{g})\mid X\right]\right|\right]\, \|f_{\theta_*} - f_{\hat{\theta}}\|_{L^{2}}^\gamma
    \end{align}
\end{proof}

\subsection{Relaxed Nuisance Guarantees Under Eigenvalue Condition}\label{app:eigenvalue}

\begin{lemma}\label{lem:RE-condition}
    Let $F$ be the random column vector defined as $F := (f_{1}(X), \cdots, f_{M}(X))$. Suppose that $\E[FF']\succeq c I$ and $\left\|\|F\|_{\infty}\right\|_{L^{\infty}} \leq U$. Then we can take $\gamma=1$ and $C = 2\, U \sqrt{\frac{M}{c}}$ in Corollary~\ref{cor:main2}.
\end{lemma}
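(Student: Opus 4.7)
The proof is a direct two-sided bound on $\|\theta-\theta'\|_2$, sandwiched between the $L^\infty$ norm of $f_\theta - f_{\theta'}$ (from above) and its $L^2$ norm (from below). The plan is to verify the hypothesis of Corollary~\ref{cor:main2} with $\gamma=1$ and the stated constant $C$, by exploiting the linearity of $\theta \mapsto f_\theta$.

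First I would write $f_\theta(X) - f_{\theta'}(X) = (\theta-\theta')^\top F(X)$ where $F(X) = (f_1(X),\ldots,f_M(X))^\top$. Applying H\"older's inequality pointwise and then using $\|F\|_\infty \le U$ almost surely gives
\begin{align}
\|f_\theta - f_{\theta'}\|_{L^\infty} \le \|\theta-\theta'\|_1\, \bigl\|\|F\|_\infty\bigr\|_{L^\infty} \le U\, \|\theta-\theta'\|_1.
\end{align}
A standard norm comparison $\|\theta-\theta'\|_1 \le \sqrt{M}\,\|\theta-\theta'\|_2$ then yields $\|f_\theta - f_{\theta'}\|_{L^\infty} \le U\sqrt{M}\,\|\theta-\theta'\|_2$.

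Next I would lower bound the $L^2$ norm using the minimum eigenvalue condition:
\begin{align}
\|f_\theta - f_{\theta'}\|_{L^2}^2 = (\theta-\theta')^\top \E[FF^\top] (\theta-\theta') \ge c\,\|\theta-\theta'\|_2^2,
\end{align}
so that $\|\theta-\theta'\|_2 \le \tfrac{1}{\sqrt{c}}\,\|f_\theta - f_{\theta'}\|_{L^2}$. Combining the two displays gives
\begin{align}
\|f_\theta - f_{\theta'}\|_{L^\infty} \le U\sqrt{M/c}\, \|f_\theta - f_{\theta'}\|_{L^2},
\end{align}
which is Corollary~\ref{cor:main2}'s hypothesis with $\gamma=1$ and constant $C = U\sqrt{M/c}$ (absorbable into the stated $2U\sqrt{M/c}$). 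There is no serious obstacle here; the only place the non-collinearity assumption is essential is in step two, where without $\E[FF^\top] \succeq cI$ the map $\theta \mapsto f_\theta$ could have a nontrivial kernel and no $L^2$-to-$\ell^2$ lower bound on $\|\theta-\theta'\|_2$ would be available.
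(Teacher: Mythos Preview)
Your proof is correct and follows exactly the same approach as the paper: write $f_\theta - f_{\theta'} = (\theta-\theta')^\top F$, bound the $L^\infty$ norm by $U\sqrt{M}\|\theta-\theta'\|_2$ via H\"older and the $\ell_1$--$\ell_2$ comparison, then use the eigenvalue condition $\E[FF^\top]\succeq cI$ to lower-bound the $L^2$ norm by $\sqrt{c}\,\|\theta-\theta'\|_2$. The paper's argument is essentially line-for-line identical, and it too arrives at $U\sqrt{M/c}$ before stating the (slightly looser) constant $2U\sqrt{M/c}$.
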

Here, the number of models $M$ and not the logarithm of them appears. However, assuming that the nuisance error term decays faster than $n^{-1/2}$, this linear in $M$ term will only appear in the lower-order terms.

\begin{proof}
Let $F$ be the random column vector defined as $F := (f_{1}(X), \cdots, f_{M}(X))$ and assume that $\left\|\|F\|_{\infty}\right\|_{L^{\infty}} \leq U$. Noting that:
\begin{align}
    f_{\theta*}(X)-f_{\hat{\theta}}(X) = F'(\theta^* - \hat{\theta}).
\end{align}
we have:
\begin{align}
    \|f_{\theta*}(X)-f_{\hat{\theta}}(X)\|_{L^\infty}~&\leq \left\|\|F\|_{\infty}\right\|_{L^{\infty}} \|\theta^* - \hat{\theta}\|_1 \\
    ~&\leq U\, \|\theta^* - \hat{\theta}\|_1 \leq U \sqrt{M} \|\theta^* - \hat{\theta}\|_2
\end{align}
Then we if we further assume that $\E[FF']\succeq c I$, i.e. the CATE models all contain independent signal components and are not colinear. Then we can write:
\begin{align}
    \|f_{\theta*}(X)-~&f_{\hat{\theta}}(X)\|_{L^\infty} \leq U \sqrt{M} \sqrt{\|\theta^* - \hat{\theta}\|_2^2}\\
    \leq~& U \sqrt{M} \sqrt{\frac{1}{c} (\theta^* - \hat{\theta})\E[FF'](\theta^* - \hat{\theta})}\\
    =~& U \sqrt{M} \sqrt{\frac{1}{c} \E[(F'(\theta^* - \hat{\theta}))^2]}\\
    =~& U \sqrt{M} \frac{1}{\sqrt{c}} \|f_{\theta^*} - f_{\hat{\theta}}\|_{L^2}
\end{align}
Thus under these further assumptions, we can set:
\begin{align}
    C = 2\, U \sqrt{\frac{M}{c}}
\end{align}
in Corollary~\ref{cor:main2}. 
\end{proof}
\section{Proof of Main Theorem without Sample Splitting}

We will solely change the statement and proof of Lemma~\ref{lem:main} and subsequently Lemma~\ref{lem:z1} and Lemma~\ref{lem:z2} will be invoked at $g_0$ and not $\hat{g}$. The rest of the Lemmas do not depend on whether $\hat{g}$ was trained on the same samples or not as the ones that were used to construct $\hat{\theta}$. 

\begin{lemma}
Define the centered and offset empirical processes:
\begin{align}
    Z_{n,1} =~& (1-\nu)(P-P_n) (\ell_{\hat{\theta}, g_0}(Z) - \ell_{\theta^*, g_0}(Z)) - \mu \sum_{j} \hat{\theta}_j \|f_j - f_{\theta^*}\|^2 -\frac{1}{s}K(\hat{\theta})\\
    Z_{n,2} =~& \nu(P-P_n) \sum_{j} (\hat{\theta}_j-\theta_j^*)\ell_{e_j, g_0}(Z) - \mu \hat{\theta}^\top H \theta^* -\frac{1}{s}K(\hat{\theta})& H_{ij} = \|f_i - f_j\|^2 \\
    Z_{n,3} =~& \max_{j}\left[ (P-P_n) (f_{j}(X) - f_{\theta^*}(X))^2 - \mu\|f_j - f_{\theta^*}\|^2 + \frac{1}{s}\log(\pi_j)\right]\\
    Z_{n,4} =~& (P-P_n) (\hat{Y}(\hat{g}) - \hat{Y}(g_0))^2
\end{align}
Assume that:
\begin{align}
    {\cal E}(\hat{g}) := P(\tilde{\ell}_{\hat{\theta},g_0}(Z) - \tilde{\ell}_{\theta^*,g_0}(Z)) - P(\tilde{\ell}_{\hat{\theta},\hat{g}}(Z) - \tilde{\ell}_{\theta^*,\hat{g}}(Z)) \leq \error(\hat{g})\, \|f_\theta - f_{\theta^*}\|
\end{align}
and $\ell(f, g_0)$ is $1$-strongly convex in expectation with respect to $f$ since we consider square losses. For any $\mu \leq \min\{\nu, 1-\nu\} \frac{1}{14}$:
\begin{align}
    R(\hat{\theta}) \leq \min_{j} R(e_j) + 2\, (Z_{n,1} + Z_{n,2} + Z_{n,3} + Z_{n,4}) + \frac{1}{\mu}\error(\hat{g})^2
\end{align}
\end{lemma}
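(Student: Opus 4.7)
The plan is to mirror the proof of Lemma~\ref{lem:main}, adapting the step that transfers optimality from the empirical risk at $\hat g$ to the population risk at $g_0$, which in the sample-split proof relied critically on $\hat g \indep P_n$. The algebraic workhorse is the pointwise identity for square losses,
\begin{align}
\bigl(\tilde{\ell}_{\theta,g_0}(Z) - \tilde{\ell}_{\theta',g_0}(Z)\bigr) - \bigl(\tilde{\ell}_{\theta,\hat{g}}(Z) - \tilde{\ell}_{\theta',\hat{g}}(Z)\bigr) = 2\Delta(\hat{g})\bigl(f_{\theta}(X) - f_{\theta'}(X)\bigr),
\end{align}
with $\Delta(\hat g)(Z) := \hat Y(\hat g) - \hat Y(g_0)$, which follows by expanding the squared terms. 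Applying this identity with $(\theta,\theta') = (\hat\theta,\theta^*)$ to convert between $\hat g$ and $g_0$ under $P_n$, combined with the empirical optimality of $\hat\theta$ at $\hat g$, yields
\begin{align}
\tilde R(\hat\theta, g_0) - \tilde R(\theta^*, g_0) \leq (P-P_n)(\tilde{\ell}_{\hat\theta, g_0} - \tilde{\ell}_{\theta^*, g_0}) - \tfrac{\beta}{n}\bigl(K(\hat\theta) - K(\theta^*)\bigr) + 2 P_n\bigl[\Delta(\hat g)(f_{\hat\theta} - f_{\theta^*})\bigr].
\end{align}
Decomposing the first term exactly as in the original proof extracts $Z_{n,1}$ and $Z_{n,2}$ (now anchored at $g_0$ rather than $\hat g$) together with the familiar offsets $2\mu V(\hat\theta) + \mu V(\theta^*) + 2\mu\|f_{\hat\theta}-f_{\theta^*}\|^2 + \tfrac{2}{s}K(\hat\theta)$, to be absorbed later.

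The crux is bounding the new cross term $2 P_n[\Delta(\hat g)(f_{\hat\theta} - f_{\theta^*})]$, which is the cost of the coupling between $P_n$ and $\hat g$. I would split it as ${\cal E}(\hat g) + 2(P_n - P)[\Delta(\hat g)(f_{\hat\theta} - f_{\theta^*})]$. The first piece is handled via the lemma's hypothesis ${\cal E}(\hat g) \leq \error(\hat g)\|f_{\hat\theta}-f_{\theta^*}\|$ and Young's inequality, producing the target $\tfrac{1}{4\mu}\error(\hat g)^2 + \mu\|f_{\hat\theta}-f_{\theta^*}\|^2$; this is the unique place where the doubly robust structure of $\error(\hat g)^2$ is preserved, and using a direct AM--GM on $P_n\Delta(\hat g)^2$ would instead produce the non--DR quantity $\E[\Delta(\hat g)^2]$. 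For the centered empirical process I would apply the pointwise bound $|2\Delta(\hat g)(f_{\hat\theta}-f_{\theta^*})| \leq \tfrac{1}{\mu}\Delta(\hat g)^2 + \mu(f_{\hat\theta}-f_{\theta^*})^2$, which after rearranging $P_n = P - (P-P_n)$ contributes $\tfrac{1}{\mu}Z_{n,4}$ from the squared-nuisance fluctuation and a residual fluctuation of the form $\mu(P-P_n)(f_{\hat\theta}-f_{\theta^*})^2$ from the squared-candidate side.

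To produce $Z_{n,3}$ from the residual candidate fluctuation, I would apply Jensen's inequality $(f_{\hat\theta}-f_{\theta^*})^2 \leq \sum_j \hat\theta_j(f_j-f_{\theta^*})^2$ followed by the simplex bound $\sum_j\hat\theta_j x_j \leq \max_j x_j$ against the offset process defining $Z_{n,3}$, picking up $Z_{n,3}$ plus further $\mu\|f_j-f_{\theta^*}\|^2$ and $\tfrac{1}{s}K(\hat\theta)$ offsets. Collecting everything, the upper bound on $\tilde R(\hat\theta, g_0) - \tilde R(\theta^*, g_0)$ reduces to a sum of $2(Z_{n,1}+Z_{n,2}+Z_{n,3}+Z_{n,4})$, the $\tfrac{1}{\mu}\error(\hat g)^2$ target, and various $V$, $\|f_{\hat\theta}-f_{\theta^*}\|^2$, and $K$ offsets. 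I then close the argument exactly as in Lemma~\ref{lem:main}: the strong-convexity lower bound of Lemma~\ref{lem:strong-conv-tilde} (with $\sigma=1$) cancels the accumulated $\|f_{\hat\theta}-f_{\theta^*}\|^2$ offset thanks to $\mu\leq \min\{\nu,1-\nu\}/14$; the $K$ offsets are absorbed by the choices of $s$ and $\beta$; Lemma~\ref{lem:v-versus-tilde} converts $\tilde R$ back to $R$ while absorbing $V(\hat\theta)$; and the extreme-point minimizer of $H(\cdot,g_0)$ reduces the benchmark to $\min_j R(e_j)$.

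I expect the main obstacle to be simultaneously preserving the doubly robust form of $\error(\hat g)^2$ (which forces us to route the expectation-part of the cross term through ${\cal E}(\hat g)$) and cleanly bookkeeping the residual centered process in terms of the two new offset quantities $Z_{n,3}$ and $Z_{n,4}$. In the sample-split proof the cross term had conditional expectation exactly $-{\cal E}(\hat g)$ and the centered part vanished without extra offsets; here the coupling between $\hat g$ and $P_n$ forces us to pay both $Z_{n,3}$ (for the fluctuation of $(f_j-f_{\theta^*})^2$) and $Z_{n,4}$ (for the fluctuation of $\Delta(\hat g)^2$). The downstream concentration of $Z_{n,4}$ in the proof of Theorem~\ref{thm:no_split} will further require a localization argument over the random class $\operatorname{star}(Y_G)$, which is precisely why that theorem carries the extra $r_n^2$ contribution on top of the $\error(\hat g)^2$ term that already appears in Corollary~\ref{cor:main}.
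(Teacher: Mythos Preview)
Your proposal follows the paper's proof almost step for step: the same separation of $\mathcal{E}(\hat g)$ from the centered cross-process, the same extraction of $Z_{n,1},Z_{n,2}$ from the $g_0$-anchored fluctuation, the same splitting of the cross-process into $Z_{n,3}$ and $Z_{n,4}$, and the same closure via Lemmas~\ref{lem:strong-conv-tilde} and~\ref{lem:v-versus-tilde}. The one place your description diverges is the order of operations on the centered cross-process: you propose applying AM--GM first and then Jensen's inequality $(f_{\hat\theta}-f_{\theta^*})^2\le\sum_j\hat\theta_j(f_j-f_{\theta^*})^2$ to linearize over $j$, but that second step asks you to push a pointwise inequality through the signed operator $(P-P_n)$, which is not valid. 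The paper instead linearizes \emph{first}, using the exact identity $f_{\hat\theta}-f_{\theta^*}=\sum_j\hat\theta_j(f_j-f_{\theta^*})$ (an equality, hence compatible with $(P-P_n)$), adds and subtracts the offsets $\mu\|f_j-f_{\theta^*}\|^2$ and $\tfrac{1}{s}\log\pi_j$, bounds the resulting convex combination by its maximum over $j$, and only then applies the \emph{unweighted} AM--GM $2ab\le a^2+b^2$; this ordering also delivers the stated coefficient $2$ on $Z_{n,4}$ rather than the $2/\mu$ your $\mu$-weighted version would produce.
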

\begin{proof}
\begin{align}
    \tilde{R}(\hat{\theta}, g_0) - \tilde{R}(\theta^*, g_0) =~& 
    P(\tilde{\ell}_{\hat{\theta},g_0}(Z) - \tilde{\ell}_{\theta^*,g_0}(Z))\\
    =~& 
    P(\tilde{\ell}_{\hat{\theta},\hat{g}}(Z) - \tilde{\ell}_{\theta^*,\hat{g}}(Z)) + {\cal E}(\hat{g})\\
    \leq~& 
    (P - P_n)(\tilde{\ell}_{\hat{\theta}, \hat{g}}(Z) - \tilde{\ell}_{\theta^*, \hat{g}}(Z)) - \frac{\beta}{n}(K(\hat{\theta})-K(\theta^*)) + {\cal E}(\hat{g})
    \tag{By optimality of $\hat{\theta}$, $P_n(\tilde{\ell}_{\hat{\theta}, \hat{g}}(Z) - \tilde{\ell}_{\theta^*, \hat{g}}(Z)) {+ \frac{\beta}{n}(K(\hat{\theta})-K(\theta^*))}\leq 0$}\\
    =~& (P - P_n)(\tilde{\ell}_{\hat{\theta}, g_0}(Z) - \tilde{\ell}_{\theta^*, g_0}(Z)) + \Delta(\hat{g}, \hat{\theta}) - \frac{\beta}{n}(K(\hat{\theta})-K(\theta^*)) + {\cal E}(\hat{g})\\
    =~& \mu\sum_j \hat{\theta}_j \|f_j - f_{\theta^*}\|^2 + \mu \hat{\theta}^\top H\theta^*+ Z_{n,1} + Z_{n,2}\\
    ~& +\frac{2}{s}K(\hat{\theta}) - \frac{\beta}{n}(K(\hat{\theta})-K(\theta^*)) + \Delta(\hat{g}, \hat{\theta}) + {\cal E}(\hat{g}) \tag{definition of $Z_{n,1}, Z_{n,2}$}
\end{align}
where:
\begin{align}
    \Delta(\hat{g}, \hat{\theta}) = (P - P_n)\left\{\tilde{\ell}_{\hat{\theta}, \hat{g}}(Z) - \tilde{\ell}_{\theta^*, \hat{g}}(Z) - \left(\tilde{\ell}_{\hat{\theta}, g_0}(Z) - \tilde{\ell}_{\theta^*, g_0}(Z)\right)\right\}
\end{align}
For the case of square losses, the above difference in differences simplifies to:
\begin{align}
    \Delta(\hat{g}, \hat{\theta}) =~& (P - P_n)\left\{2 (\hat{Y}(\hat{g}) - \hat{Y}(g_0))\, (f_{\hat{\theta}}(X) - f_{\theta^*}(X))\right\}\\
    =~& \sum_{j} \hat{\theta}_j \left[(P - P_n)\left\{2 (\hat{Y}(\hat{g}) - \hat{Y}(g_0))\, (f_j(X) - f_{\theta^*}(X))\right\} - \frac{1}{s}\log(\frac{1}{\pi_j})\right] + \frac{1}{s}K(\hat{\theta}) \\
    =~& \mu \sum_{j} \hat{\theta}_j \|f_j - f_{\hat{\theta}^*}\|^2 +\frac{1}{s}K(\hat{\theta})\\
    ~&+ \sum_{j} \hat{\theta}_j \left[(P - P_n)\left\{2 (\hat{Y}(\hat{g}) - \hat{Y}(g_0))\, (f_j(X) - f_{\theta^*}(X))\right\} - \mu \|f_j - f_{\theta^*}\|^2+\frac{1}{s}\log(\pi_j)\right]\\
    =~& \mu \sum_{j} \hat{\theta}_j \|f_j - f_{\hat{\theta}^*}\|^2 +\frac{1}{s}K(\hat{\theta}) \\
    ~& + \max_{\theta\in \Theta} \sum_{j} \theta_j \left[(P - P_n)\left\{2 (\hat{Y}(\hat{g}) - \hat{Y}(g_0))\, (f_j(X) - f_{\theta^*}(X))\right\} - \mu \|f_j - f_{\theta^*}\|^2 + \frac{1}{s}\log(\pi_j)\right]
\end{align}
Since the last term is linear in $\hat{\theta}$ and $\Theta$ is the simplex, it is maximized at a corner, which yields:
\begin{align}
    \Delta(\hat{g}, \hat{\theta}) \leq~& \mu \sum_{j} \hat{\theta}_j \|f_j - f_{\hat{\theta}^*}\|^2 +\frac{1}{s}K(\hat{\theta})\\
    ~&+ \max_{j} \left[(P - P_n)\left\{2 (\hat{Y}(\hat{g}) - \hat{Y}(g_0))\, (f_j(X) - f_{\theta^*}(X))\right\} - \mu \|f_j - f_{\theta^*}\|^2+ \frac{1}{s}\log(\pi_j)\right]\\
    \leq~& \mu \sum_{j} \hat{\theta}_j \|f_j - f_{\hat{\theta}^*}\|^2 +\frac{1}{s}K(\hat{\theta}) + (P - P_n)(\hat{Y}(\hat{g}) - \hat{Y}(g_0))^2 \\
    ~& +\max_{j} \left[(P - P_n)(f_j(X) - f_{\theta^*}(X))^2 - \mu \|f_j - f_{\theta^*}\|^2 + \frac{1}{s}\log(\pi_j)\right]\\
    \leq~& \mu \sum_{j} \hat{\theta}_j \|f_j - f_{\hat{\theta}^*}\|^2 + Z_{n,3} + Z_{n,4} + \frac{1}{s}K(\hat{\theta})
\end{align}

By simple algebraic manipulations note that for any $\theta\in \Theta$:
\begin{align}
    \sum_j \hat{\theta}_j \|f_j - f_{\theta^*}\|^2 = V(\hat{\theta}) + \|f_{\hat{\theta}} - f_{\theta^*}\|^2 \\
    \hat{\theta}^\top H \theta^* = V(\hat{\theta}) + V(\theta^*) + \|f_{\hat{\theta}} - f_{\theta^*}\|^2
\end{align}
Letting $Z_n=Z_{n,1} + Z_{n,2} + Z_{n,3} + Z_{n,4}$, we have:
\begin{align}
    \tilde{R}(\hat{\theta}, g_0) - \tilde{R}(\theta^*, g_0) \leq~& 3\mu V(\hat{\theta}) + \mu V(\theta^*) + 3\mu \|f_{\hat{\theta}} - f_{\theta^*}\|^2 + Z_n +\frac{3}{s}K(\hat{\theta}) - \frac{\beta}{n}(K(\hat{\theta})-K(\theta^*)) + {\cal E}(\hat{g})\\
    \leq~& 3\mu V(\hat{\theta}) + \mu V(\theta^*) + 3\mu \|f_{\hat{\theta}} - f_{\theta^*}\|^2 + Z_n\\
    ~& +\frac{3}{s}K(\hat{\theta}) - \frac{\beta}{n}(K(\hat{\theta})-K(\theta^*)) + \error(\hat{g})\, \|f_{\hat{\theta}} - f_{\theta^*}\|\\
    \leq~& 3\mu V(\hat{\theta}) + \mu V(\theta^*) + 4\mu \|f_{\hat{\theta}} - f_{\theta^*}\|^2 + Z_n \\
    ~& +\frac{3}{s}K(\hat{\theta}) -\frac{\beta}{n}(K(\hat{\theta})-K(\theta^*)) + \frac{1}{\mu}\error(\hat{g})^2  \label{eqn:first-upper-no-split}
\end{align}
where the last inequality follows by an application of the AM-GM inequality. By Lemma~\ref{lem:strong-conv-tilde}, we have:
\begin{align}
    \tilde{R}(\hat{\theta}, g_0) - \tilde{R}(\theta^*, g_0) \geq~& \mu (V(\theta^*) - V(\hat{\theta})) - \frac{\beta}{n}(K(\hat{\theta})-K(\theta^*)) + \left(\frac{(1-\nu)\sigma}{2} - \mu\right)\|f_{\hat{\theta}} - f_{\theta^*}\|^2\\
    \geq~& \mu (V(\theta^*) - V(\hat{\theta})) -\frac{\beta}{n}(K(\hat{\theta})-K(\theta^*))+ 8\mu  \|f_{\hat{\theta}} - f_{\theta^*}\|^2  \tag{Since: $\sigma=2$ and $\mu \leq \frac{(1-\nu)}{18}\implies \frac{(1-\nu)\sigma}{2} \geq 9\mu$}
\end{align}
Combining the latter upper and lower bound on $\tilde{R}(\hat{\theta}, g_0) - \tilde{R}(\theta^*, g_0)$, we have:
\begin{align}
    4\mu \|f_{\hat{\theta}} - f_{\theta^*}\|^2 \leq 4\mu V(\hat{\theta}) + Z_n + \frac{3}{s}K(\hat{\theta})+ \frac{1}{\mu} \error(\hat{g})^2  
\end{align}
Plugging the bound on $4\mu\|f_{\hat{\theta}} - f_{\theta^*}\|^2$ back into Equation~\eqref{eqn:first-upper-no-split} we have:
\begin{align}
    \tilde{R}(\hat{\theta}, g_0) - \tilde{R}(\theta^*, g_0) \leq 7\mu V(\hat{\theta}) + \mu V(\theta^*) + 2\, Z_n -\frac{\beta}{n}(K(\hat{\theta})-K(\theta^*)) +\frac{6}{s}K(\hat{\theta}) + \frac{2}{\mu} \error(\hat{g})^2  
\end{align}
By Lemma~\ref{lem:v-versus-tilde}, we have:
\begin{align}
    R(\hat{\theta}, g_0) \leq~& \tilde{R}(\theta^*, g_0) + \mu V(\theta^*)+ \left(7\mu - \frac{\nu \sigma}{2}\right) V(\hat{\theta}) + 2 Z_n + \frac{2}{\mu} \error(\hat{g})^2 \\
    ~& + \left(\frac{6}{s}-\frac{\beta}{n}\right)K(\hat{\theta}) + \frac{\beta}{n}K(\theta^*)\\
    \leq~& \tilde{R}(\theta^*, g_0) + \mu V(\theta^*)+ \frac{\beta}{n}K(\theta^*) +2 Z_n + \frac{2}{\mu} \error(\hat{g})^2   \tag{since, $\sigma=2$, $\mu\leq \frac{\nu}{14}$ and $V(\hat{\theta})>0$ and $\beta\geq \frac{6n}{s}$ and $K(\hat{\theta})\geq0$}\\
    =~& \min_{\theta\in \Theta} \left[\tilde{R}(\theta, g_0) + \mu V(\theta) + \frac{\beta}{n}K(\theta^*)\right] + 2 Z_n + \frac{2}{\mu} \error(\hat{g})^2 \tag{Since $\theta_*$ minimizes $H(\theta, g_0)$}\\
    \leq~& \min_{j} \left[\tilde{R}(e_j, g_0) + \mu V(e_{j}) + \frac{\beta}{n}\log(\frac{1}{\pi_j})\right]+ 2 Z_n + \frac{2}{\mu} \error(\hat{g})^2 \\
    =~& \min_{j} \left[R(e_j, g_0) + \frac{\beta}{n}\log(\frac{1}{\pi_j})\right]+ 2 Z_n + \frac{2}{\mu} \error(\hat{g})^2 \tag{Since $\tilde{R}(e_j, g_0)=R(e_j, g_0)$ and $V(e_j)=0$ for all $e_j$}
\end{align}
\end{proof}

\begin{lemma}
Assuming that $f\in F$ are uniformly and absolutely and bounded in $[-b,b]$, for any 
$s\leq \frac{n \mu}{2b^2 (1+\mu)}$:
\begin{align}
    {\Pr\left(Z_{n,3} \geq \frac{\log(1/\delta)}{s}\right) \leq \delta}
\end{align}    
\end{lemma}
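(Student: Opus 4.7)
The plan is to follow the same exponential-moment template used for $Z_{n,1}$ and $Z_{n,2}$ (Lemmas~\ref{lem:z1} and~\ref{lem:z2}): by Lemma~\ref{lem:exp-tail} it suffices to show $\E[\exp(s Z_{n,3})] \leq 1$, from which the tail bound follows immediately. The key difference is that $Z_{n,3}$ is already written as a max over the finite index set $\{1,\ldots,M\}$, so no Rademacher symmetrization is needed; we can jump directly to handling the supremum.

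First I would rewrite the definition as
\begin{align}
\exp(s Z_{n,3}) = \max_{j} \pi_j \exp\!\Bigl(s\bigl[(P-P_n)(f_j-f_{\theta^*})^2 - \mu\|f_j - f_{\theta^*}\|^2\bigr]\Bigr),
\end{align}
bound the max by the sum, and use $\sum_j \pi_j \cdot (\cdot) \leq \max_j \E[\cdot]$ type manipulations (or directly linearity of expectation) to reduce the problem to showing that, for each fixed $j$,
\begin{align}
\E\!\left[\exp\!\Bigl(s(P-P_n)(f_j-f_{\theta^*})^2 - s\mu\|f_j - f_{\theta^*}\|^2\Bigr)\right] \leq 1.
\end{align}
Then $\E[\exp(s Z_{n,3})] \leq \sum_j \pi_j \cdot 1 = 1$ since $\pi$ is a prior distribution.

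To bound the per-$j$ quantity, I would apply the Bernstein-type MGF inequality of Lemma~\ref{lem:bernstein} with $V_i = -(f_j(Z_i) - f_{\theta^*}(Z_i))^2$, which is supported in $[-4b^2, 0]$, so we may take $c = 4b^2$. With $\lambda = s/n$, the lemma yields
\begin{align}
\E\!\left[\exp\!\Bigl(s(P-P_n)(f_j-f_{\theta^*})^2 - s\, c_0\, \E[(f_j - f_{\theta^*})^4]\Bigr)\right] \leq 1
\end{align}
whenever $s/n \leq \tfrac{2c_0}{1+2 c_0 c}$. Then I would use the almost-sure bound $|f_j - f_{\theta^*}| \leq 2b$ to get $\E[(f_j - f_{\theta^*})^4] \leq 4b^2 \|f_j - f_{\theta^*}\|^2$. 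Choosing $c_0 = \mu/(4b^2)$ therefore yields $c_0 \E[(f_j-f_{\theta^*})^4] \leq \mu \|f_j - f_{\theta^*}\|^2$, which allows replacing the fourth-moment penalty by the stated squared-norm penalty while preserving the MGF inequality. The admissibility condition $s/n \leq \tfrac{2c_0}{1+2c_0 c}$ becomes (after substitution) a bound of the form $s \leq \tfrac{n\mu}{2b^2(1+\mu)}$, matching the statement.

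The main obstacle is the constant-tracking step where we trade the fourth-moment penalty in Bernstein's bound for the variance penalty $\mu\|f_j - f_{\theta^*}\|^2$; the choice of $c_0$ governs both the replacement inequality and the admissible range of $s$, and one must pick it so that the final range of $s$ lines up exactly with the lemma statement. Beyond that, the argument is essentially a cleaner, symmetrization-free analogue of Lemma~\ref{lem:z1}, exploiting that the supremum in $Z_{n,3}$ is already indexed by the finite set of candidate models weighted by the prior.
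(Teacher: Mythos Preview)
Your proposal is correct and follows essentially the same route as the paper: reduce to $\E[\exp(s Z_{n,3})]\leq 1$ via Lemma~\ref{lem:exp-tail}, pull the $\pi_j$ weights out of the exponent, bound the max by the $\pi$-weighted sum, and handle each fixed $j$ by the Bernstein-type MGF bound of Lemma~\ref{lem:bernstein} with $c_0=\mu/(4b^2)$ and $c=4b^2$. The only cosmetic difference is the order of the fourth-moment/variance trade: the paper first replaces $\mu\|f_j-f_{\theta^*}\|^2$ by $\tfrac{\mu}{4b^2}\E[(f_j-f_{\theta^*})^4]$ using $(f_j-f_{\theta^*})^2\geq \tfrac{1}{4b^2}(f_j-f_{\theta^*})^4$ and then applies Bernstein, whereas you apply Bernstein first and then use the equivalent inequality $\E[(f_j-f_{\theta^*})^4]\leq 4b^2\|f_j-f_{\theta^*}\|^2$; both yield the identical admissibility condition $s\leq n\mu/(2b^2(1+\mu))$.
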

\begin{proof}
It suffices to show that:
\begin{align}
    I := \E\left[\exp s Z_{n,3} \right] \leq 1
\end{align}
Note first that:
\begin{align}
    \E\left[\exp s Z_{n,3} \right] =~& \E\left[\exp\left\{s
    \max_{j} \left[(P-P_n) (f_{j}(Z) - f_{\theta^*}(Z))^2 - 
    \mu P (f_j(Z) - f_{\theta^*}(Z))^2 + \frac{1}{s}\log(\pi_j)\right]\right\}\right]\\
    \leq~& \E\left[\max_j \exp\left\{s \left( (P-P_n) (f_{j}(Z) - f_{\theta^*}(Z))^2 - 
    \mu P (f_j(Z) - f_{\theta^*}(Z))^2+ \frac{1}{s}\log(\pi_j)\right)\right\}\right]\\
    \leq~&\sum_j \pi_j \E\left[\exp\left\{s \left( (P-P_n) (f_{j}(Z) - f_{\theta^*}(Z))^2 - 
    \mu P (f_j(Z) - f_{\theta^*}(Z))^2\right)\right\}\right]\\
    \leq~& \max_j \E\left[\exp\left\{s \left( (P-P_n) (f_{j}(Z) - f_{\theta^*}(Z))^2 - 
    \mu P (f_j(Z) - f_{\theta^*}(Z))^2\right)\right\}\right]
\end{align}
    Since $f_j(Z)\in [-b, b]$, we have that $f_j(Z) - f_{\theta^*}(Z)\in [-2b, 2b]$. Thus:
\begin{align}
(f_j(Z)-f_{\theta^*}(Z))^2\geq \frac{(f_j(Z)-f_{\theta^*}(Z))^2}{4b^2} (f_j(Z)-f_{\theta^*}(Z))^2 = \frac{1}{4b^2} (f_j(Z)-f_{\theta^*}(Z))^4.
\end{align}
Thus:
\begin{align}
    I \leq~& \max_j \E_{Z_{1:n}}\left[ \exp\left\{s \left(\left(P_n - P\right)(f_{j}(Z) - f_{\theta^*}(Z))^2 - \frac{\mu}{4b^2} P (f_{j}(X) - f_{\theta^*}(X))^4 \right)\right\}\right]
\end{align}
It suffices to show that each term $j$ is upper bounded by $1$.

Applying Lemma~\ref{lem:bernstein} with:
\begin{align}
    V_i =~& (f_j(Z_i) - f_{\theta^*}(Z_i))^2\\
    c =~& 4b^2\\
    \lambda =~& s\\
    c_0 =~& \frac{\mu}{4b^2}
\end{align}
we have that as long as:
\begin{align}
    \lambda := \frac{s}{n} \leq \frac{2c_0}{1+2c_0 c} =: \frac{2\cdot \mu/4b^2}{1 + \mu} = \frac{\mu}{2b^2(1+\mu)}
\end{align}
Then:
\begin{align}
    \E_{Z_{1:n}}\left[ \exp\left\{s \left(\left(P_n - P\right)(f_{j}(Z) - f_{\theta^*}(Z))^2 - \frac{\mu}{4b^2} P (f_{j}(X) - f_{\theta^*}(X))^4 \right)\right\}\right]\leq 1
\end{align}
Thus we need that:
\begin{align}
    s \leq \frac{n \mu}{2b^2 (1+\mu)}
\end{align}
which is satisfied by our assumptions on $s$. 
\end{proof}

To bound $Z_{n,4}$ we will use the following concentration inequality of based on critical radius (e.g. \cite{wainwright2019high}). See \citep{foster2023orthogonal} for a proof of this theorem:
\begin{lemma}[Localized Concentration, \citep{foster2023orthogonal}]\label{lem:concentration}
For any $h\in \Hcal := \prod_{i=1}^d \Hcal_i$ be a multi-valued outcome function, that is almost surely absolutely bounded by a constant. Let $\ell(Z; h(W))\in \R$ be a loss function that is $L$-Lipschitz in $h(W)$, with respect to the $\ell_2$ norm. Fix any $h_*\in \Hcal$ and let $\delta_n=\Omega\left(\sqrt{\frac{d\,\log\log(n) + \log(1/\zeta)}{n}}\right)$ be an upper bound on the critical radius of $\shull(\Hcal_i-h_{i,*})$ for $i\in [d]$. Then w.p. $1-\zeta$: 
\begin{align}
    \forall h\in \Hcal: \left|(P_n - P)\left[\ell(Z; h(W)) - \ell(Z; h_*(W))\right]\right| = 18\,L\,\left(d\, \delta_n \sum_{i=1}^d \|h_i - h_{i,*}\|_{L_2} + d\,\delta_n^2\right)
\end{align}
If the loss is linear in $h(W)$, i.e. $\ell(Z; h(W) + h'(W)) = \ell(Z; h(W)) + \ell(Z; h'(W))$ and $\ell(Z;\alpha h(W)) = \alpha \ell(Z;h(W))$ for any scalar $\alpha$, then it suffices that we take $\delta_n=\Omega\left(\sqrt{\frac{\log(1/\zeta)}{n}}\right)$ that upper bounds the critical radius of $\shull(\Hcal_i-h_{i,*})$ for $i\in [d]$.
\end{lemma}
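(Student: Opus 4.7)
This lemma is the $d$-coordinate, $L$-Lipschitz lift of the classical localized Rademacher bound (Bartlett--Bousquet--Mendelson; see Wainwright Ch.~14). The plan is to reduce it to $d$ scalar localized concentration bounds, one per coordinate of $\Hcal = \prod_i \Hcal_i$, stitched together by a telescoping decomposition of the loss difference and a union bound.

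First, I would invoke the scalar version as a black box: for a uniformly bounded real-valued class $\mathcal{F}$ with $f_* \in \mathcal{F}$ and $\delta_n = \Omega(\sqrt{\log\log(n)/n + \log(1/\zeta)/n})$ upper-bounding the critical radius of $\shull(\mathcal{F} - f_*)$, with probability $1-\zeta$ every $f \in \mathcal{F}$ satisfies $|(P_n - P)(f - f_*)| \lesssim \delta_n\|f - f_*\|_{L_2} + \delta_n^2$. This is a peeling argument: one partitions $\mathcal{F}$ into geometric $L_2$-shells of radii $2^k \delta_n$, applies Talagrand's inequality within each shell using the critical-radius bound on localized Rademacher complexity, and union-bounds over the $O(\log n)$ shells (the source of the $\log\log n$ overhead).

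Second, I would telescope across coordinates to decompose the loss increment. Set $h^{(0)} = h_*$ and $h^{(i)} = (h_1, \ldots, h_i, h_{i+1,*}, \ldots, h_{d,*})$, so that $h^{(d)} = h$ and
\[
\ell(Z; h(W)) - \ell(Z; h_*(W)) \;=\; \sum_{i=1}^d \bigl[\ell(Z; h^{(i)}(W)) - \ell(Z; h^{(i-1)}(W))\bigr].
\]
By the $\ell_2$-Lipschitz hypothesis applied to vectors differing only in the $i$-th coordinate, the $i$-th summand is bounded in absolute value by $L|h_i(W) - h_{i,*}(W)|$ and has $L_2$ norm at most $L\|h_i - h_{i,*}\|_{L_2}$.

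Third, for each $i$ I would analyze the class $\mathcal{G}_i = \{z \mapsto \ell(z; h^{(i)}(w)) - \ell(z; h^{(i-1)}(w)) : h \in \Hcal\}$. Although each element of $\mathcal{G}_i$ is parameterized by $(h_1,\ldots,h_i)$, Maurer's vector-contraction inequality (together with the $\ell_2$-Lipschitz property) bounds the Rademacher complexity of $\mathcal{G}_i$ by $\sqrt{2}L$ times the sum of the Rademacher complexities of $\shull(\Hcal_j - h_{j,*})$ for $j \le i$, each of which the critical-radius hypothesis controls by $\delta_n$. Hence the critical radius of $\mathcal{G}_i$ is at most $O(Ld\delta_n)$. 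Applying the scalar bound of step one to $\mathcal{G}_i$, and exploiting the envelope $L|h_i - h_{i,*}|$ to recover coordinate-wise $L_2$-scaling in the norm-adaptive term, gives with probability $1-\zeta/d$
\[
\sup_{h \in \Hcal}\bigl|(P_n - P)[\ell(Z; h^{(i)}(W)) - \ell(Z; h^{(i-1)}(W))]\bigr| \;\lesssim\; L\bigl(d\delta_n\|h_i - h_{i,*}\|_{L_2} + d\delta_n^2\bigr).
\]
Summing the telescope over $i$ and union-bounding over the $d$ coordinates (absorbing $\log d$ into the $d\log\log n$ overhead inside $\delta_n$) yields the stated $18L(d\delta_n \sum_i \|h_i - h_{i,*}\|_{L_2} + d\delta_n^2)$ bound.

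Fourth, for the linear-loss special case, the additive/homogeneity structure gives $\ell(z; h(w)) - \ell(z; h_*(w)) = \sum_i \ell(z; (h_i - h_{i,*})e_i)$ exactly, so no contraction is invoked. The scalar localized bound can be applied directly to each coordinate class at the sharper rate, removing the $d\log\log n$ term in $\delta_n$ and leaving only a $\log(d/\zeta)$ that is absorbed into $\log(1/\zeta)$.

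\textbf{Main obstacle.} The principal challenge is threading Maurer's vector contraction (which naturally introduces a $\sum_{j\leq i}$ across coordinates in the Rademacher complexity) through the $L_2$-localization (which must deliver a norm-adaptive bound tied to the \emph{single} active coordinate $h_i$, not to the joint deviation of all $j \leq i$). Ensuring the $d$-dependence appears only multiplicatively on $\delta_n$ and $\delta_n^2$, and not as a blow-up on the fast $\|h_i - h_{i,*}\|_{L_2}$ term, requires careful bookkeeping of how the Lipschitz constant enters the variance and envelope parameters of Talagrand's inequality inside the peeling argument; this is where the constant $18$ and the $d^2$-like form of the coefficient emerge.
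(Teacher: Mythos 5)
First, a point of context: the paper does not actually prove this lemma --- it is imported verbatim from \cite{foster2023orthogonal} with the explicit remark that the proof is in that reference, and it is only ever invoked here with $d=1$ and a loss that is linear in $h$ (the second clause of the statement). So the comparison is to the proof in the cited source. That proof does not telescope over coordinates: it localizes the full loss-difference class $\{z\mapsto \ell(z;h(w))-\ell(z;h_*(w))\}$ over a $d$-dimensional grid of radii $(r_1,\dots,r_d)$, one per coordinate, applies the vector contraction inequality \emph{once} to the localized class to bound its Rademacher complexity by $\sqrt{2}L\sum_i \cR_n(\shull(\Hcal_i-h_{i,*});r_i)$, and then peels and union-bounds over the roughly $(\log n)^d$ grid cells --- which is exactly where the $d\log\log n$ inside $\delta_n$ comes from. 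Your list of ingredients (critical radius, Talagrand plus peeling, Maurer contraction, union bound over coordinates/radii) is the correct one, and your treatment of the linear case --- exact coordinate-wise additivity, no contraction, single-radius localization, hence no $\log\log n$ --- is fine and is all the paper actually needs.

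The genuine gap is in your third step, and you half-acknowledge it under ``Main obstacle'' without resolving it. Feeding the class $\mathcal{G}_i$ into the scalar localized bound as a black box, after inflating its critical radius to $O(Ld\delta_n)$ via contraction, yields per coordinate a bound of order $(Ld\delta_n)\cdot L\|h_i-h_{i,*}\|_{L_2}+(Ld\delta_n)^2 = L^2 d\,\delta_n\|h_i-h_{i,*}\|_{L_2}+L^2d^2\delta_n^2$, and after summing the telescope a quadratic term of order $L^2d^3\delta_n^2$ --- an extra factor of $L$ on the linear term and of $Ld^2$ on the quadratic term relative to the claimed $18L\bigl(d\,\delta_n\sum_i\|h_i-h_{i,*}\|_{L_2}+d\,\delta_n^2\bigr)$. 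Even taking your asserted per-coordinate bound $L(d\delta_n\|h_i-h_{i,*}\|_{L_2}+d\delta_n^2)$ at face value, summing over $d$ coordinates gives $Ld^2\delta_n^2$, not $Ld\delta_n^2$. The fix is precisely not to telescope and not to square an inflated critical radius: one must run the localization jointly so that the Lipschitz constant and the sum over coordinates enter only through the Rademacher complexity of the localized class (where contraction costs a single factor of $\sqrt{2}L$) and through the variance proxy $L\sum_i r_i$ in Talagrand's inequality, rather than through a critical radius that subsequently gets squared. As written, your plan establishes a statement with strictly worse $L$- and $d$-dependence than the lemma claims (harmless for the paper's $d=1$, $L=1$ application, but not a proof of the stated result); a minor further note is that the displayed ``$=$'' in the statement should of course be ``$\leq$''.
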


\begin{lemma}
Let $r_n$ denote the critical radius of the function space $\operatorname{star}(Y_G)$ where $Y_G := \{ (\hat{Y}(g) - \hat{Y}(g_0))^2 | g \in G\}$ and assume that $\hat{Y}(g)$ is uniformly bounded by $U$ for all $g \in G$.%
Then for $\delta_n \geq \max\left\{r_n,\Omega\left(\sqrt{\frac{\log(1/\delta)}{n}}\right)\right\}$, w.p. $1-\delta$
\begin{align}
Z_{n,4} \leq~& 18\,\left(\delta_n \sqrt{\E\left[(\hat{Y}(g) - \hat{Y}(g_0))^4\right]} + \delta_n^2\right) \label{eqn:lemma19}
\end{align}    
\end{lemma}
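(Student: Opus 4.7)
The plan is to apply the localized concentration result stated earlier as Lemma~\ref{lem:concentration} directly to the function class $Y_G = \{(\hat{Y}(g) - \hat{Y}(g_0))^2 : g \in G\}$. I would instantiate the lemma with the scalar-valued identity loss $\ell(Z; y(Z)) = y(Z)$, viewed as a function of a single outcome function $y \in Y_G$. This loss is linear in $y(Z)$ (additive in $y$ and $1$-homogeneous in $y$), so we invoke the linear-loss branch of Lemma~\ref{lem:concentration}, which only requires $\delta_n = \Omega(\sqrt{\log(1/\delta)/n})$ (no $\log\log(n)$ term) as a lower bound on $\delta_n$ beyond the critical radius.

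For the reference element I would choose $h_* \equiv 0$, which lies in $Y_G$ because it is attained at $g = g_0$. Consequently $\operatorname{star}(Y_G - h_*) = \operatorname{star}(Y_G)$, whose critical radius is exactly the $r_n$ appearing in the hypothesis, so the critical-radius condition of Lemma~\ref{lem:concentration} matches the one in the statement. With these choices, the dimensionality is $d = 1$, the Lipschitz constant is $L = 1$, and for any $g \in G$,
\begin{equation*}
\|h - h_*\|_{L_2} = \sqrt{\E\bigl[(\hat{Y}(g) - \hat{Y}(g_0))^4\bigr]}.
\end{equation*}
Uniform boundedness of $\hat{Y}(g)$ by $U$ ensures that the elements of $Y_G$ are almost surely absolutely bounded by $4U^2$, which validates the boundedness hypothesis of Lemma~\ref{lem:concentration}.

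Applying the lemma then gives, with probability at least $1-\delta$, a uniform bound
\begin{equation*}
\sup_{g \in G}\bigl|(P_n - P)(\hat{Y}(g) - \hat{Y}(g_0))^2\bigr| \;\leq\; 18\left(\delta_n \sqrt{\E[(\hat{Y}(g) - \hat{Y}(g_0))^4]} + \delta_n^2\right).
\end{equation*}
Since $\hat{g} \in G$, specializing this to $g = \hat{g}$ immediately yields the claimed bound on $Z_{n,4} = (P - P_n)(\hat{Y}(\hat{g}) - \hat{Y}(g_0))^2$.

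There is no substantive obstacle here; the lemma is essentially a one-line corollary of the black-box localization tool. The only points to double-check are that the identity loss qualifies as linear in the sense required by Lemma~\ref{lem:concentration} (so the weaker $\delta_n$ condition applies), and that the reference function $0$ genuinely lies in $Y_G$ so the translated star-hull coincides with $\operatorname{star}(Y_G)$.
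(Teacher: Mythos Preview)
Your proposal is correct and mirrors the paper's proof essentially line for line: the paper likewise instantiates Lemma~\ref{lem:concentration} with the identity loss $\ell(Z;h(W))=h(W)$, the scalar function class $Y_G$, and reference $h_*\equiv 0$, invoking the linear-loss branch to drop the $\log\log(n)$ term and then specializing the uniform bound to $\hat{g}$. Your additional remarks (that $0\in Y_G$ via $g=g_0$ and that elements of $Y_G$ are bounded by $4U^2$) just make explicit what the paper leaves implicit.
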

\begin{proof}
    We can apply Lemma~\ref{lem:concentration} with:
    \begin{align}
        h_g(W) :=~& (\hat{Y}(g) - \hat{Y}(g_0))^2, g\in G &
        h_*(W) :=~& 0 &
        \ell(Z;h(W)) :=~& h(W)
    \end{align}
    where $W$ is assumed to contain all the random variables that go into the calculation of the proxy labels $\hat{Y}(g)$.
    Note that the loss function is $1$-Lipschitz in $h$ and we have a scalar-valued function $h$, i.e. $d=1$ and the loss function is linear in $h$. Thus we get that if $\delta_n = \Omega\left(\sqrt{\frac{\log(1/\zeta)}{n}}\right)$ upper bounds the critical radius $r_n$ of $\shull(Y_G)$, then w.p. $1-\zeta$:
    \begin{align}
        \left|(P_n - P) h_{\hat{g}}(W)\right| = 18\,\left(\delta_n \|h_{\hat{g}}\|_{L_2} + \delta_n^2\right) \leq 18\,\left(\delta_n \sqrt{\E\left[(\hat{Y}(g) - \hat{Y}(g_0))^4\right]} + \delta_n^2\right)
    \end{align}
\end{proof}

\subsection{Putting it all together}
  
Applying the AM-GM inequality to Equation \eqref{eqn:lemma19}, we get:
\begin{align}
    Z_{n,4} \leq~& \E[(\hat{Y}(\hat{g}) - \hat{Y}(g_0))^4] + 23\delta_n^2
    = \E[(\hat{Y}(\hat{g}) - \hat{Y}(g_0))^4] + O\left(r_n^2+\frac{\log(1/\delta)}{n}\right)
\end{align}

Assuming $\hat{Y}(g)$ be uniformly bounded by $U$ and $f_j(X)$ is absolutely and uniformly bounded by $U$, the square loss is $4U$-Lipschitz. Choosing $\nu=1/2$ and combining all the above Lemmas, we have that w.p. $1-\delta$:
\begin{align}
    R(\hat{\theta}) \leq~& \min_{j}\left( R(e_j)  +\frac{\beta}{n}\log(1/\pi_j)\right) + O\left(  r_n^2 + \frac{\log(\log(n))+ \log(1/\delta) \max\{\frac{1}{\mu} U^2, b\,U \}}{n}\right)
     + \frac{2}{\mu} \error(\hat{g})^2
\end{align}

\section{Greedy Q-aggregation for Square Losses}\label{app:greedy}
In this section, we show that we can get comparable theoretical guarantees when using a k-step greedy approximation of the Q-aggregation algorithm. 

\subsection{Preliminaries}
Consider the following k-step greedy algorithm to find an approximate minimizer of a functional $Q(\theta)$, proposed in \cite{greedy}:
\begin{algorithm}[H]
\normalsize
	\caption{Greedy model averaging GMA-0$_{+}$}\label{alg:greedy}
	\begin{algorithmic}
	\State Let $\theta^{(0)}=0$, $f_{\theta^{(0)}} = 0$
	\For{$k = 1, 2, \cdots$}
	\normalsize
	\State Set $a_k = \frac{2}{k+1}$
	\State $J^{(k)} = \argmin_{j}Q((1-a_{k})\theta^{(k-1)} +a_{k} e_{j})$
	\State $\theta^{(k)} = \argmin_{\theta \in \Theta}Q(\theta)$ s.t. $\theta_{j} = 0$ for $j \not\in \{J^{(1)}, \cdots, J^{(k)}\}$
 \EndFor
	\end{algorithmic}
\end{algorithm}

\begin{theorem}[\cite{greedy}]\label{thm:approximate}
Suppose that $\ell_{\theta,\hat{g}}(Z)$ is of the form $(Y - f_\theta(X))^2$ for some label $Y\equiv \hat{Y}(\hat{g})$. Denote with:
\begin{align}
    Q(\theta) :=~& P_n \tilde{\ell}_{\theta, \hat{g}}(Z) + \frac{\beta}{n}K(\theta) &
    V_n(\theta) :=~&\sum_{j=1}^M \theta_j P_n (f_j(X) - f_\theta(X))^2
\end{align}
Then the estimator $f_{\theta^{(k)}}$, where $\theta^{(k)}$ is the output of GMA-$0_{+}$ after k steps satisfies:
\begin{align}
    Q(\theta^{(k)}) \leq \min_{\theta \in \Theta}\{Q(\theta)+ C_Q V_{n}(\theta)\}
\end{align} where $C_Q=\frac{4(1-\nu)}{k+3}$.
\end{theorem}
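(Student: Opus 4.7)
}
The plan is to treat this as a Frank--Wolfe style analysis on the simplex, exploiting the very special structure of $Q$: the penalty term $\nu\sum_j\theta_j P_n\ell_{e_j,\hat g}$ and the log-prior $\frac{\beta}{n}K(\theta)$ are both \emph{linear} in $\theta$, so all of the quadratic curvature of $Q$ comes from the single piece $(1-\nu)\,P_n(Y-f_\theta(X))^2$. Concretely, I would first fix $\theta\in\Theta$ and $j\in[M]$ and expand, using $f_{(1-a)\theta+ae_j}=(1-a)f_\theta+af_j$ and completing the square,
\begin{equation}
Q\bigl((1-a)\theta+a\,e_j\bigr)=Q(\theta)+a\,g_j(\theta)+(1-\nu)\,a^2\,P_n(f_\theta-f_j)^2,
\end{equation}
where $g_j(\theta)$ collects all the terms that are linear in $a$. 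The crucial point is that the quadratic remainder is exact (no higher-order corrections) because the only nonlinear piece of $Q$ is the square loss.

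Next, I would evaluate this identity at $a=a_k$ and take the weighted average with respect to the comparator $\bar\theta:=\arg\min_{\theta\in\Theta}\{Q(\theta)+C_Q V_n(\theta)\}$ (with $C_Q$ to be determined). Two simplifications should drop out. First, $\sum_j\bar\theta_j g_j(\theta)$ equals, by direct expansion, $Q(\bar\theta)-Q(\theta)-(1-\nu)P_n(f_\theta-f_{\bar\theta})^2$, since the linear-in-$\theta$ pieces of $Q$ integrate exactly and the excess square-loss term is recovered by completing the square a second time. Second, the weighted quadratic remainder telescopes via the ``variance decomposition''
\begin{equation}
\sum_{j=1}^M\bar\theta_j\,P_n(f_\theta-f_j)^2=P_n(f_\theta-f_{\bar\theta})^2+V_n(\bar\theta).
\end{equation}
Combining these and using that the greedy choice $J^{(k)}$ achieves at least the $\bar\theta$-weighted average, and that the full re-optimization $\theta^{(k)}$ only improves further, I expect to reach the one-step recursion
\begin{equation}
\Delta_k\le (1-a_k)\,\Delta_{k-1}+(1-\nu)\,a_k^2\,V_n(\bar\theta),\qquad \Delta_k:=Q(\theta^{(k)})-Q(\bar\theta),
\end{equation}
after discarding the nonpositive cross term $-a_k(1-a_k)(1-\nu)P_n(f_{\theta^{(k-1)}}-f_{\bar\theta})^2$.

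With this recursion in hand, the final step is a straightforward induction on $k$ to show $\Delta_k\le \tfrac{4(1-\nu)}{k+3}V_n(\bar\theta)$. The base case $k=1$ gives $a_1=1$, so $\Delta_1\le(1-\nu)V_n(\bar\theta)=\tfrac{4(1-\nu)}{4}V_n(\bar\theta)$. For the inductive step with $a_k=2/(k+1)$, the bound reduces to verifying the elementary inequality $(k^2+k+1)(k+3)\le(k+1)^2(k+2)$, i.e.\ $4k+3\le5k+2$, which holds for all $k\ge1$. Rearranging $Q(\theta^{(k)})\le Q(\bar\theta)+C_Q V_n(\bar\theta)$ with $C_Q=\tfrac{4(1-\nu)}{k+3}$ and then bounding $Q(\bar\theta)\le Q(\theta)+C_Q V_n(\theta)$ for any $\theta\in\Theta$ gives the claim.

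The main obstacle I anticipate is the bookkeeping in the weighted-average identity for $\sum_j\bar\theta_j g_j(\theta)$: one must show that the contributions of the linear penalty $\nu\sum_j\theta_j P_n\ell_{e_j}$ and of the log-prior $\frac{\beta}{n}K(\theta)$ combine cleanly with the cross term from $(1-\nu)P_n(Y-f_\theta)^2$ to reproduce exactly $Q(\bar\theta)-Q(\theta)-(1-\nu)P_n(f_\theta-f_{\bar\theta})^2$. Everything else is Frank--Wolfe boilerplate once that identity is established.
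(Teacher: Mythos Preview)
Your proposal is correct and follows essentially the same Frank--Wolfe style argument as the paper: both exploit that the only nonlinear piece of $Q$ is $(1-\nu)P_n(Y-f_\theta)^2$ to get an exact second-order expansion, average the greedy step over a fixed comparator, use the variance decomposition $\sum_j\bar\theta_j P_n(f_\theta-f_j)^2=P_n(f_\theta-f_{\bar\theta})^2+V_n(\bar\theta)$, drop the nonpositive cross term, and finish by the standard induction on the recursion $\Delta_k\le(1-a_k)\Delta_{k-1}+(1-\nu)a_k^2V_n(\bar\theta)$. The only cosmetic difference is that the paper carries an arbitrary comparator $\theta$ throughout (so the minimum over $\theta$ is immediate at the end), whereas you fix $\bar\theta$ as the minimizer of $Q+C_QV_n$; since the recursion and induction hold for any fixed comparator this is harmless, though your final line should read $Q(\bar\theta)+C_QV_n(\bar\theta)\le Q(\theta)+C_QV_n(\theta)$ rather than $Q(\bar\theta)\le Q(\theta)+C_QV_n(\theta)$.
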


\begin{proof}
We first introduce some short-hand notation that will be convenient for the proof. Let $x_\theta = (f_\theta(X_1), \ldots, f_\theta(X_n))\in \R^n$, denote the vector of values of the function $f_\theta$ on the $n$ samples and $y=(Y_1, \ldots, Y_n)\in \R^n$ denote the vector of samples. 
For any vectors $x_{\theta},x_{\bar{\theta}} \in \mathbb{R}^n$ define the empirical norm and associated inner product
\begin{align}
    \|x_{\theta}-x_{\bar{\theta}}\|_n^2 :=~& \frac{1}{n} \sum_{i=1}^{n} (x_{\theta_{i}}-x_{\bar{\theta}_{i}})^2 &
\langle x_{\theta}, x_{\bar{\theta}} \rangle_{n} :=~& \frac{1}{n}\sum_{i=1}^{n}x_{\theta_{i}} x_{\bar{\theta}_{i}}
\end{align}
Then note that:
\begin{align}
    V_n(\theta) =~&\sum_{j=1}^M \theta_j \|x_j - x_\theta\|_n^2 &
    Q(\theta) =~& (1-\nu)\|x_\theta - y\|_n^2 + \nu\,\sum_{j=1}^M \theta_j \|x_j - y\|_n^2 + \frac{\beta}{n}K(\theta)
\end{align}

From Taylor expansion on the Q-functional, we have for any $x_\theta, x_{\theta'} \in \mathbb{R}^n$:
\begin{align}
    Q(\theta) = Q(\theta') + \langle \nabla_{x}Q({\theta'}), x_{\theta}-x_{\theta'} \rangle_{n} + (1-\nu)
    \|x_{\theta}-x_{\theta'}\|_{n}^2
\end{align}
Observe that for any $j \in \{1,\cdots, M\}$:
\begin{align}
    Q(\theta^{(k)}+a_{k+1}(e_{J^{(k+1)}})-\theta^{(k)}) \leq Q(\theta^{(k)}+a_{k+1}(e_{j}-\theta^{(k)}))
\end{align}
Multiplying by $\theta_{j}$ and summing over all $\theta$ we get that for any $\theta \in \Theta$:
\begin{align}
        & \sum_{j=1}^{M}\theta_{j}Q(\theta^{(k)}+a_{k+1}(e_{J^{(k+1)}})-\theta^{(k)}) \leq \sum_{j=1}^{M}\theta_{j}Q(\theta^{(k)}+a_{k+1}(e_{j}-\theta^{(k)})) \\
        \iff & Q(\theta^{(k)}+a_{k+1}(e_{J^{(k+1)}})-\theta^{(k)}) \leq \sum_{j=1}^{M}\theta_{j}Q(\theta^{(k)}+a_{k+1}(e_{j}-\theta^{(k)})) \\
        \implies & Q(\theta^{(k+1)}) \leq \sum_{j=1}^{M}\theta_{j}Q(\theta^{(k)}+a_{k+1}(e_{j}-\theta^{(k)}))
\end{align}
Using the Taylor expansion formula with $\theta = \theta^{(k)}+a_{k+1}(e_{j}-\theta^{(k)})$ and $\theta'=\theta^{(k)}$ on the RHS:
\begin{align}
   Q(\theta^{(k+1)}) \leq~& \sum_{j=1}^M \theta_{j} \left[ Q(\theta^{(k+1)}) +\left[a_{k+1}(e_{j}-\theta^{(k)})\right]^T \nabla Q(\theta^{(k)}) +(1-\nu)\norm{\sum_{j=1}^{M}a_{k+1}(e_{j}-\theta^{(k)})x_{j}}_{n}^2\right] \\
   \leq~& Q(\theta^{(k)}) + a_{k+1}\sum_{j=1}^M \theta_{j} \langle e_{j}, \nabla Q(\theta^{(k)}) \rangle_{n} -a_{k+1}\langle \theta^{(k)}, \nabla Q(\theta^{(k)}) \rangle_{n} + (1-\nu)a_{k+1}^{2} \sum_{j=1}^{M}\theta_{j}  \norm{x_{j}-x_{\theta^{(k)}}}_{n}^2  \\
   \leq~& Q(\theta^{(k)}) + a_{k+1} \langle \theta - \theta^{(k)} , \nabla Q(\theta^{(k)}) \rangle_{n} + (1-\nu)a_{k+1}^{2} \sum_{j=1}^{M}\theta_{j}  \norm{x_{j}-x_{\theta^{(k)}}}_{n}^2
   \\\leq~& Q(\theta^{(k)})+ a_{k+1}[Q(\theta)-Q(\theta^{(k)})] -(1-\nu) a_{k+1} \|x_{\theta^{(k)}}-x_{\theta}\|_{n}^2 + (1-\nu)a_{k+1}^{2}\sum_{j=1}^{M}\theta_{j}  \norm{x_{j}-x_{\theta^{(k)}}}_{n}^2 
   \\ \leq~& Q(\theta^{(k)})+ a_{k+1}[Q(\theta)-Q(\theta^{(k)})]-(1-\nu)a_{k+1}\|x_{\theta^{(k)}}-x_{\theta}\|_{n}^2 
   \\ & + (1-\nu)a_{k+1}^{2}\left(\sum_{j=1}^{M}\theta_{j}  \norm{x_{j}-x_{\theta}}_{n}^2 +\norm{x_{\theta^{(k)}}-x_{\theta}}_{n}^2 \right)\\
   \leq~& Q(\theta^{(k)})+ a_{k+1}[Q(\theta)-Q(\theta^{(k)})] + (1-\nu)a_{k+1}^{2}\sum_{j=1}^{M}\theta_{j}  \norm{x_{j}-x_{\theta}}_{n}^2
\end{align}
\begin{align}
Q(\theta^{(k+1)})-Q(\theta) \leq~& (1-a_{k+1})[Q(\theta^{(k)})-Q(\theta)] +(1-\nu)a_{k+1}^2V(\theta)
\end{align}
For simplicity, denote $\delta_{k}=Q(\theta^{(k)})-Q(\theta)$ and $B=(1-\nu)\sum_{j=1}^{M}\theta_{j}\|x_{j}-x_{\theta}\|_{n}^2$

Using an induction on k, it is straightforward to prove the following:
\begin{align}
    \delta_{k}\leq \frac{4B}{k+3}
\end{align}
\end{proof}

\subsection{Excess Risk Guarantees for the k-step Greedy Q-aggregation Estimator}

\begin{theorem}\label{thm:approximate-excess}
Assume that for some function $\error(\hat{g})$:
\begin{align}
    P(\tilde{\ell}_{\theta,g_0}(Z) - \tilde{\ell}_{\theta',g_0}(Z)) - P(\tilde{\ell}_{\theta,\hat{g}}(Z) - \tilde{\ell}_{\theta',\hat{g}}(Z))
    \leq \error(\hat{g})\, \|f_{\theta} - f_{\theta'}\|^{\gamma}
\end{align}
for some $\gamma\in [0, 1]$ and square loss $\ell_{f,g}$. Assume that $\hat{Y}(\hat{g})$ and all functions $f_j$ are absolutely uniformly bounded by $U$. Let $\theta^k$ be the k-step greedy approximate of $\hat{\theta}$. {For any $k\geq 2$, $\nu\geq 1/2$ and $\mu \leq \frac{1}{60} (1 - \nu)$}
 and $\beta \geq \max\{\frac{64 U^2(1-\nu)^2}{\mu}, \frac{16\nu U^2(\nu + 8 \mu U)}{\mu}\}$, w.p. $1-\delta$:
\begin{align}
    R(\theta^k, g_0) \leq~& \min_{j=1}^M\left( R(e_j, g_0)  +\frac{3\beta}{2n}\log(1/\pi_j)\right) + O\left( \frac{\log(1/\delta) \frac{1}{\mu}U^2}{n}\right) + \left(\frac{1}{\mu}\right)^{\frac{\gamma}{2-\gamma}} \error(\hat{g})^{\frac{2}{2-\gamma}}
\end{align}
\end{theorem}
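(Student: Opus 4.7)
My plan is to replicate the proof of Theorem~\ref{thm:main} essentially verbatim, with the sole change being to replace the exact first-order optimality of $\hat\theta$ by the approximate optimality furnished by Theorem~\ref{thm:approximate}, then pay for the extra slack via concentration and a mild tightening of constants.

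The starting point is the key inequality used inside Lemma~\ref{lem:main}, namely $P_n\tilde\ell_{\hat\theta,\hat g}(Z) + \frac{\beta}{n}K(\hat\theta) \leq P_n\tilde\ell_{\theta^*,\hat g}(Z) + \frac{\beta}{n}K(\theta^*)$. Theorem~\ref{thm:approximate} instead gives, for the greedy iterate,
\begin{align*}
P_n\tilde\ell_{\theta^{(k)},\hat g}(Z) + \tfrac{\beta}{n}K(\theta^{(k)}) \;\leq\; P_n\tilde\ell_{\theta^*,\hat g}(Z) + \tfrac{\beta}{n}K(\theta^*) + C_Q\, V_n(\theta^*),
\end{align*}
with $C_Q = \tfrac{4(1-\nu)}{k+3}$. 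Substituting this into the derivation of Lemma~\ref{lem:main} carries an additive $C_Q V_n(\theta^*)$ through the entire chain, so that in place of the inequality $R(\hat\theta,g_0) \leq \tilde R(\theta^*,g_0) + \mu V(\theta^*) + \tfrac{\beta}{n}K(\theta^*) + 2Z_n + 2{\cal A}(\hat g)$ one obtains, with the same constants $\nu=1/2$, the modified bound $R(\theta^{(k)},g_0) \leq \tilde R(\theta^*,g_0) + \mu V(\theta^*) + \tfrac{\beta}{n}K(\theta^*) + 2Z_n + 2{\cal A}(\hat g) + 2 C_Q V_n(\theta^*)$. The empirical processes $Z_{n,1},Z_{n,2}$ are controlled exactly as in Lemmas~\ref{lem:z1}--\ref{lem:z2}, producing the same $O(\log(1/\delta)U^2/(\mu n))$ term.

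Next I would control the new term $C_Q V_n(\theta^*)$ in two steps. First, since $(f_j-f_{\theta^*})^2 \in [0,4U^2]$ a.s., a standard Bernstein inequality gives $V_n(\theta^*) \leq V(\theta^*) + O(U^2 \sqrt{V(\theta^*)\log(1/\delta)/n} + U^2 \log(1/\delta)/n)$ with probability $1-\delta$; by Young's inequality the cross term can be split as $\tfrac{1}{2}V(\theta^*) + O(U^4\log(1/\delta)/n)$, so effectively $C_Q V_n(\theta^*) \leq 2 C_Q V(\theta^*) + O(U^2\log(1/\delta)/n)$ (absorbing constants). Second, the population quantity $V(\theta^*)$ is bounded through the optimality of $\theta^*$ for $H(\cdot,g_0)$: for every $j$, $\mu V(\theta^*) + \tilde R(\theta^*,g_0) + \tfrac{\beta}{n}K(\theta^*) \leq R(e_j,g_0) + \tfrac{\beta}{n}\log(1/\pi_j)$, so $V(\theta^*) \leq \tfrac{1}{\mu}\bigl[R(e_j,g_0) + \tfrac{\beta}{n}\log(1/\pi_j) - \tilde R(\theta^*,g_0)\bigr]$. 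Plugging this in and collecting, the effective coefficient in front of $\tfrac{\beta}{n}\log(1/\pi_j)$ on the right-hand side becomes $1 + \tfrac{2C_Q}{\mu}$ (plus harmless lower-order factors), while the $\tilde R(\theta^*,g_0)$ contribution on the RHS combines with the $R(\theta^{(k)},g_0) - R(\theta^*,g_0)$ on the LHS to yield an inequality of the form $(1-\tfrac{2C_Q}{\mu}) R(\theta^{(k)},g_0) \leq (1+\tfrac{2C_Q}{\mu})(R(e_j,g_0) + \tfrac{\beta}{n}\log(1/\pi_j)) + \text{noise}$.

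The constants in the hypothesis are precisely what is needed to make this last self-bounding step close cleanly. The conditions $k\geq 2$, $\nu\geq 1/2$, $\mu \leq (1-\nu)/60$ and the stated lower bound on $\beta$ guarantee that $2C_Q/\mu$ is small enough that (i) all steps of Lemmas~\ref{lem:strong-conv-tilde} and~\ref{lem:v-versus-tilde} still go through with the enlarged effective coefficient $\mu + 2C_Q$ on $V(\theta^*)$, and (ii) the multiplicative blow-up $(1+\tfrac{2C_Q}{\mu})/(1-\tfrac{2C_Q}{\mu})$ in front of $\tfrac{\beta}{n}\log(1/\pi_j)$ is no larger than $3/2$, which is exactly the factor appearing in the theorem statement. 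The main obstacle is bookkeeping the constants in this self-bounding inequality: verifying that $\mu \leq (1-\nu)/60$ is tight enough that $\nu\sigma/2 - 7\mu - 2C_Q > 0$ remains valid in Lemma~\ref{lem:strong-conv-tilde}, and that the amplification factor is at most $3/2$. Once this is checked, the $\error(\hat g)^{2/(2-\gamma)}$ term is inherited unchanged from the proof of Theorem~\ref{thm:main}, completing the argument.
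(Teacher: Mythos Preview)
Your self-bounding argument has a genuine gap. After plugging $4C_Q V(\theta^*) \le \tfrac{4C_Q}{\mu}\bigl[R(e_j,g_0) + \tfrac{\beta}{n}\log(1/\pi_j) - \tilde R(\theta^*,g_0)\bigr]$ into the main inequality, the negative $\tilde R(\theta^*,g_0)$ sits entirely on the right-hand side; there is no $R(\theta^{(k)},g_0)$ or $R(\theta^*,g_0)$ term on the right that would let you move a fraction of $R(\theta^{(k)},g_0)$ across to produce a factor $(1-\tfrac{2C_Q}{\mu})$ on the left. What you actually obtain is $R(\theta^{(k)},g_0) \le \bigl(1+\tfrac{4C_Q}{\mu}\bigr)\bigl[R(e_j,g_0) + \tfrac{\beta}{n}\log(1/\pi_j)\bigr] + \text{noise}$, which carries the undesirable additive term $\tfrac{4C_Q}{\mu}\,R(e_j,g_0)$ and is not an oracle inequality. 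Moreover, even if the form you claim were derivable, your constant check is backwards: the hypothesis forces $\mu \le (1-\nu)/60$ while $C_Q = 4(1-\nu)/(k+3) \le 4(1-\nu)/5$ for $k=2$, so $C_Q/\mu \ge 48$ and $1 - 2C_Q/\mu$ is large and negative, not close to $1$.

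The paper avoids this by \emph{redefining the reference point} $\theta^*$ before starting the chain: instead of $\theta^* = \argmin_\theta \tilde R(\theta,g_0) + \mu V(\theta) + \tfrac{\beta}{n}K(\theta)$, it takes $\theta^* = \argmin_\theta \tilde R(\theta,g_0) + C_V V(\theta) + C_K K(\theta)$ with larger weights $C_V = \tfrac{5}{6}(1-\nu)$ and $C_K = \tfrac{3\beta}{2n}$. The extra slack $C_Q V_n(\theta^*)$ is then split as $C_Q(V_n - V)(\theta^*)$ (a new offset process $Z_{n,5}$, controlled by a Bernstein-type lemma) plus $C_Q V(\theta^*)$. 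Running the Lemma~\ref{lem:main} algebra with this $\theta^*$, the final coefficient on $V(\theta^*)$ is $4\mu + 2C_Q - C_V$, and the conditions $\mu \le \tfrac{1}{60}(1-\nu)$, $k\ge 2$ are exactly what make $4\mu + 2C_Q - C_V \le C_V$. Then $\tilde R(\theta^*,g_0) + C_V V(\theta^*) + C_K K(\theta^*) \le R(e_j,g_0) + C_K\log(1/\pi_j)$ by optimality of the new $\theta^*$, with \emph{no} multiplicative blow-up on $R(e_j,g_0)$ and the factor $3/2$ appearing only in front of $\tfrac{\beta}{n}\log(1/\pi_j)$ via $C_K$. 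The key idea you are missing is that the extra $V$-term must be absorbed into the \emph{definition} of the population minimizer, not bounded after the fact by dividing by $\mu$.
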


{Here, we prove an analogue of  Lemma \ref{lem:main} for the K-step greedy approximate of $\hat{\theta}$.}

\begin{lemma}\label{lem:main_greedy_half}
{Let $\theta^*=\argmin_{\theta \in \Theta} \tilde{R}(\theta, g_0) + C_V V(\theta) + C_K K(\theta)$ and let $\theta^k$ be the k-step greedy approximate of $\hat{\theta}$.}

Define the centered and offset empirical processes:
\begin{align}
    Z_{n,1} =~& (1-\nu)(P-P_n) (\ell_{\theta^k, \hat{g}}(Z) - \ell_{\theta^*, \hat{g}}(Z)) - \mu \sum_{j} \theta^k_j \|f_j - f_{\theta^*}\|^2 {-\frac{1}{s}K(\theta^k)}\\
    Z_{n,2} =~& \nu(P-P_n) \sum_{j} (\theta^k_j-\theta_j^*)\ell_{e_j, \hat{g}}(Z) - \mu (\theta^k)^\top H \theta^*{-\frac{1}{s}K(\theta^k)} & H_{ij} = \|f_i - f_j\|^2 \\
    { Z_{n,5} =}~& {C_Q(V_n(\theta^*) - V(\theta^*)) - \mu V(\theta^*) - \frac{1}{s}K(\theta^*)}\\
\end{align}
where $s$ is a positive number. 
Assume that:
\begin{align}
    {\cal E}(\hat{g}) := P(\tilde{\ell}_{\theta^k,g_0}(Z) - \tilde{\ell}_{\theta^*,g_0}(Z)) - P(\tilde{\ell}_{\theta^k,\hat{g}}(Z) - \tilde{\ell}_{\theta^*,\hat{g}}(Z)) \leq \error(\hat{g})\, \|f_{\theta^k} - f_{\theta^*}\|
\end{align}
and $\ell_{f, g} = (\hat(Y)(g) - f)$ where $\hat{Y}(\hat{g})$ and all functions $f_j$ are absolutely uniformly bounded by $U$. 

Choose {$\beta \geq \frac{8n}{s}$}, {$C_K = \frac{3\beta}{2n}$}, and {$C_V< \min\{1-\nu,\nu\}$}. Furthermore, choose $k$ such that $C_Q<C_V$. Then, assuming {$\mu \leq \min\left\{\frac{1}{6}\left(1-\nu - C_V \right), {\frac{1}{4} (\nu - C_V)}, \frac{1}{2}(C_V -C_Q)\right\}$}, we get:
\begin{align}
    R(\theta^k) \leq \min_{j} [R(e_j, g_0){+ \frac{3\beta}{2n} \log(1/\pi_j)}] + 2\, (Z_{n,1} + Z_{n,2} + Z_{n,5}) + \frac{1}{\mu}\error(\hat{g})^2 \label{eqn:lemma9-greedy}
\end{align}
\end{lemma}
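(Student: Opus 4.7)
The plan is to follow the proof of Lemma~\ref{lem:main} almost verbatim, changing only one structural step: the optimality inequality $P_n\tilde\ell_{\hat\theta,\hat g} + \tfrac{\beta}{n}K(\hat\theta) \leq P_n\tilde\ell_{\theta^*,\hat g} + \tfrac{\beta}{n}K(\theta^*)$ used in the exact case is replaced by the greedy guarantee of Theorem~\ref{thm:approximate}, which only holds up to an additive slack of $C_Q V_n(\theta^*)$. Splitting $C_Q V_n(\theta^*) = C_Q V(\theta^*) + C_Q(V_n(\theta^*)-V(\theta^*))$ and identifying the fluctuation part (together with the offsets $-\mu V(\theta^*) - \tfrac{1}{s}K(\theta^*)$ that $Z_{n,5}$ is defined to carry, placed there so $Z_{n,5}$ will later admit an exponential-tail bound) yields the extra population term $(\mu+C_Q)V(\theta^*) + \tfrac{1}{s}K(\theta^*)$ on the right-hand side of the main inequality. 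The key conceptual move is to redefine $\theta^*$ as the minimizer of $\tilde R(\theta,g_0) + C_V V(\theta) + C_K K(\theta)$ with $C_V > C_Q$, so that strong convexity of this modified objective produces a compensating $C_V V(\theta^*)$ term on the lower-bound side, absorbing the greedy slack.

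Repeating the algebraic manipulations of Lemma~\ref{lem:main} --- introducing $Z_{n,1},Z_{n,2}$ by adding and subtracting the $\mu$-penalized and $\tfrac{1}{s}K$-penalized terms, using $\sum_j\theta^k_j\|f_j-f_{\theta^*}\|^2 = V(\theta^k)+\|f_{\theta^k}-f_{\theta^*}\|^2$ and $(\theta^k)^\top H\theta^* = V(\theta^k)+V(\theta^*)+\|f_{\theta^k}-f_{\theta^*}\|^2$, and applying Young's inequality to $\mathcal E(\hat g) \leq \error(\hat g)\|f_{\theta^k}-f_{\theta^*}\|$ --- leads to
\begin{align*}
\tilde R(\theta^k,g_0)-\tilde R(\theta^*,g_0) \leq~& Z_n + 2\mu V(\theta^k) + (2\mu+C_Q)V(\theta^*) + 3\mu\|f_{\theta^k}-f_{\theta^*}\|^2 \\
~& + \tfrac{2}{s}K(\theta^k) + \tfrac{1}{s}K(\theta^*) - \tfrac{\beta}{n}\bigl(K(\theta^k)-K(\theta^*)\bigr) + \tfrac{1}{4\mu}\error(\hat g)^2,
\end{align*}
where $Z_n := Z_{n,1}+Z_{n,2}+Z_{n,5}$.

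The analog of Lemma~\ref{lem:strong-conv-tilde} applied to the modified objective (noting $\tilde R$ is $(1-\nu)\sigma$-strongly convex along $f_\theta$, $V$ is $2$-concave along $f_\theta$, and $K$ is linear) gives, for square losses with $\sigma=2$,
\begin{align*}
\tilde R(\theta^k,g_0)-\tilde R(\theta^*,g_0) \geq C_V\bigl(V(\theta^*)-V(\theta^k)\bigr) + C_K\bigl(K(\theta^*)-K(\theta^k)\bigr) + \bigl((1-\nu)-C_V\bigr)\|f_{\theta^k}-f_{\theta^*}\|^2.
\end{align*}
Combining the two bounds, $\mu \leq \tfrac{1}{6}(1-\nu-C_V)$ keeps the net $\|f_{\theta^k}-f_{\theta^*}\|^2$ coefficient at least $3\mu$, $\mu \leq \tfrac{1}{2}(C_V-C_Q)$ turns the $V(\theta^*)$ coefficient non-positive, and $C_K=\tfrac{3\beta}{2n}$ with $\beta\geq \tfrac{8n}{s}$ makes the $K(\theta^*)$ coefficient non-positive. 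Substituting the resulting upper bound on $3\mu\|f_{\theta^k}-f_{\theta^*}\|^2$ back into the upper bound and then invoking Lemma~\ref{lem:v-versus-tilde} to pass from $\tilde R(\theta^k,g_0)$ to $R(\theta^k,g_0)+\nu V(\theta^k)$, the condition $\mu \leq \tfrac{1}{4}(\nu-C_V)$ kills the $V(\theta^k)$ residual. Finally, invoking optimality of the modified $\theta^*$ at each vertex $e_j$ gives $\tilde R(\theta^*,g_0)+C_V V(\theta^*)+C_K K(\theta^*) \leq R(e_j,g_0)+\tfrac{3\beta}{2n}\log(1/\pi_j)$, and minimizing over $j$ yields the claimed inequality.

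The main obstacle is the constant-chasing: each of the residual coefficients on $V(\theta^k), V(\theta^*), K(\theta^k), K(\theta^*)$, and $\|f_{\theta^k}-f_{\theta^*}\|^2$ must end up non-positive after the substitution, and the five hypotheses of the lemma are precisely tuned to make this happen simultaneously. The conceptual novelty relative to Lemma~\ref{lem:main} is that the greedy slack $C_Q V(\theta^*)$ is absorbed not by tightening $\mu$ but by enlarging the implicit regularization on $\theta^*$ via $C_V$, which in turn forces $C_K$ (and thus the $\log(1/\pi_j)$ coefficient in the final bound) to grow from $\beta/n$ to $3\beta/(2n)$ --- exactly the price that appears in the statement.
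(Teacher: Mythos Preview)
Your proposal is correct and follows essentially the same approach as the paper's own proof: replace the exact optimality step by the greedy guarantee of Theorem~\ref{thm:approximate}, absorb the resulting slack $C_Q V_n(\theta^*)$ by splitting off its fluctuation into $Z_{n,5}$ and enlarging the population regularizer on $\theta^*$ from $\mu$ to $C_V>C_Q$, then rerun the algebra of Lemma~\ref{lem:main} with the three $\mu$-constraints and the choice $C_K=\tfrac{3\beta}{2n}$, $\beta\geq \tfrac{8n}{s}$ calibrated to annihilate each residual coefficient. The only cosmetic difference is that you use the AM--GM form $\error\cdot\|\cdot\|\leq \mu\|\cdot\|^2+\tfrac{1}{4\mu}\error^2$ whereas the paper invokes the general-$\gamma$ Young's inequality, but since the statement here has $\gamma=1$ this is immaterial.
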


\begin{proof}

For k-step greedy approximate, Theorem \ref{thm:approximate} implies that:
\begin{align}
P_n \tilde{\ell}_{\theta^k, \hat{g}}(Z) + \frac{\beta}{n}K(\theta^k) \leq~& \min_{\theta \in \Theta} \{ P_n \tilde{\ell}_{\theta, \hat{g}}(Z) + \frac{\beta}{n}K(\theta)+ \frac{4(1-\nu)}{k+3}V_n(\theta)\}  \\
\leq~&P_n \tilde{\ell}_{\theta^*, \hat{g}}(Z) + \frac{4(1-\nu)}{k+3}V_n(\theta^*) + \frac{\beta}{n}K(\theta^*) \label{eqn:greedy_opt}
\end{align}

\begin{align}
    \tilde{R}(\theta^k, g_0) - \tilde{R}(\theta^*, g_0) =~& 
    P(\tilde{\ell}_{\theta^k,g_0}(Z) - \tilde{\ell}_{\theta^*,g_0}(Z))\\
    =~& 
    P(\tilde{\ell}_{\theta^k,\hat{g}}(Z) - \tilde{\ell}_{\theta^*,\hat{g}}(Z)) + {\cal E}(\hat{g})\\
    \leq~& 
    (P - P_n)(\tilde{\ell}_{\theta^k, \hat{g}}(Z) - \tilde{\ell}_{\theta^*, \hat{g}}(Z)) + {\cal E}(\hat{g}) 
    {-\frac{\beta}{n}(K(\theta^k)-K(\theta^*))} {+ C_Q V_n(\theta^*)}
    \tag{By Eqn \ref{eqn:greedy_opt}}\\
    =~& \mu\sum_j \theta^k_j \|f_j - f_{\theta^*}\|^2 + \mu (\theta^k)^\top H\theta^*+ Z_{n,1} + Z_{n,2} {+Z_{n,5}} + {\cal E}(\hat{g})\\
    ~& {-\frac{\beta}{n}(K(\theta^k)-K(\theta^*)) + \frac{2}{s}K(\theta^k)} {+ (C_Q+\mu) V(\theta^*) + \frac{1}{s}K(\theta^*)}
    \tag{definition of $Z_{n,1}, Z_{n,2}, Z_{n,5}$}
\end{align}
By simple algebraic manipulations note that for any $\theta, \theta' \in \Theta$:
\begin{align}
    \sum_j \theta_j \|f_j - f_{\theta'}\|^2 = V(\theta) + \|f_{\theta} - f_{\theta'}\|^2 \label{eqn:v}\\
    \theta^\top H \theta'= V(\theta) + V(\theta') + \|f_{\theta} - f_{\theta'}\|^2
\end{align}
Letting $Z_n=Z_{n,1} + Z_{n,2} {+Z_{n,5}}$, we have:
\begin{align}
    \tilde{R}(\theta^k, g_0) - \tilde{R}(\theta^*_k, g_0) \leq~& 2\mu V(\theta^k) + ({2\mu+C_Q}) V(\theta^*) + 2\mu \|f_{\theta^k} - f_{\theta^*}\|^2 + Z_n + {\cal E}(\hat{g})\\
    ~& -\frac{\beta}{n}(K(\theta^k)-K(\theta^*)) + \frac{2}{s}K(\theta^k) {+\frac{1}{s}K(\theta^*)}\\
    \leq~& 2\mu V(\theta^k) + ( {2\mu+C_Q}) V(\theta^*) + 2\mu \|f_{\theta^k} - f_{\theta^*}\|^2 + Z_n + \error(\hat{g})\, \|f_{\theta^k} - f_{\theta^*}\|^{\gamma}\\
    ~& {-\frac{\beta}{n}(K(\theta^k)-K(\theta^*)) + \frac{2}{s}K(\theta^k)}  {+\frac{1}{s}K(\theta^*)} \label{eqn:}
\end{align}
By Young's inequality, for any $a\geq 0$ and $b\geq 0$, we have $a\,b\leq \frac{a^p}{p} + \frac{b^q}{q}$, such that $\frac{1}{p}+\frac{1}{q}=1$. Taking $a= \left(\frac{\gamma}{2\mu}\right)^{\gamma/2} \error(\hat{g})$ and $b=\left(\frac{2\mu}{\gamma}\right)^{\gamma/2}\|f_{\theta^k}-f_{\theta^*}\|^\gamma$ and $q=2/\gamma$ and $p=\frac{1}{1-1/q}=\frac{1}{1-\gamma/2}=\frac{2}{2-\gamma}$
\begin{align}
    \error(\hat{g})\, \|f_{\theta^k} - f_{\theta^*}\|^{\gamma} \leq \underbrace{\frac{2}{2-\gamma}\left(\frac{\gamma}{2\mu}\right)^{\frac{\gamma}{2-\gamma}} \error(\hat{g})^{\frac{2}{2-\gamma}}}_{{\cal A}(\hat{g})}+ \mu \|f_{\theta^k} - f_{\theta^*}\|^2
\end{align}
Thus we can derive:
\begin{align}
    \tilde{R}(\theta^k, g_0) - \tilde{R}(\theta^*, g_0) \leq~& 2\mu V(\theta^k) + ({2\mu+C_Q}) V(\theta^*) + 3\mu \|f_{\theta^k} - f_{\theta^*}\|^2 + Z_n + {\cal A}(\hat{g}) \\
    ~& {-\frac{\beta}{n}(K(\theta^k)-K(\theta^*)) + \frac{2}{s}K(\theta^k)} {+\frac{1}{s} K(\theta^*)}\label{eqn:greedy_first_upper}
\end{align}

{
To obtain a lower bound:
\begin{align}
    \tilde{R}(\theta^k, g_0) - \tilde{R}(\theta^*, g_0) =~&
    H(\theta_k, g_0) - H(\theta^*,g_0) + C_V (V(\theta^*) - V(\theta^k)) +C_K (K(\theta^*)-K(\theta^k))\\
    \geq~& \left(1-\nu - C_V\right)\|f_{\theta^k} - f_{\theta^*}\|^2 + C_V (V(\theta^*) - V(\theta^k)) +C_K (K(\theta^*)-K(\theta^k)) \tag{By Lemma \ref{lem:strong-conv-tilde}}\\
\end{align}
}
Combining the latter upper and lower bound on $\tilde{R}(\theta^k, g_0) - \tilde{R}(\theta^*, g_0)$, we have:
{
\begin{align}
   \left(1-\nu - C_V - 3\mu \right)  \|f_{\theta^k} - f_{\theta^*}\|^2 \leq~& (2\mu + C_V)V(\theta^k) + Z_n {
    + {\cal A}(\hat{g})} +  (2\mu + C_Q - C_V)V(\theta^*)\\
    -~& \left(\frac{\beta}{n} - C_K\right)(K(\theta^k) - K(\theta^*)) + \frac{2}{s}K(\theta^k) + \frac{1}{s}K(\theta^*)\\
\end{align}
Choosing:
\begin{align}
     \left(1-\nu - C_V - 3\mu \right) \geq 3\mu ~~~~~\Leftrightarrow~~~~~
     \frac{1}{6}\left(1-\nu - C_V \right) \geq \mu
\end{align}
We get that:
\begin{align}
  3\mu \|f_{\theta^k} - f_{\theta^*}\|^2 \leq~& {
  Z_n 
  + {\cal A}(\hat{g})} + (2\mu + C_V)V(\theta^k) +  (2\mu + C_Q - C_V)V(\theta^*)\\
  -~& \left(\frac{\beta}{n} - C_K\right)(K(\theta^k) - K(\theta^*)) + \frac{2}{s}K(\theta^k) + \frac{1}{s}K(\theta^*)\\
\end{align}
}
Plugging the bound on $\|f_{\theta^k} - f_{\theta^*}\|^2$ back into Equation~\eqref{eqn:greedy_first_upper} we have:
\begin{align}
    \tilde{R}(\theta^k, g_0) - \tilde{R}(\theta^*, g_0) \leq ~& (4\mu + C_V) V(\theta^k) + (4\mu + 2C_Q - C_V) V(\theta^*) + 2\, Z_n + 2\,{\cal A}(\hat{g}) \\
    ~& - \left(\frac{2\beta}{n} - C_K\right)(K(\theta^k)-K(\theta^*)) + \frac{4}{s}K(\theta^k) + \frac{2}{s}K(\theta^*)\\
\end{align}
By Lemma~\ref{lem:v-versus-tilde}, we have:
\begin{align}
    R(\theta^k, g_0) \leq~& \tilde{R}(\theta^*, g_0) + (4\mu + 2C_Q - C_V) V(\theta^*)+ \left(4\mu + C_V - {\nu}\right) V(\theta^k) + 2 Z_n + 2{\cal A}(\hat{g}) \\
    ~& {+\left(\frac{2\beta}{n}-C_K+\frac{2}{s}\right)K(\theta^*) + \left(\frac{4}{s} - \frac{2\beta}{n} + C_K\right)K(\theta^k)}\\
    \leq~& \tilde{R}(\theta^*, g_0) + (4\mu + 2C_Q - C_V) V(\theta^*) + 2 Z_n + 2{\cal A}(\hat{g}) \tag{Since $\mu \leq \frac{1}{4}(\nu - C_V)$}\\
    ~& {+(\frac{2\beta}{n}-C_K+\frac{2}{s})K(\theta^*) + \left(\frac{4}{s} - \frac{2\beta}{n} + C_K\right)K(\theta^k)}\\
\end{align}
{
Let $C_K = \frac{3\beta}{2n}$
Assuming:
\begin{align}
    \frac{1}{s} \leq~& \frac{\beta}{8n}\\
    C_V \geq~& C_Q\\
    \mu \leq~& \frac{1}{2}(C_V - C_Q)
\end{align}
We get that:
\begin{align}
    4\mu + 2C_Q - C_V \leq C_V\\
    \frac{2\beta}{n} - C_K +\frac{2}{s} \leq~& C_K\\
    \frac{4}{s} - \frac{2\beta}{n} + C_K \leq~&0\\
\end{align}
Thus we get:
\begin{align}
    R(\theta^k, g_0)\leq~& \tilde{R}(\theta^*, g_0) + C_V V(\theta^*) + C_K K(\theta^*) + 2 Z_n + 2{\cal A}(\hat{g})\\
    \leq~& \min_{j} [\tilde{R}(e_j, g_0) + C_V V(e_{j}) {+  C_K\log(1/\pi_j)}]+ 2 Z_n + 2{\cal A}(\hat{g})\\
    =~& \min_{j} [R(e_j, g_0){+  \frac{3\beta}{2n} \log(1/\pi_j)}] + 2 Z_n +2{\cal A}(\hat{g}) \tag{Since $\tilde{R}(e_j, g_0)=R(e_j, g_0)$ and $V(e_j)=0$ for all $e_j$}
\end{align}
}
\end{proof}

\begin{lemma}
Assuming that $f\in F$ are uniformly and absolutely and bounded in $[-U,U]$, for any positive 
$s\leq \frac{n \mu}{2 C_Q^2 U^2 (1+\mu)}$:
\begin{align}
    {\Pr\left(Z_{n,5} \geq \frac{\log(1/\delta)}{s}\right) \leq \delta}
\end{align}    
\end{lemma}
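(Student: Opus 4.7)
Since $\theta^*$ is \emph{deterministic} (it is the population minimizer of the Q--aggregation functional), $Z_{n,5}$ is a centered-and-offset empirical process of a single fixed weighted indicator, not a supremum; this should let us avoid the symmetrization/contraction machinery used for $Z_{n,1}$ and $Z_{n,2}$, and reduce the bound to a direct Bernstein-style calculation. By Lemma~\ref{lem:exp-tail} it suffices to verify that
\begin{align*}
I := \mathbb{E}\!\left[\exp\!\left\{s C_Q\big(V_n(\theta^*)-V(\theta^*)\big) - s\mu V(\theta^*) - K(\theta^*)\right\}\right] \leq 1.
\end{align*}
Since $K(\theta^*)\geq 0$, the factor $\exp(-K(\theta^*))\leq 1$ can be discarded immediately, so I only need to handle the empirical-process piece paired against $s\mu V(\theta^*)$.

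The next step is to package the random content into a single bounded per-sample variable. Define $W_i := \sum_{j=1}^M \theta^*_j (f_j(X_i)-f_{\theta^*}(X_i))^2$ so that $W_i\in[0,4U^2]$ almost surely (using $f_j,f_{\theta^*}\in[-U,U]$), $\mathbb{E}[W_i]=V(\theta^*)$, and $V_n(\theta^*)-V(\theta^*)=(P_n-P)W$. Because $W_i\geq 0$ and $W_i\leq 4U^2$, one gets the key variance-to-mean comparison $\mathbb{E}[W_i^2]\leq 4U^2\mathbb{E}[W_i]=4U^2 V(\theta^*)$, so the $-s\mu V(\theta^*)$ term is strong enough to dominate a $c_0\mathbb{E}[W_i^2]$-type correction coming from Bernstein, provided $c_0\lesssim \mu/(U^2 C_Q)$. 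Alternatively, by convexity of $\exp$ one may Jensen the sum over $j$ outwards to reduce to the same calculation performed separately for each pair $(f_j,f_{\theta^*})$, which yields identical constants.

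With this in hand I would invoke Lemma~\ref{lem:bernstein} with $V_i=W_i$, $c=4U^2$, $\lambda = sC_Q/n$, and $c_0 = \mu/(4U^2 C_Q)$. The lemma's prerequisite $\lambda \leq 2c_0/(1+2c_0 c)$ then becomes
\begin{align*}
\frac{sC_Q}{n} \;\leq\; \frac{\mu/(2U^2 C_Q)}{1 + 2\mu/C_Q},
\end{align*}
which, after loosening with $C_Q\leq 2$ (recall $C_Q = 4(1-\nu)/(k+3)$), can be absorbed into the simpler sufficient condition $s\leq \frac{n\mu}{2C_Q^2 U^2(1+\mu)}$ stated in the lemma. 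Once the Bernstein inequality gives $\mathbb{E}[\exp\{sC_Q(P_n-P)W - s\mu V(\theta^*)\}]\leq 1$, combining with the previous paragraph yields $I\leq 1$ and the claimed probability bound follows by Lemma~\ref{lem:exp-tail}. The only delicate point is step three: choosing $c_0$ so that the Bernstein quadratic correction $c_0\mathbb{E}[W^2]$ is majorized by $\mu V(\theta^*)$ while simultaneously keeping $\lambda$ within the Bernstein range, and then massaging the resulting inequality into the stated closed form in $(n,\mu,C_Q,U)$ — this is more algebra than mathematics, but is the only place where any care is required.
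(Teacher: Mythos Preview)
Your approach is correct and in fact cleaner than the paper's. The paper does not exploit that $\theta^*$ is deterministic: it first bounds the convex combination $Z_{n,5}=\sum_j\theta^*_j[\cdots]$ by $\max_j[\cdots]$, then uses the $\frac{1}{s}\log\pi_j$ terms to turn the exponential of the max into a $\pi$-weighted sum $\sum_j\pi_j\mathbb{E}[\exp\{\cdots\}]$ before applying Bernstein per $j$ --- exactly the machinery needed for $Z_{n,1},Z_{n,2}$ where the weights $\hat\theta$ are data-dependent. Your observation that here one can simply drop $\exp(-K(\theta^*))\leq 1$ and apply Lemma~\ref{lem:bernstein} directly to the single i.i.d.\ sequence $W_i=\sum_j\theta^*_j(f_j(X_i)-f_{\theta^*}(X_i))^2$ is a genuine simplification; the variance-to-mean bound $\mathbb{E}[W^2]\leq 4U^2\mathbb{E}[W]$ plays exactly the role that $(f_j-f_{\theta^*})^4\leq 4U^2(f_j-f_{\theta^*})^2$ plays in the paper's per-$j$ argument, and both routes (direct, or Jensen-then-per-$j$) lead to the same Bernstein constraint.

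One algebraic slip: your ``loosening with $C_Q\leq 2$'' goes the wrong direction. Your Bernstein application yields $s\leq \frac{n\mu}{2U^2 C_Q(C_Q+2\mu)}$, and for $C_Q<2$ this is \emph{stricter} than the lemma's stated $s\leq \frac{n\mu}{2C_Q^2U^2(1+\mu)}$, so the latter does not imply the former. This is not a conceptual gap --- it only perturbs the constant in the eventual lower bound on $\beta$ --- and in fact the paper's own derivation reaches the $(1+\mu)$ form only by taking $c=4C_Q^2U^2$ as the a.s.\ bound on $C_Q(f_j-f_{\theta^*})^2$, which is not a valid bound when $C_Q<1$. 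With the correct $c=4C_QU^2$ the paper's route gives the same constraint you do.
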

\begin{proof}
It suffices to show that:
\begin{align}
    I := \E\left[\exp s Z_{n,5} \right] \leq 1
\end{align}
Note first that:
\begin{align}
    Z_{n,5} =~& {C_Q(V_n(\theta^*) - V(\theta^*)) - \mu V(\theta^*) - \frac{1}{s}K(\theta^*)}\\
    \leq~& \max_j \left[ C_Q(P_n - P)(f_{j}(Z) - f_{\theta^*}(Z))^2 - \mu P(f_{j}(Z) - f_{\theta^*}(Z))^2 + \frac{1}{s}\log(\pi_j)\right]
\end{align}
Thus:
\begin{align}
    \E\left[\exp s Z_{n,5} \right] \leq ~& \E\left[\exp\left\{s
    \max_{j} \left[C_Q(P_n - P)(f_{j}(Z) - f_{\theta^*}(Z))^2 - 
    \mu P (f_{j}(Z) - f_{\theta^*}(Z))^2 + \frac{1}{s}\log(\pi_j)\right]\right\}\right]\\
    \leq~& \E\left[\max_j \exp\left\{s \left( C_Q(P_n - P)(f_{j}(Z) - f_{\theta^*}(Z))^2 - 
     \mu  P (f_{j}(Z) - f_{\theta^*}(Z))^2+ \frac{1}{s}\log(\pi_j)\right)\right\}\right]\\
    \leq~&\sum_j \pi_j \E\left[\exp\left\{s \left( C_Q(P_n - P)(f_{j}(Z) - f_{\theta^*}(Z))^2 - 
     \mu  P(f_{j}(Z) - f_{\theta^*}(Z))^2\right)\right\}\right]\\
    \leq~& \max_j \E\left[\exp\left\{s \left( C_Q(P_n - P)(f_{j}(Z) - f_{\theta^*}(Z))^2 - 
     \mu P(f_{j}(Z) - f_{\theta^*}(Z))^2\right)\right\}\right]
\end{align}
    Since $f_j(Z)\in [U, U]$, we have that $f_j(Z) - f_{\theta^*}(Z)\in [-2U, 2U]$. Thus:
\begin{align}
(f_j(Z)-f_{\theta^*}(Z))^2\geq \frac{(f_j(Z)-f_{\theta^*}(Z))^2}{4U^2} (f_j(Z)-f_{\theta^*}(Z))^2 = \frac{1}{4U^2} (f_j(Z)-f_{\theta^*}(Z))^4.
\end{align}
Thus:
\begin{align}
    I \leq~& \max_j \E_{Z_{1:n}}\left[ \exp\left\{s \left(C_Q\left(P_n - P\right)(f_{j}(Z) - f_{\theta^*}(Z))^2 - \frac{\mu}{4 C_Q^2 U^2} P C_Q^2(f_{j}(Z) - f_{\theta^*}(Z))^4 \right)\right\}\right]
\end{align}
It suffices to show that each term $j$ is upper bounded by $1$.

Applying Lemma~\ref{lem:bernstein} with:
\begin{align}
    V_i =~& C_Q (f_j(Z_i) - f_{\theta^*}(Z_i))^2\\
    c =~& 4 C_Q^2 U^2\\
    \lambda =~& s\\
    c_0 =~& \frac{\mu}{4C_Q^2U^2}
\end{align}
we have that as long as:
\begin{align}
    \lambda := \frac{s}{n} \leq \frac{2c_0}{1+2c_0 c} =: \frac{2\cdot \mu/4 C_Q^2 U^2}{1 + \mu} = \frac{\mu}{2 C_Q^2 U^2(1+\mu)}
\end{align}
Then:
\begin{align}
    \E_{Z_{1:n}}\left[ \exp\left\{s \left(C_Q\left(P_n - P\right)(f_{j}(Z) - f_{\theta^*}(Z))^2 - \frac{\mu}{4 C_Q^2 U^2} P C_Q^2(f_{j}(X) - f_{\theta^*}(X))^4 \right)\right\}\right]\leq 1
\end{align}
Thus we need that:
\begin{align}
    s \leq \frac{n \mu}{2 C_Q^2 U^2 (1+\mu)}
\end{align}
which is satisfied by our assumptions on $s$. 
\end{proof}

\paragraph{Putting it All Together}
{Choose $\nu\geq 1/2$, such that $1-\nu \leq \nu$. Note that for any $k\geq 2$, we have $C_Q = \frac{4}{k+3}(1-\nu)\leq \frac{4}{5}(1-\nu)$. Choosing $C_V = \frac{5}{6} (1-\nu) \leq \frac{5}{6} \nu$, then we have that for $k\geq 2$, we have $C_Q < C_V$ and we can take $\mu \leq \frac{1}{60}(1-\nu)$ in Lemma~\ref{lem:main_greedy_half}.}
Combining all the above Lemmas, we have that w.p. $1-\delta$, the k-step greedy Q-aggregation estimator $\theta_k$ achieves:
\begin{align}
    R(\theta_k) \leq \min_{j}\left( R(e_j)  +\frac{3\beta}{2n}\log(1/\pi_j)\right) + O\left(  \frac{\log(1/\delta) \frac{1}{\mu}U^2}{n}\right) + \frac{2}{\mu} \error(\hat{g})^2
\end{align}

\section{Proof of Orthogonal Statistical Learning Theorem}\label{sec:osl_proof}
\subsection{Fast Rates}
\begin{proof}[Proof of Theorem \ref{thm:osl}]
The proof follows the proof of Theorem 1 in \cite{foster2023orthogonal} after replacing $f_*$ by $f$, and we recover the original theorem by assuming $F' = F$.

By a second order Taylor expansion around $f'$, there exist $\bar{f}\in\text{Star}(F',f')$ such that:
\begin{align}
    ( P\ell_{\hat{f},\hat{g}} - P\ell_{f',\hat{g}}) - D_{f}P\ell(f',\hat{g})[\hat{f}-f'] =~& \frac{1}{2}D_{f}^2P\ell_{\bar{f},\hat{g}}[\hat{f} - f',\hat{f}-f']\\
     \geq~& \frac{\lambda}{2}\|\hat{f}-f'\|^2 - \frac{\kappa}{2} d(\hat{g},g_0)^{\frac{4}{1+r}} \tag{By Assumption \ref{assum:4}}
\end{align}

Rearranging, we get:
\begin{align}
    \frac{\lambda}{2}\|\hat{f}-f'\|^2 \leq~& (P\ell_{\hat{f},\hat{g}} - P\ell_{f',\hat{g}})-D_{f}P\ell(f',\hat{g})[\hat{f}-f'] + \frac{\kappa}{2} d(\hat{g},g_0)^{\frac{4}{1+r}}\\
\end{align}

Applying a second-order Taylor expansion, we get:
\begin{align}
    -D_{f}P\ell(f',\hat{g})[\hat{f}-f'] =~& -D_{f}P\ell(f',g_0)[\hat{f}-f'] - D_{g}D_{f}P\ell_{f',g_0}[\hat{f} - f',\hat{g}-g_0] \\
    -~& \frac{1}{2}D_{g}^{2}D_{f}P\ell_{f,\bar{g}}[\hat{f} - f',\hat{g}-g_0, \hat{g}-g_0]\\
    =~& - D_{f}P\ell(f',g_0)[\hat{f}-f']- \frac{1}{2}D_{g}^{2}D_{f}P\ell_{f,\bar{g}}[\hat{f} - f',\hat{g}-g_0, \hat{g}-g_0] \tag{By Assumption \ref{assum:1}}\\
    \leq ~& - D_{f}P\ell(f',g_0)[\hat{f}-f'] + \beta_2\|\hat{f}-f'\|^{1-r}d(\hat{g},g_0)^2 \tag{By Assumption \ref{assum:3}(b)}\\
\end{align}

Alternatively, we can get the above result directly by \ref{assum:for_square_loss} instead of Assumptions \ref{assum:1} and \ref{assum:3}.

Invoking Young's inequality and using $r\in [0,1)$, we have that for any $\eta>0$:
\begin{align}
     -D_{f}P\ell(f',\hat{g})[\hat{f}-f'] \leq - D_{f}P\ell(f',g_0)[\hat{f}-f']+\frac{\beta_2\eta}{4}\|\hat{f}-f'\|^2 + \frac{\beta_2}{2\eta^{(1-r)/(1+r)}}d(\hat{g},g_0)^{\frac{4}{1+r}}
\end{align}
Choosing $\eta = \frac{\lambda}{\beta_2}$, we get that:
\begin{align}
    \|\hat{f}-f'\|^2 \leq - \frac{4}{\lambda}D_{f}P\ell(f',g_0)[\hat{f}-f']+ (P\ell_{\hat{f},\hat{g}} - P\ell_{f',\hat{g}}) +O\left( d(\hat{g},g_0)^{\frac{4}{1+r}} \right) \label{eqn:norm_f}
\end{align}

Taking Taylor expansion at $f'$, we have that there exist $\bar{f}\in\text{Star}(F',f')$ such that:
\begin{align}
    P\ell_{\hat{f},g_0} -  P\ell_{f',g_0} = ~& D_{f}P\ell_{f',g_0}[\hat{f}-f'] + \frac{1}{2}D_{f}^2P\ell_{\bar{f},g_0}[\hat{f}-f',\hat{f}-f']\\ 
    \leq~& D_{f}P\ell_{f',g_0}[\hat{f}-f'] + \frac{\beta_1}{2}\|\hat{f}-f'\|^2 \tag{By Assumption \ref{assum:3}(a)}\\
    \leq~& -(\frac{2\beta_1}{\lambda}-1)  D_{f}P\ell_{f',g_0}[\hat{f}-f']+  O\left((P\ell_{\hat{f},\hat{g}} - P\ell_{f',\hat{g}}) + d(\hat{g},g_0)^{\frac{4}{1+r}}\right)
\end{align}
Since $\beta_1$ is the upper bound, we may assume $\frac{2\beta_1}{\lambda}>1$. Thus, we get:
\begin{align}
    P\ell_{\hat{f},g_0} -  P\ell_{f',g_0} \leq O\left((P\ell_{\hat{f},\hat{g}} - P\ell_{f',\hat{g}}) + d(\hat{g},g_0)^{\frac{4}{1+r}}\right)\label{eqn:fast_rate}
\end{align}
Moreover, if 
\begin{align}
    P\ell_{\hat{f},\hat{g}}(Z) - P\ell_{f',\hat{g}}(Z) \leq \epsilon_n(\delta) \| \hat{f} - f'\| + \alpha_n(\delta)
\end{align}
Plugging this into Equation \ref{eqn:norm_f} yields:
\begin{align}
    \|\hat{f}-f'\|^2 \leq~& - \frac{4}{\lambda}D_{f}P\ell(f',g_0)[\hat{f}-f']+ \epsilon_n(\delta) \| \hat{f} - f'\| + \alpha_n(\delta) +O\left( d(\hat{g},g_0)^{\frac{4}{1+r}} \right)\\
    \leq~& - \frac{4}{\lambda}D_{f}P\ell(f',g_0)[\hat{f}-f']+ \frac{1}{2}\| \hat{f} - f'\|^2 +O\left(\epsilon_n(\delta)^2 + \alpha_n(\delta)+ d(\hat{g},g_0)^{\frac{4}{1+r}} \right)
\end{align}
Thus, we get:
\begin{align}
     \|\hat{f}-f'\|^2 \leq-\frac{8}{\lambda}D_{f}P\ell(f',g_0)[\hat{f}-f'] +O\left(\epsilon_n(\delta)^2 + \alpha_n(\delta)+ d(\hat{g},g_0)^{\frac{4}{1+r}} \right)
\end{align}
Plugging this into the Taylor expansion at $f'$, we get:
\begin{align}
    P\ell_{\hat{f},g_0} -  P\ell_{f',g_0} = ~& D_{f}P\ell_{f',g_0}[\hat{f}-f'] + \frac{1}{2}D_{f}^2P\ell_{\bar{f},g_0}[\hat{f}-f',\hat{f}-f']\\ 
    \leq~& D_{f}P\ell_{f',g_0}[\hat{f}-f'] + \frac{\beta_1}{2}\|\hat{f}-f'\|^2 \tag{By Assumption \ref{assum:3}(a)}\\
    \leq~& -(\frac{4\beta_1}{\lambda}-1)  D_{f}P\ell_{f',g_0}[\hat{f}-f']+  O\left(\epsilon_n(\delta)^2 + \alpha_n(\delta)+ d(\hat{g},g_0)^{\frac{4}{1+r}}\right)
\end{align}
Again, we may assume $\frac{4\beta_1}{\lambda}>1$. Finally, we get:
\begin{align}
    P\ell_{\hat{f},g_0} -  P\ell_{f',g_0} \leq O\left(\epsilon_n(\delta)^2 + \alpha_n(\delta)+ d(\hat{g},g_0)^{\frac{4}{1+r}}\right)
\end{align}

\end{proof}

\subsection{Slow Rates}
\subsubsection{With Universal Orthogonality}
\begin{proof}[Proof of Theorem \ref{thm:osl_slow}]
The proof follows the proof of Theorem 2 in \cite{foster2023orthogonal}, we reproduce it here for completeness.
\begin{align}
    P\ell_{\hat{f},g_0}-P\ell_{f',g_0} =~& (P\ell_{\hat{f},g_0}-P\ell_{\hat{f},\hat{g}}) +(P\ell_{f',\hat{g}}-P\ell_{f',g_0}) + (P\ell_{\hat{f},\hat{g}}-P\ell_{f',\hat{g}})\\
\end{align}
Taking a second order Taylor expansion yields that there exist $g,\bar{g} \in \text{Star}(G,g_0)$ such that:
\begin{align}
(P\ell_{\hat{f},g_0}-P\ell_{\hat{f},\hat{g}}) +(P\ell_{f',\hat{g}}-P\ell_{f',g_0}) =~& -D_{g}P\ell_{\hat{f},g_0}[\hat{g} - g_0] - \frac{1}{2}D_{g}^{2}P\ell_{\hat{f},g_0}[\hat{g}-g-0,\hat{g}-g_0]\\
+~& D_{g}P\ell_{f',g_0}[\hat{g} - g_0] + \frac{1}{2}D_{g}^{2}P\ell_{f',g_0}[\hat{g}-g-0,\hat{g}-g_0]\\
\leq~& -D_{g}P\ell_{\hat{f},g_0}[\hat{g} - g_0] + D_{g}P\ell_{f',g_0}[\hat{g} - g_0] + \beta d(\hat{g},g_0)^2 \tag{By Assumption \ref{assum:6}}
\end{align}
Next, we take another second-order Taylor expansion on the derivative terms:
\begin{align}
D_{g}P\ell_{\hat{f},g_0}[\hat{g} - g_0] - D_{g}P\ell_{f',g_0}[\hat{g} - g_0] =~& D_{f}D_{g}P\ell_{f',g_0}[\hat{g}-g_0,\hat{f}-f'] + \frac{1}{2}D_{f}^{2}D_{g}P\ell_{\bar{f},g_0}[\hat{g}-g_0,\hat{f}-f',\hat{f}-f'] \\
=~& \frac{1}{2}D_{f}^{2}D_{g}P\ell_{\bar{f},g_0}[\hat{g}-g_0,\hat{f}-f',\hat{f}-f'] \tag{By Assumption \ref{assum:5}}
\end{align} for some $\bar{f}\in\text{Star}(F,f')$.
Furthermore, we can rewrite:
\begin{align}
    D_{f}^{2}~&D_{g}P\ell_{\bar{f},g_0}[\hat{g}-g_0,\hat{f}-f',\hat{f}-f'] \\=~& \lim_{t \rightarrow0}\frac{D_{f}D_{g}P\ell_{\bar{f}+t(\hat{f}-f'),g_0}[\hat{g}-g_0,\hat{f}-f'] - D_{f}D_{g}P\ell_{\bar{f},g_0}[\hat{g}-g_0,\hat{f}-f']}{t}
\end{align}

Since $\bar{f}+t(\hat{f}-f') \in \text{Star}(F,f') + \text{Star}(F-f',0)$ for all $t\in [0,1]$, Assumption \ref{assum:5} implies that both terms in the numerator are zero. Thus, we get:
\begin{align}
    D_{g}P\ell_{\hat{f},g_0}[\hat{g} - g_0] = D_{g}P\ell_{f',g_0}[\hat{g} - g_0]
\end{align}
Combining, we arrive at the desired result:
\begin{align}
    P\ell_{\hat{f},g_0}(Z) - P\ell_{f',g_0}(Z) \leq O\left(  P\ell_{\hat{f},\hat{g}}(Z) - P\ell_{f',\hat{g}}(Z) + d(g, g_0)^2\right)
\end{align}
\end{proof}

\subsubsection{Without Universal Orthogonality}
\begin{proof}[Proof of Theorem \ref{thm:osl_slow}]
\begin{align}
    P\ell_{\hat{f},g_0}-P\ell_{f',g_0} =~& (P\ell_{\hat{f},g_0}-P\ell_{f',g_0}) - (P\ell_{\hat{f},\hat{g}}-P\ell_{f',\hat{g}}) + (P\ell_{\hat{f},\hat{g}}-P\ell_{f',\hat{g}})\\
    =~& D_f P\ell(\bar{f}, g_0)[\hat{f}-f'] - D_f P\ell(\bar{f}, \hat{g})[\hat{f}-f'] + P\ell_{\hat{f},\hat{g}}-P\ell_{f',\hat{g}})\tag{By first order Taylor expansion}\\
    \leq~& \beta_4 d(\hat{g},g_0)^2 + (P\ell_{\hat{f},\hat{g}}-P\ell_{f',\hat{g}}) \tag{By Assumption \ref{assum:for_square_loss_slow}}\\
     =~& O\left(P\ell_{\hat{f},\hat{g}}-P\ell_{f',\hat{g}} + d(\hat{g},g_0)^2\right)
\end{align}
\end{proof}

\end{document}